\documentclass{article}

\usepackage[utf8]{inputenc} 
\usepackage[T1]{fontenc}    
\usepackage{url}            
\usepackage{booktabs}       
\usepackage{amsfonts}       
\usepackage{nicefrac}       
\usepackage{microtype}      
\usepackage{xcolor}         

\usepackage{graphicx}
\usepackage{amsmath,amssymb,amsfonts, bm, dsfont}
\usepackage{xcolor}

\DeclareMathAlphabet      {\mathbfit}{OML}{cmm}{b}{it}
\usepackage{amsthm} 

\usepackage{acronym} 

\usepackage[colorlinks]{hyperref}
\hypersetup{
    colorlinks=False,   
}

\usepackage{tikz}
\usetikzlibrary{math}
\usepackage{pgfplots}
\usepackage{subcaption} 

\usepackage{tkz-tab}
\usepackage{pgfplots}

\usepackage{tabularx} 
\usepackage{microtype}
\usepackage{graphicx}
\usepackage{subcaption}
\usepackage{booktabs} 

\usepackage{hyperref}

\usepackage[preprint]{icml2026}

\usepackage{natbib}
\usepackage{amsmath}
\usepackage{amssymb}
\usepackage{mathtools}
\usepackage{amsthm}

\usepackage[capitalize,noabbrev]{cleveref}

\theoremstyle{plain}
\newtheorem{theorem}{Theorem}[section]
\newtheorem{proposition}[theorem]{Proposition}

\newtheorem{corollary}[theorem]{Corollary}
\theoremstyle{definition}
\newtheorem{definition}[theorem]{Definition}
\newtheorem{assumption}[theorem]{Assumption}
\theoremstyle{remark}
\newtheorem{remark}[theorem]{Remark}

\usepackage[textsize=tiny]{todonotes}

\icmltitlerunning{Learning High-Dimensional Parity Functions with Product Networks using Gradient Descent}

\begin{document}
\twocolumn[
  \icmltitle{Learning High-Dimensional Parity Functions with Product Networks using Gradient Descent}



  \icmlsetsymbol{equal}{*}

  \begin{icmlauthorlist}
    \icmlauthor{Guillaume Larue}{ora}
    \icmlauthor{Louis-Adrien Dufrène}{ora}
    \icmlauthor{Quentin Lampin}{ora}
    \icmlauthor{Hadi Ghauch}{tp}
    \icmlauthor{Ghaya Rekaya}{tp}
  \end{icmlauthorlist}

  \icmlaffiliation{ora}{Orange Research, Meylan, France}
  \icmlaffiliation{tp}{Télécom Paris, Institut Polytechnique de Paris, Palaiseau, France}

  \icmlcorrespondingauthor{Guillaume Larue}{guillaume.larue@orange.com}
  \icmlcorrespondingauthor{Louis-Adrien Dufrène}{louisadrien.dufrene@orange.com}

  \icmlkeywords{Parity Learning, Product Neural Networks, Data Sparsity, Boolean Functions, Convergence Guarantees, Sample Complexity}

  \vskip 0.3in
]



\printAffiliationsAndNotice{}  

\begin{abstract}
    Parity functions are fundamental Boolean operations with critical applications across machine learning, cryptography, and error correction. Yet, learning high-dimensional parity functions poses significant challenges: in a general setting, standard neural network architectures typically require exponential sample complexity, making gradient-based optimization intractable for large number of inputs $N$. We demonstrate that compact product-based neural architectures combined with stochastic data sparsity (Bernoulli inputs with $p_e \leq 1/N$) and appropriate hyperparameter choice enable efficient parity learning, with theoretical guarantees of convergence. Experiments validate our theory across dimensions up to $N = 100{,}000$, with empirical evidence showing optimal hyperparameter choices for $p_e$ and learning rate $\alpha$, as well as polynomial complexity scaling laws. This work establishes fundamental connections between architectural inductive bias and data sparsity, opening new possibilities for neural arithmetic, structured reasoning, binary neural networks, and machine learning applied to automated protocol discovery.
\end{abstract}

\section{Introduction}  
    Parity functions are a canonical family of Boolean operations with deep connections to machine learning, cryptography, coding theory, and digital signal processing. Their deceptively simple structure has posed fundamental challenges to neural computation since the earliest days of the field. The classic 2-bit XOR problem famously demonstrated that single-layer perceptrons cannot achieve linear separability for even the simplest parity, requiring at least two layers, a limitation that contributed to the first "AI Winter" following the influential critique of \cite{PerceptronsAnIntroductiontoComputationalGeometry}. While architectural solutions to small parity problems have long been known \cite{ParitywithTwoLayerFeedforwardNets}, scaling to high-dimensional parity functions introduces new fundamental challenges.

    The subset parity problem involves learning a parity function on an unknown $k$-sized subset of $N$ inputs from labeled training data using gradient descent \cite{ProvableLimitationsofDeepLearning}. One key challenge is identifying which variables participate in the parity computation, a problem that scales exponentially with dimension $N$ in the absence of appropriate inductive biases. In a general setting, neural architectures trained with gradient-based methods require exponentially many samples to learn high-dimensional parities, as optimization often stalls due to the absence of useful intermediate signals to exploit \cite{ProvableLimitationsofDeepLearning, StatisticalQueriesandStatisticalAlgorithmsFoundationsandApplications}.

    Parity problems naturally arise in a  wide range of settings. Within machine learning, parity-like computations appear in arithmetic circuits and neural modules designed for algorithmic generalization \cite{NeuralArithmeticLogicUnits, NeuralArithmeticUnits}, structured probabilistic models like Sum-Product Networks \cite{SPN,SumProductNetworksASurvey}. Parities are central in cryptographic constructions, telecommunications, digital signal processing \cite{DigitalSignalProcessingPrinciplesAlgorithmsandApplications} and finite field arithmetic. 

    Among these applications, neural decoders for error-correcting codes provided the initial motivation for this work: recent results have demonstrated remarkable empirical success in learning decoder architectures via gradient descent \cite{DeepLearningMethodsforImprovedDecodingofLinearCodes,NeuralBeliefPropagationAutoEncoderforLinearBlockCodeDesign,LearningLinearBlockCodesWithGradientQuantization, FactorGraphOptimizationofErrorCorrectingCodesforBeliefPropagationDecoding}, despite the prevalence of parity functions. This raises a fundamental question that extends well beyond coding theory: under what conditions can gradient descent efficiently learn parities? Our work addresses this question in a general setting, proving that product-based architectures become provably learnable via standard gradient descent when inputs satisfy a sparsity condition naturally arising in many practical systems, including error-correcting codes.

    \subsection{Related Work}
        $N$-bit parity functions serve as a canonical benchmark in neural network research, exposing fundamental questions about architectural expressiveness and optimization.
    
        From an architectural perspective, the representability question has been extensively studied: can neural architectures express parity functions, and what constraints govern the size and depth required to compute $N$-bit parity problems? It is well established that single-layer perceptrons cannot solve even 2-bit XOR, requiring at least depth-2 architectures \cite{PerceptronsAnIntroductiontoComputationalGeometry,ParitywithTwoLayerFeedforwardNets}. A straightforward extension for $N$-bit parity involves stacking $N$ layers with skip connections, implementing the recursive structure $y = \bigoplus_{i=1}^{N}{x_{i}}= x_{1}\oplus(x_{2}\oplus(...\oplus(x_{N-1}\oplus x_N)...))$. Alternative solutions include cascading structures, hierarchical trees \cite{SolvingParityNProblemsWithFeedForwardNeuralNetworks, SolvingNbitParityProblemUsingNN, ParitywithTwoLayerFeedforwardNets}, but all exhibit fundamental depth/width trade-offs when relying on additive networks with conventional activations (ReLU, sigmoid, etc.). 

        A crucial distinction emerges between what can be represented and what can be learned: architectural expressiveness does not guarantee learnability \cite{AreEfficientDeepRepresentationsLearnable}. While manual weight configuration can provably implement specific subset parity functions within these architectures \cite{SolvingParityNProblemsWithFeedForwardNeuralNetworks, NBitParityNeuralNetworksNewSolutionsBasedonLinearProgramming}, learning these weights through gradient descent presents fundamental challenges in particular regarding sample complexity and generalization capabilities \cite{AreEfficientDeepRepresentationsLearnable,AMathematicalModelforCurriculumLearningforParities,LearningHighDegreeParitiesTheCrucialRoleoftheInitialization,ProvableAdvantageofCurriculumLearningonParityTargetswithMixedInputs,LearningParitieswithNeuralNetworks}. This representation-learnability gap highlights that architectural capacity alone is insufficient. The optimization landscape, initialization strategies, and training dynamics all determine whether a representable function can be successfully learned.

        While parity functions are efficiently learnable using algorithms like Gaussian elimination (PAC-learnable) \cite{StatisticalQueriesandStatisticalAlgorithmsFoundationsandApplications}, gradient descent faces fundamental obstacles. 

        The core difficulty stems from parity functions' inability to be approximated by linear combinations of lower-order components, eliminating the incremental learning pathways that gradient descent typically exploits. This creates an optimization landscape where intermediate representations provide no useful signal toward the target function, forcing algorithms into an all-or-nothing learning regime fundamentally at odds with gradient-based optimization's incremental nature. While deep networks can represent parity effectively, standard gradient descent from random initialization on uniform data often fails to discover these representations, succeeding only when initialized extremely close to optimal solutions  \cite{AreEfficientDeepRepresentationsLearnable}. Within the Statistical Query framework, parity learning requires exponential sample complexity, as parity functions on $\{0, 1\}^N$ have SQ-DIM = $2^N$ \cite{StatisticalQueriesandStatisticalAlgorithmsFoundationsandApplications}.
    
        Some solutions have been proposed to overcome gradient descent's limitations on parity learning. One approach leverages specialized data distributions: sparse parity functions can be learned when the data distribution is uniform everywhere except on the parity's support, where all bits are perfectly correlated \cite{LearningParitieswithNeuralNetworks}. Curriculum learning strategies enable polynomial-time learning by progressively increasing complexity, i.e. first training on sparse inputs, then transitioning to dense uniform inputs \cite{AMathematicalModelforCurriculumLearningforParities, ProvableAdvantageofCurriculumLearningonParityTargetswithMixedInputs}. Careful initialization strategies can be crucial: SGD succeeds on MLPs for full or near-full parities when using Rademacher initialization \cite{LearningHighDegreeParitiesTheCrucialRoleoftheInitialization}. These approaches demonstrate that strategic control of data distribution, training progression, and initialization can enable gradient-based parity learning on additive networks.

    \subsection{Our Work}
        In this work, we focus on a compact \textbf{product-based architecture} combined with \textbf{stochastic dataset sparsity} to enable arbitrary subset parity learning in \textbf{high-dimensional spaces}. Product-based structures inherently create an inductive bias that provides straightforward \textbf{generalization} and thus improved \textbf{sample efficiency} through direct architectural alignment with parity computation. 

        Parity functions can be directly expressed as products of bipolar inputs, making product-based architectures a natural choice. Without this structural alignment, standard MLPs trained under conventional methods face a fundamental trade-off: either the model fails to generalize outside its training set, or the training set must be exponentially large relative to problem dimensionality \cite{LearningParitieswithNeuralNetworks,StatisticalQueriesandStatisticalAlgorithmsFoundationsandApplications}. Product-based architectures circumvent this limitation by embedding parity's mathematical structure directly into a single neuron per parity subset, as shown later.

\begin{figure*}[t]
    \begin{subfigure}[t]{125pt}
        \centering
        \resizebox{75pt}{75pt}{
            \begin{tikzpicture}
                \draw[thick,->] (0,0) -- (0,6.5);
                \draw[thick,->] (0,0) -- (6.5,0);
                \draw[thick] (-0.2,4) -- (+0.2,4);
                \draw[thick] (4,-0.2) -- (4,+0.2);
                \node [font=\Huge] at (4,-1) {$1$};
                \node [font=\Huge] at (-1,4) {$1$};
                \node [font=\Huge] at (-0.5,-0.5) {$0$};
                \node [font=\Huge] at (6.5,-1) {$x_{1}$};
                \node [font=\Huge] at (-1,6.5) {$x_{2}$};
                
                \draw[fill=white] (0,0) circle (0.25);
                \draw[fill=black] (0,4) circle (0.25);
                \draw[fill=black] (4,0) circle (0.25);
                \draw[fill=white] (4,4) circle (0.25);
                
                \draw[thick,dotted] (0,6) -- (6,0);
                \draw[thick,dotted] (0,2) -- (2,0);
                \draw[thick,dashed,red] (0.1,6.1) -- (6.1,0.1);
                \draw[thick,dashed,red] (0.1,2.1) -- (2.1,0.1);
            \end{tikzpicture}
            }
        \caption{Complete dataset (4/4 samples).} 
        \label{fig:2-XOR-1}
    \end{subfigure}
    \hfill
    \begin{subfigure}[t]{125pt}
        \centering
        \resizebox{75pt}{75pt}{
        \begin{tikzpicture}
            \draw[thick,->] (0,0) -- (0,6.5);
            \draw[thick,->] (0,0) -- (6.5,0);
            \draw[thick] (-0.2,4) -- (+0.2,4);
            \draw[thick] (4,-0.2) -- (4,+0.2);
            \node [font=\Huge] at (4,-1) {$1$};
            \node [font=\Huge] at (-1,4) {$1$};
            \node [font=\Huge] at (-0.5,-0.5) {$0$};
            \node [font=\Huge] at (6.5,-1) {$x_{1}$};
            \node [font=\Huge] at (-1,6.5) {$x_{2}$};
            
            \draw[fill=white] (0,0) circle (0.25);
            \draw[fill=black] (4,0) circle (0.25);
            \draw[fill=white] (4,4) circle (0.25);
            
            \draw[thick,dotted] (1.5,0) -- (5.5,4);
            \draw[thick,dashed,red] (0.1,6.1) -- (6.1,0.1);
            \draw[thick,dashed,red] (0.1,2.1) -- (2.1,0.1);
        \end{tikzpicture}
        }
    \caption{Partial dataset (3/4 samples).}

    \label{fig:2-XOR-2}
    \end{subfigure}\hfill
    \begin{subfigure}[t]{125pt}
        \centering
        \resizebox{75pt}{75pt}{
        \begin{tikzpicture}
            \draw[thick,->] (0,0) -- (0,6.5);
            \draw[thick,->] (0,0) -- (6.5,0);
            \draw[thick] (-0.2,4) -- (+0.2,4);
            \draw[thick] (4,-0.2) -- (4,+0.2);
            \node [font=\Huge]at (4,-1) {$1$};
            \node [font=\Huge]at (-1,4) {$1$};
            \node [font=\Huge]at (-0.5,-0.5) {$0$};
            \node [font=\Huge]at (6.5,-1) {$x_{1}$};
            \node [font=\Huge]at (-1,6.5) {$x_{2}$};
            
            \draw[fill=black] (0,4) circle (0.25);
            \draw[fill=black] (4,0) circle (0.25);
            
            \draw[thick,dashed,red] (0.1,6.1) -- (6.1,0.1);
            \draw[thick,dashed,red] (0.1,2.1) -- (2.1,0.1);
        \end{tikzpicture}
        }
    \caption{Partial dataset (2/4 samples). }

    \label{fig:2-XOR-3}
    \end{subfigure}\hfill

    \vspace{0.5cm}
    \caption{\textbf{2-XOR learning scenarios:} The axes represent input values $\mathbfit{x}=\{x_1,x_2\}$ and circle colors denote labels $y$ (white: 0, black: 1).  \textbf{(a)} Training success guarantees test generalization with learned separating boundaries (dotted - black lines) solving the true task (dashed - red lines). \textbf{(b)} Learned boundary separates training data but fails on true XOR task. \textbf{(c)} General model may learn trivial "always 1" solution while a structured XOR model would have sufficient information for full parametrization. This sample efficiency advantage becomes critical for N-XOR problems where the gap between structured and unstructured data requirements grows exponentially.}

    \label{fig:xor_scenarios}

\end{figure*}

        Figure~\ref{fig:xor_scenarios} illustrates the fundamental generalization challenge faced by standard architectures when learning parity functions from incomplete datasets. Standard MLPs fail to generalize when trained on subsets of the full truth table due to parity functions' non-linear separability. Without access to all $2^N$ possible input combinations, these architectures may find decision boundaries that separate the training data but fail to capture the underlying parity logic. For instance, in the 2-bit XOR case, an MLP trained on only 3 out of 4 possible examples can learn a separator that works on the training set but misclassifies the held-out example (Figure~\ref{fig:xor_scenarios}.b). This problem becomes prohibitively severe as the dimensionality increases, requiring exhaustive datasets that scale exponentially with problem size. In contrast, product-based architectures can identify the parity function even from minimal training data (e.g., the $N$ basis vectors as in Figure~\ref{fig:xor_scenarios}.c), as  identifying the subset's support enables full generalization across the truth table.

        While product-based architectures provide significant advantages in generalization, they do not eliminate the inherent challenges of subset parity learning. Under uniform input distribution, the fundamental difficulty of identifying the correct subset of relevant variables from exponentially many possibilities remains. Moreover, product operations introduce numerical complexities that can complicate learning. The non-linear nature of products can lead to optimization challenges including non-convex loss landscapes, reduced linear separability, and potential gradient instability. 

        In addition to architectural inductive bias, which aids generalization, we leverage inductive bias on input data distribution to facilitate parity subset identification.  We employ stochastic per-example sparsity throughout training, demonstrating that gradient descent on product-based networks can successfully identify the target subset without requiring explicit curriculum schedules. A key practical advantage of our stochastic sparsity approach is its modular applicability: unlike curriculum methods that typically operate on raw inputs, our technique can be integrated at any layer within a complex architecture. By applying sparsity regularization to intermediate feature representations that feed into product-based layers, we enable parity learning on learned features rather than just input variables, allowing flexible integration of XOR operations throughout deep networks.

        Hence, our method leverages two complementary inductive biases: \textbf{(1) architectural bias} through product-based structures enabling direct generalization once the support is identified, even from partial truth tables, without requiring exhaustive coverage; and \textbf{(2) data distribution bias} through stochastic sparsity that makes subset identification tractable and regularizes the optimization landscape, leading to near-convex dynamics under gradient descent.

    \subsection{Plan of the Paper \& Contributions} 
        This paper provides a comprehensive theoretical and empirical analysis of product-based architectures for subset parity learning under stochastic sparsity conditions. In \textbf{Section~\ref{section:problem_formulation}}, we formally define the subset parity learning problem and introduce an oracle framework enabling precise convergence analysis. In \textbf{Section~\ref{section:learning_parities_with_product_node}} we define product nodes and prove their training is generally non-convex and lacks linear separability, explaining optimization difficulties in standard settings. We define our XOR node as a special case of product units, thus inheriting its properties.  In \textbf{Section~\ref{section:learning_parities_under_stochastic_sparsity}}, we develop our main theoretical framework: we demonstrate how sparse input structures enable tractable learning by making product nodes strictly convex, introduce our stochastic sparsity model with Bernoulli probability $p_e$, establish key properties of XOR nodes with normally distributed weights, and prove guaranteed convergence to a bounded interval under bounded learning rate when $p_e = 1/N$. In \textbf{Section~\ref{section:experimental_validation}}, we provide empirical validation, demonstrating convergence properties for large dimensions up to $N = 100{,}000$.  We study weight distribution behaviors and characterize empirical boundaries where convergence fails, validating alignment with our theoretical predictions. We exhibit polynomial complexity scaling laws in $N$ primarily driven by the effective learning rate $\alpha p_e$, avoiding the exponential scaling that characterizes standard approaches to parity learning.

       
        \textbf{Main Contribution:} We establish a complete theoretical framework for provably efficient subset parity learning in high-dimensional spaces, combining product-based architectures with stochastic data sparsity. Our oracle-based analysis provides precise convergence conditions and hyperparameter bounds, while extensive experiments validate practical applicability for problems with thousands of dimensions\footnote{The source code to reproduce all listed experiments is available at: \url{https://github.com/Orange-OpenSource/learning-parities-with-product-networks}}.


\section{Problem Formulation}\label{section:problem_formulation}
    We consider a supervised learning setting with training set $\mathbb{S} = \{(\mathbfit{x}_m, y_m^{\mathrm{true}})\}_{m=1}^M$ where $\mathbfit{x}_m \in \mathbb{R}^N$ are input vectors and $y_m^{\mathrm{true}} \in \mathbb{R}$ are target outputs. Our model $f(\mathbfit{w}; \mathbfit{x}_m)$ is parameterized by weights $\mathbfit{w} \in \mathbb{R}^N$. We consider an oracle-based setup where the true labels are generated by applying the same function $f$ to the inputs using dedicated oracle parameters $\mathbfit{w}^{\mathrm{true}}$, unknown to the model to be trained: 
    \begin{equation}
        y_m^{\mathrm{true}} = f(\mathbfit{w}^{\mathrm{true}}; \mathbfit{x}_m), \quad \forall m \in \{1,\ldots,M\}
    \end{equation}
    This oracle paradigm allows controlled analysis of the optimization landscape, convergence to the true solution, and theoretical learnability guarantees under specific conditions. The learning objective is to minimize the empirical risk using gradient descent:
    \begin{equation}
        \hat{\ell}(\mathbfit{w}) = \frac{1}{M} \sum_{m=1}^M l(f(\mathbfit{w}; \mathbfit{x}_m), y_m^{\mathrm{true}}) 
    \end{equation}
    
    We focus on learning XOR functions defined, given binary\footnote{Although oracle weights and inputs are typically binary ($\mathbfit{w}^{\mathrm{true}},\mathbfit{x}_m  \in \{0,1\}^N$), model weights $\mathbfit{w}\in\mathbb{R}^N$ are real-valued, and expected to converge to the discrete values $\{0,1\}^N \subset \mathbb{R}^N$.}. inputs $\mathbfit{x}_m \in \{0,1\}^N$ and weights $\mathbfit{w}^{\mathrm{true}} \in \{0,1\}^N$, as:
    
    \begin{equation}
        y_m = f(\mathbfit{w}; \mathbfit{x}_m) = \bigoplus_{i=1}^{N} w_i x_{m,i}
    \end{equation}
    
    where $\bigoplus$ denotes XOR operation (addition in $\mathbb{F}_2$).
    


\section{Parity Functions \& Product Nodes}\label{section:learning_parities_with_product_node}
    \subsection{On the Inductive Bias of Product Node}
        $N$-bit parity presents two fundamental challenges to be addressed for successful gradient descent based learning:
        
        \textbf{Subset identification:} determining which of the $N$ input variables participate in the parity. Assuming inputs are uniformly distributed, both additive and product-based networks face exponential complexity in identifying the correct subset of relevant variables through gradient descent. Data distribution bias can help mitigate this aspect.

        \textbf{Generalization:} correctly computing parity logic on all $2^N$ input combinations. Even when subset identification is solved (e.g., via specific input distributions), different architectures may exhibit fundamentally different generalization capabilities due to distinct architectural inductive biases. 

        To illustrate this distinction, consider 2-XOR scenario with partial training data: $\{(\mathbfit{x}, y^{\mathrm{true}}):(1,0) \rightarrow 1, (0,1) \rightarrow 1\}$ (Figure~\ref{fig:xor_scenarios}.c). An additive network could converge to the "always 1" solution, satisfying training data but failing on test cases $\{(0,0) \rightarrow 0, (1,1) \rightarrow 0\}$, highlighting the need for exponentially large training sets. In contrast, a product-based XOR node\footnote{Using bipolar transformation $\{0,1\} \to \{+1,-1\}$ where XOR corresponds to product operation. This assumes all variables participate in parity; complex scenarios follow in later sections.} has a unique solution (both weights = 1) that automatically generalizes correctly, demonstrating structural inductive bias aligned with XOR logic.

        Hence, while specific input distributions can universally solve subset identification (with polynomial sample complexity), well crafted product architectures inherently solve boolean generalization through structural inductive bias. 
    
    \subsection{The Optimization Challenge of Product Node}
        The architectural advantage of product nodes comes at the cost of increased optimization complexity. While additive nodes of the form $\sum w_ix_i$ ensure convex loss landscapes enabling straightforward gradient descent convergence, naive product nodes of the form $\prod w_ix_i$ introduce fundamental optimization challenges that complicate learning.

        They are non-linearly separable (as evidenced by the gradient equation in Prop.~\ref{prop:product_gradient}) and create non-convex optimization landscapes (Prop.~\ref{prop:product_nonconvex}) with many degenerate solutions. Their product structure causes numerical instabilities, including vanishing and exploding gradient that can severely impede training. See Appendix~\ref{app:product_node_analysis} for detailed analysis of such product nodes behaviors.   

        A critical issue plaguing naive product nodes is the vanishing output problem: any zero weight causes the entire product to collapse to zero, preventing the node from selective input discarding. This restricts the node's ability to learn meaningful representations. To address this vanishing behavior, we draw inspiration from the concept of neutral elements and approaches in \cite{NeuralArithmeticUnits,NeuralArithmeticLogicUnits}. We employ throughout this paper a modified product node of the form:
        \begin{equation}
            \prod_{i=1}^N \Big(w_ix_i + (1-w_i)\Big)
        \end{equation}

        In this formulation, null weights contribute a factor of one rather than forcing the entire product to zero. This modification preserves the multiplicative structure while eliminating pathological vanishing behavior. Crucially, it enables subset selection by setting irrelevant inputs' weights to zero. Detailed analysis of this node is provided in Appendix~\ref{app:product_with_neutral_element_node_analysis}.

        XOR is commonly represented as a product over bipolar inputs $\{-1,+1\}$, providing a natural connection between product nodes and parity computation. Using our product formulation with neutral elements, we can express parity as:

        \begin{equation}\label{eq:xor}
            \bigoplus_{i=1}^{N} w_ix_{i}= \frac{1}{2}\Bigg[1-\prod_{i=1}^N\bigg(w_i(1-2x_i)+(1-w_i)\bigg)\Bigg]
        \end{equation}
        
        When $w_i,x_i \in \{0,1\}$, this exactly computes the parity of the subset of inputs where $w_i=1$. We refer the reader to Appendix~\ref{app:xor_node_analysis} for detailed node analysis.
    

\section{Learning Parities under Sparsity}\label{section:learning_parities_under_stochastic_sparsity}
    
    \textit{The following Sections provide a proof sketch that guides the reader through the logical chain establishing convergence. Key ideas and intermediate results are presented here while complete proofs are provided in Appendices.}

    \subsection{Sparsity-Induced Convexity}\label{subsection:deterministic_sparsity}
        While the proposed neutral element approach does not solve non-convexity in the general case (Prop.~\ref{prop:product_node_with_ne_non_convexity}), deterministic unit-sparse input distributions (\emph{i.e.} each input vector has at most one non-zero entry) create a linearly separable  optimization landscape (sparsity eliminates multiplicative interactions between variables) ensuring convexity (Prop.~\ref{prop:sparse_convexity}). In this regime, the expected gradient becomes proportional to the $L1$ distance to target parameters, \emph{i.e.} $\mathbb{E}\left\{\nabla_\mathbfit{w}\hat{\ell}_{\overset{\bullet}{\Pi}}(\mathbfit{w})\right\} \propto(\mathbfit{w} - \mathbfit{w}^\mathrm{true})$, similar to additive nodes.

        The XOR operator built using this product structure naturally inherits this convexity property under one-hot input distributions (the binary case of unit-sparse input distributions). Each one-hot input vector provides an isolated "probe" of a single variable's parity contribution, if any (Prop.~\ref{prop:xor_sparse_data}).

        This sparse regime provides dual benefits: it reduces subset identification complexity from $\mathcal{O}(2^N)$ to $\mathcal{O}(N)$ samples while transforming the non-convex problem into a convex one. The product architecture then enables automatic Boolean generalization without additional training, achieving polynomial-time learning.

        Exact unit sparsity is rarely encountered in practical scenarios, limiting the applicability of this theoretical result. On the contrary, stochastic sparsity is commonly found in applications such as error-correcting codes in communication systems\footnote{When learning Linear Block Codes or their decoders, neural networks are typically trained on the zero codeword perturbed by sparse errors under controlled noise levels (by code linearity, this generalizes to all codewords). A typical 5G LDPC code has block size $\approx 8{,}000$ bits, but decoders process local check nodes of degree $d_c \approx 10$. Even at a pessimistic raw bit error rate (BER) of $10^{-1}$, each check node observes on average $1$ erroneous bit. Under realistic BER $\approx 10^{-3}$, the sparsity becomes $p_e \approx 10^{-3}$, ensuring unit-sparse error regimes for operations up to $N \approx 1{,}000$.} \cite{LearningLinearBlockCodesWithGradientQuantization, DeepLearningMethodsforImprovedDecodingofLinearCodes, FactorGraphOptimizationofErrorCorrectingCodesforBeliefPropagationDecoding}. This raises a critical question: can we maintain convergence guarantees under such sparsity conditions?

        To address this, we analyze parity learning under stochastic sparsity where each input vector $\mathbfit{x}_m$ is generated by independently sampling inputs according to a Bernoulli distribution $x_{m,i} \sim B(p_e)$ for all $i \in \{1,\ldots,N\}$, where the probability  $p_e$ controls the expected sparsity level. We outline three distinct regimes: \textbf{ultra-sparse} ($p_e \ll 1/N$, mostly zero vectors), \textbf{unit-sparse} ($p_e = 1/N$, one active bit on average), and \textbf{dense} ($p_e \gg 1/N$, multiple active bits are common).

    \subsection{Normally Distributed Model Weights}\label{subsection:normally_distributed_weights}
        To analyze XOR node optimization dynamics under gradient descent, we adopt a distributional perspective. Let $D[k]$ denote the model's weight distribution and $D_0[k]$, $D_1[k]$, the weight distributions associated with oracle target values of 0 and 1, respectively, at training step $k$.

        When weight distributions are Gaussian, i.e., $D_0[k]\sim\mathcal{N}(\mu_0[k],\sigma_0^2[k])$ and $D_1[k]\sim\mathcal{N}(\mu_1[k],\sigma_1^2[k])$, the distributional framework exhibits two key properties (experimentally validated in Section~\ref{subsubsection:validation_gaussian_model}): 

        \textbf{Gaussian Preservation:} Under our setting, gradient descent updates constitute affine transformations, ensuring that Gaussian distributions remain Gaussian across training steps (Prop.~\ref{prop:gaussian_conservation}).

        \textbf{Symmetry Preservation:} If the distributions are symmetric w.r.t. $1/2$ at step $k$, i.e. $\mu_0[k] = 1 - \mu_1[k]$ and $\sigma_0^2[k] = \sigma_1^2[k]$, this symmetry is preserved at step $k+1$ (Prop.~\ref{prop:symmetry_conservation}).
        
        Proper initialization (Gaussian distribution $\mathcal{N}(1/2, \sigma^2[0])$) ensures these properties hold from the start, enabling analytical simplifications. Gradient updates become deterministic transformations of distributional moments $(\mu[k], \sigma^2[k])$ and symmetric coupling allows tracking a single distribution $(\mu[k], \sigma^2[k]) := (\mu_0[k], \sigma_0^2[k])$ without loss of generality, since $(\mu_1[k], \sigma_1^2[k]) = (1-\mu_0[k], \sigma_0^2[k])$. This reduces the analysis from tracking $N$ individual weights to two distributional parameters. Moreover, symmetry renders the system invariant to the proportion $p_w$ of oracle weights equal to 1, ensuring the behavior is independent of the target parity's structure (see Remark~\ref{remark:gaussian_init}). Notably, these arguments require no knowledge of the oracle weight assignments.
        

    \subsection{Convergence Analysis under Unit Sparsity}\label{subsection:convergence_analysis_under_unit_sparsity}
        We now analyze convergence behavior under stochastic unit sparsity, where $p_e=1/N$ ensures an average of one active bit per input vector. By the symmetry established in Prop.~\ref{prop:symmetry_conservation}, we analyze only distribution $D_0[k]$ using notation $(\mu[k], \sigma^2[k]) := (\mu_0[k], \sigma_0^2[k])$, with $D_1[k]$ behavior following by symmetry. We define convergence of weights to their true target values as:
        \begin{equation}
            \lim_{k\to\infty} \mu[k] = 0\\ \quad \text{and}\quad \lim_{k\to\infty} \sigma^2[k] = 0\\
        \end{equation}

        Our convergence analysis operates within a bounded domain defined by $\mu[k] \in [-1/4, 1/2]$ and $\sigma^2[k] \in (0, 1/4]$ (Assumption~\ref{assumption:system_assumption}). We initialize the system with a symmetric Gaussian distribution centered at $\mu[0] = 1/2$ and variance $\sigma^2[0] = 1/4$, which ensures both the symmetry condition and places the system within our analysis domain (Remark~\ref{remark:gaussian_init}). Under bounded domain assumptions, we express the gradient descent weight update using series expansion (Prop.~\ref{prop:exp_approx_when_pe_1_N} and Remark~\ref{remark:exp_approx_update_rules}). The resulting update's affine form $w_i[k+1] = m[k]w_i[k] + c[k]$ enables direct distributional analysis. Applying the transformation $aX+b \sim \mathcal{N}(a\mu+b,a^2\sigma^2)$ for $X \sim \mathcal{N}(\mu,\sigma^2)$ yields $\mu[k+1] = m[k]\mu[k] + c[k]$ and $\sigma^2[k+1] = m^2[k]\sigma^2[k]$.
                      
        This separation enables independent analysis: variance exhibits simple geometric decay when $|m[k]| < 1$  (see Section~\ref{subsubsection:convergence_analysis_variance}), while mean convergence involves more complex dynamics where both contracting factor $m[k]$ and bias term $c[k]$ evolve with the system state  (see Section~\ref{subsubsection:convergence_analysis_mean}). 

        \textit{\textbf{Analytical Methodology:} In upcoming sections, using our series expansions, we establish worst-case bounds on truncation errors over our assumed domain, then derive conditions ensuring certain properties under these bounds.} 
        
        \subsubsection{Convergence Analysis - Variance}\label{subsubsection:convergence_analysis_variance}
        
            We first analyze convergence of variance $\sigma^2[k]$ under Assumptions~\ref{assumption:system_assumption}. Since variance evolves as $\sigma^2[k+1] = m^2[k]\sigma^2[k]$, convergence is guaranteed when $|m[k]| < 1$ for all $k$, ensuring geometric decay toward zero. This condition holds when the learning rate satisfies: 
            \begin{equation}
                \alpha < \alpha_0 = Ne^{-\frac{72N-9}{32N-72}} \quad \text{(Prop.~\ref{prop:sigma_convergence})}
            \end{equation}   

            A stricter condition $\alpha < \alpha_1 = \frac{\alpha_0}{2}$ ensures $0 < m[k] < 1$, guaranteeing the affine transformation exhibits no sign reversals (Prop.~\ref{prop:monotonic_constraint}). While variance convergence only requires $|m[k]| < 1$, the condition $0 < m[k] < 1$ is crucial for mean convergence analysis, as it ensures the distance to the instantaneous fixed point $\overline{\mu}[k] = \frac{c[k]}{1-m[k]}$ decreases monotonically without overshooting.
                   
    \subsubsection{Convergence Analysis - Mean}\label{subsubsection:convergence_analysis_mean}

        Having established variance convergence, we now analyze convergence of the mean $\mu[k]$.
        
        Since gradient descent typically moves in small incremental updates across the parameter space, we first establish in Prop.~\ref{prop:bounded_update_steps} that update steps $\delta[k] = \mu[k+1] - \mu[k]$ are uniformly bounded across our working domain by $\delta_{\max}$.
        
        The mean dynamics are governed by convergence toward instantaneous fixed points $\overline{\mu}[k] = \frac{c[k]}{1-m[k]}$ that vary with system state. In Prop.~\ref{prop:envelope_characterization} we bound the instantaneous fixed point within $\overline{\mu}_{\min}[k] \leq \overline{\mu}[k] \leq \overline{\mu}_{\max}[k]$. In Prop.~\ref{prop:bounded_convergence_interval}, we analyze where these envelope bounds intersect the identity line $\mu[k+1] = \mu[k]$. This yields the bounds $\phi_{\min}'$ and $\phi_{\max}'$ that contain these intersections, thereby bounding the convergence region where the true equilibrium $\overline{\mu}[k]=\mu[k]$ must lie. Using envelope functions monotonicity, we establish directional convergence: trajectories with $\mu[k] < \phi_{\min}'$ are driven upward, while those with $\mu[k] > \phi_{\max}'$ are driven downward, ensuring systematic attraction to $[\phi_{\min}', \phi_{\max}']$. Under bounded updates, we prove convergence to the padded interval $[\phi_{\min}' - \delta_{\max}, \phi_{\max}' + \delta_{\max}]$, when $N\geq18$ and $\alpha<\alpha_2$, where $\delta_{\max} = \mathcal{O}(\alpha/N)$ can be made arbitrarily small by reducing $\alpha$. In the limit $\alpha \to 0$, convergence is guaranteed within $[\phi_{\min}', \phi_{\max}']$, where the interval size scales as $\mathcal{O}(1/N)$.
 
        We verify forward invariance of system assumptions through two approaches: requiring $N \geq 43$ with $\alpha < \alpha_1$ (Prop.~\ref{prop:envelope_invariance}), or the relaxed condition $N \geq 8$ under sufficiently small $\alpha$ ensuring bounded updates (Prop.~\ref{prop:relaxed_stability}).
  
    \subsubsection{Closure}

        Having established the individual components of our convergence analysis, we now synthesize these results to provide our convergence proof for parity function learning. Under our proposed symmetric initialization $\mu[0] = 1/2$, $\sigma^2[0] = 1/4$, we have demonstrated:

        \textbf{Variance geometric convergence}: $\lim_{k\to\infty} \sigma^2[k] = 0$ when $\alpha < \alpha_0$ (Prop.~\ref{prop:sigma_convergence})

        \textbf{Mean bounded convergence}: $\mu[k]$ converges to interval $[\phi_{\min}' - \delta_{\max}, \phi_{\max}' + \delta_{\max}]$ when $N \geq 18$ and $\alpha < \alpha_2$ (Prop.~\ref{prop:bounded_convergence_interval})
               
        Convergence interval size scales as $\mathcal{O}(1/N) + \mathcal{O}(\alpha/N)$, providing arbitrarily tight convergence as $N \to \infty$. The complete closure of the demonstration is provided in Prop.~\ref{prop:complete_convergence_closure}.

    \begin{figure*}[htbp]
        \centerline{\includegraphics[width=470pt]{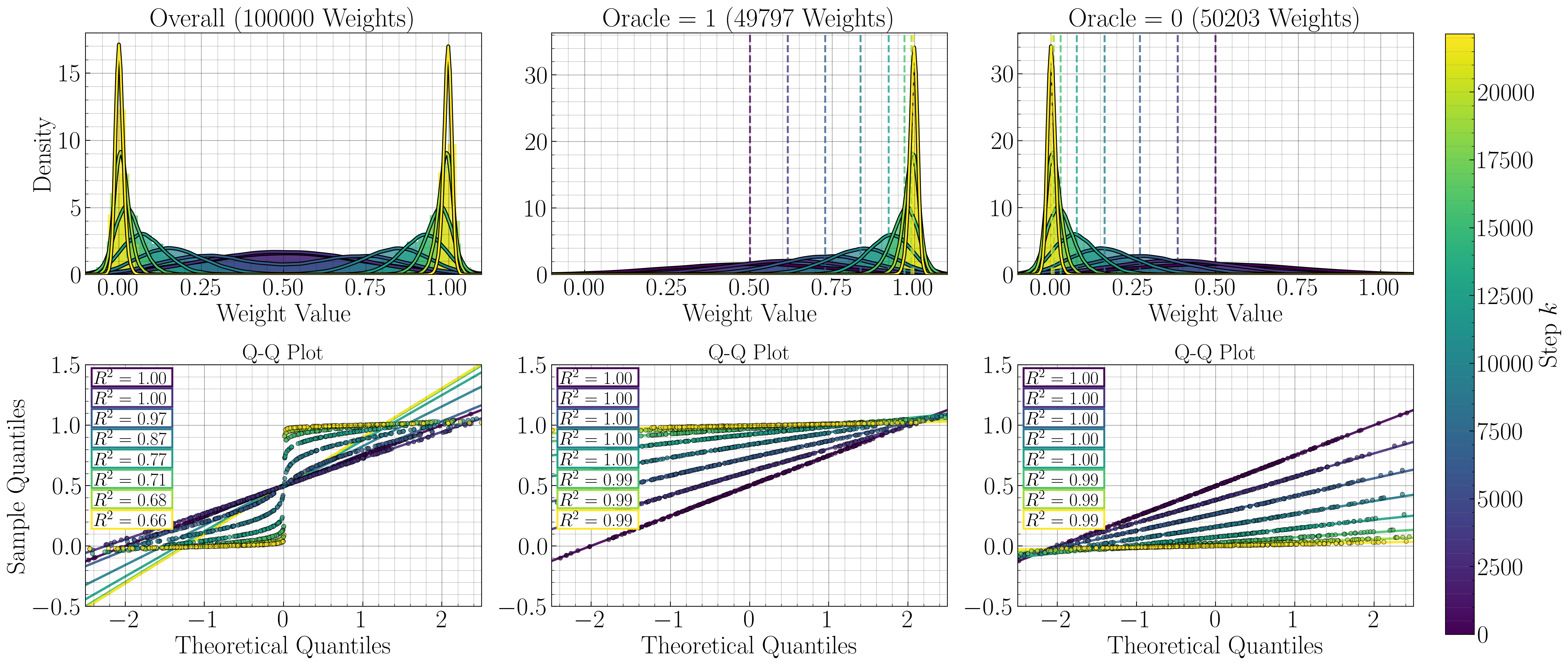}}
        \caption{Validation of the symmetric Gaussian properties. \textbf{Top row}: Weight distribution snapshots during training (purple: early, yellow: late) for all weights (left), target-1 set (center), and target-0 set (right). \textbf{Bottom row}: Corresponding Q-Q plots against theoretical Gaussian distributions. While combined distribution becomes bimodal, each set in isolation maintains Gaussianity throughout training, and converge symmetrically to targets. \textit{Conditions}: $N=100{,}000$, $M=1{,}000$, $\alpha=10$, $p_e=1/N$, $p_w=1/2$, $P=1$, $S=1{,}000{,}000$}
        \label{fig:gaussian_conservation}
    \end{figure*}
    
    \section{Experimental validation}\label{section:experimental_validation}
        \subsection{System model \& Implementation}

            We now evaluate our theoretical derivation through experiments. A model and oracle are defined each composed of $P$ parallel (independant) XOR units implementing product equation \eqref{eq:xor} (equivalently Appendix~\ref{app:xor_node_analysis}-\eqref{eq:xor_product}). We define $\mathbfit{W} \in \mathbb{R}^{N \times P}$ and $\mathbfit{W}^\mathrm{true} \in \mathbb{F}_2^{N \times P}$ the $N\times P$ parameter matrices of model and oracle whose columns represent the parameter vector of one of the $P$ XOR units. This setting allows efficient study of XOR node statistical behavior by executing many nodes in parallel. Sparse input vectors $\mathbf{b}_m \in \mathbb{F}_2^N$ for $m \in \{1,...,M\}$ are generated by independently setting each component to 1 with probability $p_e$ (Bernoulli process). MSE loss is used to match previous derivations. To ensure consistent learning dynamics, the loss function is normalized by sample count $M$ only (not by the number of nodes $P$), ensuring gradient magnitude for each node remains independent of $P$:
            \begin{equation}
                \hat{\ell}(\mathbfit{w}) = \frac{1}{M}\sum_{m=1}^{M} \sum_{p=1}^P (y_{m,p} - y_{m,p}^\mathrm{true})^2
            \end{equation}

            A standard SGD optimizer, without momentum, and a learning rate $\alpha$ is used with batch of size $M$.

    \subsection{Studies \& Results} 
        \subsubsection{Validation of the Gaussian Model}\label{subsubsection:validation_gaussian_model}
            
            Our theoretical analysis assumes weight distributions $D_0[k]$ and $D_1[k]$ (associated with oracle targets 0 and 1) remain Gaussian and symmetric around $1/2$ throughout training (Propositions~\ref{prop:gaussian_conservation} and~\ref{prop:symmetry_conservation}). We experimentally validate these claims using a single XOR node ($P=1$) with $N = 100{,}000$ to avoid averaging effects across nodes while ensuring sufficient sample size for reliable distributional analysis.

            Figure~\ref{fig:gaussian_conservation} shows weight distribution evolution through detailed distributional analysis with Q-Q plots, demonstrating symmetric weight evolution around $\mu = 1/2$ that confirms Propositions~\ref{prop:symmetry_conservation} and~\ref{prop:gaussian_conservation}. While combined weights become bimodal as they converge to distinct targets 0 and 1, each set of weights considered in isolation maintains perfect Gaussianity and symmetry throughout training. 

            Proposition~\ref{prop:symmetry_conservation} predicts learning independence from oracle weight proportion $p_w$. Experimental validation across all $p_w$ values (Appendix~\ref{appendix:pw_independence}) confirms this theoretical prediction.   

            These results demonstrate successful parity learning across all target densities, confirming practical applicability of our theoretical framework at high dimensions ($N=100{,}000$).

            \begin{figure}[htbp]
                \centerline{\includegraphics[width=225pt]{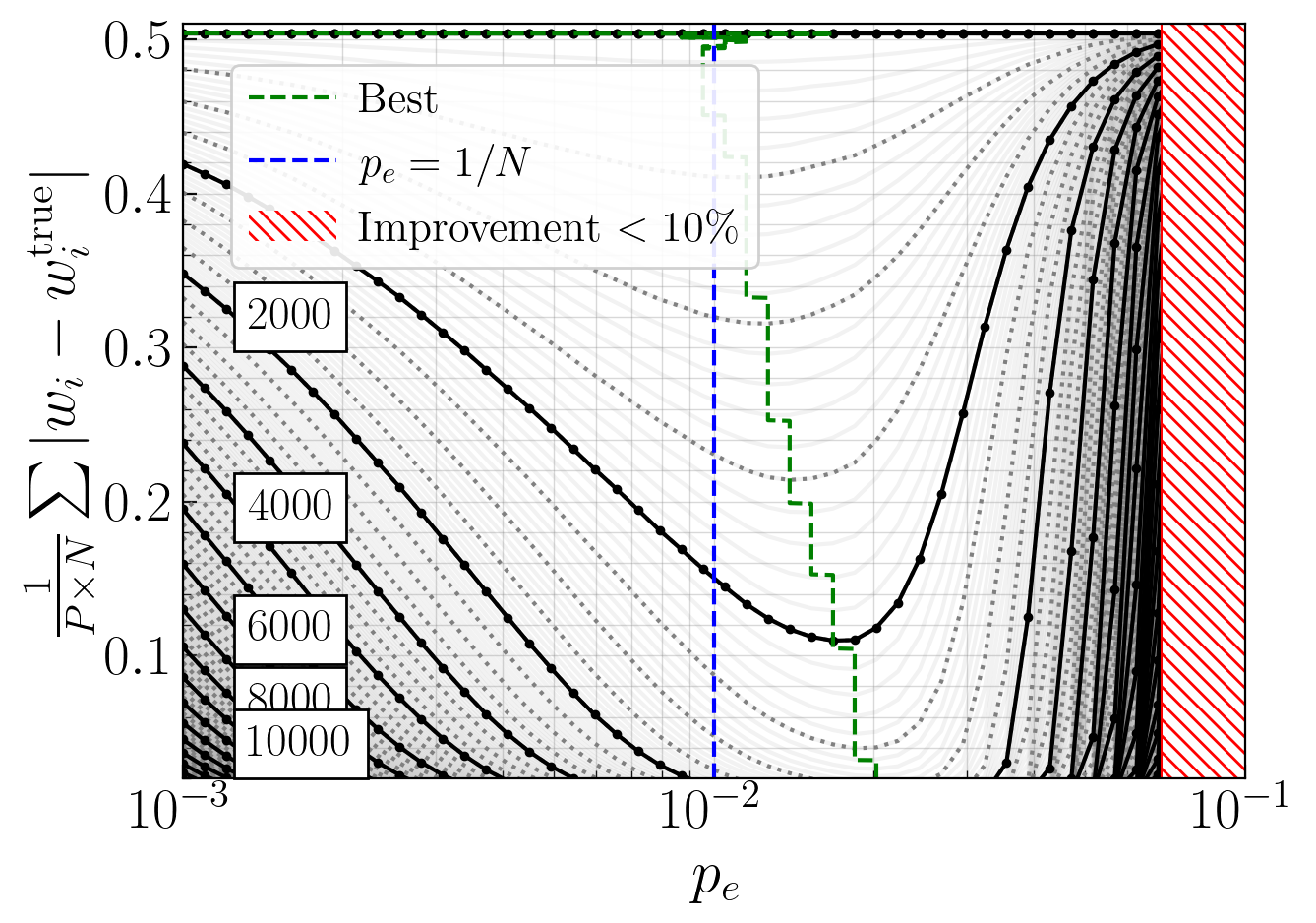}}
                \caption{Convergence analysis across sparsity regimes with constant $p_e$. Iso-step curves show training steps to reach convergence levels for fixed $p_e$ values. \textbf{Blue}: theoretical baseline $p_e=1/N$. \textbf{Green}: iso-curve minima trajectory showing best constant-$p_e$ shifts to $p_e \approx 2/N$ for complete training. \textbf{Red: } $p_e>p_e^\mathrm{lim}$, less than $10\%$ distance reduction in $S$ steps. \textit{Conditions}: $N=100$, $P=1000$, $\alpha=0.1$, $M=100$, $p_w=0.5$, $S=25{,}000$.}
                \label{fig:convergence_as_a_function_of_p_e}
            \end{figure}

        \subsubsection{Impact of Dataset Sparsity $p_e$}\label{subsubsection:impact_of_data_sparsity}
            
            We investigate the impact of dataset sparsity $p_e$ on training efficiency under constant $p_e$ values throughout training. We observe convergence by measuring the average $L_1$ norm between model and oracle weights and stop training when the norm is below $1\%$ or after a maximum of $S$ steps.
            
            Figure~\ref{fig:convergence_as_a_function_of_p_e} exhibits convergence behavior for $N=100$ across $p_e \in [10^{-3}, 10^{-1}]$, revealing the previously outlined regimes: \textbf{ultra-sparse} ($p_e \ll 1/N$, slow convergence, $\nabla \ell(\mathbfit{w})[0]\propto p_e$), \textbf{unit sparsity} ($p_e = 1/N$, theoretical baseline, $\max_{p_e}\nabla \ell(\mathbfit{w})[0]$), and \textbf{dense} ($p_e \gg 1/N$, exponential slowdown, $\nabla \ell(\mathbfit{w})[0] \propto(1-p_e)^{N-1}$) (Prop.~\ref{prop:optimal_error_probability}).
            
            While constant $p_e = 1/N$ guarantees convergence, the empirical optimum at $p_e^* \approx 2/N$ suggests slightly denser inputs can accelerate training. This reveals a trade-off: $p_e = 1/N$ enables fastest initialization but slower completion, while $p_e \approx 2/N$ has slower start but faster overall convergence. The lack of convergence in 25,000 steps for  $ p_e \geq p_e^{\text{lim}} \approx 7/N$ shows that dense initialization, from the start is detrimental. 
            These results support the idea of adaptive sparsity policies that gradually increase density during training, consistent with curriculum learning approaches \cite{ProvableAdvantageofCurriculumLearningonParityTargetswithMixedInputs,AMathematicalModelforCurriculumLearningforParities}.
            
            \begin{figure}[htbp]
                \centerline{\includegraphics[width=225pt]{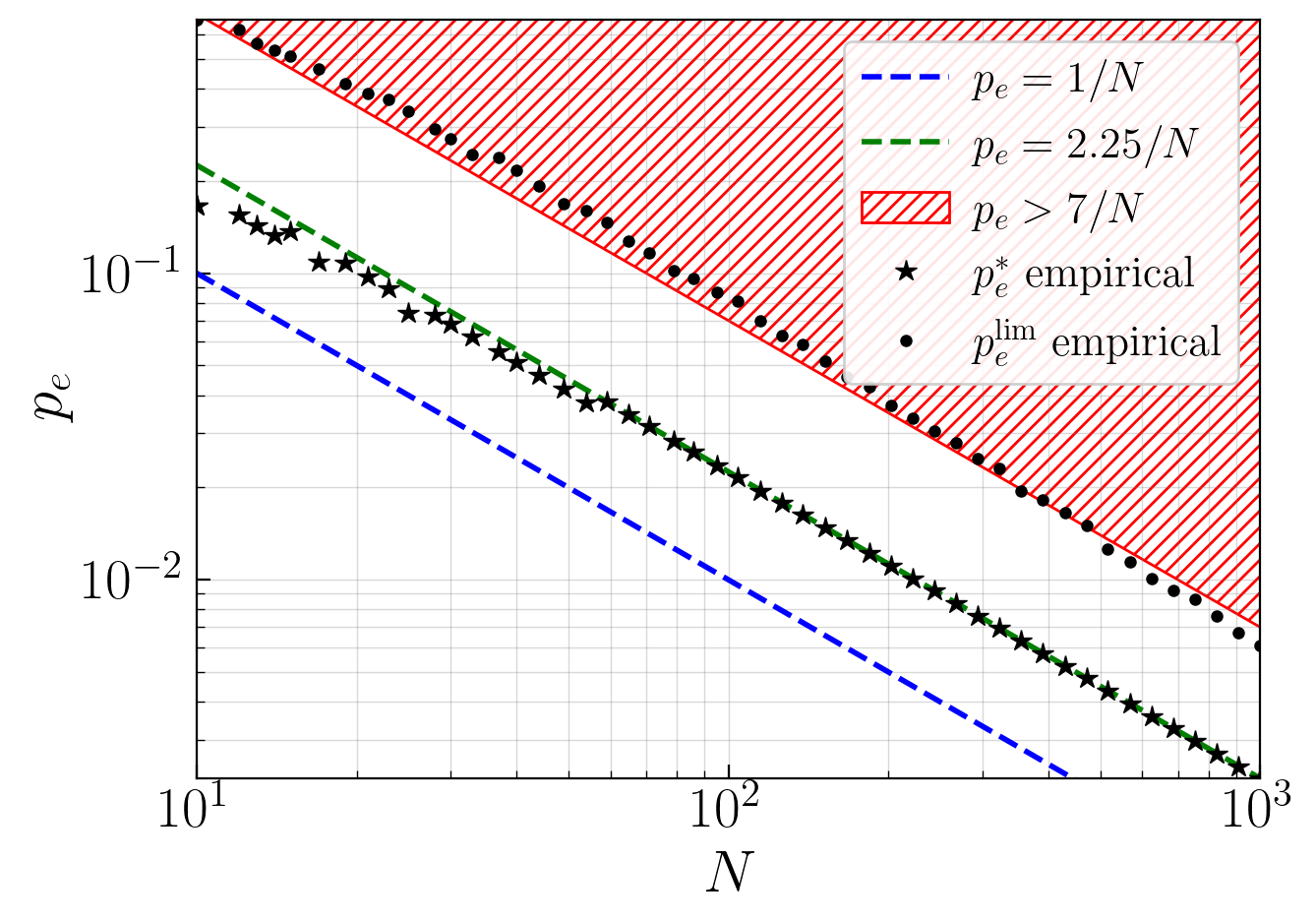}}
                \caption{Scaling of optimal and limit $p_e$. \textbf{Green}: empirical optimum $p_e^* \approx 2.25/N$. \textbf{Red}: convergence limit $p_e^{\text{lim}} \approx 7/N$. Consistent scaling relationships across problem sizes. \textit{Conditions}: $P=\lfloor1000/N\rfloor$, $\alpha=0.1$, $M=100$, $p_w=0.5$, $S=25{,}000$.}
                \label{fig:optimal_and_limit_p_e_for different_N}
            \end{figure}
            
            Figure~\ref{fig:optimal_and_limit_p_e_for different_N} extends the analysis of Figure~\ref{fig:convergence_as_a_function_of_p_e} by identifying $p_e^*$ and $p_e^{\text{lim}}$ for $N \in [10, 1000]$, providing insights regarding the optimal sparsity for a given size. The analysis confirms scaling relationships  $\propto 1/N$ with $p_e^* \approx 2.25/N$ and $p_e^{\text{lim}} \approx 7/N$ across several orders of magnitude. Notably, while our theory guarantees that $p_e \leq 1/N$ ensures convergence, experiments show that convergence is observed in practice for significantly denser regimes.

        \subsubsection{Impact of the Learning Rate $\alpha$}\label{subsubsection:impact_of_learning_rate}
            We study learning rate impact on convergence and verify theoretical bounds from Propositions~\ref{prop:sigma_convergence}, \ref{prop:monotonic_constraint}, and \ref{prop:bounded_convergence_interval}. Our analysis reveals that convergence time decreases linearly with $\alpha$ until a critical threshold $\alpha^{\text{lim}}$ where training fails, with optimal performance achieved at the maximum stable learning rate $\alpha^*$ just below this threshold (see Appendix~\ref{appendix:learning_rate_analysis}). Figure~\ref{fig:optimal_and_limit_alpha_for_different_N} reports the measured values of $\alpha^{\text{lim}}$ and $\alpha^*$ and provides evidence of a linear relationship  $\alpha^{\text{lim}} \approx N/2$ across $N \in [10, 1000]$ that aligns with theory: since effective learning rate scales as $\alpha p_e = \alpha/N$ (when $p_e = 1/N$), maintaining constant effective learning requires $\alpha \propto N$, as hinted by Gradient Eq.~\eqref{eq:symmetric_gradient}. Large batch sizes ensure stable statistical behavior, which we posit becomes increasingly important for the higher learning rates permitted as $\alpha^{\text{lim}}$ grows with larger $N$. Extended analysis in Appendix~\ref{app:sample_complexity_convergence_speed} confirms that convergence speed depends on the effective learning rate $\alpha p_e$, demonstrating scale-invariant training when hyper-parameters are properly scaled across problem sizes.
            
            We observe that the theoretical bounds correctly predict the convergence region: all rates below the most restrictive bound, $\alpha_2$, achieve convergence, with $\alpha^{\text{lim}} > \alpha_2$ confirming theoretical validity. 
               
            \begin{figure}[htbp]
                \centerline{\includegraphics[width=225pt]{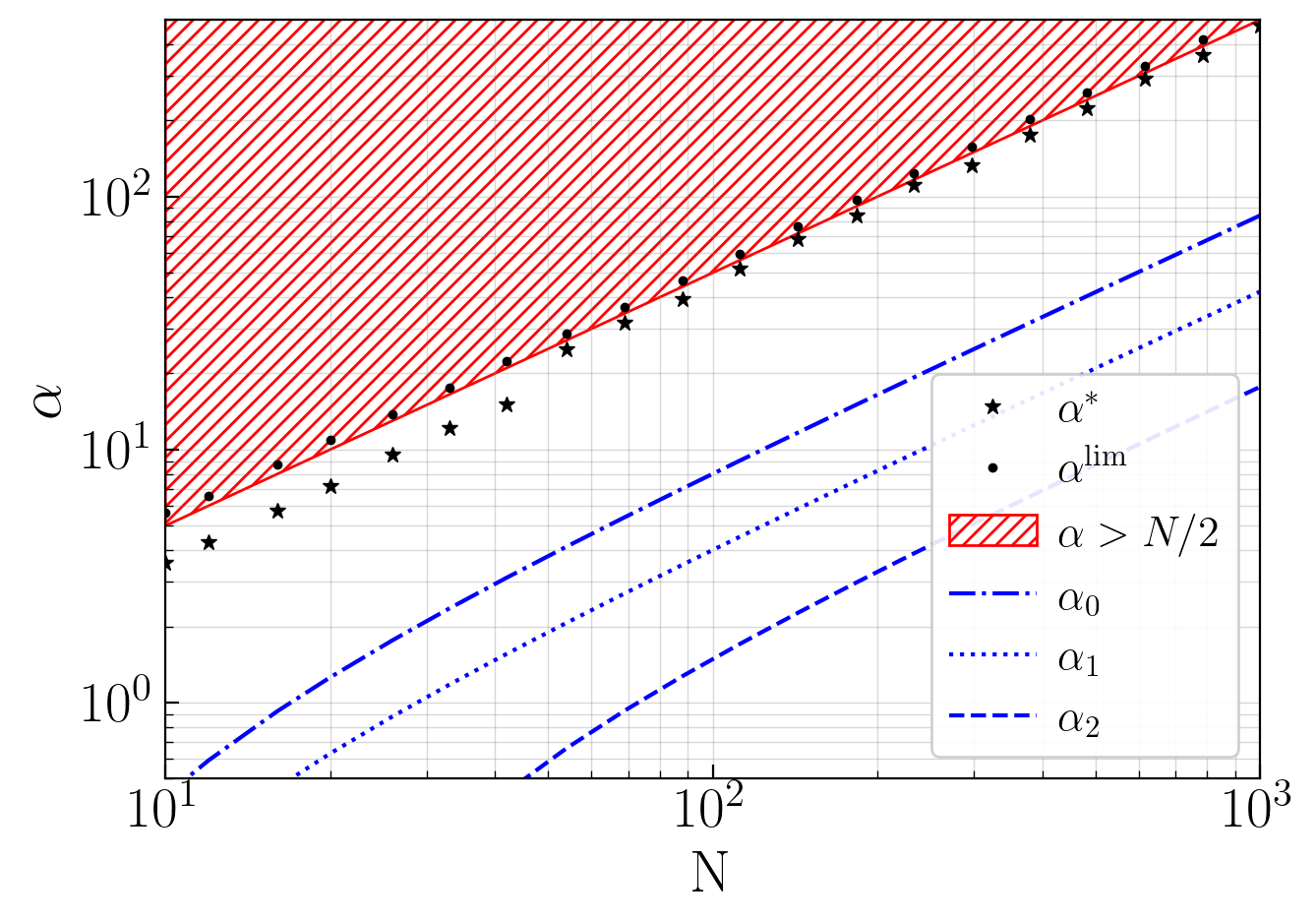}}
                \caption{Scaling of learning rate limits across problem sizes. Empirical limit $\alpha^{\text{lim}} \approx N/2$ shows linear scaling. Theoretical bounds scale consistently, with $\alpha_2 \approx \alpha^{\text{lim}}/20$ providing conservative but valid guarantees. Large batch ensures stable dynamics by maintaining consistent proportions of informative samples per batch (see Appendix~\ref{app:sample_complexity_convergence_speed}). \textit{Conditions}: $P=\lfloor1000/N\rfloor$, $p_e=1/N$, $M=50{,}000$, $p_w=0.5$, $S=30{,}000$.}
                \label{fig:optimal_and_limit_alpha_for_different_N}
            \end{figure}

        \subsubsection{Generalization \& Comparison with MLP}
            \begin{figure*}[htbp]
                \centerline{\includegraphics[width=470pt]{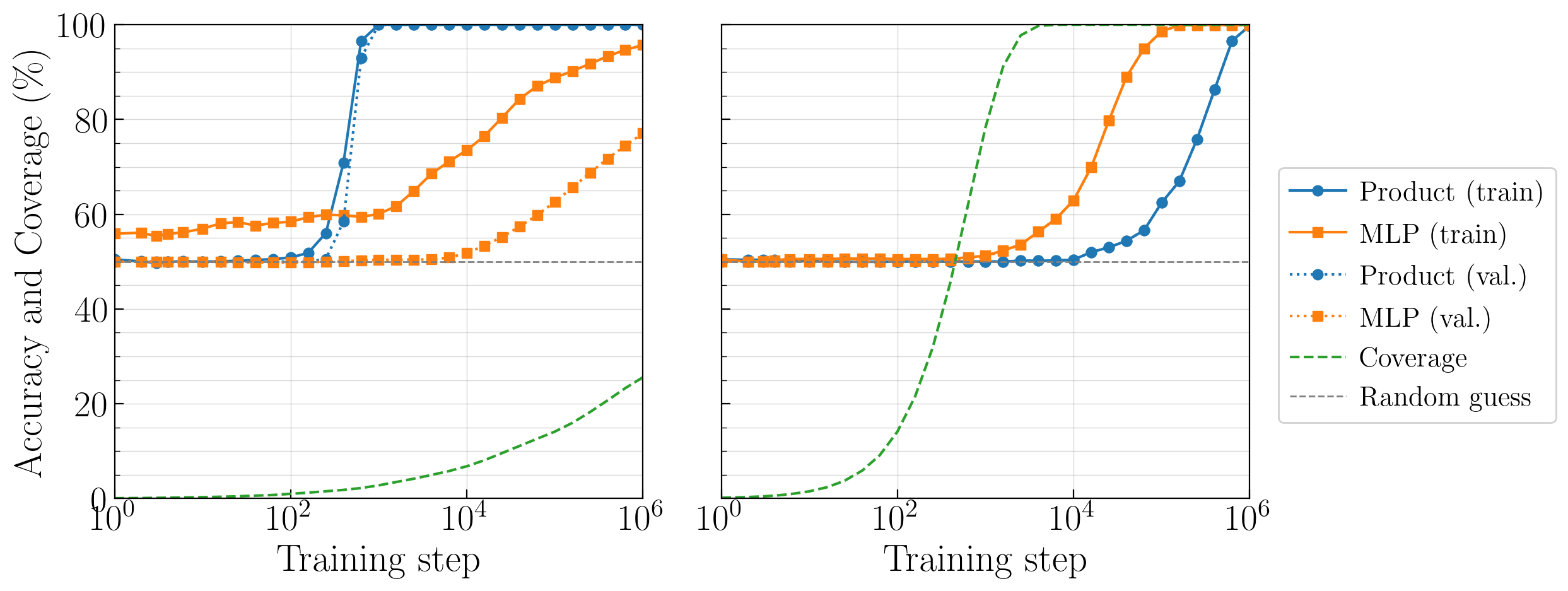}}
                \caption{Comparison of product node and MLP training dynamics under sparse and uniform sampling regimes. \textbf{(Left) Sparse regime:} training samples follow the unit-sparsity distribution with $p_e = 1/N$. \textbf{(Right) Uniform regime:} training samples are drawn uniformly from the complete truth table of size $2^N$. In both panels, training accuracy is computed over all truth table entries seen at least once since the beginning of training, validation accuracy over the complete truth table, and coverage denotes the fraction of the truth table observed during training. \textit{Conditions}: $N=16$, $P=250$, $\alpha=0.02$, $M=100$, $p_w=0.5$, $S=1{,}000{,}000$.}
                \label{fig:comparison_with_MLP}
            \end{figure*}
            A central claim of this work is that the product node's inductive bias enables generalization from sparse observations, a property standard MLPs lack without additional assumptions (see Figure~\ref{fig:xor_scenarios}). To test this, we compare our product node against the MLP architecture from~\cite{LearningHighDegreeParitiesTheCrucialRoleoftheInitialization} — three hidden layers $(512, 512, 64)$ with ReLU activations\footnote{Product node has $N = 16$ params. vs $\approx300k$ for the MLP.} — under identical conditions: plain SGD, $N = 16$ ($2^{16} = 65{,}536$ truth table entries), averaged over $250$ independent models. Figure~\ref{fig:comparison_with_MLP} reports training accuracy, validation accuracy (full truth table), and coverage (proportion of the truth table seen during training), under two regimes.
            
            Under the \textbf{sparse regime} ($p_e = 1/N$), the product node converges to $100\%$ validation accuracy within $1{,}000$ steps at less than $2\%$ truth table coverage, confirming that identifying the support suffices for full generalization across all $2^N$ inputs, by algebraic construction. The MLP instead overfits: training accuracy quickly rises while validation accuracy stagnates near $50\%$ (random guess), only starting to improve once a large fraction of the truth table is observed.
            
            Under \textbf{uniform sampling} of the truth table, the MLP eventually converges but only after exhaustive truth table coverage, suggesting that the MLP requires exhaustive distributional exposure to generalize. The product node converges more slowly, consistent with sparsity acting as a key enabler of efficient optimization. 
            
            The gap between the $\mathcal{O}(N)$ sparse samples sufficient for the product node and the $\mathcal{O}(2^N)$ samples required by the MLP widens exponentially with dimension, becoming quickly intractable for large $N$.


\section{Conclusion}

    This work demonstrates that high‑dimensional subset parity learning, long considered intractable for gradient‑based methods under standard architectures and data distributions, can be efficiently learned by combining architectural and data‑distribution biases.

    Architecturally, we introduce a compact product‑based XOR node with a neutral element that matches the algebraic structure of parity. Supposing proper parity subset identification, this node generalizes by construction across the full truth table, avoiding additive networks' exponential sample requirements in the general setting. On the data side, we consider Bernoulli distributed inputs which, combined with symmetric Gaussian weight distributions, yield affine dynamics preserving Gaussianity and symmetry. We show that $p_e = 1/N$ maximizes initial gradient signal and therefore analyze optimization under this stochastic unit-sparsity regime.

    Within this framework, we prove that for learning rates $\alpha < \alpha_2$ where $\alpha_2=\mathcal{O}(N)$, $N\geq18$, weight variances converge geometrically to zero while means converge to a bounded interval around the target values. The convergence interval scales as $\mathcal{O}(1/N) + \mathcal{O}(\alpha/N)$, vanishing as $N \to \infty$. Thus, gradient descent reliably recovers the underlying subset parity. 
        
    Experiments up to $N = 100{,}000$ validate the theory, showing preservation of Gaussian weight families, invariance to target weight proportions, and empirical scaling laws: $\alpha^{\mathrm{lim}}, \alpha^\ast \propto N$ and $p_e^{\mathrm{lim}}, p_e^\ast \propto 1/N$. While larger problems require increased sparsity, maintaining constant effective learning rate $\alpha p_e \propto C$ ensures consistent convergence speed across sizes.    
    
    Beyond parities and direct applications to error correcting codes, these results suggest a broader principle: for multiplicative or Boolean computations, product‑based architectures with sparse, structured inputs can transform traditionally difficult problems into ones amenable to standard gradient methods, enabling scalable integration of such computational capacities into deep learning architectures.
    

\clearpage
\newpage


\section*{Acknowledgments}

\noindent\textit{This work is dedicated to the memory of Eric Barelli, colleague and friend.}

\section*{Impact Statement}
    This paper presents work whose goal is to advance the field of machine learning. There are many potential societal consequences of our work, none of which we feel must be specifically highlighted here.

\bibliography{references}

@INPROCEEDINGS{
    SolvingParityNProblemsWithFeedForwardNeuralNetworks,  
    author={Wilamowski, B.M. and Hunter, D. and Malinowski, A.},  
    booktitle={Proceedings of the International Joint Conference on Neural Networks},   
    title={Solving {P}arity-{N} {P}roblems with {F}eedforward {N}eural {N}etworks},   
    year={2003},  
    doi={10.1109/IJCNN.2003.1223966}
}

@ARTICLE{SolvingNbitParityProblemUsingNN,
    title = {Solving the {N}-bit {P}arity {P}roblem using {N}eural {N}etworks},
    journal = {Neural Networks},
    volume = {12},
    number = {9},
    pages = {1321-1323},
    year = {1999},
    doi = {10.1016/S0893-6080(99)00069-6},
    author = {Myron E Hohil and Derong Liu and Stanley H Smith},
}

@INPROCEEDINGS{SPN,  
    author={Poon, Hoifung and Domingos, Pedro},  
    booktitle={2011 IEEE International Conference on Computer Vision Workshops},   
    title={Sum-{P}roduct {N}etworks: {A} {N}ew {D}eep {A}rchitecture},   
    year={2011},  
    volume={},  
    number={},  
    doi={10.1109/ICCVW.2011.6130310}
}

@INPROCEEDINGS{BlindNeuralBeliefPropagation,  
    author={Larue, Guillaume and Dufrene, Louis-Adrien and Lampin, Quentin and Chollet, Paul and Ghauch, Hadi and Rekaya, Ghaya},  
    booktitle={2021 Joint European Conference on Networks and Communications   6G Summit (EuCNC/6G Summit)},   
    title={Blind {N}eural {B}elief {P}ropagation {D}ecoder for {L}inear {B}lock {C}odes},   
    year={2021},  
    volume={},  number={},  
    pages={106-111},  
    doi={10.1109/EuCNC/6GSummit51104.2021.9482479}
}

@ARTICLE{LearningLinearBlockCodesWithGradientQuantization,
  author={Dufrène, Louis-Adrien and Lampin, Quentin and Larue, Guillaume},
  journal={IEEE Transactions on Communications}, 
  title={Learning {L}inear {B}lock {C}odes {W}ith {G}radient {Q}uantization}, 
  year={2025},
  volume={73},
  number={12},
  pages={13103-13116},
  doi={10.1109/TCOMM.2025.3615681}}

@ARTICLE{NeuralBeliefPropagationAutoEncoderforLinearBlockCodeDesign,
  author={Larue, Guillaume and Dufrene, Louis-Adrien and Lampin, Quentin and Ghauch, Hadi and Othman, Ghaya Rekaya-Ben},
  journal={IEEE Transactions on Communications}, 
  title={Neural {B}elief {P}ropagation {A}uto-{E}ncoder for {L}inear {B}lock {C}ode {D}esign}, 
  year={2022},
  volume={70},
  number={11},
  pages={7250-7264},
  doi={10.1109/TCOMM.2022.3208331}}

@misc{StatisticalQueriesandStatisticalAlgorithmsFoundationsandApplications,
  title={Statistical {Q}ueries and {S}tatistical {A}lgorithms: {F}oundations and {A}pplications},
  author={Reyzin, Lev},
  year={2020},
  url={https://arxiv.org/abs/2004.00557}
}

@inproceedings{LearningHighDegreeParitiesTheCrucialRoleoftheInitialization,
    title={Learning {H}igh-{D}egree {P}arities: {T}he {C}rucial {R}ole of the {I}nitialization},
    author={Emmanuel Abbe and Elisabetta Cornacchia and Jan H{\k{a}}z{\l}a and Donald Kougang-Yombi},
    booktitle={The Thirteenth International Conference on Learning Representations},
    year={2025},
    url={https://openreview.net/forum?id=OuNIWgGGif}
}

@inproceedings{
    ProvableAdvantageofCurriculumLearningonParityTargetswithMixedInputs,
    title={Provable {A}dvantage of {C}urriculum {L}earning on {P}arity {T}argets with {M}ixed {I}nputs},
    author={Emmanuel Abbe and Elisabetta Cornacchia and Aryo Lotfi},
    booktitle={Thirty-seventh Conference on Neural Information Processing Systems},
    year={2023},
    url={https://openreview.net/forum?id=9Ihu0VBOTq}
}

@InProceedings{AMathematicalModelforCurriculumLearningforParities,
  title = 	 {A {M}athematical {M}odel for {C}urriculum {L}earning for {P}arities},
  author =       {Cornacchia, Elisabetta and Mossel, Elchanan},
  booktitle = 	 {Proceedings of the 40th International Conference on Machine Learning},
  year = 	 {2023},
  url = 	 {https://proceedings.mlr.press/v202/cornacchia23a.html},
}

@inproceedings{
    NeuralArithmeticUnits,
    title={Neural {A}rithmetic {U}nits},
    author={Andreas Madsen and Alexander Rosenberg Johansen},
    booktitle={International Conference on Learning Representations},
    year={2020},
    url={https://openreview.net/forum?id=H1gNOeHKPS}
}

@inproceedings{NeuralArithmeticLogicUnits,
 author = {Trask, Andrew and Hill, Felix and Reed, Scott E and Rae, Jack and Dyer, Chris and Blunsom, Phil},
 booktitle = {Advances in Neural Information Processing Systems},
 title = {{Neural Arithmetic Logic Units}},
 url = {https://proceedings.neurips.cc/paper_files/paper/2018/file/0e64a7b00c83e3d22ce6b3acf2c582b6-Paper.pdf},
 volume = {31},
 year = {2018}
}

@inproceedings{LearningParitieswithNeuralNetworks,
 author = {Daniely, Amit and Malach, Eran},
 booktitle = {Advances in {N}eural {I}nformation {P}rocessing {S}ystems},
 title = {Learning {P}arities with {N}eural {N}etworks},
 url = {https://proceedings.neurips.cc/paper_files/paper/2020/file/eaae5e04a259d09af85c108fe4d7dd0c-Paper.pdf},
 year = {2020}
}

@inproceedings{
  AreEfficientDeepRepresentationsLearnable,
  title={Are {E}fficient {D}eep {R}epresentations {L}earnable?},
  author={Maxwell Nye and Andrew Saxe},
  booktitle={International Conference on Learning Representations Workshop},
  year={2018},
  url={https://openreview.net/forum?id=B1HI4FyvM}
}

@book{PerceptronsAnIntroductiontoComputationalGeometry,
  title={Perceptrons: {A}n {I}ntroduction to {C}omputational {G}eometry},
  author={Minsky, Marvin and Papert, Seymour},
  year={1969},
  publisher={MIT Press},
  address={Cambridge, MA}
}

@article{NBitParityNeuralNetworksNewSolutionsBasedonLinearProgramming,
    title   = {N-bit {P}arity {N}eural {N}etworks: {N}ew {S}olutions based on {L}inear {P}rogramming},
    author  = {Liu, Derong and Hohil, Myron E. and Smith, Stanley H.},
    journal = {Neurocomputing},
    volume  = {48},
    number  = {1--4},
    pages   = {477--488},
    year    = {2002},
    doi     = {10.1016/S0925-2312(01)00612-9}
    }

@article{ParitywithTwoLayerFeedforwardNets,
    title   = {Parity with {T}wo {L}ayer {F}eedforward {N}ets},
    author  = {Minor, J. M.},
    journal = {Neural Networks},
    volume  = {6},
    number  = {5},
    pages   = {705--707},
    year    = {1993},
    doi     = {10.1016/S0893-6080(05)80114-5}
    }

@ARTICLE{DeepLearningMethodsforImprovedDecodingofLinearCodes,
  author={Nachmani, Eliya and Marciano, Elad and Lugosch, Loren and Gross, Warren J. and Burshtein, David and Be’ery, Yair},
  journal={IEEE Journal of Selected Topics in Signal Processing}, 
  title={Deep {L}earning {M}ethods for {I}mproved {D}ecoding of {L}inear {C}odes}, 
  year={2018},
  volume={12},
  number={1},
  pages={119-131},
  doi={10.1109/JSTSP.2017.2788405}}

@ARTICLE{SumProductNetworksASurvey,
    author={Sanchez-Cauce, Raquel and Paris, Iago and Diez, Francisco Javier},
    journal={IEEE Transactions on Pattern Analysis \& Machine Intelligence},
    title={{Sum-Product Networks: A Survey}},
    year={2022},
    volume={44},
    number={07},
    pages={3821-3839},
    doi={10.1109/TPAMI.2021.3061898},
    publisher={IEEE Computer Society},
}

@misc{ProvableLimitationsofDeepLearning,
      title={{P}rovable {L}imitations of {D}eep {L}earning}, 
      author={Emmanuel Abbe and Colin Sandon},
      year={2019},
      eprint={1812.06369},
      archivePrefix={arXiv},
      primaryClass={cs.LG},
      url={https://arxiv.org/abs/1812.06369}, 
}

@book{DigitalSignalProcessingPrinciplesAlgorithmsandApplications,
  title={{D}igital {S}ignal {P}rocessing: {P}rinciples, {A}lgorithms, and {A}pplications},
  author={John G. Proakis and Dimitris G. Manolakis},
  year={1992}
}

@misc{FactorGraphOptimizationofErrorCorrectingCodesforBeliefPropagationDecoding,
      title={{F}actor {G}raph {O}ptimization of {E}rror-{C}orrecting {C}odes for {B}elief {P}ropagation {D}ecoding}, 
      author={Yoni Choukroun and Lior Wolf},
      year={2024},
      eprint={2406.12900},
      archivePrefix={arXiv},
      primaryClass={cs.IT},
      url={https://arxiv.org/abs/2406.12900}, 
}

\newpage
\appendix
\onecolumn


\section*{Appendix Guide}
\addcontentsline{toc}{section}{Appendix Guide}

The main paper presents a self-contained proof sketch providing the key ideas and logical chain establishing convergence. The appendices complement this by providing complete proofs for readers seeking detailed mathematical results, but are not required for a general understanding of the results. The appendices follow a logical progression establishing why and how parities become learnable under stochastic sparsity and product-based architectures. 

\textbf{Appendix~\ref{app:notations}} (p. \pageref{app:notations}) defines all notations.

\textit{Readers primarily interested in the convergence proof may skip directly to Appendix~\ref{app:probabilistic_sparsity_analysis}.}

\textbf{Appendices~\ref{app:sum_node_analysis} 
and~\ref{app:product_node_analysis}} 
(p.~\pageref{app:sum_node_analysis}--\pageref{app:product_with_neutral_element_node_analysis}) 
analyze sum nodes (convex baseline) and naive product nodes (non-convex) in a pedagogical progression intended to build intuition and motivate the architectural choices made in the convergence proof. These appendices are not required for the convergence result and may be skipped by readers already familiar with these architectures.

\textbf{Appendices~\ref{app:product_with_neutral_element_node_analysis} 
and~\ref{app:xor_node_analysis}} 
(p.~\pageref{app:product_with_neutral_element_node_analysis}--\pageref{app:probabilistic_sparsity_analysis}) 
introduce and analyze the core architectural building block studied throughout this paper: product nodes with neutral element, and their special case XOR nodes. A key property established here is that while these nodes are non-convex in general, they become strictly convex under sparse inputs (Prop.~\ref{prop:sparse_convexity}), with XOR nodes inheriting this result (Prop.~\ref{prop:xor_sparse_data}).

\textbf{Appendix~\ref{app:probabilistic_sparsity_analysis}} (p. \pageref{app:probabilistic_sparsity_analysis}--\pageref{app:exp_results}) is the central appendix, establishing convergence under stochastic sparsity. In particular, the reader will find:
\begin{enumerate}
    \item Expected gradient under Bernoulli inputs and optimal sparsity (Props.~\ref{prop:xor_node_gradient_stochastic_sparsity}--\ref{prop:optimal_error_probability});
    \item Gaussian distributional framework introducing preservation of Gaussianity and symmetry of weights 
    (Props.~\ref{prop:gaussian_gradient_expression}--\ref{prop:symmetry_conservation});
    \item Variance convergence and sufficient learning rate conditions (Props.~\ref{prop:sigma_convergence}--\ref{prop:monotonic_constraint});
    \item Mean convergence through fixed point analysis ensuring domain invariance and bounded convergence interval 
    (Props.~\ref{prop:envelope_characterization}--\ref{prop:bounded_convergence_interval});
    \item Closure providing the synthesis of all components into the complete convergence guarantee (Prop.~\ref{prop:complete_convergence_closure}).
\end{enumerate}

\textbf{Appendix~\ref{app:exp_results}} (p. \pageref{app:exp_results}--\pageref{app:supporting_results}) provides supplementary experimental results complementary to that of Section~\ref{section:experimental_validation}. 

\textbf{Appendix~\ref{app:supporting_results}} (p. \pageref{app:supporting_results}--\pageref{app:end}) provides the mathematical foundations used in the convergence proof, notably the exponential approximation (Prop.~\ref{prop:exp_approx}) and variable-coefficient fixed point dynamics (Thm.~\ref{thm:complete_variable_dynamics}) referenced throughout 
Appendix~\ref{app:probabilistic_sparsity_analysis}.

\section{Notations}\label{app:notations}

    \begin{table}[h]
        \begin{tabularx}{\columnwidth}{c|X}
            $s$ & Scalar \\
            $\mathbfit{v}$ & Vector (column) \\
            $\mathbfit{M}$ & Matrix (Denominator layout is considered for vector/matrix operations)  \\
            $\mathrm{s}$ & Random scalar variable \\
            $\mathbf{v}$ & Random vector variable (column) \\
            $\mathbf{M}$ & Random matrix variable \\
            $\mathbfit{M}^T$ & Transpose of matrix or vector\\
            $\mathbfit{M} \succ 0$ & Matrix $\mathbfit{M}$ is Positive Definite (PD)\\
            $\mathbfit{M} \succeq 0$ & Matrix $\mathbfit{M}$ is Positive Semi-Definite (PSD)\\
            $\mathbfit{I}_n$ & Identity matrix with $n$ rows and $n$ columns. When not specified, $n$ is implied by context \\
            $\mathbf{0}$ and $\mathbf{1}$ & denotes the all-zero and all-one vectors. More generally, all bold face number denotes a vector (or any n-dimensional tensor) whose value is constant across all index (\textit{e.g.} $\mathbf{2}$ is the all-two vector)\\
            $\mathbf{e}^{(k)}$ & The k-th standard basis vector, \textit{i.e.} a one hot vector with a one at index $k$\\
            $\mathrm{s} \sim D$ & Random variable $\mathrm{s}$ follows distribution $D$ \\
            $\mathbb{E}_{\mathrm{x}\sim D}\left\{\mathrm{x}\right\}$ & Expectation of the random variable $\mathrm{x}$ following distribution $D$\\             
            $\mathbb{S}$ & A set \\

        \end{tabularx}
    \end{table}

    \clearpage
    \newpage
    \section{Analysis of Sum Node}\label{app:sum_node_analysis}
        We recall the considered supervised learning setting with training set $\mathbb{S} = \{(\mathbfit{x}_m, y_m^{\mathrm{true}})\}_{m=1}^M$ where $\mathbfit{x}_m \in \mathbb{R}^N$ are input vectors and $y_m^{\mathrm{true}} \in \mathbb{R}$ are target outputs. Our model $f(\mathbfit{w}; \mathbfit{x}_m)$ is parameterized by weights $\mathbfit{w} \in \mathbb{R}^N$. We consider an oracle-based setup where the true labels are generated by applying the same function $f$ to the inputs using dedicated oracle parameters $\mathbfit{w}^{\mathrm{true}}$, unknown to the model to be trained:
        \begin{equation}
            y_m^{\mathrm{true}} = f(\mathbfit{w}^{\mathrm{true}}; \mathbfit{x}_m), \quad \forall m \in \{1,\ldots,M\}
        \end{equation}
        
        The learning objective is to minimize the empirical risk:
        \begin{equation}
            \hat{\ell}(\mathbfit{w}) = \frac{1}{M} \sum_{m=1}^M l(f(\mathbfit{w}; \mathbfit{x}_m), y_m^{\mathrm{true}})
        \end{equation}

        In this section, we consider the scenario depicted in Fig.\ref{fig:sum_unit}: a single neuron performs a weighted sum of its input. This scenario serves as a starting point to introduce both the notations and analytical methodology.
        
        \begin{figure}[htbp]
    \centerline{
        \begin{tikzpicture}
            \tikzmath{
                function PlotArrow (\ArrowStartX, \ArrowStartY, \CircleCenterX, \CircleCenterY, \CircleRadius) {
                    \ArrowEndX = \CircleCenterX - \CircleRadius*cos(atan((\ArrowStartY -  \CircleCenterY)/(\ArrowStartX - \CircleCenterX)));
                    \ArrowEndY = \CircleCenterY - \CircleRadius*sin(atan((\ArrowStartY - \CircleCenterY)/(\ArrowStartX - \CircleCenterX)));
                    {\draw[thick, -latex] (\ArrowStartX,\ArrowStartY) -- (\ArrowEndX,\ArrowEndY);
                    };
                };
                \CircleRadius1 = 0.5;
                \CircleCenterX1 = 0;
                \CircleCenterY1 = 0;
                \ArrowStartX1 = -3;
                \ArrowStartY1 = +2;
                \WeightPositionX1 = (\ArrowStartX1+\CircleCenterX1)/2;
                \WeightPositionY1 = (\ArrowStartY1+\CircleCenterY1)/2;
                PlotArrow(\ArrowStartX1, \ArrowStartY1, \CircleCenterX1, \CircleCenterY1, \CircleRadius1);
                \ArrowStartX2 = -3;
                \ArrowStartY2 = +1;
                \WeightPositionX2 = (\ArrowStartX2+\CircleCenterX1)/2;
                \WeightPositionY2 = (\ArrowStartY2+\CircleCenterY1)/2;
                PlotArrow(\ArrowStartX2, \ArrowStartY2, \CircleCenterX1, \CircleCenterY1, \CircleRadius1);
                \ArrowStartX3 = -3;
                \ArrowStartY3 = -2;
                \WeightPositionX3 = (\ArrowStartX3+\CircleCenterX1)/2;
                \WeightPositionY3 = (\ArrowStartY3+\CircleCenterY1)/2;
                PlotArrow(\ArrowStartX3, \ArrowStartY3, \CircleCenterX1, \CircleCenterY1, \CircleRadius1);
            };
            \draw[thick, fill=yellow!5] (\CircleCenterX1,\CircleCenterY1) circle (\CircleRadius1);
            
            \draw[thick] (\CircleCenterX1 - \CircleRadius1,\CircleCenterY1) -- (\CircleCenterX1 + \CircleRadius1,\CircleCenterY1);
            
            \draw[thick] (\CircleCenterX1,\CircleCenterY1-\CircleRadius1) -- (\CircleCenterX1,\CircleCenterY1+\CircleRadius1);
            
            \node at (\CircleCenterX1,\CircleCenterY1 + 2*\CircleRadius1) {SUM};
            \draw[thick,-latex] (\CircleCenterX1 + \CircleRadius1,\CircleCenterY1) -- (\CircleCenterX1 + \CircleRadius1 + 1,\CircleCenterY1);

            \node at (\CircleCenterX1 + \CircleRadius1 + 3,\CircleCenterY1) {$y_m = f(\mathbfit{w};\mathbfit{x}_m)$};
            
            \node at (\CircleCenterX1 + \CircleRadius1 + 3,\ArrowStartY3-1) {$y_m \in \mathbb{R}$};
            
            \node at (\CircleCenterX1 + \CircleRadius1 + 3,\ArrowStartY3) {$y_m = \sum_{i=1}^N w_ix_{m,i}$};
            
            \node at (\ArrowStartX1-0.5,\ArrowStartY1+0.2) {$x_{m,1}$};
            \node at (\ArrowStartX2-0.5,\ArrowStartY2+0.2) {$x_{m,2}$};
            \node at (\ArrowStartX3-0.5,\ArrowStartY3-0.2) {$x_{m,N}$};
            \node at (\ArrowStartX3-0.5,\CircleCenterY1) {$\vdots$};
            \node at (\ArrowStartX3-0.5,\ArrowStartY3-1) {$\mathbfit{x}_m \in \mathbb{R}^N$};
            
            \node at (\WeightPositionX1,\WeightPositionY1+0.3) {$w_{1}$};
            \node at (\WeightPositionX2,\WeightPositionY2+0.2) {$w_{2}$};
            \node at (\WeightPositionX3,\WeightPositionY3-0.3) {$w_{N}$};
            \node at (\WeightPositionX3,\ArrowStartY3-1) {$\mathbfit{w} \in \mathbb{R}^N$};
            
        \end{tikzpicture}
    }
    \caption{The sum unit}
    \label{fig:sum_unit}
\end{figure}
        
        The input is composed of vectors $\mathbfit{x}_m$ of dimension $N$, with $m\in [1,M]$. The neuron has a corresponding number of parameters, or weights, denoted in vector form $\mathbfit{w}$. In this configuration, the single neuron model performs a weighted sum of the components of $\mathbfit{x}_m$, denoted $x_{m,i}$ with $i\in[1,N]$:
        \begin{equation}
            y_m = f(\mathbfit{w};\mathbf{x}_m) = \sum_{i=1}^N w_ix_{m,i} \quad \forall m \in[1,M]
        \end{equation}

        Let us consider the task of learning the parameters $\mathbfit{w}$ given pairs of inputs and outputs ($\mathbfit{x_{m}}$, $y_m$). We consider the (unregularized) empirical risk with $\ell_2$ loss:
        \begin{equation}
            \hat{\ell}(\mathbfit{w}) = \frac{1}{M} \sum_{m=1}^M (y_m-y_m^{\mathrm{true}})^2
        \end{equation}

        \begin{proposition}[Sum Node - Convexity of Empirical Risk]\label{prop:empirical_convex}
            For a finite dataset $\mathbf{X} \in \mathbb{R}^{M \times N}$ with full column rank, the unregularized empirical risk
            $$\hat{\ell}(\mathbfit{w}) = \frac{1}{M}||\mathbfit{X}\mathbfit{w} - \mathbfit{X}\mathbfit{w}^{\mathrm{true}}||_2^2$$
            is strongly convex with unique global minimum at $\mathbfit{w}^{\mathrm{true}}$.
        \end{proposition}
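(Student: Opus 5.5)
The plan is to rewrite the risk as a quadratic form in the error vector $\mathbfit{w}-\mathbfit{w}^{\mathrm{true}}$ and read off both strong convexity and the location and uniqueness of the minimizer from the Hessian. Since $\mathbfit{X}\mathbfit{w} - \mathbfit{X}\mathbfit{w}^{\mathrm{true}} = \mathbfit{X}(\mathbfit{w}-\mathbfit{w}^{\mathrm{true}})$, I will write
\begin{equation*}
\hat{\ell}(\mathbfit{w}) = \frac{1}{M}(\mathbfit{w}-\mathbfit{w}^{\mathrm{true}})^T\mathbfit{X}^T\mathbfit{X}(\mathbfit{w}-\mathbfit{w}^{\mathrm{true}}).
\end{equation*}
This is a quadratic function with constant Hessian $\nabla^2_{\mathbfit{w}}\hat{\ell}(\mathbfit{w}) = \frac{2}{M}\mathbfit{X}^T\mathbfit{X}$ and gradient $\nabla_{\mathbfit{w}}\hat{\ell}(\mathbfit{w}) = \frac{2}{M}\mathbfit{X}^T\mathbfit{X}(\mathbfit{w}-\mathbfit{w}^{\mathrm{true}})$.

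The key step is to show that $\mathbfit{X}^T\mathbfit{X} \succ 0$. For any $\mathbfit{v}\neq\mathbf{0}$ we have $\mathbfit{v}^T\mathbfit{X}^T\mathbfit{X}\mathbfit{v} = \|\mathbfit{X}\mathbfit{v}\|_2^2 \geq 0$. Equality would force $\mathbfit{X}\mathbfit{v}=\mathbf{0}$, which the full column rank hypothesis excludes. Hence the smallest eigenvalue $\lambda_{\min}$ of $\mathbfit{X}^T\mathbfit{X}$ is strictly positive.

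It then follows that $\nabla^2\hat{\ell} \succeq \frac{2\lambda_{\min}}{M}\mathbfit{I}_N$ uniformly in $\mathbfit{w}$. This is exactly strong convexity with modulus $2\lambda_{\min}/M$. Equivalently, one can note that $\hat{\ell}(\mathbfit{w}) - \frac{\lambda_{\min}}{M}\|\mathbfit{w}\|_2^2$ has a PSD Hessian and is therefore convex.

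For the minimizer, I will use two observations. First, $\hat{\ell}\geq 0$ everywhere and $\hat{\ell}(\mathbfit{w}^{\mathrm{true}})=0$, so $\mathbfit{w}^{\mathrm{true}}$ is a global minimizer. Second, strong convexity, or directly the bound $\hat{\ell}(\mathbfit{w}) \geq \frac{\lambda_{\min}}{M}\|\mathbfit{w}-\mathbfit{w}^{\mathrm{true}}\|_2^2 > 0$ for $\mathbfit{w}\neq\mathbfit{w}^{\mathrm{true}}$, shows the minimizer is unique. Alternatively, setting the gradient to zero gives $\mathbfit{X}^T\mathbfit{X}(\mathbfit{w}-\mathbfit{w}^{\mathrm{true}})=\mathbf{0}$, and invertibility of $\mathbfit{X}^T\mathbfit{X}$ yields $\mathbfit{w}=\mathbfit{w}^{\mathrm{true}}$.

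There is no real obstacle here. The only point needing care is linking full column rank to positive definiteness of $\mathbfit{X}^T\mathbfit{X}$, which requires $M\geq N$ implicitly. This sets up the baseline later contrasted with the product nodes.
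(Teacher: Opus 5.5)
Your proof is correct and takes the same route as the paper. Both compute the gradient $\frac{2}{M}\mathbfit{X}^T\mathbfit{X}(\mathbfit{w}-\mathbfit{w}^{\mathrm{true}})$ and the constant Hessian $\frac{2}{M}\mathbfit{X}^T\mathbfit{X}\succ 0$ from full column rank, then conclude strong convexity and uniqueness of the minimizer by setting the gradient to zero; you additionally make explicit the step $\mathbfit{v}^T\mathbfit{X}^T\mathbfit{X}\mathbfit{v}=\|\mathbfit{X}\mathbfit{v}\|_2^2$ and the modulus $2\lambda_{\min}/M$, which the paper leaves implicit.
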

        
        \begin{proof}
            The empirical risk can be written as: 
            \begin{equation}
                \hat{\ell}(\mathbfit{w}) = \frac{1}{M}[\mathbfit{w}^T\mathbfit{X}^T\mathbfit{X}\mathbfit{w} - 2\mathbfit{w}^T\mathbfit{X}^T\mathbfit{X}\mathbfit{w}^{\mathrm{true}} + (\mathbfit{w}^{\mathrm{true}})^T\mathbfit{X}^T\mathbfit{X}\mathbfit{w}^{\mathrm{true}}]
            \end{equation}
            
            The gradient is:
            \begin{equation}\label{eq:sum_node_gradient}
                \nabla_{\mathbfit{w}} \hat{\ell}(\mathbfit{w}) = \frac{2}{M}\mathbfit{X}^T\mathbfit{X}\left(\mathbfit{w} - \mathbfit{w}^{\mathrm{true}}\right)\propto\left(\mathbfit{w} - \mathbfit{w}^{\mathrm{true}}\right)
            \end{equation}
            
            The Hessian is:
            \begin{equation}
                \nabla^2_{\mathbfit{w}} \hat{\ell}(\mathbfit{w}) = \frac{2}{M}\mathbfit{X}^T\mathbfit{X}\succ0
            \end{equation}
            
            Since $\mathbf{X}$ has full column rank ($\text{rank}(\mathbf{X}) = N$), the matrix $\mathbf{X}^T\mathbf{X}$ is positive definite. Therefore, $\nabla^2_{\mathbfit{w}} \hat{\ell}(\mathbfit{w}) \succ \mathbf{0}$, establishing strong convexity. Setting $\nabla_{\mathbfit{w}} \hat{\ell}(\mathbfit{w}) = 0$ yields $\mathbfit{w} = \mathbfit{w}^{\mathrm{true}}$ as the unique global minimum. For i.i.d. non-degenerate input distributions, this full column rank condition typically holds with high probability when $M \gg N$.
        \end{proof}


        \section{Analysis of Naive Product Node}\label{app:product_node_analysis}
        
            \begin{figure}[htbp]
        \centerline{
            \begin{tikzpicture}
                \tikzmath{
                    function PlotArrow (\ArrowStartX, \ArrowStartY, \CircleCenterX, \CircleCenterY, \CircleRadius) {
                        \ArrowEndX = \CircleCenterX - \CircleRadius*cos(atan((\ArrowStartY -  \CircleCenterY)/(\ArrowStartX - \CircleCenterX)));
                        \ArrowEndY = \CircleCenterY - \CircleRadius*sin(atan((\ArrowStartY - \CircleCenterY)/(\ArrowStartX - \CircleCenterX)));
                        {\draw[thick, -latex] (\ArrowStartX,\ArrowStartY) -- (\ArrowEndX,\ArrowEndY);
                        };
                    };
                    \CircleRadius1 = 0.5;
                    \CircleCenterX1 = 0;
                    \CircleCenterY1 = 0;
                    \ArrowStartX1 = -4;
                    \ArrowStartY1 = +2;
                    \WeightPositionX1 = (\ArrowStartX1+\CircleCenterX1)/2;
                    \WeightPositionY1 = (\ArrowStartY1+\CircleCenterY1)/2;
                    PlotArrow(\ArrowStartX1, \ArrowStartY1, \CircleCenterX1, \CircleCenterY1, \CircleRadius1);
                    \ArrowStartX2 = -4;
                    \ArrowStartY2 = +1;
                    \WeightPositionX2 = (\ArrowStartX2+\CircleCenterX1)/2;
                    \WeightPositionY2 = (\ArrowStartY2+\CircleCenterY1)/2;
                    PlotArrow(\ArrowStartX2, \ArrowStartY2, \CircleCenterX1, \CircleCenterY1, \CircleRadius1);
                    \ArrowStartX3 = -4;
                    \ArrowStartY3 = -2;
                    \WeightPositionX3 = (\ArrowStartX3+\CircleCenterX1)/2;
                    \WeightPositionY3 = (\ArrowStartY3+\CircleCenterY1)/2;
                    PlotArrow(\ArrowStartX3, \ArrowStartY3, \CircleCenterX1, \CircleCenterY1, \CircleRadius1);
                };
                \draw[thick, fill=red!6] (\CircleCenterX1,\CircleCenterY1) circle (\CircleRadius1);
                
                \draw[thick] (\CircleCenterX1 - 0.707*\CircleRadius1, \CircleCenterY1+0.707*\CircleRadius1) -- (\CircleCenterX1 + 0.707*\CircleRadius1, \CircleCenterY1-0.707*\CircleRadius1);
                
                \draw[thick] (\CircleCenterX1 - 0.707*\CircleRadius1, \CircleCenterY1-0.707*\CircleRadius1) -- (\CircleCenterX1 + 0.707*\CircleRadius1, \CircleCenterY1               +0.707*\CircleRadius1);
                
                \node at (\CircleCenterX1,\CircleCenterY1 + 2*\CircleRadius1) {PRODUCT};
                \draw[thick,-latex] (\CircleCenterX1 + \CircleRadius1,\CircleCenterY1) -- (\CircleCenterX1 + \CircleRadius1 + 2,\CircleCenterY1);

                \node at (\CircleCenterX1 + \CircleRadius1 + 4,\CircleCenterY1) {$y_m = f(\mathbfit{w};\mathbfit{x}_m)$};
                
                \node at (\CircleCenterX1 + \CircleRadius1 + 4,\ArrowStartY3-1) {$y_m \in \mathbb{R}$};
                
                \node at (\CircleCenterX1 + \CircleRadius1 + 4,\ArrowStartY3) {$y_m = \prod_{i=1}^N w_ix_{m,i}$};
                
                \node at (\ArrowStartX1-0.5,\ArrowStartY1+0.2) {$x_{m,1}$};
                \node at (\ArrowStartX2-0.5,\ArrowStartY2+0.2) {$x_{m,2}$};
                \node at (\ArrowStartX3-0.5,\ArrowStartY3-0.2) {$x_{m,N}$};
                \node at (\ArrowStartX3-0.5,\CircleCenterY1) {$\vdots$};
                \node at (\ArrowStartX3-0.5,\ArrowStartY3-1) {$\mathbfit{x}_m \in \mathbb{R}^N$};
                
                \node at (\WeightPositionX1,\WeightPositionY1+0.3) {$w_{1}$};
                \node at (\WeightPositionX2,\WeightPositionY2+0.2) {$w_{2}$};
                \node at (\WeightPositionX3,\WeightPositionY3-0.3) {$w_{N}$};
                \node at (\WeightPositionX3,\ArrowStartY3-1) {$\mathbfit{w} \in \mathbb{R}^N$};
                
            \end{tikzpicture}
        }
        \caption{The naive product unit}
        \label{fig:prod_unit}
    \end{figure}
    
            Having established the convex nature of sum nodes, we now analyze the significantly more complex case of product nodes in the same general setting. This analysis reveals fundamental optimization challenges that motivate our structured approach. The naive product node computes a weighted product of inputs:
            \begin{equation} \label{eq:NP}
                y_m = f(\mathbfit{w};\mathbfit{x}_m) =\prod_{i=1}^{N} \big(w_i x_{m,i}\big) =  \Bigg(\prod_{i=1}^{N} w_i\Bigg) \times \Bigg( \prod_{i=1}^{N} x_{m,i}\Bigg)  \quad \forall m \in\{1,\ldots,M\}
            \end{equation}

            The empirical risk is:
            \begin{equation} \label{eq:loss_NP_scalar}
                \hat{\ell}_{\Pi}(\mathbfit{w}) = \frac{1}{M}\sum_{m=1}^{M}(y_m - y_m^{\mathrm{true}})^2
            \end{equation}
            
            The optimization problem is formulated as:
            \begin{align}
                (P_{\Pi}): \quad \hat{\mathbfit{w}}_{\Pi} := \arg\min_{\mathbfit{w} \in \mathbb{R}^N} \hat{\ell}_{\Pi}(\mathbfit{w})
            \end{align}
            
            \begin{proposition}[Gradient of Product Node Loss]\label{prop:product_gradient}
                For the naive product node with empirical risk $\hat{\ell}_{\Pi}(\mathbfit{w})$, the j-th component of the gradient $\nabla_{\mathbf{w}} \hat{\ell}_{\Pi}(\mathbf{w})$  is:
                \begin{equation}
                    \begin{aligned}
                        \frac{\partial \hat{\ell}_{\Pi}(\mathbfit{w}) }{\partial w_j} &= 2D\times \bigg(\prod_{\substack{i=1 \\ i \neq j}}^{N} w_i \bigg)\times\bigg(\prod_{i=1}^{N} w_i-\prod_{i=1}^{N} w_i^\mathrm{true}\bigg)\\
                    \end{aligned}
                \end{equation}
                where $D=\frac{1}{M}\sum_{m=1}^M\bigg( \prod_{i=1}^{N} x_{m,i}^2\bigg)$ 
            \end{proposition}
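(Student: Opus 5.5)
The computation rests on the factorization already recorded in \eqref{eq:NP}. Write $P(\mathbfit{w}) := \prod_{i=1}^N w_i$ and $X_m := \prod_{i=1}^N x_{m,i}$, so that $y_m = P(\mathbfit{w})X_m$. The oracle labels have the same form, $y_m^{\mathrm{true}} = P(\mathbfit{w}^{\mathrm{true}})X_m$, because they are generated by the same $f$ with parameters $\mathbfit{w}^{\mathrm{true}}$.

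Each residual therefore factors as $y_m - y_m^{\mathrm{true}} = X_m\big(P(\mathbfit{w}) - P(\mathbfit{w}^{\mathrm{true}})\big)$. Substituting into \eqref{eq:loss_NP_scalar} gives
\begin{equation*}
\hat{\ell}_{\Pi}(\mathbfit{w}) = \Big(\frac{1}{M}\sum_{m=1}^M X_m^2\Big)\big(P(\mathbfit{w}) - P(\mathbfit{w}^{\mathrm{true}})\big)^2 = D\,\big(P(\mathbfit{w}) - P(\mathbfit{w}^{\mathrm{true}})\big)^2 .
\end{equation*}
Here we use $X_m^2 = \prod_i x_{m,i}^2$, which identifies the prefactor with $D$ exactly as in the statement. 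The data enter the risk only through the scalar $D$, and this scalar does not depend on $\mathbfit{w}$.

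Next we differentiate with respect to $w_j$ using the chain rule. Only $P(\mathbfit{w})$ depends on $w_j$, and the term $P(\mathbfit{w}^{\mathrm{true}})$ is constant. Since $P$ is multilinear, $\partial P/\partial w_j = \prod_{i\neq j} w_i$. This expression avoids dividing $P$ by $w_j$, so it stays valid when some weights vanish. Combining these facts yields
\begin{equation*}
\frac{\partial \hat{\ell}_{\Pi}(\mathbfit{w})}{\partial w_j} = 2D\Big(\prod_{i\neq j} w_i\Big)\big(P(\mathbfit{w}) - P(\mathbfit{w}^{\mathrm{true}})\big),
\end{equation*}
which is the claimed formula.

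No step is a real obstacle. The only care points are the two already handled: showing that the oracle term shares the factor $X_m$, so the sum collapses to the single constant $D$, and writing the partial derivative as $\prod_{i\neq j} w_i$ rather than $P/w_j$. The final formula also exhibits the non-separability mentioned in the main text, since each gradient component couples all the other weights.
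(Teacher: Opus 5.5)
Your proposal is correct and follows the paper's own proof essentially step by step. Like the paper, you factor each residual as $\prod_i x_{m,i}$ times $\bigl(\prod_i w_i - \prod_i w_i^{\mathrm{true}}\bigr)$, collapse the risk to $D\bigl(\prod_i w_i - \prod_i w_i^{\mathrm{true}}\bigr)^2$, and differentiate with the chain rule using $\partial P/\partial w_j = \prod_{i\neq j} w_i$.
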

            
            \begin{proof}   
                The empirical risk with $\ell_2$ loss can be written as:
                \begin{equation}
                    \begin{aligned}
                        \hat{\ell}_{\Pi} (\mathbfit{w}) &= \frac{1}{M}\sum_{m=1}^M\Bigg[y_m-y_m^\mathrm{true}\Bigg]^2\\
                        &= \frac{1}{M}\sum_{m=1}^M\Bigg[\bigg(\prod_{i=1}^{N} w_i\bigg) \times \bigg( \prod_{i=1}^{N} x_{m,i}\bigg)-\bigg(\prod_{i=1}^{N} w_i^\mathrm{true}\bigg) \times \bigg( \prod_{i=1}^{N} x_{m,i}\bigg)\Bigg]^2\\
                        &= \frac{1}{M}\sum_{m=1}^M\Bigg[\bigg(\prod_{i=1}^{N} w_i-\prod_{i=1}^{N} w_i^\mathrm{true}\bigg) \times \bigg( \prod_{i=1}^{N} x_{m,i}\bigg)\Bigg]^2\\
                        &= \frac{1}{M}\sum_{m=1}^M\bigg( \prod_{i=1}^{N} x_{m,i}\bigg)^2 \times \bigg(\prod_{i=1}^{N} w_i-\prod_{i=1}^{N} w_i^\mathrm{true}\bigg)^2\\
                        &= \bigg(\prod_{i=1}^{N} w_i-\prod_{i=1}^{N} w_i^\mathrm{true}\bigg)^2 \times \frac{1}{M}\sum_{m=1}^M\bigg( \prod_{i=1}^{N} x_{m,i}^2\bigg)\\
                        &= \bigg(\prod_{i=1}^{N} w_i-\prod_{i=1}^{N} w_i^\mathrm{true}\bigg)^2 \times D\\
                    \end{aligned}
                \end{equation}
                where $D=\frac{1}{M}\sum_{m=1}^M\bigg( \prod_{i=1}^{N} x_{m,i}^2\bigg)$ is dataset dependant with, for i.i.d. variables:

                \begin{equation}
                    \mathbb{E}_{\mathrm{x}\sim X}\left\{D\right\}=\mathbb{E}_{\mathrm{x}\sim X}\left\{\frac{1}{M}\sum_{m=1}^M\bigg( \prod_{i=1}^{N} \mathrm{x}_{m,i}^2\bigg)\right\}=\mathbb{E}_{\mathrm{x}\sim X}\left\{\prod_{i=1}^{N} \mathrm{x}_{i}^2\right\}=\prod_{i=1}^{N} \left(\mathbb{E}_{\mathrm{x}\sim X}\left\{\mathrm{x}_{i}^2\right\}\right)=\mathbb{E}_{\mathrm{x}\sim X}^N\left\{\mathrm{x}^2\right\}
                \end{equation}

                Therefore, the j-th component of the gradient $\nabla_{\mathbf{w}} \hat{\ell}_{\Pi}(\mathbf{w})$ is:
                \begin{equation}
                    \begin{aligned}
                        \frac{\partial \hat{\ell}_{\Pi}(\mathbfit{w}) }{\partial w_j} &= 2D\times \bigg(\prod_{\substack{i=1 \\ i \neq j}}^{N} w_i \bigg)\times\bigg(\prod_{i=1}^{N} w_i-\prod_{i=1}^{N} w_i^\mathrm{true}\bigg)\\
                    \end{aligned}
                \end{equation}    
            \end{proof}
            \begin{proposition}[Hessian of Product Node Loss]\label{prop:product_hessian}
                
                The Hessian matrix of the empirical risk $\hat{\ell}_{\Pi}(\mathbfit{w})$ has entries:
                \begin{equation}
                    [\mathbfit{H}]_{j,l} = \frac{\partial^2 \hat{\ell}_{\Pi}(\mathbfit{w})}{\partial w_j \partial w_l} = 2D \times 
                    \left\{
                    \begin{aligned}
                        &\prod_{\substack{i=1 \\ i \neq j}}^{N} w_i^2 & \text{if } j = l \\
                        &\left(\prod_{\substack{i=1 \\ i \neq j \neq l}}^{N} w_i\right) \times \left[2\times\prod_{i=1}^{N} w_i - \prod_{i=1}^{N} w_i^{\mathrm{true}}\right] & \text{if } j \neq l
                    \end{aligned}
                    \right.
                \end{equation}
                where $D = \frac{1}{M}\sum_{m=1}^M \prod_{i=1}^N x_{m,i}^2$.
            \end{proposition}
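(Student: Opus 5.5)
The plan is to differentiate the gradient of Prop.~\ref{prop:product_gradient} once more, using the factored form of the risk established in its proof:
\begin{equation*}
\hat{\ell}_{\Pi}(\mathbfit{w}) = D\,\big(P(\mathbfit{w}) - P^{\mathrm{true}}\big)^2,
\qquad P(\mathbfit{w}) := \prod_{i=1}^N w_i,\quad P^{\mathrm{true}} := \prod_{i=1}^N w_i^{\mathrm{true}}.
\end{equation*}
Since $D$ depends only on the data, it is a constant multiplier and all the work reduces to differentiating the scalar function $g(\mathbfit{w}) = (P(\mathbfit{w}) - P^{\mathrm{true}})^2$.

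I will write the gradient as $\partial_j \hat{\ell}_{\Pi} = 2D\, P_{\setminus j}\,(P - P^{\mathrm{true}})$, where $P_{\setminus j} := \prod_{i\neq j} w_i$. Applying the product rule to $\partial_l$ of this expression gives
\begin{equation*}
\partial_l\partial_j \hat{\ell}_{\Pi} = 2D\big[(\partial_l P_{\setminus j})(P - P^{\mathrm{true}}) + P_{\setminus j}\,\partial_l P\big].
\end{equation*}
\textbf{Diagonal case $j=l$.} Here $P_{\setminus j}$ does not depend on $w_j$, so the first term vanishes. Since $\partial_j P = P_{\setminus j}$, we obtain $2D\,P_{\setminus j}^2 = 2D\prod_{i\neq j} w_i^2$, as claimed.

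\textbf{Off-diagonal case $j\neq l$.} Now $\partial_l P_{\setminus j} = P_{\setminus j,l} := \prod_{i\neq j,l} w_i$ and $\partial_l P = P_{\setminus l}$. Then I use the identity $P_{\setminus j}\,P_{\setminus l} = P_{\setminus j,l}\,P$, which holds because both sides equal $P_{\setminus j,l}^2 w_j w_l$. Factoring out $P_{\setminus j,l}$ yields
\begin{equation*}
2D\,P_{\setminus j,l}\big[(P - P^{\mathrm{true}}) + P\big] = 2D\,P_{\setminus j,l}\big[2P - P^{\mathrm{true}}\big],
\end{equation*}
which matches the stated entry.

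The only delicate point is this off-diagonal identity, which must hold even when some $w_i = 0$. This is where I would be careful: I will write it as an identity of polynomials rather than dividing by $w_j$ or $w_l$. As an optional sanity check, $N=1$ has no off-diagonal terms, and $N=2$ gives $2D(2w_1w_2 - w_1^{\mathrm{true}}w_2^{\mathrm{true}})$, which agrees with direct differentiation.
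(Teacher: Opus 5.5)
Your proof is correct and takes essentially the same route as the paper: both differentiate the gradient of Prop.~\ref{prop:product_gradient} once more. The only difference is cosmetic: the paper first expands the gradient as $2D\big[w_j\prod_{i\neq j}w_i^2 - (\prod_{i\neq j}w_i)\prod_i w_i^{\mathrm{true}}\big]$ and then differentiates in $w_l$, whereas you apply the product rule directly and then use the polynomial identity $P_{\setminus j}P_{\setminus l}=P_{\setminus j,l}P$; neither route divides by $w_j$ or $w_l$, so both hold for all $\mathbfit{w}$.
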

            
            \begin{proof}
                The Hessian of the empirical risk is defined as:
                \begin{equation}
                    \mathbfit{H} = \nabla_{\mathbfit{w}}^2 \hat{\ell}_{\Pi}(\mathbfit{w}) = \frac{\partial}{\partial \mathbfit{w}}\nabla_{\mathbfit{w}} \hat{\ell}_{\Pi}(\mathbfit{w})
                \end{equation}
                
                Given the j-th component of the gradient (Proposition~\ref{prop:product_gradient}):
                \begin{equation}
                    \frac{\partial \hat{\ell}_{\Pi}(\mathbfit{w})}{\partial w_j} = 2D \times \left(\prod_{\substack{i=1 \\ i \neq j}}^{N} w_i\right) \times \left(\prod_{i=1}^{N} w_i - \prod_{i=1}^{N} w_i^{\mathrm{true}}\right)
                \end{equation}
                
                The Hessian entries are given by $[\mathbfit{H}]_{j,l} = \frac{\partial^2 \hat{\ell}_{\Pi}(\mathbfit{w})}{\partial w_j \partial w_l}$. Two cases arise:
                
                \textbf{Diagonal entries} ($j = l$):
                \begin{equation}
                    \frac{\partial^2 \hat{\ell}_{\Pi}(\mathbfit{w})}{\partial w_j^2} = \frac{\partial}{\partial w_j}\left[\frac{\partial \hat{\ell}_{\Pi}(\mathbfit{w})}{\partial w_j}\right]=2D \times \prod_{\substack{i=1 \\ i \neq j}}^{N} w_i^2
                \end{equation}
                
                \textbf{Off-diagonal entries }($j \neq l$):
                \begin{equation}
                    \begin{aligned}
                        \frac{\partial^2 \hat{\ell}_{\Pi}(\mathbfit{w})}{\partial w_j \partial w_l} &= \frac{\partial}{\partial w_l}\left[\frac{\partial \hat{\ell}_{\Pi}(\mathbfit{w})}{\partial w_j}\right] \\
                        &= \frac{\partial}{\partial w_l}\left[2D \times \left(\prod_{\substack{i=1 \\ i \neq j}}^{N} w_i\right) \times \left(\prod_{i=1}^{N} w_i - \prod_{i=1}^{N} w_i^{\mathrm{true}}\right)\right]\\
                        &= 2D\times \frac{\partial}{\partial w_l}\left[w_j\times \prod_{\substack{i=1 \\ i \neq j}}^{N} w_i^2 - \left(\prod_{\substack{i=1 \\ i \neq j}}^{N} w_i\right)\times\left(\prod_{i=1}^{N} w_i^{\mathrm{true}}\right)\right]\\
                        &= 2D\times \left[2w_jw_l\times \prod_{\substack{i=1 \\ i \neq j \neq l}}^{N} w_i^2 - \left(\prod_{\substack{i=1 \\ i \neq j \neq l}}^{N} w_i\right)\times\left(\prod_{i=1}^{N} w_i^{\mathrm{true}}\right)\right]\\
                        &= 2D\times \left(\prod_{\substack{i=1 \\ i \neq j \neq l}}^{N} w_i\right)\times \left[2w_jw_l\times \prod_{\substack{i=1 \\ i \neq j \neq l}}^{N} w_i - \prod_{i=1}^{N} w_i^{\mathrm{true}}\right]\\
                        &=2D\times \left(\prod_{\substack{i=1 \\ i \neq j \neq l}}^{N} w_i\right)\times \left[2\times \prod_{i=1}^{N} w_i - \prod_{i=1}^{N} w_i^{\mathrm{true}}\right]\\
                    \end{aligned}                    
                \end{equation}
            \end{proof}

            \begin{proposition}[Non-Convexity of Product Node]\label{prop:product_nonconvex}
                The naive product node optimization problem is non-convex. There exist parameter configurations where the Hessian matrix is not positive semi-definite.
            \end{proposition}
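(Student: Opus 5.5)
We exhibit an explicit parameter configuration at which the Hessian of Prop.~\ref{prop:product_hessian} has a negative eigenvalue; this suffices, since a twice-differentiable function is convex only if its Hessian is positive semi-definite everywhere. We assume the dataset is non-degenerate, i.e.\ $D>0$, meaning some sample has all $x_{m,i}\neq 0$. If $D=0$ the loss is identically zero and the statement must be read under this nondegeneracy. We also take $N\geq 2$. For $N=1$ the loss is $D(w_1-w_1^{\mathrm{true}})^2$, which is convex, so the claim genuinely needs $N\ge2$.

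The natural candidate is the origin $\mathbfit{w}=\mathbf{0}$ (for $N\ge 3$) or a point where only the $2\times2$ block involving two coordinates matters. The cleanest choice is a point where the diagonal vanishes and an off-diagonal entry does not. Take $N=2$ first. At $\mathbfit{w}=\mathbf{0}$ the diagonal entries are $2D w_{2}^2=0$ and $2D w_1^2=0$. The off-diagonal entry is $2D\,[2w_1w_2 - w_1^{\mathrm{true}}w_2^{\mathrm{true}}] = -2D\,w_1^{\mathrm{true}}w_2^{\mathrm{true}}$, where the empty product equals 1. So $\mathbfit{H}=\begin{pmatrix}0&b\\ b&0\end{pmatrix}$ with $b\neq0$ whenever $\prod_i w_i^{\mathrm{true}}\neq 0$, and its eigenvalues $\pm b$ include a negative one.

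For general $N\ge 2$ we need a point with $w_j=w_l=0$ for one fixed pair $j\neq l$ and all other $w_i=1$. At this point every diagonal entry is $2D\prod_{i\neq k}w_i^2$. Each of these vanishes except possibly those at $k=j$ or $k=l$, and even those vanish because the other zero coordinate remains in the product. The $(j,l)$ entry equals $2D\cdot 1\cdot[0-\prod_i w_i^{\mathrm{true}}]$, which is nonzero provided $\prod_i w_i^{\mathrm{true}}\ne0$. Restricting the quadratic form to $\mathbfit{v}=\mathbf{e}^{(j)}-\operatorname{sign}(b)\,\mathbf{e}^{(l)}$ gives $\mathbfit{v}^T\mathbfit{H}\mathbfit{v}=-2|b|<0$. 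The other off-diagonal entries play no role because $\mathbfit{v}$ is supported on only $\{j,l\}$.

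The main obstacle is the degenerate case $\prod_i w_i^{\mathrm{true}}=0$, for example when the oracle has some zero weight. Then the loss is $D(\prod_i w_i)^2$, and we instead use the Hessian formula at a point with $w_j=w_l=0$ replaced by a nonzero product. The off-diagonal entry becomes $2D\prod_{i\ne j,l}w_i\cdot 2\prod_i w_i$. A direct alternative is to argue via the definition of convexity. The function $g(t)=D\,(t\,s)^2$ along the curve is not what we need; instead take the segment between $\mathbfit{a}=\mathbf{e}^{(j)}+\mathbf{1}_{\setminus\{j,l\}}+c\,\mathbf{e}^{(l)}$ and its image with the roles of $j$ and $l$ swapped. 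Simpler still is the segment from $(w_j,w_l)=(1,-1)$ to $(-1,1)$, others $1$: the loss is $D$ at both endpoints and $D\cdot t^2(1-\ldots)$, which fails the convexity inequality. Concretely, along $(w_j,w_l)=(1,1)\to(-1,-1)$ with other weights fixed at $1$, we get $\prod_i w_i=t^2$ for $t$ running from $1$ to $-1$. The loss is $D t^4$ in the degenerate case, which is convex, so that path fails. We instead use $(w_j,w_l)=(t,1-t)$, giving $\prod_i w_i = t(1-t)$ and loss $D\,t^2(1-t)^2$. This vanishes at $t=0$ and $t=1$ but equals $D/16>0$ at $t=1/2$, violating convexity. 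We will present this midpoint-violation argument, which also covers the nondegenerate case with $c=\prod_i w_i^{\mathrm{true}}$ by choosing $t$ so that the loss is zero at both ends, i.e.\ $t(1-t)=c$ solvable when $c\le 1/4$. Otherwise we fall back to the Hessian computation above.
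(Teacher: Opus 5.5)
Your argument is correct, but it takes a different route from the paper. The paper uses a single zero weight, $\mathbfit{w}=\mathbf{1}-\mathbfit{e}^{(1)}$ with $\mathbfit{w}^{\mathrm{true}}=\mathbf{1}$. This gives the arrow-shaped Hessian $2D\bigl[3\mathbfit{e}^{(1)}(\mathbfit{e}^{(1)})^T-\mathbf{1}(\mathbfit{e}^{(1)})^T-\mathbfit{e}^{(1)}\mathbf{1}^T\bigr]$, which the paper tests against the dense vector $\mathbf{1}$ to get $2D(3-2N)<0$.

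You instead zero out two coordinates. That kills the entire diagonal and leaves the $\{j,l\}$ principal block as a pure saddle $\begin{pmatrix}0&b\\ b&0\end{pmatrix}$ with $b=-2D\prod_i w_i^{\mathrm{true}}$. A two-sparse test vector then gives $-2|b|<0$ directly, for any oracle with nonzero product. Your midpoint violation along $(w_j,w_l)=(t,1-t)$ covers the remaining case $\prod_i w_i^{\mathrm{true}}=0$, which for binary oracles is every target except $\mathbf{1}$. So you prove non-convexity for \emph{every} target, whereas the paper exhibits one pair $(\mathbfit{w},\mathbfit{w}^{\mathrm{true}})$, which is all the existential statement needs. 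You also make explicit the hypotheses $D>0$ and $N\ge 2$ that the paper uses silently.

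When you write it up, drop the exploratory detours, two of which are false as stated:
\begin{itemize}
\item The origin is useless for $N\ge 3$, because every Hessian entry vanishes there.
\item In the degenerate case the loss along $(w_j,w_l)=(s,-s)$ is $Ds^4$, which is convex, so that segment violates nothing.
\end{itemize}
In the degenerate case, make sure you reach the stated conclusion (a non-PSD Hessian) and not just non-convexity. Either note $g''(1/2)=-D$ for $g(t)=Dt^2(1-t)^2$, or evaluate at $\mathbfit{w}=\mathbf{1}$, where $\mathbfit{H}=4D\,\mathbf{1}\mathbf{1}^T-2D\,\mathbfit{I}$ has eigenvalue $-2D$. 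Finally, the closing ``$c\le 1/4$'' case split is unnecessary, since your Hessian computation already covers every $c\neq 0$.
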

            
            \begin{proof}            
                Recall that $\mathbfit{H} \in \mathbb{R}^{N \times N}$ is PSD (PD respectively), iff $\forall \mathbfit{q} \in \mathbb{R}^N \setminus \{\mathbf{0}\}$ the following condition is satisfied: $\mathbfit{q}^T \mathbfit{H} \mathbfit{q} \geq 0$ ($\mathbfit{q}^T \mathbfit{H} \mathbfit{q} > 0$ respectively). Applying this condition to the above Hessian matrix, $\nabla_\mathbfit{w}^2\hat{\ell}_{\Pi_r}(\mathbfit{w})$ in Proposition~\ref{prop:product_hessian}, reveals that the PSD condition is not always satisfied.
      
                We demonstrate non-convexity by constructing a counterexample. Let $w_1 = 0, w_i = 1 \forall i \in \{2,...,N\}, w_i^\mathrm{true}=1 \forall i \in \{1,...,N\}$. Equivalently we can define $\mathbfit{w}=\mathbf{1}-\mathbfit{e}^{(1)}$ and $\mathbfit{w}^\mathrm{true}=\mathbf{1}$. We recall the Hessian matrix of the empirical risk of the naive product node (Proposition~\ref{prop:product_hessian}):

                \begin{equation}
                    [\mathbfit{H}]_{j,l} = 2D \times 
                    \left\{
                    \begin{aligned}
                        &\prod_{\substack{i=1 \\ i \neq j}}^{N} w_i^2 & \text{if } j = l \\
                        &\left(\prod_{\substack{i=1 \\ i \neq j \neq l}}^{N} w_i\right) \times \left[2\times\prod_{i=1}^{N} w_i - \prod_{i=1}^{N} w_i^{\mathrm{true}}\right] & \text{if } j \neq l
                    \end{aligned}
                    \right.
                \end{equation}

                where $\prod_{i=1}^{N} w_i=0$ and $\prod_{i=1}^{N} w_i^{\mathrm{true}}=1$. Hence, the Hessian can be simplified to:
                \begin{equation}
                    [\mathbfit{H}]_{j,l} = 2D \times \left\{
                    \begin{aligned}
                        \prod_{\substack{i=1 \\ i \neq j}}^{N} w_i^2 & \text{if } j = l \\
                        -\prod_{\substack{i=1 \\ i \neq j \neq l}}^{N} w_i & \text{if } j \neq l
                    \end{aligned}
                    \right.
                \end{equation}
                
                Which has the following values:
                \begin{equation}
                    [\mathbfit{H}]_{j,l} = \frac{\partial^2 \hat{\ell}_{\Pi}(\mathbfit{w})}{\partial w_j \partial w_l} = 2D \times \begin{cases}
                        1 & \text{if } j = l = 1\\
                        0 & \text{if } j = l \neq 1\\
                        -1  & \text{if } j \neq l \text{ and at least one of } j,l \text{ equals } 1\\
                        0  & \text{if } j \neq l \neq 1\\
                    \end{cases}
                \end{equation}

                The resulting Hessian matrix can be compactly noted as:
                \begin{equation}
                    \mathbfit{H}=2D\times
                    \begin{pmatrix}
                        1 & -1 & -1 & \cdots & -1\\
                        -1 & 0 & 0 & \cdots & 0\\
                        -1 & 0 & 0 & \cdots & 0\\
                        \vdots & \vdots & \vdots & \ddots&\vdots\\
                        -1 & 0 & 0 & \cdots&0\\
                    \end{pmatrix}
                    =2D\left[\mathbfit{e}^{(1)}(\mathbfit{e}^{(1)})^T-(\mathbf{1}-\mathbfit{e}^{(1)})(\mathbfit{e}^{(1)})^T-\mathbfit{e}^{(1)}(\mathbf{1}-\mathbfit{e}^{(1)})^T\right]
                \end{equation}
                or equivalently,
                \begin{equation}
                    \mathbfit{H}=2D\times\left[3\mathbfit{e}^{(1)}(\mathbfit{e}^{(1)})^T-\mathbf{1}(\mathbfit{e}^{(1)})^T-\mathbfit{e}^{(1)}\mathbf{1}^T\right]
                \end{equation}

                Multiplying by the test vector $\mathbfit{q}$:
                \begin{equation} 
                    \begin{aligned}
                        \mathbfit{q}^T\nabla_\mathbfit{w}^2 \hat{\ell}_{\Pi} (\mathbfit{w})\mathbfit{q}
                        =&2D \times \mathbfit{q}^T\left[3\mathbfit{e}^{(1)}(\mathbfit{e}^{(1)})^T - \mathbf{1}\cdot(\mathbfit{e}^{(1)})^T - \mathbfit{e}^{(1)}\cdot\mathbf{1}^T \right]\mathbfit{q}\\
                        =&2D \times \left[3\mathbfit{q}^T\mathbfit{e}^{(1)}(\mathbfit{e}^{(1)})^T   - \mathbfit{q}^T\left(\mathbf{1}\cdot(\mathbfit{e}^{(1)})^T\right) - \mathbfit{q}^T\left(\mathbfit{e}^{(1)}\cdot\mathbf{1}^T\right)\right]\mathbfit{q}\\
                        =&2D \times \left[3q_1(\mathbfit{e}^{(1)})^T - \mathbfit{q}^T\left(\mathbf{1}\cdot(\mathbfit{e}^{(1)})^T\right) - q_1\mathbf{1}^T\right]\mathbfit{q}\\
                        =&2D \times \left[3q_1(\mathbfit{e}^{(1)})^T\mathbfit{q} - \mathbfit{q}^T\left(\mathbf{1}\cdot(\mathbfit{e}^{(1)})^T\right)\mathbfit{q} - q_1\mathbf{1}^T\mathbfit{q}\right]\\
                        =&2D \times \left[3q_1(\mathbfit{e}^{(1)})^T\mathbfit{q} - q_1\mathbfit{q}^T\mathbf{1} - q_1\mathbf{1}^T\mathbfit{q}\right]\\
                        =&2D \times q_1\left[3(\mathbfit{e}^{(1)})^T\mathbfit{q} - 2\mathbfit{q}^T\mathbf{1}\right]
                    \end{aligned}
                \end{equation}
                
               For the test vector $\mathbfit{q} = \mathbf{1}$:
                \begin{equation} 
                    \begin{aligned}
                        \mathbfit{q}^T\nabla_\mathbfit{w}^2 \hat{\ell}_{\Pi} (\mathbfit{w})\mathbfit{q}
                        =&2D\left[3 - 2N\right]\\
                        \mathbfit{q}^T\nabla_\mathbfit{w}^2 \hat{\ell}_{\Pi} (\mathbfit{w})\mathbfit{q}
                        <& 0 \forall N \geq 2\\
                    \end{aligned}
                \end{equation}
                As a result, the PSD conditions is not satisfied for certain instances of $\mathbfit{w},\mathbfit{w}^\mathrm{true}$. Thus, the naive product unit is not always convex in $\mathbfit{w}$.

                This counterexample is interesting from an optimization perspective: even when all but one parameter have reached their optimal values, the Hessian matrix is not positive definite, revealing fundamental non-convexity despite being arbitrarily close to the global solution.
            \end{proof}

        The analysis reveals that the naive product node presents fundamental optimization challenges:
        \begin{itemize}
            \item \textbf{Non-convexity and non-separability:} The $j$-th gradient component depends on all other parameters in $\mathbfit{w}$, creating multiple local minima and preventing guaranteed global optimization.
            \item \textbf{Multiple critical points:} Degenerate solutions where $w_i = 0$ for any $i$ are critical points, and infinitely many parameter configurations achieve identical loss values.
        \end{itemize}

        These severe limitations make naive product nodes impractical for direct use, motivating our investigation into whether specialized structural constraints and sparsity assumptions can render product-based architectures trainable.

    \section{Analysis of Product with Neutral Element Node} \label{app:product_with_neutral_element_node_analysis}

        \begin{figure}[htbp]
    \centerline{
        \begin{tikzpicture}
            \tikzmath{
                function PlotArrow (\ArrowStartX, \ArrowStartY, \CircleCenterX, \CircleCenterY, \CircleRadius) {
                    \ArrowEndX = \CircleCenterX - \CircleRadius*cos(atan((\ArrowStartY -  \CircleCenterY)/(\ArrowStartX - \CircleCenterX)));
                    \ArrowEndY = \CircleCenterY - \CircleRadius*sin(atan((\ArrowStartY - \CircleCenterY)/(\ArrowStartX - \CircleCenterX)));
                    {\draw[thick, -latex] (\ArrowStartX,\ArrowStartY) -- (\ArrowEndX,\ArrowEndY);
                    };
                };
                \CircleRadius1 = 0.5;
                \CircleCenterX1 = 0;
                \CircleCenterY1 = 0;
                \ArrowStartX1 = -4;
                \ArrowStartY1 = +2;
                \WeightPositionX1 = (\ArrowStartX1+\CircleCenterX1)/2;
                \WeightPositionY1 = (\ArrowStartY1+\CircleCenterY1)/2;
                PlotArrow(\ArrowStartX1+1.6, \ArrowStartY1, \CircleCenterX1, \CircleCenterY1, \CircleRadius1);
                \ArrowStartX2 = -4;
                \ArrowStartY2 = +1;
                \WeightPositionX2 = (\ArrowStartX2+\CircleCenterX1)/2;
                \WeightPositionY2 = (\ArrowStartY2+\CircleCenterY1)/2;
                PlotArrow(\ArrowStartX2+1.6, \ArrowStartY2, \CircleCenterX1, \CircleCenterY1, \CircleRadius1);
                \ArrowStartX3 = -4;
                \ArrowStartY3 = -2;
                \WeightPositionX3 = (\ArrowStartX3+\CircleCenterX1)/2;
                \WeightPositionY3 = (\ArrowStartY3+\CircleCenterY1)/2;
                PlotArrow(\ArrowStartX3+1.6, \ArrowStartY3, \CircleCenterX1, \CircleCenterY1, \CircleRadius1);
            };
            \draw[thick, fill=green!6] (\CircleCenterX1,\CircleCenterY1) circle (\CircleRadius1);
            
            \draw[thick] (\CircleCenterX1 - 0.707*\CircleRadius1, \CircleCenterY1+0.707*\CircleRadius1) -- (\CircleCenterX1 + 0.707*\CircleRadius1, \CircleCenterY1-0.707*\CircleRadius1);
            
            \draw[thick] (\CircleCenterX1 - 0.707*\CircleRadius1, \CircleCenterY1-0.707*\CircleRadius1) -- (\CircleCenterX1 + 0.707*\CircleRadius1, \CircleCenterY1               +0.707*\CircleRadius1);
            
            \node [align=center] at (\CircleCenterX1,\CircleCenterY1 + 4*\CircleRadius1 ) {PRODUCT WITH\\NEUTRAL ELT.};
            \draw[thick,-latex] (\CircleCenterX1 + \CircleRadius1,\CircleCenterY1) -- (\CircleCenterX1 + \CircleRadius1 + 2,\CircleCenterY1);

            \node at (\CircleCenterX1 + \CircleRadius1 + 4,\CircleCenterY1) {$y_m = f(\mathbfit{w};\mathbfit{x}_m)$};
            
            \node at (\CircleCenterX1 + \CircleRadius1 + 4,\ArrowStartY3-1) {$y_m \in \mathbb{R}$};
            
            \node at (\CircleCenterX1 + \CircleRadius1 + 4,\ArrowStartY3) {$y_m = \prod_{i=1}^N \bigg(w_ix_{m,i} + (1 - w_i)\bigg)$};
            
            \node (X1) at (\ArrowStartX1-2,\ArrowStartY1) {$x_{m,1}$};
            \node (X2) at (\ArrowStartX2-2,\ArrowStartY2) {$x_{m,2}$};
            \node (X3) at (\ArrowStartX3-2,\ArrowStartY3) {$x_{m,N}$};
            \node at (\ArrowStartX3-2,\CircleCenterY1) {$\vdots$};
            \node at (\ArrowStartX3-2,\ArrowStartY3-1) {$\mathbfit{x}_m \in \mathbb{R}^N$};

            \node [rectangle, draw=black, fill=green!6, rounded corners] (G1) at (\ArrowStartX1+0.65,\ArrowStartY1) {$g(w_1,x_{m,1})$};
            \node [rectangle, draw=black, fill=green!6, rounded corners] (G2) at (\ArrowStartX2+0.65,\ArrowStartY2) {$g(w_2,x_{m,2})$};
            \node [rectangle, draw=black, fill=green!6, rounded corners] (G3) at (\ArrowStartX3+0.65,\ArrowStartY3) {$g(w_N,x_{m,N})$};
            
            \node at (\ArrowStartX3+0.5,\CircleCenterY1) {$\vdots$};

            \draw[thick,-latex] (X1) -- (G1);
            \draw[thick,-latex] (X2) -- (G2);
            \draw[thick,-latex] (X3) -- (G3);
            
    
            \node at (\WeightPositionX3-0.5,\ArrowStartY3-1) {$\mathbfit{w} \in \mathbb{R}^N$};
            
        \end{tikzpicture}
    }
    \caption{The product unit with neutral element}
    \label{fig:prod_unit_with_neutral_elt}
\end{figure}

        Given training samples $\{ \mathbfit{x}_m \in \mathbb{R}^N \}_{m=1}^M$ and learnable parameters $\mathbfit{w} \in \mathbb{R}^N$, we introduce a parametric product node that incorporates a form of neutral element to prevent vanishing products when parameters approach zero. This product with neutral element unit is defined as (Figure~\ref{fig:prod_unit_with_neutral_elt}):
        \begin{equation}
            y_m = f(\mathbfit{w};\mathbfit{x}_m) =\prod_{i=1}^{N} \Big(w_i x_{m,i} + (1 - w_i)\Big) =\prod_{i=1}^{N} \Big(w_i z_{m,i} + 1\Big) =\prod_{i=1}^{N} a_{m,i} \quad \forall m \in\{1,\ldots,M\}
        \end{equation}

        where every entry of the training set $x_{m,i}$ can be equivalently replaced by $z_{m,i}:= x_{m,i} - 1 \quad\forall(m,i) $ and $a_{m,i}:=(w_i z_{m,i} + 1) $. 
       
        This formulation implements a linear interpolation between the input value $x_{m,i}$ and the multiplicative identity 1, controlled by the learnable parameter $w_i$.  When parameters are binary ($w_i \in \{0,1\}$), this provides an intuitive feature selection mechanism where excluded features ($w_i = 0$) contribute a neutral multiplicative factor of 1, while selected features ($w_i = 1$) contribute their full value $x_{m,i}$ to the product. This approach draws inspiration from neural arithmetic units \cite{NeuralArithmeticUnits,NeuralArithmeticLogicUnits} and provides a method to avoid vanishing products when parameters equal zero.       
        
        The empirical risk is:
        \begin{equation}
            \hat{\ell}_{\overset{\bullet}{\Pi}} = \frac{1}{M}\sum_{m=1}^{M}(y_m - y_m^{\mathrm{true}})^2
        \end{equation}
        
        The optimization problem is formulated as:
        \begin{align}
            (P_{\overset{\bullet}{\Pi}}): \quad \hat{\mathbfit{w}}_{\overset{\bullet}{\Pi}} := \arg\min_{\mathbfit{w} \in \mathbb{R}^N} \hat{\ell}_{\overset{\bullet}{\Pi}}(\mathbfit{w})
        \end{align}

        \begin{proposition}[Gradient of Product with Neutral Element]\label{prop:product_with_ne_gradient}
            For the product node with neutral element, with empirical risk $\hat{\ell}_{\overset{\bullet}\Pi}(\mathbfit{w})$, the j-th component of the gradient $\nabla_{\mathbf{w}} \hat{\ell}_{\overset{\bullet}\Pi}(\mathbf{w})$  is:
            \begin{equation}
                \begin{aligned}
                    \frac{\partial \hat{\ell}_{\overset{\bullet}\Pi}(\mathbfit{w}) }{\partial w_j} = \frac{2}{M}\sum_{m=1}^{M}\left[z_{m,j}\times\left(\prod_{\substack{i=1 \\ i \neq j}}^{N} a_{m,i}\right)\times \left(\prod_{i=1}^{N} a_{m,i} - \prod_{i=1}^{N} a_{m,i}^\mathrm{true}\right)\right]\\
                \end{aligned}
            \end{equation} 
            where $z_{m,i}= x_{m,i} - 1$ and $a_{m,i}=(w_i z_{m,i} + 1) \quad\forall(m,i) $.
        \end{proposition}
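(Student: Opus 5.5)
The proof is a direct application of the chain rule to the empirical risk, in the same way as Proposition~\ref{prop:product_gradient}. The one difference is that, with the neutral element, the input terms no longer factor out of the product. So the sum over samples $m$ has to be kept explicit.

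First, rewrite each factor using the substitution $z_{m,i}=x_{m,i}-1$. This gives $w_ix_{m,i}+(1-w_i)=w_iz_{m,i}+1=a_{m,i}$, so $y_m=\prod_{i=1}^N a_{m,i}$. Likewise, $y_m^{\mathrm{true}}=\prod_{i=1}^N a_{m,i}^{\mathrm{true}}$ with $a_{m,i}^{\mathrm{true}}:=w_i^{\mathrm{true}}z_{m,i}+1$, because the oracle uses the same function $f$ evaluated at $\mathbfit{w}^{\mathrm{true}}$. The loss then reads
\[
\hat{\ell}_{\overset{\bullet}\Pi}(\mathbfit{w})=\frac{1}{M}\sum_{m=1}^M\Big(\prod_{i=1}^N a_{m,i}-\prod_{i=1}^N a^{\mathrm{true}}_{m,i}\Big)^2 .
\]

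Second, differentiate with respect to $w_j$. The target term $\prod_i a^{\mathrm{true}}_{m,i}$ does not depend on $\mathbfit{w}$. For the model output, $w_j$ appears only in the factor $a_{m,j}$, and $\partial a_{m,j}/\partial w_j=z_{m,j}$. Hence
\[
\frac{\partial y_m}{\partial w_j}=z_{m,j}\prod_{i\neq j}a_{m,i}.
\]
This step is the only place that needs care: each factor is affine in its own weight, so the product rule reduces to a single surviving term.

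Finally, by the chain rule each squared residual contributes $2(y_m-y_m^{\mathrm{true}})\,\partial y_m/\partial w_j$. Summing over $m$ with the factor $1/M$ gives exactly the stated expression. There is no genuine obstacle here. The main point to note is that the sum over $m$ cannot be reduced to a dataset constant such as $D$ in Proposition~\ref{prop:product_gradient}, because $z_{m,j}$ and the partial products $\prod_{i\neq j}a_{m,i}$ vary with the sample.
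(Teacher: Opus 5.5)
Your proof is correct and follows the paper's argument essentially step for step. Like the paper, you rewrite the risk in terms of the factors $a_{m,i}$, use that $w_j$ enters only through $a_{m,j}$ so that $\partial_{w_j}\prod_i a_{m,i} = z_{m,j}\prod_{i\neq j} a_{m,i}$, and apply the chain rule to each squared residual.
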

    
        \begin{proof}
            From the expression of the empirical risk:

            \begin{equation}\label{eq:empirical_risk_product_with_ne}
                \begin{aligned}
                    \hat{\ell}_{\overset{\bullet}{\Pi}} = \frac{1}{M}\sum_{m=1}^{M}\left[y_m - y_m^{\mathrm{true}}\right]^2= \frac{1}{M}\sum_{m=1}^{M}\left[\prod_{i=1}^{N} a_{m,i} - \prod_{i=1}^{N} a_{m,i}^\mathrm{true}\right]^2\\
                \end{aligned}                
            \end{equation}

            The derivative of $\prod_{i=1}^{N} a_{m,i}$ w.r.t to parameter $w_j$ being:
            \begin{equation}
                \frac{\partial}{\partial w_j}\left(\prod_{i=1}^{N} a_{m,i}\right) = \frac{\partial}{\partial w_j}\left(\prod_{i=1}^{N} \Big(w_iz_{m,i}+1\Big)\right) = z_{m,j}\times\prod_{\substack{i=1 \\ i \neq j}}^{N} \Big(w_i z_{m,i} + 1\Big) = z_{m,j} \times \prod_{\substack{i=1 \\ i \neq j}}^{N} a_{m,i}
            \end{equation}

            The j-th component of the gradient $\nabla_{\mathbf{w}} \hat{\ell}_{\overset{\bullet}\Pi}(\mathbf{w})$ is:

            \begin{equation}
                \begin{aligned}
                    \frac{\partial \hat{\ell}_{\overset{\bullet}\Pi}(\mathbfit{w}) }{\partial w_j} = \frac{2}{M}\sum_{m=1}^{M}\left[z_{m,j}\times\left(\prod_{\substack{i=1 \\ i \neq j}}^{N} a_{m,i}\right)\times \left(\prod_{i=1}^{N} a_{m,i} - \prod_{i=1}^{N} a_{m,i}^\mathrm{true}\right)\right]\\
                \end{aligned}
            \end{equation}

            As for the naive product node it can be seen that the problem is non linearly separable as the $j$-th gradient term depends on all the parameters.
        \end{proof}

        \begin{proposition}[Hessian of Product with Neutral Element]\label{prop:product_with_ne_hessian}
            The Hessian matrix of the empirical risk $\hat{\ell}_{\overset{\bullet}\Pi}(\mathbfit{w})$ has entries:
            \begin{equation}
                [\mathbfit{H}]_{j,l} = \frac{\partial^2 \hat{\ell}_{\overset{\bullet}\Pi}(\mathbfit{w})}{\partial w_j \partial w_l} = \frac{2}{M}\sum_{m=1}^{M}
                \left\{
                \begin{aligned}
                    &z_{m,j}^2\times\prod_{\substack{i=1 \\ i \neq j}}^{N} a_{m,i}^2& \text{if } j = l \\
                    &\left(z_{m,j}z_{m,l}\times \prod_{\substack{i=1 \\ i \neq j \neq l}}^{N} a_{m,i}\right) \times \left(2a_{m,j}a_{m,l}\times\prod_{\substack{i=1 \\ i \neq j \neq l}}^{N} a_{m,i}
                    - \prod_{i=1}^{N} a_{m,i}^\mathrm{true}\right)& \text{if } j \neq l
                \end{aligned}
                \right.
            \end{equation}
            where $z_{m,i}= x_{m,i} - 1$ and $a_{m,i}=(w_i z_{m,i} + 1)\quad\forall(m,i) $.
            
        \end{proposition}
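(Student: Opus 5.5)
The plan is to differentiate the gradient expression of Prop.~\ref{prop:product_with_ne_gradient} once more with respect to $w_l$, term by term in the sum over $m$. Write the $m$-th summand as $z_{m,j}\,P_{m,\neg j}\,(P_m - T_m)$, with $P_m=\prod_{i=1}^N a_{m,i}$, $P_{m,\neg j}=\prod_{i\neq j}a_{m,i}$ and $T_m=\prod_{i}a_{m,i}^{\mathrm{true}}$. The quantities $z_{m,j}$ and $T_m$ do not depend on $\mathbfit{w}$. The only fact needed is the elementary derivative already used in the proof of Prop.~\ref{prop:product_with_ne_gradient}: since $a_{m,i}=w_iz_{m,i}+1$, we have $\partial a_{m,i}/\partial w_l = z_{m,l}\,\mathds{1}[i=l]$. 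Hence $\partial P_m/\partial w_l = z_{m,l}\prod_{i\neq l}a_{m,i}$.

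For the diagonal case $j=l$, the factor $P_{m,\neg j}$ does not contain $w_j$, so only $P_m$ is differentiated. This gives $z_{m,j}\,P_{m,\neg j}\cdot z_{m,j}P_{m,\neg j}=z_{m,j}^2\prod_{i\neq j}a_{m,i}^2$, and multiplying by $\tfrac{2}{M}\sum_m$ yields the stated diagonal entry.

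For $j\neq l$, I would apply the product rule to $P_{m,\neg j}(P_m-T_m)$. There are two contributions.
\begin{itemize}
\item Differentiating $P_{m,\neg j}$ gives $z_{m,l}\prod_{i\neq j,l}a_{m,i}\,(P_m-T_m)$.
\item Differentiating $P_m$ gives $P_{m,\neg j}\,z_{m,l}\prod_{i\neq l}a_{m,i}$.
\end{itemize}
Factor out $z_{m,j}z_{m,l}\prod_{i\neq j,l}a_{m,i}$. The first contribution leaves $P_m-T_m$. In the second, $P_{m,\neg j}\prod_{i\neq l}a_{m,i}=\big(\prod_{i\neq j,l}a_{m,i}\big)^2a_{m,j}a_{m,l}$, so after factoring it leaves $a_{m,j}a_{m,l}\prod_{i\neq j,l}a_{m,i}=P_m$. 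The bracket therefore becomes $2P_m-T_m$. Writing $P_m=a_{m,j}a_{m,l}\prod_{i\neq j,l}a_{m,i}$ reproduces exactly the stated off-diagonal form.

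The computation is routine and mirrors the proof of Prop.~\ref{prop:product_hessian}, with $z_{m,\cdot}$ in place of the $x$-factors and an explicit sum over $m$ because the data no longer factor out. The main point of care is the off-diagonal bookkeeping: one must check that both product-rule contributions share the common prefactor $z_{m,j}z_{m,l}\prod_{i\neq j,l}a_{m,i}$, and that the second contribution collapses to a single copy of $P_m$, so that the coefficient in the bracket is $2$.
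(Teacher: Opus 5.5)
Your proposal is correct and takes essentially the same route as the paper: you differentiate the gradient of Prop.~\ref{prop:product_with_ne_gradient} directly with respect to $w_l$. The only cosmetic difference is in the off-diagonal case. The paper first expands the summand as $z_{m,j}a_{m,j}\prod_{i\neq j}a_{m,i}^2 - z_{m,j}\prod_{i\neq j}a_{m,i}\prod_{i}a_{m,i}^{\mathrm{true}}$, so the factor $2$ comes from differentiating a square, whereas in your version it comes from the two product-rule contributions each yielding one copy of $P_m$.
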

    
        \begin{proof}
            The Hessian of the empirical risk is defined as:
            \begin{equation}
                \mathbfit{H} = \nabla_{\mathbfit{w}}^2 \hat{\ell}_{\overset{\bullet}\Pi}(\mathbfit{w}) = \frac{\partial}{\partial \mathbfit{w}}\nabla_{\mathbfit{w}} \hat{\ell}_{\overset{\bullet}\Pi}(\mathbfit{w})
            \end{equation}
            
            Given the j-th component of the gradient (Proposition~\ref{prop:product_with_ne_gradient}):
            
            \begin{equation}
                \begin{aligned}
                    \frac{\partial \hat{\ell}_{\overset{\bullet}\Pi}(\mathbfit{w}) }{\partial w_j} = \frac{2}{M}\sum_{m=1}^{M}\left[z_{m,j}\times\left(\prod_{\substack{i=1 \\ i \neq j}}^{N} a_{m,i}\right)\times \left(\prod_{i=1}^{N} a_{m,i} - \prod_{i=1}^{N} a_{m,i}^\mathrm{true}\right)\right]\\
                \end{aligned}
            \end{equation}

            The Hessian entries are given by $[\mathbfit{H}]_{j,l} = \frac{\partial^2 \hat{\ell}_{\overset{\bullet}\Pi}(\mathbfit{w})}{\partial w_j \partial w_l}$. Two cases arise:
            
            \textbf{Diagonal entries} ($j = l$):
            
            \begin{equation}
                \begin{aligned}
                    \frac{\partial^2 \hat{\ell}_{\overset{\bullet}\Pi}(\mathbfit{w})}{\partial w_j^2} &=\frac{2}{M}\sum_{m=1}^{M}\left[z_{m,j}\times \left(\prod_{\substack{i=1 \\ i \neq j}}^{N} a_{m,i}\right) \times z_{m,j}\times \left(\prod_{\substack{i=1 \\ i \neq j}}^{N} a_{m,i} \right)\right]\\
                    &=\frac{2}{M}\sum_{m=1}^{M}\left[z_{m,j}^2\times\prod_{\substack{i=1 \\ i \neq j}}^{N} a_{m,i}^2\right]\\
                \end{aligned}                
            \end{equation}

            \textbf{Off-diagonal entries }($j \neq l$):
            
            The derivative of $\prod_{i=1}^{N} a_{m,i}^2$ w.r.t to parameter $w_j$ being:
            \begin{equation}
                \frac{\partial}{\partial w_j}\left(\prod_{i=1}^{N} a_{m,i}^2\right) = \frac{\partial}{\partial w_j}\left(\prod_{i=1}^{N} \Big(w_iz_{m,i}+1\Big)^2\right) = 2z_{m,j}\Big(w_jz_{m,j}+1\Big) \times \prod_{\substack{i=1 \\ i \neq j}}^{N} \Big(w_i z_{m,i} + 1\Big)^2 = 2z_{m,j}a_{m,j} \times \prod_{\substack{i=1 \\ i \neq j}}^{N} a_{m,i}^2
            \end{equation}
            We have:
            \begin{equation}
                \begin{aligned}
                    \frac{\partial^2 \hat{\ell}_{\overset{\bullet}\Pi}(\mathbfit{w})}{\partial w_j \partial w_l} &= \frac{2}{M}\sum_{m=1}^{M}\left[z_{m,j}a_{m,j}\times\frac{\partial}{\partial w_l}\left(\prod_{\substack{i=1 \\ i \neq j}}^{N} a_{m,i}^2\right)- z_{m,j} \times \frac{\partial}{\partial w_l}\left(\prod_{\substack{i=1 \\ i \neq j}}^{N} a_{m,i}\right) \times \left(\prod_{i=1}^{N} a_{m,i}^\mathrm{true}\right)\right]\\
                    &= \frac{2}{M}\sum_{m=1}^{M}\left[z_{m,j}a_{m,j}\times 2z_{m,l}a_{m,l}\times\prod_{\substack{i=1 \\ i \neq j \neq l}}^{N} a_{m,i}^2 - z_{m,j}z_{m,l}\times \left(\prod_{\substack{i=1 \\ i \neq j \neq l}}^{N} a_{m,i}\right)\times\left(\prod_{i=1}^{N} a_{m,i}^\mathrm{true}\right)\right]\\
                    &= \frac{2}{M}\sum_{m=1}^{M}\left[\left(z_{m,j}z_{m,l}\times \prod_{\substack{i=1 \\ i \neq j \neq l}}^{N} a_{m,i}\right) \times \left(2a_{m,j}a_{m,l}\times\prod_{\substack{i=1 \\ i \neq j \neq l}}^{N} a_{m,i}
                    - \prod_{i=1}^{N} a_{m,i}^\mathrm{true}\right)\right]\\
                \end{aligned}
            \end{equation}
        \end{proof}

        \begin{proposition}[Non-Convexity of Product with Neutral Element]\label{prop:product_node_with_ne_non_convexity}
            The optimization problem for the product node with neutral element is non-convex in $\mathbfit{w}$. Specifically, under the assumption that $x_{m,i}$ are i.i.d. samples from a distribution with $\mathbb{E}[\mathrm{x}] = 0$ and $\mathbb{E}[\mathrm{x}^2] > 0$, there exist test vectors $\mathbfit{q}$ such that:
            \begin{equation}
                \mathbfit{q}^T\mathbb{E}\left[\nabla_\mathbfit{w}^2\hat{\ell}_{\overset{\bullet}{\Pi}} (\mathbfit{w})\right]\mathbfit{q} < 0
            \end{equation}
            for $N \geq 2$, such that the Hessian matrix is not always positive semi-definite.
    
        \end{proposition}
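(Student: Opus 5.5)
The plan is to compute the expected Hessian of Prop.~\ref{prop:product_with_ne_hessian} explicitly at a well-chosen configuration $(\mathbfit{w},\mathbfit{w}^{\mathrm{true}})$, using independence across coordinates. Then I will exhibit a two-coordinate test vector $\mathbfit{q}$ with negative quadratic form.

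\textbf{Factorization.} Within a single sample, every term in the Hessian entries is a product over $i$ of factors that depend only on $x_{m,i}$. Since the $x_{m,i}$ are i.i.d., each expectation factorizes into one-dimensional moments. Averaging over $m$ leaves the expectation unchanged. Set $v:=\mathbb{E}[\mathrm{x}^2]>0$. With $z=x-1$ we get $\mathbb{E}[z]=-1$ and $s:=\mathbb{E}[z^2]=v+1$. For weights $w,u$ the moments needed are
\[
\mathbb{E}[a]=1-w,\qquad \mathbb{E}[za]=ws-1,\qquad \mathbb{E}[a^2]=1-2w+w^2 s,\qquad \mathbb{E}[z a^{\mathrm{true}}]=us-1 .
\]

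\textbf{Choice of configuration.} Fix two indices $j\neq l$ and set all other weights to zero, $w_i=w_i^{\mathrm{true}}=0$ for $i\notin\{j,l\}$. Each such coordinate then contributes a factor $a_i=a_i^{\mathrm{true}}=1$, so the problem reduces to the $2\times2$ block on $\{j,l\}$.

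\begin{itemize}
\item Take $w_j=w_l=1/s$. This kills the term $\mathbb{E}[z a]$, giving $\mathbb{E}[za]=0$ and $\mathbb{E}[a^2]=1-1/s=v/s$.
\item The diagonal entries are then $\tfrac{2}{M}\cdot M\cdot \mathbb{E}[z_j^2]\,\mathbb{E}[a_l^2]=2s\cdot v/s=2v$.
\item Take $w_j^{\mathrm{true}}=w_l^{\mathrm{true}}=u$, a free parameter. The first off-diagonal product $2\,\mathbb{E}[z_ja_j]\,\mathbb{E}[z_la_l]$ vanishes.
\item The off-diagonal entry is therefore $-2\,\mathbb{E}[z_ja_j^{\mathrm{true}}]\,\mathbb{E}[z_la_l^{\mathrm{true}}]=-2(us-1)^2$.
\end{itemize}

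\textbf{Conclusion.} Take $\mathbfit{q}=\mathbfit{e}^{(j)}+\mathbfit{e}^{(l)}$. Then
\[
\mathbfit{q}^T\mathbb{E}[\nabla^2_{\mathbfit{w}}\hat{\ell}_{\overset{\bullet}{\Pi}}]\mathbfit{q}=4v-4(us-1)^2 .
\]
Choosing $u$ with $|us-1|>\sqrt{v}$ makes this strictly negative. This is always possible because $s>0$; for instance $u=(1+2\sqrt v)/s$ works.

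\textbf{Main obstacle.} The delicate step is the bookkeeping that justifies factorization. Two points need care.
\begin{itemize}
\item Products such as $\prod_{i\neq j,l} a_{m,i}\cdot\prod_i a^{\mathrm{true}}_{m,i}$ pair $a_i$ with $a_i^{\mathrm{true}}$ for the same $i$. The correct per-coordinate moment is $\mathbb{E}[a_i a_i^{\mathrm{true}}]$, not $\mathbb{E}[a_i]\,\mathbb{E}[a_i^{\mathrm{true}}]$. My choice of zero weights outside $\{j,l\}$ sidesteps this, since those factors equal $1$.
\item The term involving $a_j^{\mathrm{true}}$ carries the factor $z_j$, so the relevant moment is $\mathbb{E}[z a^{\mathrm{true}}]$ for that coordinate.
\end{itemize}
A second caveat is that the construction uses a real-valued oracle weight $u$. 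If binary oracle weights are insisted upon, I would instead take $u=1$, which gives $4v-4v^2<0$ when $v>1$. For $v\le 1$ I would keep $w^{\mathrm{true}}=\mathbf{0}$, so the off-diagonal entry becomes $2c^2-2$ with $c=ws-1$. The case $v<1$ then follows from $4v-4<0$ at $w=1/s$. The boundary case $v=1$ would need a further perturbation argument, which I expect to be the fiddliest part.
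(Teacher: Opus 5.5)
Your proof is correct, but it takes a different route from the paper's.

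\textbf{The paper's route.} The paper fixes $\mathbfit{w}=\mathbf{0}$ and the binary oracle $\mathbfit{w}^{\mathrm{true}}=\mathbfit{e}^{(1)}$, and computes the full $N\times N$ expected Hessian. Its diagonal is $2(v+1)$; its off-diagonal entries are $4$, or $2(v+2)$ in the first row and column. It then uses the test vector $\mathbfit{q}=(c,1,\ldots,1)^T$ and shows that the resulting quadratic in $c$ has a positive discriminant for every $N\ge 2$.

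\textbf{Your route.} You zero every coordinate outside $\{j,l\}$, so only a $2\times 2$ block matters. You set the model weight to $1/s$ to kill the $2\,\mathbb{E}[z_ja_j]\,\mathbb{E}[z_la_l]$ term, and let a real oracle weight $u$ make the off-diagonal entry as negative as you like. Your moments and the value $4v-4(us-1)^2$ are correct.

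\textbf{What each buys.} Your argument is lighter: no structured-matrix algebra and no discriminant. It is legitimate because the statement places no restriction on $\mathbfit{w}^{\mathrm{true}}$. The paper's argument keeps the oracle binary and works uniformly for all $v>0$.

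\textbf{Your binary fallback.} It is more complicated than it needs to be, and the case $v=1$ needs no perturbation. Stay inside your own $2\times 2$ reduction and take $w_j=w_l=0$, $w_j^{\mathrm{true}}=1$, $w_l^{\mathrm{true}}=0$, which is the paper's configuration. The diagonal entries are then $2(v+1)$ and the off-diagonal entry is $2(v+2)$. The test vector $\mathbfit{q}=\mathbfit{e}^{(j)}-\mathbfit{e}^{(l)}$ gives $4(v+1)-4(v+2)=-4<0$ for every $v$.

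\textbf{A minor slip.} With $\mathbfit{w}^{\mathrm{true}}=\mathbf{0}$ the off-diagonal entry is $4c^2-2$, not $2c^2-2$. This is harmless at $c=0$.
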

    
        \begin{proof}
            letting $w_i = 0 \: \forall i \in\{1,...,N\}, w_1^\mathrm{true} = 1, w_i^\mathrm{true} = 0 \: \forall i \in \{2,...,N\}$, we can prove that the Hessian matrix is non-convex. Indeed, there exist instance of $\mathbfit{q}$ of the form $\mathbfit{q}=(c,1,...,1)^T$ such that $\mathbfit{q}^T\nabla_\mathbfit{w}^2\hat{\ell}_{\overset{\bullet}{\Pi}} (\mathbfit{w}) \mathbfit{q}<0$. Thus, $\nabla_\mathbfit{w}^2\hat{\ell}_{\overset{\bullet}{\Pi}}(\mathbfit{w})$ is not PSD for certain value of $\mathbfit{w},\mathbfit{w}^\mathrm{true}$ and the problem is not generally convex in $\mathbfit{w}$. 
            
            Under the above hypothesis, and using the facts that $a_{m,i}(w_i=0)=1$ $a_{m,i}(w_i=1)=z_{m,i}+1$, the Hessian matrix from Proposition~\ref{prop:product_with_ne_hessian} can be simplified to:

            \begin{equation}
                [\mathbfit{H}]_{j,l} =  \frac{2}{M}\sum_{m=1}^{M}
                \left\{
                \begin{aligned}
                    &z_{m,j}^2& \text{if } j = l \\
                    &\left(z_{m,j}z_{m,l}\right) \times \left(2\
                    - (z_{m,1}+1)\right)& \text{if } j \neq l
                \end{aligned}
                \right.
            \end{equation}
            
            Recall that $z_{m,i} = x_{m,i} - 1$, and let $x_{m,i}$ be i.i.d. samples of a random variable $\mathrm{x} \sim X$, with $\mathbb{E}_{\mathrm{x} \sim X}\left\{\mathrm{x}\right\} = 0$. Then the expectation of the Hessian matrix can be defined as\footnote{Since $\mathrm{z}$ is entirely determined by $\mathrm{x}$, we use the notation $\mathbb{E}_{\mathrm{x} \sim X}$ interchangeably when taking expectations over variables $\mathrm{z}$ or $\mathrm{x}$.}:
:

            \textbf{Diagonal entries} ($j = l$):
            \begin{equation}
               \begin{aligned}
                   \mathbb{E}_{\mathrm{x} \sim X}\left\{[\mathbfit{H}]_{j,l}\right\} = \mathbb{E}_{\mathrm{x} \sim X}\left\{\frac{2}{M}\sum_{m=1}^{M}\mathrm{z}_{m,j}^2\right\} = 2\mathbb{E}_{\mathrm{x} \sim X}\left\{\mathrm{z}_{j}^2\right\}= 2\mathbb{E}_{\mathrm{x} \sim X}\left\{(\mathrm{x}_{j}-1)^2\right\}= 2\left(\mathbb{E}_{\mathrm{x} \sim X}\left\{\mathrm{x}^2\right\}+1\right)
               \end{aligned}
            \end{equation}
            
            \textbf{Off-diagonal entries} ($j \neq l$):
            \begin{equation}
               \begin{aligned}
                   \mathbb{E}_{\mathrm{x} \sim X}\left\{[\mathbfit{H}]_{j,l}\right\} &= \mathbb{E}_{\mathrm{x} \sim X}\left\{\frac{2}{M}\sum_{m=1}^{M}\left(\mathrm{z}_{m,j}\mathrm{z}_{m,l}\times (1\
                    - \mathrm{z}_{m,1})\right)\right\}\\
                    &= 2\mathbb{E}_{\mathrm{x} \sim X}\left\{\mathrm{z}_{j}\mathrm{z}_{l}\times (1\
                    - \mathrm{z}_{1})\right\}\\
                    &= 2\mathbb{E}_{\mathrm{x} \sim X}\left\{\mathrm{z}_{j}\mathrm{z}_{l}\right\}  - 2\mathbb{E}_{\mathrm{x} \sim X}\left\{\mathrm{z}_{j}\mathrm{z}_{l}\mathrm{z}_{1}\right\}\\
                    &= 2\mathbb{E}_{\mathrm{x} \sim X}^2\left\{\mathrm{z}\right\}  - 2\mathbb{E}_{\mathrm{x} \sim X}\left\{\mathrm{z}_{j}\mathrm{z}_{l}\mathrm{z}_{1}\right\}\\                   
               \end{aligned}
            \end{equation}

            While the indices $j$ and $l$ are necessarily different for off-diagonal entries, one of them may equal 1, meaning that either $z_j$ or $z_l$ corresponds to $z_1$. Hence, in the expectation computation, we must consider the following two cases:

            \begin{equation}
                \mathbb{E}_{\mathrm{x} \sim X}\left\{[\mathbfit{H}]_{j,l}\right\} = \left\{
               \begin{aligned}
                    &2\mathbb{E}_{\mathrm{x} \sim X}^2\left\{\mathrm{z}\right\}  - 2\mathbb{E}_{\mathrm{x} \sim X}^3\left\{\mathrm{z}\right\}&& \text{if } j \neq l \neq 1\\ 
                    &2\mathbb{E}_{\mathrm{x} \sim X}^2\left\{\mathrm{z}\right\}  - 2\mathbb{E}_{\mathrm{x} \sim X}\left\{\mathrm{z}\right\}\times\mathbb{E}_{\mathrm{x} \sim X}\left\{\mathrm{z}^2\right\} && \text{if } j \neq l \text{ and at least one of } j,l \text{ equals } 1\\
               \end{aligned}
               \right.
            \end{equation}

            with 
            $$\mathbb{E}_{\mathrm{x} \sim X}\left\{\mathrm{z}\right\}=\mathbb{E}_{\mathrm{x} \sim X}\left\{\mathrm{x}-1\right\}=\mathbb{E}_{\mathrm{x} \sim X}\left\{\mathrm{x}\right\}-1=-1$$
            
            and
            $$\mathbb{E}_{\mathrm{x} \sim X}\left\{\mathrm{z}^2\right\}=\mathbb{E}_{\mathrm{x} \sim X}\left\{(\mathrm{x}-1)^2\right\}=\mathbb{E}_{\mathrm{x} \sim X}\left\{\mathrm{x}^2\right\}+1$$
            
            such that:
            \begin{equation}
                \mathbb{E}_{\mathrm{x} \sim X}\left\{[\mathbfit{H}]_{j,l}\right\} = 2\times\left\{
                \begin{aligned}
                    &2&& \text{if } j \neq l \neq 1\\ 
                    &2+\mathbb{E}_{\mathrm{x} \sim X}\left\{\mathrm{x}^2\right\}&& \text{if } j \neq l \text{ and at least one of } j,l \text{ equals } 1\\
               \end{aligned}
               \right.
            \end{equation}

            Combining the diagonal and off-diagonal cases, the expected Hessian matrix is fully characterized by:
            \begin{equation}
                \mathbb{E}_{\mathrm{x} \sim X}\left\{[\mathbfit{H}]_{j,l}\right\} = 2\times\left\{
                \begin{aligned}
                    &\mathbb{E}_{\mathrm{x} \sim X}\left\{\mathrm{x}^2\right\}+1&&\text{if } j = l \\ 
                    &2&& \text{if } j \neq l \neq 1\\ 
                    &\mathbb{E}_{\mathrm{x} \sim X}\left\{\mathrm{x}^2\right\}+2&& \text{if } j \neq l \text{ and at least one of } j,l \text{ equals } 1\\
               \end{aligned}
               \right.
            \end{equation}

            Which can equivalently be writen as
            \begin{equation}
               \begin{aligned}
                   \mathbfit{H} &=2\times\begin{pmatrix}
                        \mathbb{E}_{\mathrm{x} \sim X}\left\{\mathrm{x}^2\right\}+1 & \mathbb{E}_{\mathrm{x} \sim X}\left\{\mathrm{x}^2\right\}+2 & \mathbb{E}_{\mathrm{x} \sim X}\left\{\mathrm{x}^2\right\}+2 & \cdots & \mathbb{E}_{\mathrm{x} \sim X}\left\{\mathrm{x}^2\right\}+2\\
                        \mathbb{E}_{\mathrm{x} \sim X}\left\{\mathrm{x}^2\right\}+2 & \mathbb{E}_{\mathrm{x} \sim X}\left\{\mathrm{x}^2\right\}+1 & 2 & \cdots & 2\\
                        \mathbb{E}_{\mathrm{x} \sim X}\left\{\mathrm{x}^2\right\}+2 & 2 & \mathbb{E}_{\mathrm{x} \sim X}\left\{\mathrm{x}^2\right\}+1 & \cdots & 2\\
                        \vdots & \vdots & \vdots & \ddots&\vdots\\
                        \mathbb{E}_{\mathrm{x} \sim X}\left\{\mathrm{x}^2\right\}+2 & 2 & 2 & \cdots&\mathbb{E}_{\mathrm{x} \sim X}\left\{\mathrm{x}^2\right\}+1\\
                    \end{pmatrix}\\
                    &= 2\left(2 \cdot \mathbf{1} \cdot \mathbf{1}^T + (\mathbb{E}_{\mathrm{x} \sim X}\left\{\mathrm{x}^2\right\} - 1)\mathbfit{I} + \mathbb{E}_{\mathrm{x} \sim X}\left\{\mathrm{x}^2\right\}\left[\mathbfit{e}^{(1)}\cdot\mathbf{1}^T + \mathbf {1}\cdot(\mathbfit{e}^{(1)})^T - 2\mathbfit{e}^{(1)}(\mathbfit{e}^{(1)})^T\right]\right)\\
                    &= 2\left(2 \cdot \mathbf{1} \cdot \mathbf{1}^T + (\mathbb{E}_{\mathrm{x} \sim X}\left\{\mathrm{x}^2\right\} - 1)\mathbfit{I} + \mathbb{E}_{\mathrm{x} \sim X}\left\{\mathrm{x}^2\right\}\times\mathbfit{E}\right)
               \end{aligned}
            \end{equation}

            with 
            $$\mathbfit{E} = \begin{bmatrix}
            0 & 1 & 1 & \cdots & 1 \\
            1 & 0 & 0 & \cdots & 0 \\
            1 & 0 & 0 & \cdots & 0 \\
            \vdots & \vdots & \vdots & \ddots & \vdots \\
            1 & 0 & 0 & \cdots & 0
            \end{bmatrix}
            $$
            
            Multiplying by the test vector $\mathbfit{q}$, we have the following inequalities:
            \begin{itemize}
                \item $\mathbfit{q}^T \mathbf{1}.\mathbf{1}^T \mathbfit{q} = (\mathbf{1}^T \mathbfit{q})^2=\left(\sum_{i=1}^N q_i\right)^2$
                \item $\mathbfit{q}^T \mathbfit{I} \mathbfit{q} = \mathbfit{q}^T \mathbfit{q}= \|\mathbfit{q}\|_2^2 = \sum_{i=1}^N q_i^2$
                \item $\mathbfit{q}^T \mathbfit{E} \mathbfit{q} =  2q_1(\mathbf{1}^T\mathbfit{q} - q_1)=2q_1\left(\sum_{i=2}^N q_i\right)$
            \end{itemize}

            Hence,
            \begin{equation}
               \begin{aligned}
                   \mathbfit{q}^T\mathbb{E}_{\mathrm{x} \sim X}\left\{\nabla_\mathbfit{w}^2\hat{\ell}_{\overset{\bullet}{\Pi}} (\mathbfit{w})\right\}\mathbfit{q} =& 4(\mathbf{1}^T\mathbfit{q})^2 + 2(\mathbb{E}_{\mathrm{x} \sim X}\left\{\mathrm{x}^2\right\} - 1)||\mathbfit{q}||_2^2 + 2\mathbb{E}_{\mathrm{x} \sim X}\left\{\mathrm{x}^2\right\}\left[2q_1(\mathbf{1}^T\mathbfit{q} - q_1)\right]\\
               \end{aligned}
            \end{equation}

            Let $q_1 = c, q_i = 1 \: \forall i \in\{2,...,N\}$. We thus have the following updated inequalities:
            \begin{itemize}
                \item $(\mathbf{1}^T \mathbfit{q})^2=\left(\sum_{i=1}^N q_i\right)^2=(c+N-1)^2$
                \item $\|\mathbfit{q}\|_2^2=\sum_{i=1}^N q_i^2 = c^2+N-1$
                \item $2q_1(\mathbf{1}^T\mathbfit{q} - q_1)=2q_1\left(\sum_{i=2}^N q_i\right)=2c(N-1)$
            \end{itemize}

            Which leads to:
            \begin{equation}
               \begin{aligned}
                   \mathbfit{q}^T\mathbb{E}_{\mathrm{x} \sim X}\left\{\nabla_\mathbfit{w}^2\hat{\ell}_{\overset{\bullet}{\Pi}} (\mathbfit{w})\right\}\mathbfit{q} =& 4(c+N-1)^2 + 2(\mathbb{E}_{\mathrm{x} \sim X}\left\{\mathrm{x}^2\right\} - 1)\times(c^2+N-1) + 2\mathbb{E}_{\mathrm{x} \sim X}\left\{\mathrm{x}^2\right\}\left[2c(N-1)\right]\\
                   =&4\left[c^2+2c(N-1)+(N-1)^2\right] + 2(\mathbb{E}_{\mathrm{x} \sim X}\left\{\mathrm{x}^2\right\} - 1)\times(c^2+N-1)\\
                   &+ 2\mathbb{E}_{\mathrm{x} \sim X}\left\{\mathrm{x}^2\right\}\left[2c(N-1)\right]\\
                   =&2\left(\mathbb{E}_{\mathrm{x} \sim X}\left\{\mathrm{x}^2\right\} +1\right)c^2+4(N-1)\times\left(2+\mathbb{E}_{\mathrm{x} \sim X}\left\{\mathrm{x}^2\right\}\right)c\\
                   &+ 4(N-1)^2+2(N-1)\times(\mathbb{E}_{\mathrm{x} \sim X}\left\{\mathrm{x}^2\right\}-1)\\
                   =&2\left(\mathbb{E}_{\mathrm{x} \sim X}\left\{\mathrm{x}^2\right\} +1\right)c^2+4(N-1)\times\left(\mathbb{E}_{\mathrm{x} \sim X}\left\{\mathrm{x}^2\right\}+2\right)c\\
                   &+ 2(N-1)\times(\mathbb{E}_{\mathrm{x} \sim X}\left\{\mathrm{x}^2\right\}+2N-3)\\
               \end{aligned}
            \end{equation}

            If the second order polynomial in $c$ admit 2 real roots, then there exists value of $c$ such that $\mathbfit{q}^T\mathbb{E}_{\mathrm{x} \sim X}\left\{\nabla_\mathbfit{w}^2\hat{\ell}_{\overset{\bullet}{\Pi}} (\mathbfit{w})\right\}\mathbfit{q}<0$. To this end we compute the determinant of the polynomial:
            
            \begin{equation}
               \begin{aligned}
                   \Delta &= 16(N-1)^2\times(\mathbb{E}_{\mathrm{x} \sim X}\left\{\mathrm{x}^2\right\}+2)^2 - 16(N-1)\times(\mathbb{E}_{\mathrm{x} \sim X}\left\{\mathrm{x}^2\right\}+1)\times\left(\mathbb{E}_{\mathrm{x} \sim X}\left\{\mathrm{x}^2\right\}+2N-3\right)\\
                   &=16(N-1)\times\left[(N-1)\times(\mathbb{E}_{\mathrm{x} \sim X}\left\{\mathrm{x}^2\right\}+2)^2 - (\mathbb{E}_{\mathrm{x} \sim X}\left\{\mathrm{x}^2\right\}+1)\times\left(\mathbb{E}_{\mathrm{x} \sim X}\left\{\mathrm{x}^2\right\}+2N-3\right)\right]\\
                   &= 16(N-1)\times\left[(N-1)\times(\mathbb{E}_{\mathrm{x} \sim X}^2\left\{\mathrm{x}^2\right\} + 4\mathbb{E}_{\mathrm{x} \sim X}\left\{\mathrm{x}^2\right\}+4) - \mathbb{E}_{\mathrm{x} \sim X}^2\left\{\mathrm{x}^2\right\} - \mathbb{E}_{\mathrm{x} \sim X}\left\{\mathrm{x}^2\right\}\times(2N-3)\right.\\
                   &\left.- \mathbb{E}_{\mathrm{x} \sim X}\left\{\mathrm{x}^2\right\} - (2N - 3)\right]\\
                   &= 16(N-1)\times\left[\mathbb{E}_{\mathrm{x} \sim X}^2\left\{\mathrm{x}^2\right\}\times(N-2) + 2\mathbb{E}_{\mathrm{x} \sim X}\left\{\mathrm{x}^2\right\}\times(N-1) +2N -1 \right]
               \end{aligned}
            \end{equation}

            As $\mathbb{E}_{\mathrm{x} \sim X}\left\{\mathrm{x}^2\right\} > 0$, $\Delta > 0 \forall N \geq 2$ and the polynomial admit two roots and is thus negative on a certain range of its definition domain. 
        \end{proof}

        \subsection{Convexity Under Sparse Input Constraints}

            While the neutral element modification improves numerical stability by preventing vanishing products, the fundamental optimization challenges persist. The expected Hessian exhibits negative eigenvalues for $N \geq 2$, parameter updates remain non-separable across dimensions, and non-convexity emerges systematically under standard distributional assumptions. These findings confirm that the multiplicative structure inherently creates a complex optimization landscape that cannot be resolved through simple architectural modifications alone.
            
            Having established the general non-convex nature of the product node with neutral element, we now investigate whether specific structural constraints on the input data can restore convexity. This analysis is motivated by the observation that sparsity patterns can simplify multiplicative interactions by eliminating the cross-parameter dependencies that create optimization pathologies. We consider a special case where the input data exhibits a specific sparsity structure that eliminates the multiplicative complexity responsible for non-convexity.
            
            \begin{proposition}[Convexity Under Sparse Input Structure]\label{prop:sparse_convexity}
                Consider input data with the following sparse structure: each sample $\mathbfit{z}_m = \mathbfit{x}_m - \mathbf{1} \in \mathbb{R}^N$ has the form $\mathbfit{z}_m = v_m \mathbfit{e}^{(k_m)}$, where $\mathrm{v}_m \sim V \in \mathbb{R}$ i.i.d. with $\mathbb{P}(\mathrm{v}_m = 0) < 1$ (non-degenerate), $\mathbfit{e}^{(k_m)} \in \{0,1\}^N$ is the $k$-th standard basis vector and $\mathrm{k}_m \sim U_N\{1,\ldots,N\}$ is uniformly distributed. Under this sparse input structure, the Hessian $\nabla_\mathbfit{w}^2\hat{\ell}_{\overset{\bullet}{\Pi}} (\mathbfit{w})$ is positive definite.
            \end{proposition}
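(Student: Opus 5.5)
Under the sparse structure $\mathbfit{z}_m = v_m\mathbfit{e}^{(k_m)}$, I would first simplify the Hessian of Proposition~\ref{prop:product_with_ne_hessian} term by term. Two facts do the work. For every $i\neq k_m$ we have $z_{m,i}=0$, hence $a_{m,i}=1$ and $a_{m,i}^{\mathrm{true}}=1$. For $i=k_m$ we get $a_{m,k_m}=1+w_{k_m}v_m$.

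Off-diagonal entries $(j\neq l)$ carry the factor $z_{m,j}z_{m,l}$. At most one coordinate of $\mathbfit{z}_m$ is nonzero, so this factor vanishes for every sample, and the empirical Hessian is diagonal. On the diagonal, $z_{m,j}^2\prod_{i\neq j}a_{m,i}^2$ is nonzero only when $k_m=j$. In that case the product over $i\neq j$ equals $1$, so the term is $v_m^2$. This gives
\begin{equation}
[\mathbfit{H}]_{j,j}=\frac{2}{M}\sum_{m=1}^M v_m^2\,\mathds{1}\{k_m=j\}.
\end{equation}
The same simplification applied to Proposition~\ref{prop:product_with_ne_gradient} gives $\partial_{w_j}\hat\ell=\frac{2}{M}\sum_{m:k_m=j}v_m^2(w_j-w_j^{\mathrm{true}})$. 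This confirms separability and the proportionality to $(\mathbfit{w}-\mathbfit{w}^{\mathrm{true}})$ claimed in Section~\ref{subsection:deterministic_sparsity}, and I would record it as a remark.

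It remains to show each diagonal entry is strictly positive. Every diagonal entry is a nonnegative sum, so the matrix is PSD immediately. Strict positivity needs a sample with $k_m=j$ and $v_m\neq 0$. This is the only real obstacle, since for a finite dataset it can fail. I would handle it the same way the paper handles Proposition~\ref{prop:product_node_with_ne_non_convexity}, by passing to the expected Hessian. Since $k_m$ is independent of $v_m$ and uniform on $\{1,\ldots,N\}$,
\begin{equation}
\mathbb{E}\{[\mathbfit{H}]_{j,j}\}=\frac{2}{N}\mathbb{E}\{\mathrm{v}^2\}.
\end{equation}
This is strictly positive because $\mathbb{P}(\mathrm{v}=0)<1$ forces $\mathbb{E}\{\mathrm{v}^2\}>0$. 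Hence $\mathbb{E}\{\nabla^2_{\mathbfit{w}}\hat\ell\}=\frac{2}{N}\mathbb{E}\{\mathrm{v}^2\}\mathbfit{I}_N\succ 0$ for every $\mathbfit{w}$.

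For the empirical version, positive definiteness holds exactly when every index $j$ is hit by some sample with $v_m\neq0$. The probability that this fails is at most $N(1-q/N)^M$, where $q=\mathbb{P}(\mathrm{v}\neq0)$, so the event holds with high probability once $M=\Omega(N\log N)$. This matches the $\mathcal{O}(N)$-type sample requirement mentioned in Section~\ref{subsection:deterministic_sparsity}.

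Finally, since the Hessian does not depend on $\mathbfit{w}$, the loss is a strictly convex quadratic, namely a weighted sum $\sum_j c_j(w_j-w_j^{\mathrm{true}})^2$. I would note that its unique minimizer is $\mathbfit{w}^{\mathrm{true}}$.
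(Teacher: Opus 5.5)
Your proposal is correct and follows essentially the paper's argument. Under the one-hot structure the Hessian collapses to $\frac{2}{M}\sum_m v_m^2\mathbfit{e}^{(k_m)}(\mathbfit{e}^{(k_m)})^T$, its expectation is $\frac{2}{N}\mathbb{E}\{\mathrm{v}^2\}\mathbfit{I}\succ 0$, and the empirical Hessian is positive definite exactly under full coverage; that you specialize the general Hessian formula rather than the loss first is immaterial. Your coverage condition (each index hit by a sample with $v_m\neq 0$) is actually sharper than the paper's, which only requires each index to appear, though the resulting coupon-collector requirement $M=\Omega(N\log N)$ is not literally $\mathcal{O}(N)$ as your aside suggests.
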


            \begin{proof}
                With the considered dataset, only a single term per dataset entry remains in the product reduction and the  empirical risk from \eqref{eq:empirical_risk_product_with_ne} can be simplified to:
                \begin{equation}
                    \begin{aligned} 
                         \hat{\ell}_{\overset{\bullet}{\Pi}} (\mathbfit{w}) &= \frac{1}{M}\sum_{m=1}^{M}\left[\prod_{i=1}^{N} a_{m,i} - \prod_{i=1}^{N} a_{m,i}^\mathrm{true}\right]^2\\
                         &= \frac{1}{M} \sum_{m=1}^M \left(a_{m,k_m} - a_{m,k_m}^\mathrm{true}\right)^2\\
                         &= \frac{1}{M} \sum_{m=1}^M \left(w_{k_m}v_{m} + 1 - w_{k_m}^\mathrm{true}v_{m} - 1\right)^2\\
                         &= \frac{1}{M} \sum_{m=1}^M v_{m}^2\left(w_{k_m} - w_{k_m}^\mathrm{true}\right)^2 \\
                    \end{aligned}
                \end{equation}

                Which has an expectation of:
                \begin{equation}
                    \begin{aligned}
                        \mathbb{E}_{\mathrm{v}_m \sim V, \mathrm{k}_m \sim U_N}\left\{\hat{\ell}_{\overset{\bullet}{\Pi}}(\mathbfit{w})\right\} &= \mathbb{E}_{\mathrm{v}_m \sim V, \mathrm{k}_m \sim U_N}\left\{\frac{1}{M} \sum_{m=1}^M \mathrm{v}_{m}^2\left(w_{\mathrm{k}_m} - w_{\mathrm{k}_m}^\mathrm{true}\right)^2 \right\}\\
                        &= \mathbb{E}_{\mathrm{v}_m \sim V} \left\{\mathrm{v}_m^2\right\}\mathbb{E}_{\mathrm{k}_m \sim U_N}\left\{\left(w_{\mathrm{k}_m} - w_{\mathrm{k}_m}^\mathrm{true}\right)^2 \right\}\\
                        &= \frac{1}{N}\mathbb{E}_{\mathrm{v}_m \sim V} \left\{\mathrm{v}_m^2\right\}\sum_{i=1}^{N}\left(w_{i} - w_{i}^\mathrm{true}\right)^2 \quad \text{(by uniform sampling)}\\
                    \end{aligned}
                \end{equation}
                
                The gradient of the loss can be expressed as:
                \begin{equation}
                    \begin{aligned}
                        \nabla_\mathbfit{w}\hat{\ell}_{\overset{\bullet}{\Pi}} (\mathbfit{w}) &= \frac{\partial}{\partial\mathbfit{w}}\left(\frac{1}{M} \sum_{m=1}^M v_{m}^2\left(w_{k_m} - w_{k_m}^\mathrm{true}\right)^2\right)\\
                        &=\frac{1}{M} \sum_{m=1}^M \frac{\partial}{\partial\mathbfit{w}}\left(v_{m}^2\left(w_{k_m} - w_{k_m}^\mathrm{true}\right)^2\right)\\
                        &=\frac{2}{M} \sum_{m=1}^M v_{m}^2\left(w_{k_m} - w_{k_m}^\mathrm{true}\right)\mathbfit{e}^{(k_m)}\quad \text{(by uniform sampling)}\\
                    \end{aligned}
                \end{equation}
                
                The expected gradient is proportional to the difference $(\mathbfit{w} - \mathbfit{w}^\mathrm{true})$, similar to the well conditioned case of sum node as described in Appendix~\ref{app:sum_node_analysis}, Eq.~\eqref{eq:sum_node_gradient}:
                \begin{equation}
                    \begin{aligned}
                        \mathbb{E}_{\mathrm{v}_m \sim V, \mathrm{k}_m \sim U_N}\left\{\nabla_\mathbfit{w}\hat{\ell}_{\overset{\bullet}{\Pi}}(\mathbfit{w})\right\} &= \mathbb{E}_{\mathrm{v}_m \sim V, \mathrm{k}_m \sim U_N}\left\{\frac{2}{M} \sum_{m=1}^M \mathrm{v}_m^2\left(w_{\mathrm{k}_m} - w_{\mathrm{k}_m}^\mathrm{true}\right)\mathbfit{e}^{(\mathrm{k}_m)}\right\}\\
                        &= 2\mathbb{E}_{\mathrm{v}_m \sim V}\left\{ \mathrm{v}_m^2\right\}\mathbb{E}_{\mathrm{k}_m \sim U_N}\left\{\left(w_{\mathrm{k}_m} - w_{\mathrm{k}_m}^\mathrm{true}\right)\mathbfit{e}^{(\mathrm{k}_m)}\right\}\\
                        &=\frac{2}{N}\mathbb{E}_{\mathrm{v}_m \sim V}\left\{ \mathrm{v}_m^2\right\}(\mathbfit{w} - \mathbfit{w}^\mathrm{true})\propto\left(\mathbfit{w} - \mathbfit{w}^{\mathrm{true}}\right)
                    \end{aligned}
                \end{equation}

                Furthermore, the Hessian can be defined:
                \begin{equation}
                    \begin{aligned}
                        \nabla_\mathbfit{w}^2\hat{\ell}_{\overset{\bullet}{\Pi}} (\mathbfit{w}) &=\frac{\partial}{\partial\mathbfit{w}}\left(\frac{2}{M} \sum_{m=1}^M v_{m}^2\left(w_{k_m} - w_{k_m}^\mathrm{true}\right)\mathbfit{e}^{(k_m)}\right)\\
                        &=\frac{2}{M} \sum_{m=1}^M v_{m}^2\frac{\partial}{\partial\mathbfit{w}}\left(\left(w_{k_m} - w_{k_m}^\mathrm{true}\right)\mathbfit{e}^{(k_m)}\right)\\
                        &= \frac{2}{M} \sum_{m=1}^M v_{m}^2\mathbfit{e}^{(k_m)}\left({\mathbfit{e}^{(k_m)}}\right)^\mathrm{T}
                    \end{aligned}
                \end{equation}
                
                Which has an expectation of:
                \begin{equation}
                    \begin{aligned}
                        \mathbb{E}_{\mathrm{v}_m \sim V, \mathrm{k}_m \sim U_N}\left\{\nabla_\mathbfit{w}^2\hat{\ell}_{\overset{\bullet}{\Pi}} (\mathbfit{w})\right\} &= \mathbb{E}_{\mathrm{v}_m \sim V, \mathrm{k}_m \sim U_N}\left\{\frac{2}{M} \sum_{m=1}^M \mathrm{v}_m^2\mathbfit{e}^{(\mathrm{k}_m)}{\mathbfit{e}^{(\mathrm{k}_m)}}^\mathrm{T}\right\}\\
                        &= 2\mathbb{E}_{\mathrm{v}_m \sim V}\left\{\mathrm{v}_m^2\right\} \mathbb{E}_{\mathrm{k}_m \sim U_N}\left\{\mathbfit{e}^{(\mathrm{k}_m)}{\mathbfit{e}^{(\mathrm{k}_m)}}^\mathrm{T}\right\}\\
                        &= \frac{2}{N} \mathbb{E}_{\mathrm{v}_m \sim V}\left\{\mathrm{v}_m^2\right\}\mathbfit{I} \succ \mathbf{0}\\
                    \end{aligned}
                \end{equation}
                Under this sparse input structure and the non-degeneracy condition $\mathbb{P}(\mathrm{v}_m = 0) < 1$ the expected Hessian is positive definite.

                Note that for finite datasets, the empirical Hessian $\nabla_\mathbfit{w}^2\hat{\ell}_{\overset{\bullet}{\Pi}} (\mathbfit{w}) = \frac{2}{M} \sum_{m=1}^M v_{m}^2\mathbfit{e}^{(k_m)}{\mathbfit{e}^{(k_m)}}^\mathrm{T}$ is positive definite if and only if the dataset has "full coverage," meaning every weight index $i \in \{1, 2, \ldots, N\}$ appears at least once in $\{k_1, k_2, \ldots, k_M\}$. This parallels the full column rank condition required for sum nodes. Without full coverage, some weights remain undetermined, leading to a rank-deficient Hessian.    
            \end{proof}

            \subsection{Intuition and Practical Significance} This result reveals a fundamental insight about product-based computations: sparsity eliminates the multiplicative complexity that makes optimization challenging. By constraining input vectors $\mathbfit{x}_m$ to contain at most one element different from one (equivalently, $\mathbfit{z}_m = \mathbfit{x}_m - \mathbf{1}$ with at most one non-zero element), we artificially transform the problem into a linearly separable one where the product $\prod_{i=1}^{N} (w_i z_{m,i} + 1)$ collapses to a simple linear function $(w_{k_m} v_m + 1)$. This eliminates all cross-parameter interactions that create the non-convex landscape, transforming the optimization from a complex multiplicative system into a weighted least squares formulation, inherently convex and separable across dimensions. Notably, the gradient expression becomes almost identical to that of sum nodes (with a factor $1/N$ - See Appendix~\ref{app:sum_node_analysis}), and the expected Hessian reduces to a positive definite diagonal matrix, making the optimization problem strictly convex. Crucially, this sparsity constraint is only necessary during training to enable proper parameter identification and avoid optimization pathologies. Once weights are learned under sparse conditions, the trained product node can generalize to dense, non-sparse inputs while maintaining its learned multiplicative structure.

    \section{Extension to XOR Node}\label{app:xor_node_analysis}
    
        \begin{figure}[htbp]
    \centerline{
        \begin{tikzpicture}
            \tikzmath{
                function PlotArrow (\ArrowStartX, \ArrowStartY, \CircleCenterX, \CircleCenterY, \CircleRadius) {
                    \ArrowEndX = \CircleCenterX - \CircleRadius*cos(atan((\ArrowStartY -  \CircleCenterY)/(\ArrowStartX - \CircleCenterX)));
                    \ArrowEndY = \CircleCenterY - \CircleRadius*sin(atan((\ArrowStartY - \CircleCenterY)/(\ArrowStartX - \CircleCenterX)));
                    {\draw[thick, -latex] (\ArrowStartX,\ArrowStartY) -- (\ArrowEndX,\ArrowEndY);
                    };
                };
                \CircleRadius1 = 0.5;
                \CircleCenterX1 = 0;
                \CircleCenterY1 = 0;
                \ArrowStartX1 = -2;
                \ArrowStartY1 = +2;
                \WeightPositionX1 = (\ArrowStartX1+\CircleCenterX1)/2;
                \WeightPositionY1 = (\ArrowStartY1+\CircleCenterY1)/2;
                PlotArrow(\ArrowStartX1, \ArrowStartY1, \CircleCenterX1, \CircleCenterY1, \CircleRadius1);
                \ArrowStartX2 = -2;
                \ArrowStartY2 = +1;
                \WeightPositionX2 = (\ArrowStartX2+\CircleCenterX1)/2;
                \WeightPositionY2 = (\ArrowStartY2+\CircleCenterY1)/2;
                PlotArrow(\ArrowStartX2, \ArrowStartY2, \CircleCenterX1, \CircleCenterY1, \CircleRadius1);
                \ArrowStartX3 = -2;
                \ArrowStartY3 = -2;
                \WeightPositionX3 = (\ArrowStartX3+\CircleCenterX1)/2;
                \WeightPositionY3 = (\ArrowStartY3+\CircleCenterY1)/2;
                PlotArrow(\ArrowStartX3, \ArrowStartY3, \CircleCenterX1, \CircleCenterY1, \CircleRadius1);
            };
            \draw[thick, fill=blue!5] (\CircleCenterX1,\CircleCenterY1) circle (\CircleRadius1);
            
            \draw[thick] (\CircleCenterX1 - \CircleRadius1,\CircleCenterY1) -- (\CircleCenterX1 + \CircleRadius1,\CircleCenterY1);
            
            \draw[thick] (\CircleCenterX1,\CircleCenterY1-\CircleRadius1) -- (\CircleCenterX1,\CircleCenterY1+\CircleRadius1);
            
            \draw[thick,-latex] (\CircleCenterX1 + \CircleRadius1,\CircleCenterY1) -- (\CircleCenterX1 + \CircleRadius1 + 1,\CircleCenterY1);
            
            \node at (\CircleCenterX1,\CircleCenterY1 + 2*\CircleRadius1) {XOR};
            
            \node at (\CircleCenterX1 + \CircleRadius1 + 3,\CircleCenterY1) {$y_m = f(\mathbfit{w};\mathbfit{x}_m)$};
            
            \node at (\CircleCenterX1 + \CircleRadius1 + 3,\ArrowStartY3-1) {$y_m \in \mathbb{F}_2$};
            
            \node at (\CircleCenterX1 + \CircleRadius1 + 3,\ArrowStartY3) {$y_m = \bigoplus_{i=1}^{N}{w_ix_{m,i}}$};
            
            \node [thick] at (\ArrowStartX1-0.5,\ArrowStartY1+0.2) {$x_{m,1}$};
            \node [thick] at (\ArrowStartX2-0.5,\ArrowStartY2+0.2) {$x_{m,2}$};
            \node [thick] at (\ArrowStartX3-0.5,\ArrowStartY3-0.2) {$x_{m,N}$};
            \node [thick] at (\ArrowStartX3-0.5,\CircleCenterY1) {$\vdots$};
            \node [thick] at (\ArrowStartX3-0.5,\ArrowStartY3-1) {$\mathbfit{x}_m \in \mathbb{F}_2^N$};
            
            \node [rotate=-30] at (\WeightPositionX1,\WeightPositionY1+0.3) {$w_{1}$};
            \node [rotate=-12] at (\WeightPositionX2,\WeightPositionY2+0.2) {$w_{2}$};
            \node [rotate=36] at (\WeightPositionX3,\WeightPositionY3-0.3) {$w_{N}$};
            \node [rotate=0] at (\WeightPositionX3,\ArrowStartY3-1) {$\mathbfit{w} \in \mathbb{F}_2^N$};
            
        \end{tikzpicture}
    }
    \caption{The XOR unit}
    \label{fig:XOR_unit_appendix}
\end{figure}
        
        The binary XOR operator is equivalent to a product of bipolar inputs as shown by Eq.~(\ref{eq:xor_product}). Let $\mathbf{b}_m = \{b_{m,i}\}_{i=1}^N \in \mathbb{F}_2^N \: \forall m \in \{1,...,M\}$ be the m-th vector of binary input of the dataset of size $M$ and $\mathbfit{w} = \{w_{i}\}_{i=1}^N \in \mathbb{F}_2^N$ be a set of parameters:
        
        \begin{equation}
        \label{eq:xor_product}
            \begin{aligned}
                y_m &= f(\mathbfit{w};\mathbf{b}_m)\\
                &= \bigoplus_{i=1}^{N}{w_ib_{m,i}} \\
                &= \frac{1}{2}\left(1 - \prod_{i=1}^N w_i(1-2b_{m,i}) + (1-w_i)\right) \\
                &= \frac{1}{2}\left(1 - \prod_{i=1}^N w_ix_{m,i} + (1-w_i)\right) \\
                &= \frac{1}{2}\left(1 - \prod_{i=1}^N\left( w_iz_{m,i} + 1 \right)\right)\\
                &= \frac{1}{2}\left(1 - \prod_{i=1}^N a_{m,i}\right) \quad \forall m \in\{1,\hdots,M\}
            \end{aligned}
        \end{equation}

        where $x_{m,i} := 1 - 2b_{m,i}$ is the bipolar form of binary data ($\{0,1\}$ mapped to $\{+1,-1\}$) and $z_{m,i} = x_{m,i} - 1 = -2b_{m,i}$ and $a_{m,i}=w_iz_{m,i} + 1$. 

        Let $\mathbfit{w} \in \mathbb{R}^N$ be the set of trainable weights of the model to be optimized and $\mathbfit{w}^\mathrm{true} \in \mathbb{F}_2^N$ the set of binary oracle weights. One can defined the unregularized empirical risk with this new node as:
         
        \begin{equation}\label{eq:xor_node_empical_risk}
            \begin{aligned}
                \hat{\ell}_\oplus(\mathbfit{w}) &= \frac{1}{M}\sum_{m=1}^{M}\left(\frac{1}{2}\left(1 - \prod_{i=1}^N a_{m,i} - 1 + \prod_{i=1}^N a_{m,i}^\mathrm{true}\right)\right)^2 \\
                &= \frac{1}{4M}\sum_{m=1}^{M}\left(\prod_{i=1}^N a_{m,i}^\mathrm{true} - \prod_{i=1}^N a_{m,i}\right)^2 \\
                &= \frac{1}{4M}\sum_{m=1}^{M}\left(\prod_{i=1}^N a_{m,i} - \prod_{i=1}^N a_{m,i}^\mathrm{true}\right)^2 = \frac{\hat{\ell}_{\overset{\bullet}{\Pi}}(\mathbfit{w})}{4}\\
            \end{aligned}
        \end{equation}

        The equation of the loss function is proportional by a factor $1/4$ to the equation of the loss in the case of the product node considered in Appendix~\ref{app:product_with_neutral_element_node_analysis}. We provide the gradient and hessian expressions for this node in Proposition~\ref{prop:xor_node_gradient} and \ref{prop:xor_node_hessian} respectively. 
        
        As shown in Appendix~\ref{app:product_with_neutral_element_node_analysis}, Proposition~\ref{prop:sparse_convexity}, if the inputs to the product units contains (during training) at most one element different from one, then the empirical risk is always strictly convex in $\mathbfit{w}$. This can be extended to the XOR node. When the binary inputs vector $\mathbfit{b}_m$ contains at most one '1' (0 otherwise) the inputs $z_{m,i}$ to the product node are all equal to 0 (as $z_{m,i} = x_{m,i} - 1 = - 2b_{m,i}$) except one that is equal to -2; This is a particular case of the more general dataset construction proposed in Appendix~\ref{app:product_with_neutral_element_node_analysis} (see Proposition~\ref{prop:xor_sparse_data}). 

        Note that the weights $w_i$ are defined as continuous parameters in $\mathbb{R}$ to enable gradient-based training, while recovering exact XOR logic when $w_i \in \{0,1\}$. This follows the continuous relaxation approach from \cite{NeuralBeliefPropagationAutoEncoderforLinearBlockCodeDesign,BlindNeuralBeliefPropagation} (see Remark~\ref{remark:continuous_relaxation}). Interestingly, when parameters are restricted to binary values, the expected loss becomes proportional to the Hamming distance between learned and true parameters (see Remark~\ref{remark:hamming_distance}).

        \begin{proposition}[Gradient of XOR Node]\label{prop:xor_node_gradient}
        For the XOR node with empirical risk $\hat{\ell}_\oplus(\mathbfit{w})$, the j-th component of the gradient $\nabla_{\mathbf{w}} \hat{\ell}_\oplus(\mathbf{w})$ is:
        \begin{equation}
            \frac{\partial \hat{\ell}_\oplus(\mathbfit{w}) }{\partial w_j} = \frac{1}{2M}\sum_{m=1}^{M}\left[z_{m,j}\times\left(\prod_{\substack{i=1 \\ i \neq j}}^{N} a_{m,i}\right)\times \left(\prod_{i=1}^{N} a_{m,i} - \prod_{i=1}^{N} a_{m,i}^\mathrm{true}\right)\right]
        \end{equation} 
        where $x_{m,i} := 1 - 2b_{m,i}$ is the bipolar form of binary inputs and $z_{m,i} = x_{m,i} - 1 = -2b_{m,i}$ and $a_{m,i}=w_iz_{m,i} + 1$. 

        \end{proposition}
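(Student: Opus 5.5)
The plan is to reduce the statement to Proposition~\ref{prop:product_with_ne_gradient} through the identity \eqref{eq:xor_node_empical_risk}, which gives $\hat{\ell}_\oplus(\mathbfit{w}) = \hat{\ell}_{\overset{\bullet}{\Pi}}(\mathbfit{w})/4$. The reparametrisation $x_{m,i} = 1-2b_{m,i}$, $z_{m,i} = x_{m,i}-1 = -2b_{m,i}$, $a_{m,i} = w_i z_{m,i}+1$ turns the XOR node's loss into exactly the product-with-neutral-element loss on the transformed inputs, with the same oracle structure $a^{\mathrm{true}}_{m,i} = w_i^{\mathrm{true}} z_{m,i}+1$. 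Differentiation is linear, so the $j$-th gradient component is one quarter of the one given in Proposition~\ref{prop:product_with_ne_gradient}. The prefactor $\frac{2}{M}$ becomes $\frac{2}{4M} = \frac{1}{2M}$, which is the claimed expression.

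For a self-contained check, I would also differentiate directly. The first step is to note that the oracle term $\prod_i a^{\mathrm{true}}_{m,i}$ does not depend on $\mathbfit{w}$. The second step is to compute $\partial_{w_j}\prod_i a_{m,i} = z_{m,j}\prod_{i\neq j} a_{m,i}$ by the product rule, since only the factor $a_{m,j}$ depends on $w_j$ and $\partial_{w_j} a_{m,j} = z_{m,j}$. The third step is to apply the chain rule to $\frac{1}{4M}\sum_m(\cdot)^2$, which yields $\frac{2}{4M}\sum_m z_{m,j}\bigl(\prod_{i\neq j} a_{m,i}\bigr)\bigl(\prod_i a_{m,i}-\prod_i a^{\mathrm{true}}_{m,i}\bigr)$.

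There is no real obstacle here. The only point needing care is the sign and ordering of the difference inside the square. Equation~\eqref{eq:xor_node_empical_risk} starts from $\prod a^{\mathrm{true}} - \prod a$ and flips it, which is harmless under squaring. The chain rule must then be applied to the form $\prod a - \prod a^{\mathrm{true}}$ so that the residual factor has the stated sign.
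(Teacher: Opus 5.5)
Your proposal is correct and follows the paper's proof: the paper also uses $\hat{\ell}_\oplus(\mathbfit{w}) = \frac{1}{4}\hat{\ell}_{\overset{\bullet}{\Pi}}(\mathbfit{w})$ from \eqref{eq:xor_node_empical_risk} and scales the gradient of Proposition~\ref{prop:product_with_ne_gradient} by $\frac{1}{4}$, which turns $\frac{2}{M}$ into $\frac{1}{2M}$. Your direct product-rule and chain-rule computation is simply an independent check of the same result.
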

        
        \begin{proof}
            Since $\hat{\ell}_\oplus(\mathbfit{w}) = \frac{1}{4}\hat{\ell}_{\overset{\bullet}{\Pi}}(\mathbfit{w})$, the gradient $\nabla_{\mathbf{w}}\hat{\ell}_{\overset{\bullet}{\Pi}}(\mathbfit{w})$ (see Proposition~\ref{prop:product_with_ne_gradient}) is scaled by the same factor:
            \begin{equation}
                \frac{\partial \hat{\ell}_\oplus(\mathbfit{w}) }{\partial w_j} = \frac{1}{4} \times \frac{\partial \hat{\ell}_{\overset{\bullet}{\Pi}}(\mathbfit{w}) }{\partial w_j}= \frac{2}{4M}\times\sum_{m=1}^{M}\left[z_{m,j}\times\left(\prod_{\substack{i=1 \\ i \neq j}}^{N} a_{m,i}\right)\times \left(\prod_{i=1}^{N} a_{m,i} - \prod_{i=1}^{N} a_{m,i}^\mathrm{true}\right)\right]\\
            \end{equation}
        \end{proof}

        \begin{proposition}[Hessian of XOR Node]\label{prop:xor_node_hessian}
            The Hessian matrix of the empirical risk $\hat{\ell}_\oplus(\mathbfit{w})$ has entries:
            \begin{equation}
                [\mathbfit{H}_\oplus]_{j,l} = \frac{\partial^2 \hat{\ell}_\oplus(\mathbfit{w})}{\partial w_j \partial w_l} = \frac{1}{2M}\sum_{m=1}^{M}
                \left\{
                \begin{aligned}
                    &z_{m,j}^2\times\prod_{\substack{i=1 \\ i \neq j}}^{N} a_{m,i}^2& \text{if } j = l \\
                    &\left(z_{m,j}z_{m,l}\times \prod_{\substack{i=1 \\ i \neq j \neq l}}^{N} a_{m,i}\right) \times \left(2a_{m,j}a_{m,l}\times\prod_{\substack{i=1 \\ i \neq j \neq l}}^{N} a_{m,i}
                    - \prod_{i=1}^{N} a_{m,i}^\mathrm{true}\right)& \text{if } j \neq l
                \end{aligned}
                \right.
            \end{equation}
            where $x_{m,i} := 1 - 2b_{m,i}$ is the bipolar form of binary inputs and $z_{m,i} = x_{m,i} - 1 = -2b_{m,i}$ and $a_{m,i}=w_iz_{m,i} + 1$. 
        \end{proposition}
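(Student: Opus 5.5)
The Hessian follows by rescaling an earlier result. Equation~\eqref{eq:xor_node_empical_risk} already gives the identity $\hat{\ell}_\oplus(\mathbfit{w}) = \frac{1}{4}\hat{\ell}_{\overset{\bullet}{\Pi}}(\mathbfit{w})$. This identity holds for every $\mathbfit{w}\in\mathbb{R}^N$, with $z_{m,i}=-2b_{m,i}$ and $a_{m,i}=w_iz_{m,i}+1$ playing the roles of the generic $z_{m,i}$ and $a_{m,i}$ of Appendix~\ref{app:product_with_neutral_element_node_analysis}. Both sides are polynomials in $\mathbfit{w}$, so the identity can be differentiated twice. Linearity of differentiation then gives $\nabla^2_{\mathbfit{w}}\hat{\ell}_\oplus(\mathbfit{w}) = \frac14\nabla^2_{\mathbfit{w}}\hat{\ell}_{\overset{\bullet}{\Pi}}(\mathbfit{w})$ entrywise.

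Next I would substitute the entries of Proposition~\ref{prop:product_with_ne_hessian}. There the prefactor is $\frac{2}{M}$, and multiplying by $\frac14$ gives the $\frac{1}{2M}$ in the statement. The bracketed expressions carry over unchanged:
\begin{itemize}
\item the diagonal term $z_{m,j}^2\prod_{i\neq j}a_{m,i}^2$;
\item the off-diagonal term $\big(z_{m,j}z_{m,l}\prod_{i\neq j,l}a_{m,i}\big)\big(2a_{m,j}a_{m,l}\prod_{i\neq j,l}a_{m,i}-\prod_i a_{m,i}^{\mathrm{true}}\big)$.
\end{itemize}
Proposition~\ref{prop:product_with_ne_hessian} was derived for arbitrary real $z_{m,i}$, so specialising to $z_{m,i}\in\{0,-2\}$ is legitimate.

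As a consistency check, I would differentiate the gradient of Proposition~\ref{prop:xor_node_gradient} with respect to $w_l$ directly. For the diagonal entry, $\partial_{w_j}$ of $\prod_{i\neq j}a_{m,i}$ vanishes, and $\partial_{w_j}\prod_i a_{m,i} = z_{m,j}\prod_{i\neq j}a_{m,i}$, which gives the squared form. For the off-diagonal entry, I would write the summand as $z_{m,j}a_{m,j}\prod_{i\neq j}a_{m,i}^2 - z_{m,j}\prod_{i\neq j}a_{m,i}\prod_i a^{\mathrm{true}}_{m,i}$ and differentiate each term in $w_l$. Factoring out $z_{m,j}z_{m,l}\prod_{i\neq j,l}a_{m,i}$ then reproduces the stated bracket.

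No step here is a real obstacle; the only point needing care is bookkeeping. Two things must be checked. First, the oracle factor $\prod_i a^{\mathrm{true}}_{m,i}$ does not depend on $\mathbfit{w}$. Second, the product notation $i\neq j\neq l$ means the product over $i\notin\{j,l\}$, so that the factor $a_{m,j}a_{m,l}$ is pulled out consistently.
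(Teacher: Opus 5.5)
Your proposal is correct and follows the paper's proof: the paper also invokes $\hat{\ell}_\oplus(\mathbfit{w}) = \frac{1}{4}\hat{\ell}_{\overset{\bullet}{\Pi}}(\mathbfit{w})$ and rescales the entries of Proposition~\ref{prop:product_with_ne_hessian} by $\frac{1}{4}$, which turns $\frac{2}{M}$ into $\frac{1}{2M}$. Your direct differentiation of the XOR gradient is a valid extra check that the paper omits, and it does reproduce the stated diagonal and off-diagonal entries.
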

        
        \begin{proof}
            Since $\hat{\ell}_\oplus(\mathbfit{w}) = \frac{1}{4}\hat{\ell}_{\overset{\bullet}{\Pi}}(\mathbfit{w})$, the Hessian $\nabla_{\mathbfit{w}}^2 \hat{\ell}_{\overset{\bullet}\Pi}(\mathbfit{w})$ (see Proposition~\ref{prop:product_with_ne_hessian}) is scaled by the same factor:
            \begin{equation}
                [\mathbfit{H}_\oplus]_{j,l} = \frac{1}{4}\times[\mathbfit{H}_{\overset{\bullet}{\Pi}}]_{j,l}=\frac{2}{4M}\times\sum_{m=1}^{M}
                \left\{
                \begin{aligned}
                    &z_{m,j}^2\times\prod_{\substack{i=1 \\ i \neq j}}^{N} a_{m,i}^2& \text{if } j = l \\
                    &\left(z_{m,j}z_{m,l}\times \prod_{\substack{i=1 \\ i \neq j \neq l}}^{N} a_{m,i}\right) \times \left(2a_{m,j}a_{m,l}\times\prod_{\substack{i=1 \\ i \neq j \neq l}}^{N} a_{m,i}
                    - \prod_{i=1}^{N} a_{m,i}^\mathrm{true}\right)& \text{if } j \neq l
                \end{aligned}
                \right.
            \end{equation}
        \end{proof}

        \begin{proposition}[Convexity of XOR Node Under Sparse Binary Inputs]\label{prop:xor_sparse_data}
            Consider the XOR node defined by $y_m = \frac{1}{2}\left(1 - \prod_{i=1}^N\left( w_iz_{m,i} + 1 \right)\right)$ where $z_{m,i} = -2b_{m,i}$ for binary inputs $b_{m,i} \in \{0,1\}$. Under the sparse constraint that each binary input vector $\mathbfit{b}_m$ contains at most one '1' entry, and assuming the dataset has full coverage (i.e., for each coordinate $i \in \{1,\ldots,N\}$, there exists at least one sample $m$ with $b_{m,i} = 1$), the regularized empirical risk is strictly convex in $\mathbfit{w} \in \mathbb{R}^N$.
        \end{proposition}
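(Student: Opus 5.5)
The plan is to reduce the XOR risk to the setting of Prop.~\ref{prop:sparse_convexity} and then compute the empirical Hessian directly, without taking expectations. The statement speaks of a ``regularized'' risk, but no regularizer has been defined. I will read it as the unregularized risk $\hat{\ell}_\oplus$ of Eq.~\eqref{eq:xor_node_empical_risk}. If a convex penalty such as $\lambda\|\mathbfit{w}\|_2^2$ is intended, adding it keeps strict convexity, so the argument is unchanged.

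First, I use the sparsity constraint to collapse the product. Fix a sample $m$. If $\mathbfit{b}_m=\mathbf{0}$, then every $z_{m,i}=0$, so every $a_{m,i}=a_{m,i}^{\mathrm{true}}=1$ and the term vanishes. Otherwise $\mathbfit{b}_m=\mathbfit{e}^{(k_m)}$, so $\mathbfit{z}_m=-2\mathbfit{e}^{(k_m)}$. This is exactly the structure of Prop.~\ref{prop:sparse_convexity} with the deterministic value $v_m=-2$. Only the factor $a_{m,k_m}=1-2w_{k_m}$ survives, and the sample contributes $\frac{1}{4M}\bigl(4(w_{k_m}-w^{\mathrm{true}}_{k_m})^2\bigr)$. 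Summing over samples gives
\begin{equation*}
\hat{\ell}_\oplus(\mathbfit{w})=\frac{1}{M}\sum_{i=1}^N n_i\,(w_i-w_i^{\mathrm{true}})^2,
\end{equation*}
where $n_i=\#\{m: \mathbfit{b}_m=\mathbfit{e}^{(i)}\}$. Equivalently, one can plug the sparse inputs into Prop.~\ref{prop:xor_node_hessian}. Every off-diagonal entry then contains $z_{m,j}z_{m,l}=0$, since a sample cannot have two nonzero coordinates. The diagonal entries reduce to $\frac{1}{2M}\sum_m z_{m,j}^2=\frac{2n_j}{M}$, because the remaining factors $a_{m,i}$ with $i\neq j$ all equal $1$.

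Second, I conclude strict convexity. The Hessian is $\frac{2}{M}\operatorname{diag}(n_1,\dots,n_N)$, and it is constant in $\mathbfit{w}$. Full coverage means $n_i\ge1$ for every $i$, so the Hessian is positive definite everywhere. A $C^2$ function with a uniformly positive definite Hessian is strictly convex, and in fact $\frac{2}{M}\min_i n_i$-strongly convex. The unique minimizer is then $\mathbfit{w}^{\mathrm{true}}$.

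The main point needing care is that the mixed terms really vanish without appealing to expectations. Prop.~\ref{prop:sparse_convexity} stated positive definiteness only in expectation with uniformly distributed $k_m$, so I need the pointwise computation above. I also need to handle the all-zero samples correctly: they contribute nothing and do not count toward coverage. Everything else is routine.
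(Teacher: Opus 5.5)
Your proof is correct and follows essentially the same route as the paper. Both arguments map the non-zero samples onto the one-hot structure of Prop.~\ref{prop:sparse_convexity} with $v_m=-2$, discard the all-zero samples, and use full coverage to make the empirical Hessian $\frac{2}{M}\operatorname{diag}(n_1,\dots,n_N)$ positive definite. Your pointwise computation simply spells out the paper's appeal to the $\tfrac14$ scaling and to the full-coverage remark at the end of that proposition's proof.
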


        \begin{proof}
            Under the sparse binary constraint, each input vector $\mathbfit{b}_m$ has at most one '1' entry. Non-zero samples correspond to the sparse structure from Proposition~\ref{prop:sparse_convexity} with $\mathbfit{z}_m = -2\mathbfit{e}^{(k_m)}$ and $v_m = -2$. All-zero samples ($\mathbf{b}_m = \mathbf{0}$) contribute zero to both gradient and Hessian.
            
            Since $\hat{\ell}_\oplus(\mathbfit{w}) = \frac{1}{4}\hat{\ell}_{\overset{\bullet}{\Pi}}(\mathbfit{w})$, strict convexity follows from Proposition~\ref{prop:sparse_convexity} by scaling invariance. The full coverage assumption ensures that every weight $w_i$ appears in at least one sample, guaranteeing that the empirical Hessian is positive definite. 
        \end{proof}

        \begin{remark}[Continuous Relaxation for Trainability]\label{remark:continuous_relaxation}
            While the XOR operation is inherently discrete with binary parameters $\mathbfit{w} \in \mathbb{F}_2^N$, practical gradient descent optimization requires continuous, differentiable parameters. We therefore consider a continuous relaxation where $\mathbfit{w} \in \mathbb{R}^N$ are real-valued trainable weights, while the oracle parameters $\mathbfit{w}^{\mathrm{true}} \in \mathbb{F}_2^N$ remain binary.
            
            This formulation creates a continuous extension of the XOR function: when the learned weights $\mathbfit{w}$ converge to binary values $\{0,1\}$, the node output exactly recovers the discrete XOR operation $\bigoplus_{i=1}^{N} w_i b_{m,i}$. However, during training, the continuous parameterization $\mathbfit{w} \in \mathbb{R}^N$ enables gradient-based optimization while preserving the multiplicative structure that characterizes XOR logic.
            
            This approach is analogous to other continuous relaxations in machine learning (e.g., Gumbel-Softmax for discrete sampling), where continuous approximations enable tractable optimization of inherently discrete problems.
        \end{remark}

        \begin{remark}[Connection to Hamming Distance]\label{remark:hamming_distance}
            Complementarily to Remark~\ref{remark:continuous_relaxation}, as the real-valued weights converge toward binary values under gradient descent, minimizing the empirical risk in the continuous space becomes equivalent to minimizing the Hamming distance to the true binary solution. When restricting to binary parameter vectors $\mathbfit{w} \in \mathbb{F}_2^N$, the empirical risk is no longer differentiable w.r.t. $\mathbfit{w}$, but its expectation becomes proportional to the Hamming distance under the sparse dataset structure:
            \begin{equation}
                \begin{aligned}
                    \mathbb{E}_{\mathrm{v}_m \sim V, \mathrm{k}_m \sim U_N}\left\{\hat{\ell}_\oplus(\mathbfit{w})\right\}  &= \mathbb{E}_{\mathrm{v}_m \sim V, \mathrm{k}_m \sim U_N}\left\{\frac{\hat{\ell}_{\overset{\bullet}{\Pi}}(\mathbfit{w})}{4}\right\}\\
                    &=\frac{1}{4N} \times \mathbb{E}_{\mathrm{v}_m \sim V} \left\{\mathrm{v}_m^2\right\} \times \sum_{i=1}^{N}\left(w_{i} - w_{i}^\mathrm{true}\right)^2\\
                    &=\frac{1}{4N} \times \mathbb{E}_{\mathrm{v}_m \sim V} \left\{\mathrm{v}_m^2\right\} \times \sum_{i=1}^{N}|w_{i} - w_{i}^\mathrm{true}| \quad \text{(since } w_i, w_i^{\mathrm{true}} \in \{0,1\}\text{)}\\
                    &=\frac{1}{4N} \times \mathbb{E}_{\mathrm{v}_m \sim V} \left\{\mathrm{v}_m^2\right\} \times \delta_H(\mathbfit{w},\mathbfit{w}^\mathrm{true}) \propto \delta_H(\mathbfit{w},\mathbfit{w}^\mathrm{true})
                \end{aligned}
            \end{equation}
            where $\delta_H(\mathbfit{w},\mathbfit{w}^\mathrm{true})$ is the Hamming distance between oracle parameters $\mathbfit{w}^\mathrm{true}$ and model parameters $\mathbfit{w}$.
            
            
        \end{remark}

    \section{Analysis of Parity Learning under Stochastic Sparsity}\label{app:probabilistic_sparsity_analysis}

        From the previous analysis (Appendix~\ref{app:xor_node_analysis}), we established that XOR nodes achieve strictly convex optimization when input vectors contain at most one '1' entry (one-hot structure). This section extends that deterministic constraint to a probabilistic framework. 

        Consider XOR nodes operating under a Bernoulli input model where each element of the input vectors $b_{m,i} \in \{0,1\}$ is i.i.d. and equals one with probability $p_e$: $b_{m,i} \sim B(p_e)$. The number of active bits within a vector of size $N$ follows: $n_e \sim \text{Binomial}(N, p_e)$. Hence, the probability of having exactly $q$ active bits is: 
        
        $$\mathbb{P}_N\{n_e = q\} = \binom{N}{q}p_e^q(1-p_e)^{N-q}$$ 
        
        Under this probabilistic model, the favorable one-hot structure occurs with probability $\mathbb{P}_N\{n_e = 1\} = Np_e(1-p_e)^{N-1}$. This section investigates whether XOR nodes can effectively learn when the strict convexity condition (exactly one '1' per input) is only satisfied probabilistically. We analyze the convergence behavior when training data contains a mixture of favorable one-hot vectors and potentially problematic multi-bit vectors. The expected gradient for the studied XOR node under this input distribution model is provided in Proposition~\ref{prop:xor_node_gradient_stochastic_sparsity}. Under the assumption of weights distributed such that $\mathbb{E}\{w_i\} = 1/2$ (condition that we can easily meet at model initialization), the gradient properties and optimal sparsity are derived in Proposition~\ref{prop:optimal_error_probability}.

        \begin{proposition}[Expected Gradient of XOR Node Under Stochastic Sparsity]\label{prop:xor_node_gradient_stochastic_sparsity}
            For the XOR node with empirical risk $\hat{\ell}_\oplus(\mathbfit{w})$, when the inputs of the node $z_{m,i}:= -2b_{m,i}$ follow a Bernoulli distribution with probability $p_e$ such that $b_{m,i} \sim B(p_e)$,the j-th component of the expected gradient $\mathbb{E}_{\mathrm{z} \sim Z}\left\{\nabla_{\mathbf{w}} \hat{\ell}_\oplus(\mathbf{w})\right\}$ is:
            \begin{equation}
                \begin{aligned}
                \mathbb{E}_{\mathrm{z} \sim Z}\left\{\frac{\partial \hat{\ell}_\oplus(\mathbfit{w}) }{\partial w_j}\right\} =   p_e&\left[(2w_j-1)\times\prod_{\substack{i=1 \\ i \neq j}}^{N} \left(4p_ew_i^2-4p_ew_i+1\right)\right.\\
                &\left.- (2w_j^\mathrm{true}-1)\times\prod_{\substack{i=1 \\ i \neq j}}^{N} \left(4p_ew_iw_i^\mathrm{true}-2p_e(w_i+w_i^\mathrm{true})+1\right)\right]\\
                \end{aligned}
            \end{equation}  
        \end{proposition}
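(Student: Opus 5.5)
Starting from Proposition~\ref{prop:xor_node_gradient}, the $j$-th gradient component is an empirical average over $m$ of
$z_{m,j}\prod_{i\neq j}a_{m,i}\bigl(\prod_i a_{m,i}-\prod_i a^{\mathrm{true}}_{m,i}\bigr)$ with prefactor $\frac{1}{2M}$. Since samples are i.i.d., the expectation of the average equals the expectation of a single summand, so it suffices to compute, for one generic input vector,
\begin{equation*}
\frac{1}{2}\,\mathbb{E}\Bigl\{\mathrm{z}_j a_j\prod_{i\neq j}a_i^2\Bigr\}-\frac{1}{2}\,\mathbb{E}\Bigl\{\mathrm{z}_j a_j^{\mathrm{true}}\prod_{i\neq j}a_i a_i^{\mathrm{true}}\Bigr\},
\end{equation*}
where I first split $\prod_i a_i$ as $a_j\prod_{i\neq j}a_i$ (and similarly for the true product) so that the $j$-th factor is isolated from the rest.

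The key step is independence: the $b_i$ are independent across $i$, and each $a_i=1-2w_ib_i$, $a_i^{\mathrm{true}}=1-2w_i^{\mathrm{true}}b_i$ depends only on $b_i$. Hence each expectation factors into a $j$-term times $\prod_{i\neq j}$ of single-coordinate expectations. I then compute each one-dimensional moment using $b_i^2=b_i$ and $\mathbb{E}\{b_i\}=p_e$:
$\mathbb{E}\{a_i^2\}=\mathbb{E}\{1-4w_ib_i+4w_i^2b_i\}=4p_ew_i^2-4p_ew_i+1$, and
$\mathbb{E}\{a_ia_i^{\mathrm{true}}\}=1-2p_e(w_i+w_i^{\mathrm{true}})+4p_ew_iw_i^{\mathrm{true}}$, matching the two products in the statement.

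For the $j$-factor, $\mathbb{E}\{\mathrm{z}_ja_j\}=\mathbb{E}\{-2b_j(1-2w_jb_j)\}=-2p_e+4p_ew_j=2p_e(2w_j-1)$, and likewise $\mathbb{E}\{\mathrm{z}_ja_j^{\mathrm{true}}\}=2p_e(2w_j^{\mathrm{true}}-1)$. Multiplying by the $\frac12$ prefactor gives $p_e(2w_j-1)$ and $p_e(2w_j^{\mathrm{true}}-1)$, and assembling yields exactly the claimed formula. The only subtle point (the main obstacle, though minor) is justifying the factorization: one must be careful that $\mathrm{z}_j$ and $a_j$ share the same variable $b_j$ and so must be kept together before taking expectations, while all $i\neq j$ factors are independent of them; the idempotence $b^2=b$ is what makes the $j$-factor collapse to a term linear in $p_e$. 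The weights $\mathbfit{w}$ are treated as deterministic here (expectation only over inputs).
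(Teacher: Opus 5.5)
Your proposal is correct and follows essentially the same route as the paper: reduce to a single-sample expectation, isolate the $j$-th factor so that $\mathrm{z}_j$ stays paired with $a_j$ (or $a_j^{\mathrm{true}}$), factor the remaining coordinates by independence, and evaluate the four one-dimensional moments using $b^2=b$. Your moment computations and the final assembly with the $\tfrac12$ prefactor match the paper's exactly.
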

        \begin{proof}
            We recall the gradient of the XOR node (Proposition~\ref{prop:xor_node_gradient}):
            \begin{equation}
                \frac{\partial \hat{\ell}_\oplus(\mathbfit{w}) }{\partial w_j} = \frac{1}{2M}\sum_{m=1}^{M}\left[z_{m,j}\times\left(\prod_{\substack{i=1 \\ i \neq j}}^{N} a_{m,i}\right)\times \left(\prod_{i=1}^{N} a_{m,i} - \prod_{i=1}^{N} a_{m,i}^\mathrm{true}\right)\right]
            \end{equation}  
            where $z_{m,i} = -2b_{m,i}$ and $a_{m,i}=w_iz_{m,i} + 1$. 
    
            Under the proposed stochastic dataset the expected gradient is\footnote{Since $\mathrm{a}$ is entirely determined by $\mathrm{z}$ (or equivalently $\mathrm{b}$), we use the notation $\mathbb{E}_{\mathrm{z} \sim Z}$ (or equivalently $\mathbb{E}_{\mathrm{b} \sim B}$) interchangeably when taking expectations over variables $\mathrm{z}$, $\mathrm{b}$ or $\mathrm{a}$.}:
            \begin{equation}
                \begin{aligned}
                \mathbb{E}_{\mathrm{z} \sim Z}\left\{\frac{\partial \hat{\ell}_\oplus(\mathbfit{w}) }{\partial w_j}\right\} &= \mathbb{E}_{\mathrm{z} \sim Z}\left\{\frac{1}{2M}\sum_{m=1}^{M}\left[\mathrm{z}_{m,j}\times\left(\prod_{\substack{i=1 \\ i \neq j}}^{N} \mathrm{a}_{m,i}\right)\times \left(\prod_{i=1}^{N} \mathrm{a}_{m,i} - \prod_{i=1}^{N} \mathrm{a}_{m,i}^\mathrm{true}\right)\right]\right\}\\
                &= \frac{1}{2}\mathbb{E}_{\mathrm{z} \sim Z}\left\{\mathrm{z}_{j}\times\left(\prod_{\substack{i=1 \\ i \neq j}}^{N} \mathrm{a}_{i}\right)\times \left(\prod_{i=1}^{N} \mathrm{a}_{i} - \prod_{i=1}^{N} \mathrm{a}_{i}^\mathrm{true}\right)\right\}\\
                &= \frac{1}{2}\mathbb{E}_{\mathrm{z} \sim Z}\left\{\mathrm{z}_{j}\mathrm{a}_{j}\times\left(\prod_{\substack{i=1 \\ i \neq j}}^{N} \mathrm{a}_{i}^2\right) - \mathrm{z}_{j}\mathrm{a}_{j}^\mathrm{true}\times\left(\prod_{\substack{i=1 \\ i \neq j}}^{N} a_{i}\right) \times \left(\prod_{\substack{i=1 \\ i \neq j}}^{N} \mathrm{a}_{i}^\mathrm{true}\right)\right\}\\
                &= \frac{1}{2}\left[\mathbb{E}_{\mathrm{z} \sim Z}\left\{\mathrm{z}_{j}\mathrm{a}_{j}\times\left(\prod_{\substack{i=1 \\ i \neq j}}^{N} \mathrm{a}_{i}^2\right)\right\} - \mathbb{E}_{\mathrm{z} \sim Z}\left\{\mathrm{z}_{j}\mathrm{a}_{j}^\mathrm{true}\times\left(\prod_{\substack{i=1 \\ i \neq j}}^{N} \mathrm{a}_{i}\right) \times \left(\prod_{\substack{i=1 \\ i \neq j}}^{N} \mathrm{a}_{i}^\mathrm{true}\right)\right\}\right]\\
                &= \frac{1}{2}\left[\mathbb{E}_{\mathrm{z} \sim Z}\left\{\mathrm{z}_j\mathrm{a}_j\right\}\times\mathbb{E}_{\mathrm{z} \sim Z}\left\{\prod_{\substack{i=1 \\ i \neq j}}^{N} \mathrm{a}_{i}^2\right\} - \mathbb{E}_{\mathrm{z} \sim Z}\left\{\mathrm{z}_j\mathrm{a}_j^\mathrm{true}\right\}\times\mathbb{E}_{\mathrm{z} \sim Z}\left\{\prod_{\substack{i=1 \\ i \neq j}}^{N} \mathrm{a}_{i}\mathrm{a}_{i}^\mathrm{true}\right\}\right]\\
                &= \frac{1}{2}\left[\mathbb{E}_{\mathrm{z} \sim Z}\left\{\mathrm{z}_j\mathrm{a}_j\right\}\times\prod_{\substack{i=1 \\ i \neq j}}^{N} \mathbb{E}_{\mathrm{z} \sim Z}\left\{\mathrm{a}_{i}^2\right\} - \mathbb{E}_{\mathrm{z} \sim Z}\left\{\mathrm{z}_j\mathrm{a}_j^\mathrm{true}\right\}\times\prod_{\substack{i=1 \\ i \neq j}}^{N} \mathbb{E}_{\mathrm{z} \sim Z}\left\{\mathrm{a}_{i}\mathrm{a}_{i}^\mathrm{true}\right\}\right] \quad \text{(i.i.d. variables)}\\
                \end{aligned}
            \end{equation} 
    
            Given that $z_i=-2b_i$ and $a_i=w_iz_i+1=1-2w_ib_i$ we have:
            \begin{itemize}
                \item $\mathbb{E}_{\mathrm{z} \sim Z}\left\{\mathrm{z}_i\mathrm{a}_i\right\} = \mathbb{E}_{\mathrm{b} \sim B}\left\{-2\mathrm{b}(1-2w_i\mathrm{b})\right\} = \mathbb{E}_{\mathrm{b} \sim B}\left\{4w_i\mathrm{b}^2-2\mathrm{b}\right\} = 4w_i\mathbb{E}_{\mathrm{b} \sim B}\left\{\mathrm{b}^2\right\}-2\mathbb{E}_{\mathrm{b} \sim B}\left\{\mathrm{b}\right\}=2p_e(2w_i-1)$
                \item $\mathbb{E}_{\mathrm{z} \sim Z}\left\{\mathrm{a}_{i}^2\right\} = \mathbb{E}_{\mathrm{b} \sim B}\left\{(1-2w_i\mathrm{b})^2\right\} = \mathbb{E}_{\mathrm{b} \sim B}\left\{4(w_i\mathrm{b})^2-4w_i\mathrm{b} + 1\right\} = 4p_ew_i^2-4p_ew_i+1$
                \item $\mathbb{E}_{\mathrm{z} \sim Z}\left\{\mathrm{z}_i\mathrm{a}_i^\mathrm{true}\right\} = \mathbb{E}_{\mathrm{b} \sim B}\left\{4w_i^\mathrm{true}\mathrm{b}^2-2\mathrm{b}\right\} = 4w_i^\mathrm{true}\mathbb{E}_{\mathrm{b} \sim B}\left\{\mathrm{b}^2\right\}-2\mathbb{E}_{\mathrm{b} \sim B}\left\{\mathrm{b}\right\}=2p_e(2w_i^\mathrm{true}-1)$
                \item $\mathbb{E}_{\mathrm{z} \sim Z}\left\{\mathrm{a}_{i}\mathrm{a}_{i}^\mathrm{true}\right\} = \mathbb{E}_{\mathrm{b} \sim B}\left\{(1-2w_i\mathrm{b})\times(1-2w_i^\mathrm{true}\mathrm{b})\right\} = \mathbb{E}_{\mathrm{b} \sim B}\left\{4\mathrm{b}^2w_iw_i^\mathrm{true}-2\mathrm{b}(w_i+w_i^\mathrm{true}) + 1\right\} = 4p_ew_iw_i^\mathrm{true}-2p_e(w_i+w_i^\mathrm{true})+1$
            \end{itemize}
    
            Hence, the gradient is:
            \begin{equation}
                \begin{aligned}
                \mathbb{E}_{\mathrm{z} \sim Z}\left\{\frac{\partial \hat{\ell}_\oplus(\mathbfit{w}) }{\partial w_j}\right\}        = p_e&\left[(2w_j-1)\times\prod_{\substack{i=1 \\ i \neq j}}^{N} \left(4p_ew_i^2-4p_ew_i+1\right)\right.\\
                &\left.- (2w_j^\mathrm{true}-1)\times\prod_{\substack{i=1 \\ i \neq j}}^{N} \left(4p_ew_iw_i^\mathrm{true}-2p_e(w_i+w_i^\mathrm{true})+1\right)\right]\\
                \end{aligned}
            \end{equation} 

        \end{proof}

        \begin{proposition}[Optimal Sparsity at Initialization]\label{prop:optimal_error_probability}
            When the model's weights are distributed such that $\mathbb{E}_{\mathrm{w} \sim D}\{\mathrm{w}_i\} = 1/2$ and the oracle weights are binary, the expected gradient has the correct sign to move weights contained within $(0,1)$ in the direction of their oracle values under a gradient descent update. The data sparsity $p_e = 1/N$ maximizes the expected gradient magnitude at initialization. We note that this value corresponds to the probability that maximizes the chance of observing input vectors of size $N$ with one-hot structure.
        \end{proposition}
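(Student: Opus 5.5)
Starting from the closed-form expected gradient of Proposition~\ref{prop:xor_node_gradient_stochastic_sparsity}, I will take a further expectation over the model weights, treating the $w_i$ as independent with $\mathbb{E}\{w_i\}=1/2$ and independent of the data. Expectations then factor across coordinates, since each product term involves distinct independent weights. For the first product, $\mathbb{E}\{4p_ew_i^2-4p_ew_i+1\} = 1 - 2p_e + 4p_e\mathbb{E}\{w_i^2\} - \dots$. To keep the argument clean I will condition on the other weights ($i\neq j$) and write the gradient for coordinate $j$ as
\begin{equation*}
p_e\left[(2w_j-1)A_j - (2w_j^{\mathrm{true}}-1)B_j\right],\quad A_j=\prod_{i\neq j}(1-4p_ew_i(1-w_i)),\quad B_j=\prod_{i\neq j}\mathbb{E}\{\cdot\}.
\end{equation*}
The key simplification is that $4p_ew_iw_i^{\mathrm{true}}-2p_e(w_i+w_i^{\mathrm{true}})+1$ is affine in $w_i$ with coefficient $4p_ew_i^{\mathrm{true}}-2p_e$. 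Its expectation under $\mathbb{E}\{w_i\}=1/2$ is therefore $2p_ew_i^{\mathrm{true}}-p_e-2p_ew_i^{\mathrm{true}}+1 = 1-p_e$, whatever the value of $w_i^{\mathrm{true}}\in\{0,1\}$. Hence $B_j=(1-p_e)^{N-1}$, independent of the oracle.

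\textbf{Sign.} Averaging over $w_j$ as well, the term $(2w_j-1)A_j$ has zero mean. The expected gradient for coordinate $j$ is then $-p_e(2w_j^{\mathrm{true}}-1)(1-p_e)^{N-1}$. This is negative when $w_j^{\mathrm{true}}=1$ and positive when $w_j^{\mathrm{true}}=0$. A descent step $w_j\leftarrow w_j-\alpha\nabla_j$ therefore moves $w_j$ toward its oracle value.

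For an individual weight $w_j\in(0,1)$ near $1/2$, I would argue the same conclusion pointwise, taking $A_j$ at its mean. On the one hand $A_j\le 1$, and $(2w_j-1)$ itself points away from the center in a way consistent with the target direction only partially. So I would state the sign claim at the level of the weight distribution centered at $1/2$, which is what the proposition asserts. This pointwise-versus-distributional subtlety is the step I expect to need the most care: I would phrase it as ``at initialization, where $w_j=1/2$ in expectation, the restoring term vanishes and only the oracle term remains.''

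\textbf{Optimality of $p_e$.} The magnitude is $g(p_e)=p_e(1-p_e)^{N-1}$. Differentiating gives $(1-p_e)^{N-2}(1-Np_e)$, which vanishes uniquely in $(0,1)$ at $p_e=1/N$, with $g$ increasing before and decreasing after. Hence $p_e=1/N$ is the maximizer. Finally, $g(p_e)=\frac{1}{N}\mathbb{P}_N\{n_e=1\}$, so the same $p_e$ maximizes the probability of a one-hot input vector, which gives the closing remark.
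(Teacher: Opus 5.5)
Your proposal is correct and follows the paper's own proof essentially step for step: averaging the expected gradient of Proposition~\ref{prop:xor_node_gradient_stochastic_sparsity} over independent weights with mean $1/2$ kills the $(2w_j-1)$ term and reduces each oracle factor to $1-p_e$. This gives $-(2w_j^{\mathrm{true}}-1)\,p_e(1-p_e)^{N-1}$, whose sign and whose maximization at $p_e=1/N$ (equivalently, of $\mathbb{P}_N\{n_e=1\}$) you handle exactly as the paper does; reading the sign claim at the level of the weight distribution rather than pointwise is also what the paper does, and your sign-change check at $p_e=1/N$ (excluding the root $p_e=1$) is slightly more careful than the paper's.
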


        \begin{proof}\label{proof:optimal_error_probability}
            We recall the expression of the gradient from Proposition~\ref{prop:xor_node_gradient_stochastic_sparsity}:
            \begin{equation}
                \begin{aligned}
                \mathbb{E}_{\mathrm{z} \sim Z}\left\{\frac{\partial \hat{\ell}_\oplus(\mathbfit{w}) }{\partial w_j}\right\}        = p_e&\left[(2w_j-1)\times\prod_{\substack{i=1 \\ i \neq j}}^{N} \left(4p_ew_i^2-4p_ew_i+1\right)\right.\\
                &\left.- (2w_j^\mathrm{true}-1)\times\prod_{\substack{i=1 \\ i \neq j}}^{N} \left(4p_ew_iw_i^\mathrm{true}-2p_e(w_i+w_i^\mathrm{true})+1\right)\right]\\
                \end{aligned}
            \end{equation} 

            We take expectation over the value of the model's weights $\mathbfit{w}$:
            \begin{equation}
                \begin{aligned}
                \mathbb{E}_{\mathrm{z} \sim Z,\mathrm{w}\sim D}\left\{\frac{\partial \hat{\ell}_\oplus(\mathbfit{w}) }{\partial w_j}\right\} &= p_e\times\mathbb{E}_{\mathrm{w}\sim D}\left\{(2\mathrm{w}_j-1)\times\prod_{\substack{i=1 \\ i \neq j}}^{N} \left(4p_e\mathrm{w}_i^2-4p_e\mathrm{w}_i+1\right)\right.\\
                &\left.- (2w_j^\mathrm{true}-1)\times\prod_{\substack{i=1 \\ i \neq j}}^{N} \left(4p_e\mathrm{w}_iw_i^\mathrm{true}-2p_e(\mathrm{w}_i+w_i^\mathrm{true})+1\right)\right\}\\
                &=- p_e(2w_j^\mathrm{true}-1)\times\mathbb{E}_{\mathrm{w}\sim D}\left\{\prod_{\substack{i=1 \\ i \neq j}}^{N} \left(4p_e\mathrm{w}_iw_i^\mathrm{true}-2p_e(\mathrm{w}_i+w_i^\mathrm{true})+1\right)\right\}\\
                &=- p_e(2w_j^\mathrm{true}-1)\times\prod_{\substack{i=1 \\ i \neq j}}^{N} \mathbb{E}_{\mathrm{w}\sim D}\left\{4p_e\mathrm{w}_iw_i^\mathrm{true}-2p_e(\mathrm{w}_i+w_i^\mathrm{true})+1\right\}\\
                &=- p_e(2w_j^\mathrm{true}-1)\times\prod_{\substack{i=1 \\ i \neq j}}^{N} \left(2p_ew_i^\mathrm{true}-p_e-2p_ew_i^\mathrm{true}+1\right)\\
                &=- p_e(2w_j^\mathrm{true}-1)\times\prod_{\substack{i=1 \\ i \neq j}}^{N} \left(1-p_e\right)\\
                &=- (2w_j^\mathrm{true}-1)\times p_e\left(1-p_e\right)^{N-1}\\
                \end{aligned}
            \end{equation} 

            We can outline several interesting properties:

            \textbf{Gradient Correctness} Under a standard gradient descent update $\mathbfit{w}[k+1] = \mathbfit{w}[k] - \alpha\nabla_{\mathbf{w}} \hat{\ell}_\oplus(\mathbfit{w})$, the expected gradient has the correct sign to move weights contained within $(0,1)$ in the direction of their oracle values: negative gradient when $w_j^\mathrm{true}=1$ (pushing weights toward 1) and positive gradient when $w_j^\mathrm{true}=0$ (pushing weights toward 0).

            \textbf{Optimal Sparsity} The expected gradient magnitude is maximized at sparsity $p_e = 1/N$, which corresponds to the probability that maximizes the likelihood of observing exactly one active input (one-hot structure):
            \begin{equation}
                \begin{aligned}
                    \frac{\partial}{\partial p_e} p_e(1-p_e)^{N-1}= \left(1-p_e\right)^{N-1}-p_e(N-1)\times\left(1-p_e\right)^{N-2} =(1-p_e)^{N-2}\times(1-Np_e)\\
                \end{aligned}
            \end{equation}
            with $p_e \in [0,1]$ (probability):
            \begin{equation}
                \frac{\partial}{\partial p_e}p_e(1-p_e)^{N-1} = 0 \Leftrightarrow (1-p_e)^{N-2}(1-Np_e) = 0 \Leftrightarrow p_e = 1/N
            \end{equation}

            Hence the maximum gradient magnitude is obtained for $p_e=1/N$, which coincides with:
            \begin{equation}
                \arg\max_{p_e}\mathbb{P}_N\{n_e = 1\} = \arg\max_{p_e}Np_e(1-p_e)^{N-1}=1/N
            \end{equation}
            where $n_e$ denotes the number of active input features. 

            \textbf{Sparsity Regimes} The expected gradient magnitude exhibits distinct behavior across sparsity regimes, as illustrated in Figure~\ref{fig:gradient_expectation}. The gradient expression $\propto|p_e(1-p_e)^{N-1}|$ reveals two distinct regimes:
            \begin{itemize}
                \item \textbf{Ultra-Sparse Regime ($p_e \ll 1/N$):} The gradient magnitude increases approximately linearly with $p_e$, reaching its maximum at $p_e = 1/N$.
                \item \textbf{Dense Regime ($p_e \gg 1/N$):} The gradient magnitude decreases exponentially with $p_e$ following the $(1-p_e)^{N-1}$ term.
            \end{itemize}

            \begin{figure}
    \centering
    \begin{tikzpicture}[scale=1]
        \begin{axis}[
            title={Magnitude of the Gradient when $\mathbb{E}_{\mathrm{w} \sim D}\{\mathrm{w}_i\} = 1/2$},
            xmin=0.01, 
            xmax=1, 
            xminorgrids=true,
            xmajorgrids=true,
            xmode=log,
            xlabel = \(p_e\), 
            ymin=0.0001, 
            ymax=1, 
            yminorgrids=true,
            ymajorgrids=true,
            ymode=log,
            ylabel = \(\mathbb{E}_{\mathrm{z} \sim Z, \mathrm{w} \sim U}\left\{\left|\frac{\partial\hat{\ell}_{\oplus} (\mathbfit{w})}{\partial w_i}\right|\right\}\),
        ]   
            \definecolor{gradient}{RGB}{0,0,255};        
            \definecolor{linear}{RGB}{44,160,44};        
            \definecolor{exponential}{RGB}{255,0,0};     
            
            \def\n{10};
            \addplot [domain=0.001:1,samples=400,color=gradient,mark=none] {x*(1-x)^(\n-1)};
            \addlegendentry{Gradient};
            
            \addplot [domain=0.001:1,samples=400,color=linear,dashed,mark=none] {x};
            \addlegendentry{\(\propto p_e\)};

            \addplot [domain=0.001:1,samples=400,color=exponential,densely dashed,mark=none] {(1-x)^(\n-1)};
            \addlegendentry{\(\propto (1-p_e)^{N-1}\)};

            \node at (axis cs: 0.1, 0.05) [color=gradient] {$N=10$};

            \def\n{20};
            \addplot [domain=0.001:1,samples=400,color=gradient,mark=none] {x*(1-x)^(\n-1)};
            \addplot [domain=0.001:1,samples=400,color=exponential,densely dashed,mark=none] {(1-x)^(\n-1)};
            \node at (axis cs: 0.08, 0.025) [color=gradient] {$N=20$};
            

            \def\n{40};
            \addplot [domain=0.001:1,samples=400,color=gradient,mark=none] {x*(1-x)^(\n-1)};
            \addplot [domain=0.001:1,samples=400,color=exponential,densely dashed,mark=none] {(1-x)^(\n-1)};
            \node at (axis cs: 0.035, 0.012) [color=gradient] {$N=40$};
            
            \def\n{70};
            \addplot [domain=0.001:1,samples=400,color=gradient,mark=none] {x*(1-x)^(\n-1)};
            \addplot [domain=0.001:1,samples=400,color=exponential,densely dashed,mark=none] {(1-x)^(\n-1)};
            \node at (axis cs: 0.025, 0.0065) [color=gradient] {$N=70$};

            \def\n{100};
            \addplot [domain=0.001:1,samples=400,color=gradient,mark=none] {x*(1-x)^(\n-1)};
            \addplot [domain=0.001:1,samples=400,color=exponential,densely dashed,mark=none] {(1-x)^(\n-1)};
            \node at (axis cs: 0.016, 0.0016) [color=gradient] {$N=100$};
            
        \end{axis}
    \end{tikzpicture}
    \caption{Expected gradient magnitude as a function of sparsity $p_e$ for various input dimensions $N$. The log-log scale reveals two distinct regimes: ultra-sparse ($p_e \ll 1/N$) with linear growth, and dense ($p_e \gg 1/N$) with exponential decay. The optimal sparsity $p_e = 1/N$ maximizes gradient magnitude.}
    \label{fig:gradient_expectation}
\end{figure}

            
            

            This aligns with the intuition that favoring sparse inputs with one-hot structure provides optimal gradient signal, while dense inputs lead to exponentially vanishing gradients that worsen with increasing input dimension N, potentially causing complete learning collapse in high-dimensional settings.
        \end{proof}

    \subsection{General Definitions and Properties under Normally Distributed Weights}\label{app:def_and_prop_normally_distributed_weights}
        This section establishes usefull properties of XOR noded learning dynamics when model weights follow Gaussian distributions. We begin by deriving the expected gradient under the assumption that weights are i.i.d. according to two Gaussian distributions, one for weights corresponding to oracle targets of 0, and another for oracle targets of 1 (Proposition~\ref{prop:gaussian_gradient_expression}). We prove that Gaussian distributions are preserved under gradient descent updates (Proposition~\ref{prop:gaussian_conservation}), allowing us to work with the same analytical expressions at each iteration. We further show that if the two weight distributions are initially symmetric w.r.t. each other around 1/2, this symmetry is maintained throughout training (Proposition~\ref{prop:symmetry_conservation}), significantly simplifying the gradient expressions and making convergence properties invariant to the proportion of oracle weights. These conservation properties provide the mathematical foundation for our subsequent convergence analysis.
          
        \begin{remark}[Notation Convention]\label{remark:notation_convention}
            Throughout the subsequent sections, we study model evolution as a function of training step $k$. To streamline notation, we use the bracket $[k]$ to indicate dependence on iteration-varying quantities. Specifically, any function $f[k]$ denotes $f(\mu[k], \sigma^2[k], p_e, \ldots)$ where all time-dependent variables are evaluated at step $k$, while fixed parameters (such as $N$, $p_e$, $\alpha$) are suppressed from the notation but remain implicit.
            
            For example: in Eq.~\eqref{eq:compact_notations}, $A_0[k] := A_0(\mu_0[k], \sigma_0^2[k], p_e)$ and $B_1[k] := B_1(\mu_1[k],p_e)$.
        \end{remark}

        \begin{proposition}[Expected Gradient under Gaussian Weight Distributions]\label{prop:gaussian_gradient_expression}
            Consider the XOR learning problem under Bernoulli input model with error probability $p_e$. Let weights $w_j[k]$ be independently distributed according to Gaussian distributions $D_0[k] = \mathcal{N}(\mu_0[k], \sigma_0^2[k])$ and $D_1[k] = \mathcal{N}(\mu_1[k], \sigma_1^2[k])$ for weights corresponding to oracle targets $w_j^{\mathrm{true}} = 0$ and $w_j^{\mathrm{true}} = 1$ respectively, with proportion $p_w$ of oracle weights equal to one.
            
            Then the expected gradient with respect to weight $w_i[k]$ is given by:
            \begin{equation}
                \begin{aligned}
                    \mathbb{E}_{z \sim Z, w_j \sim D}\Bigg\{\frac{\partial \hat{\ell}_{\oplus}(\mathbf{w})[k]}{\partial w_i[k]}\Bigg\} = p_e \Bigg[&(2w_i[k]-1)\times\bigg((1-p_w^{(\setminus i)}) \times A_0[k]+p_w^{(\setminus i)} \times A_1[k]\bigg)^{N-1}\\
                    -&(2w_i^\mathrm{true}-1)\times \bigg((1-p_w^{(\setminus i)}) \times B_0[k]+p_w^{(\setminus i)} \times B_1[k]\bigg)^{N-1}\Bigg]\\
                \end{aligned}
            \end{equation}
    
            where
            \begin{equation}\label{eq:compact_notations}
                \left\{
                \begin{aligned}
                    A_0[k]&=4p_e(\mu_0^2[k]+\sigma_0^2[k])-4p_e\mu_0[k]+1\\
                    A_1[k]&=4p_e(\mu_1^2[k]+\sigma_1^2[k])-4p_e\mu_1[k]+1\\
                    B_0[k]&=1-2p_e\mu_0[k]\\
                    B_1[k]&=1-2p_e(1-\mu_1[k])\\
                \end{aligned}
                \right.
            \end{equation} 
                
            where $p_w^{(\setminus i)}$ denotes the oracle weight proportion when considering all weights except $w_i$. For large $N$, we can assume  $p_w^{(\setminus i)} \approx p_w$, the overall oracle weight proportion.
        \end{proposition}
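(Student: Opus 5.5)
The plan is to start from Proposition~\ref{prop:xor_node_gradient_stochastic_sparsity}, which already gives the expectation over the Bernoulli inputs for a fixed weight vector $\mathbfit{w}$. We then take a further expectation over the weights $w_l$, $l\neq i$, while holding $w_i[k]$ fixed. This explains why $(2w_i[k]-1)$ and $(2w_i^\mathrm{true}-1)$ stay outside the expectation in the statement. The expected gradient in Proposition~\ref{prop:xor_node_gradient_stochastic_sparsity} has the form $p_e[(2w_i-1)\prod_{l\neq i}g(w_l)-(2w_i^\mathrm{true}-1)\prod_{l\neq i}h(w_l,w_l^\mathrm{true})]$. Each factor depends only on one weight $w_l$, and the $w_l$ are mutually independent and independent of $w_i$. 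Hence the expectation of each product factorizes into the product of the individual expectations, $\prod_{l\neq i}\mathbb{E}\{g(\mathrm{w}_l)\}$ and $\prod_{l\neq i}\mathbb{E}\{h(\mathrm{w}_l,w_l^\mathrm{true})\}$.

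Next we compute these one-dimensional moments with $\mathbb{E}\{\mathrm{w}\}=\mu$ and $\mathbb{E}\{\mathrm{w}^2\}=\mu^2+\sigma^2$. For $g(w)=4p_ew^2-4p_ew+1$ this gives $A_0[k]$ when $w_l^\mathrm{true}=0$ and $A_1[k]$ when $w_l^\mathrm{true}=1$; $g$ itself does not depend on the oracle value, so the two cases differ only through the distribution of $w_l$. For $h$ we substitute the oracle value. When $w_l^\mathrm{true}=0$ we get $h=1-2p_ew_l$, whose expectation is $1-2p_e\mu_0=B_0[k]$. When $w_l^\mathrm{true}=1$ we get $h=4p_ew_l-2p_ew_l-2p_e+1=1-2p_e(1-w_l)$, whose expectation is $B_1[k]$. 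Since $h$ is linear in $w_l$, no variance term appears in the $B$'s.

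The remaining step, and the only delicate one, is to pass from a product of $N-1$ factors of two types to the power form in the statement. The exact expectation is $A_0^{n_0}A_1^{n_1}$ and $B_0^{n_0}B_1^{n_1}$, where $n_1=p_w^{(\setminus i)}(N-1)$ counts the oracle ones among $l\neq i$. To reach the stated expression, I would treat the oracle label of each other coordinate as drawn independently: coordinate $l$ has label $1$ with probability $p_w^{(\setminus i)}$, and conditionally on its label, $w_l$ follows $D_0$ or $D_1$. Equivalently, each $w_l$ follows the mixture distribution $D$ of the statement, $\mathrm{w}_j\sim D$. Averaging each factor over this mixture then gives $(1-p_w^{(\setminus i)})A_0+p_w^{(\setminus i)}A_1$ and the analogous combination of $B$'s. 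Independence across $l$ turns the product into the $(N-1)$-th power.

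I would state explicitly that this mixture (random-label) model is the interpretation under which the formula is exact. I would also note that for a fixed label assignment the exact form is the geometric-mean version $A_0^{n_0}A_1^{n_1}$. It agrees with the mixture form to first order in $p_e$, since each factor is $1+\mathcal{O}(p_e)$, and for large $N$ one can replace $p_w^{(\setminus i)}$ by $p_w$, as the statement says. Justifying this identification is the main obstacle; the rest is a routine moment computation.
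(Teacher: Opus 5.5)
Your proposal is correct and takes essentially the same route as the paper: starting from Proposition~\ref{prop:xor_node_gradient_stochastic_sparsity}, the paper treats the other weights as i.i.d.\ draws $\mathrm{w}_j\sim D$ from the label mixture of $D_0$ and $D_1$, factorizes both products into $(N-1)$-th powers of single-coordinate expectations (each split with weights $1-p_w^{(\setminus i)}$ and $p_w^{(\setminus i)}$), and then inserts the Gaussian moments. Your explicit remark that a fixed label assignment would instead give $A_0^{n_0}A_1^{n_1}$ and $B_0^{n_0}B_1^{n_1}$ clarifies a modeling choice the paper leaves implicit, and the two forms coincide exactly in the symmetric regime $A_0=A_1$, $B_0=B_1$ on which the rest of the analysis relies.
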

        
        \begin{proof}
            We recall the general expression of the expectation of the $i$-th term of the expected gradient at step $k$ under Bernoulli input model (Proposition~\ref{prop:xor_node_gradient_stochastic_sparsity}):
            
            \begin{equation}
                \begin{aligned}
                    \mathbb{E}_{\mathrm{z} \sim Z}\Bigg\{\frac{\partial \hat{\ell}_{\oplus}(\mathbfit{w}[k])}{\partial w_i[k]}\Bigg\} =&p_e\left[(2w_i[k] -1)\times \prod_{j\neq i}\Big(4p_ew_j^2[k] - 4p_ew_j[k] + 1 \Big)\right.\\
                    &\left.- (2w_i^\mathrm{true} -1)\times \prod_{j\neq i}\Big( 4p_ew_j[k]w_j^\mathrm{true} - 2p_e(w_j[k] + w_j^\mathrm{true}) + 1 \Big)\right]
                \end{aligned}
            \end{equation}
    
            Let's assume that the parameters are i.i.d and follow an arbitrary distribution $\mathrm{w}_j[k]\sim D[k]$, and the proportion of oracle weights equal to one is defined as $p_w$, Let's further assume that all parameter $w_j[k]$ associated to a parameter $w_j^\mathrm{true}=0$ ($w_j^\mathrm{true}=1$ respectively) follow a distribution $D_0[k]$ ($D_1[k]$ respectively). Then, the expectation of the gradient can be written as\footnote{Note that we take the expectation over all weights $w_j$ with $j \neq i$, denoted as $\mathbb{E}_{w_{j} \sim D}$. This is valid due to the product structure of XOR operations, where the gradient for weight $w_i$ factorizes into terms depending on $w_i$ (and similarly $w_i^\mathrm{true}$) and products over all other weights that can be treated as i.i.d. random variables with finite moments. While it is always possible to define the expected gradient this way, we should not expect the empirical gradient to converge to its expectation \textbf{in the general case} due to the complex product interactions between variables. Yet, we can expect convergence under large dataset size $M$, weights number $N$ and sparse input structure where the problematic product interactions are significantly reduced. The archetypal case is deterministic unit sparsity (i.e., one-hot distributed inputs) where the product interaction reduces to a single term, leaving only summations terms and enabling standard Law of Large Numbers convergence (see Proposition~\ref{prop:sparse_convexity}).}:
    
            \begin{equation}
                \begin{aligned}
                    \mathbb{E}_{\mathrm{z} \sim Z, \mathrm{w}_j \sim D}\Bigg\{\frac{\partial \hat{\ell}_{\oplus}(\mathbf{w}[k])}{\partial w_i[k]}\Bigg\} &=p_e\Bigg[(2w_i[k] -1)\times \mathbb{E}_{ \mathrm{w}_j \sim D}\Big\{4p_ew_j^2[k] - 4p_ew_j[k] + 1 \Big\}^{N-1}\\
                    &\quad \quad - (2w_i^\mathrm{true} -1)\times \mathbb{E}_{ \mathrm{w}_j \sim D}\Big\{4p_ew_j[k]w_j^\mathrm{true} - 2p_e(w_j[k] + w_j^\mathrm{true}) + 1 \Big\}^{N-1}\Bigg]\\
                    &=p_e\Bigg[(2w_i[k] -1)\times \bigg((1-p_w^{(\setminus i)})\times\mathbb{E}_{ \mathrm{w}_j \sim D_0}\Big\{4p_ew_j^2[k] - 4p_ew_j[k] + 1\Big\}\\
                    &\quad \quad \quad \quad \quad \quad \quad \quad \quad \quad +p_w^{(\setminus i)}\times\mathbb{E}_{ \mathrm{w}_j \sim D_1}\Big\{4p_ew_j^2[k] - 4p_ew_j[k] + 1\Big\}\bigg)^{N-1}\\
                    &\quad \quad - (2w_i^\mathrm{true} -1)\times \bigg((1-p_w^{(\setminus i)})\times\mathbb{E}_{ \mathrm{w}_j \sim D_0}\Big\{- 2p_ew_j[k] + 1\Big\}\\
                    &\quad \quad \quad \quad \quad \quad \quad \quad \quad \quad p_w^{(\setminus i)}\times\mathbb{E}_{ \mathrm{w}_j \sim D_1}\Big\{4p_ew_j[k] - 2p_e(w_j[k] + 1) + 1\Big\}\bigg)^{N-1}\Bigg]\\
                \end{aligned}
            \end{equation}
            where $p_w^{(\setminus i)}$ denotes the oracle weight proportion when considering all weights except $w_i$. For large $N$, we can safely assume that $p_w^{(\setminus i)} \approx p_w$.

            If $D_0[k]$ and $D_1[k]$ are Gaussian distributions we can rewrite the expression of the gradient as:
            \begin{equation}\label{eq:double_gaussian_gradient_xor_node}
                \begin{aligned}
                    &\mathbb{E}_{\mathrm{z} \sim Z, \mathrm{w}_j \sim D}\Bigg\{\frac{\partial \hat{\ell}_{\oplus}(\mathbf{w})[k]}{\partial w_i[k]}\Bigg\} = p_e \Bigg[&(2w_i[k]-1)\times\bigg(&(1-p_w^{(\setminus i)})\times\Big(4p_e(\mu_0^2[k]+\sigma_0^2[k])-4p_e\mu_0[k]+1\Big)\\
                    &&&+p_w^{(\setminus i)}\times\Big(4p_e(\mu_1^2[k]+\sigma_1^2[k])-4p_e\mu_1[k]+1\Big)\bigg)^{N-1}\\
                    &&-(2w_i^\mathrm{true}-1)\times \bigg(&(1-p_w^{(\setminus i)})\times \Big(1-2p_e\mu_0[k]\Big)\\
                    &&&+p_w^{(\setminus i)}\times\Big(2p_e\mu_1[k]-2p_e+1\Big)\bigg)^{N-1}\Bigg]\\
                \end{aligned}
            \end{equation}
            where $\mu_0[k]$ ($\mu_1[k]$ respectively) and $\sigma_0^2[k]$ ($\sigma_1^2[k]$ respectively) are the mean and variance of the distribution $D_0[k]$ ($D_1[k]$ respectively) at step $k$.
    
            We compactly note this expression as:
            \begin{equation}
                \begin{aligned}
                    \mathbb{E}_{z \sim Z, w_j \sim D}\Bigg\{\frac{\partial \hat{\ell}_{\oplus}(\mathbf{w})[k]}{\partial w_i[k]}\Bigg\} = p_e \Bigg[&(2w_i[k]-1)\times\bigg((1-p_w^{(\setminus i)}) \times A_0[k]+p_w^{(\setminus i)} \times A_1[k]\bigg)^{N-1}\\
                    -&(2w_i^\mathrm{true}-1)\times \bigg((1-p_w^{(\setminus i)}) \times B_0[k]+p_w^{(\setminus i)} \times B_1[k]\bigg)^{N-1}\Bigg]\\
                \end{aligned}
            \end{equation}
    
            where
            \begin{equation}
                \left\{
                \begin{aligned}
                    A_0[k]&=4p_e(\mu_0^2[k]+\sigma_0^2[k])-4p_e\mu_0[k]+1\\
                    A_1[k]&=4p_e(\mu_1^2[k]+\sigma_1^2[k])-4p_e\mu_1[k]+1\\
                    B_0[k]&=1-2p_e\mu_0[k]\\
                    B_1[k]&=1-2p_e(1-\mu_1[k])\\
                \end{aligned}
                \right.
            \end{equation}
        \end{proof}

        \begin{proposition}[Conservation of Gaussian Distributions]\label{prop:gaussian_conservation}
            Let $D_0[k]$ and $D_1[k]$ be two Gaussian distributions  $D_0[k] = \mathcal{N}(\mu_0[k],\sigma_0^2[k])$, $D_1[k] = \mathcal{N}(\mu_1[k],\sigma_1^2[k])$ characterizing the value of the parameters $w_j[k]$ at step $k$ matched to true value of $w_j^\mathrm{true}$ of 0 and 1 respectively. Let $p_w$ be the proportion of oracle weights equal to one. Under a plain gradient descent, the two distributions remains Gaussian $D_0[k+1] = \mathcal{N}(\mu_0[k+1],\sigma_0^2[k+1])$, $D_1[k+1] = \mathcal{N}(\mu_1[k+1],\sigma_1^2[k+1])$.
            
        \end{proposition}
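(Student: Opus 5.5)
We will show that, once the expected gradient of Proposition~\ref{prop:gaussian_gradient_expression} is used as the update direction, each weight is mapped by an affine map whose coefficients depend only on its oracle target. Gaussianity then follows from the closure of Gaussians under affine maps. The update under plain gradient descent is
\begin{equation}
w_i[k+1] = w_i[k] - \alpha\,\mathbb{E}_{z\sim Z, w_j\sim D}\left\{\frac{\partial \hat{\ell}_{\oplus}(\mathbf{w})[k]}{\partial w_i[k]}\right\}.
\end{equation}
Write $S_A[k] := \big((1-p_w^{(\setminus i)})A_0[k]+p_w^{(\setminus i)}A_1[k]\big)^{N-1}$ and $S_B[k] := \big((1-p_w^{(\setminus i)})B_0[k]+p_w^{(\setminus i)}B_1[k]\big)^{N-1}$. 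Both are deterministic functions of the moments $(\mu_0[k],\sigma_0^2[k],\mu_1[k],\sigma_1^2[k])$ and of the fixed parameters $p_e,p_w,N$. Crucially, they do not depend on the random value $w_i[k]$ itself. Following Proposition~\ref{prop:gaussian_gradient_expression}, we adopt $p_w^{(\setminus i)}\approx p_w$, so these quantities are also independent of the index $i$.

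Substituting the gradient expression gives
\begin{equation}
w_i[k+1] = \big(1-2\alpha p_e S_A[k]\big)\,w_i[k] + \alpha p_e\big(S_A[k] + (2w_i^{\mathrm{true}}-1)S_B[k]\big),
\end{equation}
which has the form $w_i[k+1] = m[k]\,w_i[k] + c_t[k]$. Here $m[k] = 1-2\alpha p_e S_A[k]$ is common to all weights, and the offset $c_t[k]$ depends only on the target value $t = w_i^{\mathrm{true}}\in\{0,1\}$.

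We then split into the two classes. For weights with target $0$, we have $w_i[k]\sim\mathcal{N}(\mu_0[k],\sigma_0^2[k])$. Applying the fact that $aX+b\sim\mathcal{N}(a\mu+b,a^2\sigma^2)$ for $X\sim\mathcal{N}(\mu,\sigma^2)$ yields
\begin{equation}
w_i[k+1]\sim\mathcal{N}\big(m[k]\mu_0[k]+c_0[k],\; m^2[k]\sigma_0^2[k]\big).
\end{equation}
The same argument applies to target $1$ with offset $c_1[k]$. This defines $D_0[k+1]$ and $D_1[k+1]$ explicitly, including the recursions for their moments. Independence across $j$ is preserved, since each weight is transformed by a deterministic map of itself alone. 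The same identity, applied by induction on $k$, propagates the i.i.d. Gaussian hypothesis to every step.

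The main obstacle is not the affine algebra but justifying that the coefficient of $w_i[k]$ is deterministic. The empirical gradient actually couples $w_i$ to the realized values of all other $w_j$ through the products. We will make explicit that the statement concerns the expected-gradient (mean-field) dynamics, in which these products are replaced by their expectations, as in the footnote to Proposition~\ref{prop:gaussian_gradient_expression}. Under that convention the self-dependence on $w_i$ enters only linearly, through the factor $(2w_i[k]-1)$. We will also note the degenerate case $m[k]=0$, where the result is a point mass, i.e. a Gaussian with zero variance. This case is excluded later by the learning-rate conditions.
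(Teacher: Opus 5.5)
Your proposal is correct and follows essentially the paper's argument. You substitute the expected gradient into the plain gradient-descent update, observe that the resulting map is affine in $w_i[k]$ with coefficients depending only on the current moments and the target class, and conclude by closure of Gaussians under affine maps.

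The one difference is that your approximation $p_w^{(\setminus i)}\approx p_w$ is unnecessary. With $K$ oracle ones, $p_w^{(\setminus i)}$ equals exactly $K/(N-1)$ for target-$0$ weights and $(K-1)/(N-1)$ for target-$1$ weights. The paper therefore obtains two exact affine maps $(m_0[k],c_0[k])$ and $(m_1[k],c_1[k])$, possibly with $m_0\neq m_1$, and this suffices just as well without any approximation.
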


        \begin{proof}\label{proof:gaussian_conservation}
            When considering a standard gradient descent, the weight update is defined as:
            \begin{equation}
                \begin{aligned}
                    w_i[k+1] &= w_i[k] - \alpha\mathbb{E}_{z \sim Z, w_j \sim D}\Bigg\{\frac{\partial \hat{\ell}_{\oplus}(\mathbf{w})[k]}{\partial w_i[k]}\Bigg\}\\
                \end{aligned}
            \end{equation}

            Substituting the gradient expression from Eq.~\ref{eq:double_gaussian_gradient_xor_node}, we can see that the update rule is an affine transform of the form 
            \begin{equation}
                w_i[k+1]=m[k]w_i[k]+c(w_i^\mathrm{true})[k]
            \end{equation}

            with
            \begin{equation}
                \left\{
                \begin{aligned}
                    m_i[k]=&1-2\alpha p_e\bigg((1-p_w^{(\setminus i)})A_0[k] +p_w^{(\setminus i)}A_1[k]\bigg)^{N-1}\\
                    c_i[k]=&\alpha p_e\Bigg[\bigg((1-p_w^{(\setminus i)}) A_0[k] +p_w^{(\setminus i)}A_1[k]\bigg)^{N-1}
                    +(2w_i^\mathrm{true}-1)\times \bigg((1-p_w^{(\setminus i)}) B_0[k]+p_w^{(\setminus i)} B_1[k]\bigg)^{N-1}\Bigg]
                \end{aligned}
                \right.
            \end{equation}
            
            The only variables that vary with index $i$ are the biased proportion $p_w^{(\setminus i)}$ (entirely determined by $w_i^\mathrm{true}$) and the oracle target value $w_i^\mathrm{true}$ itself. Hence, there are clearly only two distinct affine transformations:
            \begin{itemize}
                \item The one applied to all weights $w_i$ whose corresponding oracle target value is $w_i^\mathrm{true}=0$. We denote its coefficients as $m_0[k]$ and $c_0[k]$.
                \item The one applied to all weights $w_i$ whose corresponding oracle target value is $w_i^\mathrm{true}=1$. We denote its coefficients as $m_1[k]$ and $c_1[k]$.
            \end{itemize}
            
            Since a Gaussian distribution modified through an affine transform remains Gaussian (i.e. if $\mathrm{x} \sim\mathcal{N}(\mu,\sigma^2)$ then $a\mathrm{x}+b\sim\mathcal{N}(a\mu+b,a^2\sigma^2)$), if
            \begin{equation}
                \left\{
                    \begin{aligned}
                        D_0[k] = \mathcal{N}\Big(\mu_0[k],\sigma_0^2[k]\Big)\\
                        D_1[k] = \mathcal{N}\Big(\mu_1[k],\sigma_1^2[k]\Big)\\
                    \end{aligned}
                \right.
            \end{equation}
            the updated distribution of the weight after one step of gradient descent are therefore:
            \begin{equation}
                \left\{
                    \begin{aligned}
                        D_0[k+1] = \mathcal{N}\Big(\mu_0[k+1],\sigma_0^2[k+1]\Big) = \mathcal{N}\Big(m_0[k]\mu_0[k]+c_0[k],m_0^2[k]\sigma_0^2[k]\Big) \\
                        D_1[k+1] = \mathcal{N}\Big(\mu_1[k+1],\sigma_1^2[k+1]\Big) = \mathcal{N}\Big(m_1[k]\mu_1[k]+c_1[k],m_1^2[k]\sigma_1^2[k]\Big) \\
                    \end{aligned}
                \right.
            \end{equation}

        \end{proof}

        \begin{proposition}[Conservation of Symmetry]\label{prop:symmetry_conservation}
            Let $D_0[k]$ and $D_1[k]$ be two Gaussian distributions, symmetric to each other with respect to $1/2$:
            \begin{equation}
                \left\{
                    \begin{aligned}
                        &\mu_0[k] = 1 - \mu_1[k]\\
                        &\sigma_0^2[k] = \sigma_1^2[k]\\
                    \end{aligned}
                \right.
            \end{equation}
            Under a plain gradient descent, the two distributions remain symmetric to each other with respect to $1/2$:
            \begin{equation}
                \left\{
                    \begin{aligned}
                        &\mu_0[k+1] = 1 - \mu_1[k+1]\\
                        &\sigma_0^2[k+1] = \sigma_1^2[k+1]\\
                    \end{aligned}
                \right.
            \end{equation}
        \end{proposition}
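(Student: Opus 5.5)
The plan is to work directly with the two affine maps obtained in the proof of Proposition~\ref{prop:gaussian_conservation}:
\begin{equation*}
w_i[k+1]=m_i[k]\,w_i[k]+c_i[k].
\end{equation*}
I will show that, under the symmetry hypothesis at step $k$, the two maps have the same contraction factor $m_0[k]=m_1[k]$ and bias terms satisfying $c_0[k]+c_1[k]=1-m[k]$. The update rules $\mu[k+1]=m[k]\mu[k]+c[k]$ and $\sigma^2[k+1]=m^2[k]\sigma^2[k]$ from that proposition then give the conclusion immediately.

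\textbf{Step 1 (the bases coincide).} Write the quantities of Eq.~\eqref{eq:compact_notations} as functions of $(\mu,\sigma^2)$:
\begin{equation*}
A(\mu,\sigma^2)=4p_e\bigl(\mu(\mu-1)+\sigma^2\bigr)+1 .
\end{equation*}
This is invariant under $\mu\mapsto 1-\mu$. With $\mu_1[k]=1-\mu_0[k]$ and $\sigma_1^2[k]=\sigma_0^2[k]$ this gives $A_0[k]=A_1[k]=:A[k]$. Similarly,
\begin{equation*}
B_1[k]=1-2p_e(1-\mu_1[k])=1-2p_e\mu_0[k]=B_0[k]=:B[k].
\end{equation*}

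\textbf{Step 2 (the $p_w$-dependence disappears).} Because the bases coincide, each convex combination collapses:
\begin{equation*}
(1-p_w^{(\setminus i)})A_0[k]+p_w^{(\setminus i)}A_1[k]=A[k],\qquad (1-p_w^{(\setminus i)})B_0[k]+p_w^{(\setminus i)}B_1[k]=B[k],
\end{equation*}
whatever the value of $p_w^{(\setminus i)}$. This is the one point that could have been an obstacle. In general $p_w^{(\setminus i)}$ differs slightly between indices with $w_i^\mathrm{true}=0$ and those with $w_i^\mathrm{true}=1$, which would break the exact symmetry. Symmetry removes the dependence entirely, so no large-$N$ approximation $p_w^{(\setminus i)}\approx p_w$ is needed.

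\textbf{Step 3 (read off the coefficients).} The coefficients become
\begin{equation*}
m_0[k]=m_1[k]=m[k]=1-2\alpha p_e A^{N-1}[k],
\end{equation*}
\begin{equation*}
c_0[k]=\alpha p_e\bigl(A^{N-1}[k]-B^{N-1}[k]\bigr),\qquad c_1[k]=\alpha p_e\bigl(A^{N-1}[k]+B^{N-1}[k]\bigr).
\end{equation*}

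\textbf{Step 4 (variances).} From the variance update,
\begin{equation*}
\sigma_0^2[k+1]=m^2[k]\sigma_0^2[k]=m^2[k]\sigma_1^2[k]=\sigma_1^2[k+1].
\end{equation*}

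\textbf{Step 5 (means).} Adding the two mean updates,
\begin{equation*}
\mu_0[k+1]+\mu_1[k+1]=m[k]\bigl(\mu_0[k]+\mu_1[k]\bigr)+c_0[k]+c_1[k]=m[k]+2\alpha p_e A^{N-1}[k]=1 .
\end{equation*}
Here the first equality uses $\mu_0[k]+\mu_1[k]=1$ and $c_0[k]+c_1[k]=2\alpha p_eA^{N-1}[k]$, and the last uses $m[k]=1-2\alpha p_eA^{N-1}[k]$. Hence $\mu_0[k+1]=1-\mu_1[k+1]$, which completes the argument. By induction from a symmetric initialization, symmetry then holds at every step.
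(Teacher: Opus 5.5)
Your proposal is correct and matches the paper's proof step for step. You show $A_0[k]=A_1[k]$ and $B_0[k]=B_1[k]$ under symmetry, so $m$ becomes index-independent and the $p_w^{(\setminus i)}$-dependence disappears; you then check the variances directly and the means through $c_0[k]+c_1[k]=1-m[k]$, which is the paper's verification phrased as a sum.
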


        \begin{proof}\label{proof:symmetry_conservation}
            We recall the coefficient of the affine transform defining the weights update (Proposition~\ref{prop:gaussian_conservation}):
            \begin{equation}
                \left\{
                \begin{aligned}
                    m_i[k]=&1-2\alpha p_e\bigg((1-p_w^{(\setminus i)})A_0[k] +p_w^{(\setminus i)}A_1[k]\bigg)^{N-1}\\
                    c_i[k]=&\alpha p_e\Bigg[\bigg((1-p_w^{(\setminus i)}) A_0[k] +p_w^{(\setminus i)}A_1[k]\bigg)^{N-1}
                    +(2w_i^\mathrm{true}-1)\times \bigg((1-p_w^{(\setminus i)}) B_0[k]+p_w^{(\setminus i)} B_1[k]\bigg)^{N-1}\Bigg]
                \end{aligned}
                \right.
            \end{equation}
            
            If the two distributions are symmetric we have:
            \begin{equation}
                \begin{aligned}
                    A_1[k] &= 4p_e(\mu_1^2[k]+\sigma_1^2[k])-4p_e\mu_1[k]+1\\
                    &= 4p_e((1-\mu_0[k])^2+\sigma_0^2[k])-4p_e(1-\mu_0[k])+1\\
                    &= 4p_e(\mu_0^2[k]+\sigma_0^2[k])-4p_e\mu_0[k]+1\\
                    &= A_0[k]\\
                \end{aligned}
                \quad\quad\textrm{and}\quad\quad
                \begin{aligned}
                    B_1[k] &= 1-2p_e(1-\mu_1[k])\\
                    &= 1-2p_e\mu_0[k]\\
                    &=B_0[k]\\
                    &\\
                \end{aligned}
            \end{equation}

            This allows us to simplify the $w_i[k+1]=m_i[k]w_i[k]+c_i[k]$ update rule as:
            \begin{equation}
                \left\{
                \begin{aligned}
                    &m_i[k]=1-2\alpha p_eA_0^{N-1}[k] \\
                    &c_i[k]=\alpha p_e\Bigg[A_0^{N-1}[k]+(2w_i^\mathrm{true}-1)\times B_0^{N-1}[k]\Bigg]\\
                \end{aligned}
                \right. \quad(\text{Since} \quad p_w^{(\setminus i)}+(1-p_w^{(\setminus i)})=1)
            \end{equation}

            We can see that $m_i[k]$ becomes independent of $i$ such that $m_0[k]=m_1[k]$. We keep the simplified notation $m[k]$. In contrast, $c_i[k]$ remains dependent on $i$ through the oracle weight value $w_i^\mathrm{true}$. This simplification further shows that any dependency on the oracle weights proportion $p_w$ disappears under symmetric distributions.

            Given that the symmetry rule is verified in $k$, i.e.: 
            \begin{equation}
                \left\{
                    \begin{aligned}
                        &\mu_0[k] = 1 - \mu_1[k]\\
                        &\sigma_0^2[k] = \sigma_1^2[k]\\
                    \end{aligned}
                \right.
            \end{equation}
            
            We want to verify that it still holds in $k+1$, i.e.: 
            \begin{equation}
                \left\{
                    \begin{aligned}
                        &\mu_0[k+1] = 1 - \mu_1[k+1]\\
                        &\sigma_0^2[k+1] = \sigma_1^2[k+1]\\
                    \end{aligned}
                \right.
            \end{equation}
            
            Under the previous affine transform we need to verify:
            \begin{equation}
                \left\{
                    \begin{aligned}
                        &\mu_0[k+1]=m[k]\mu_0[k]+c_0[k] = 1 - \Big(m[k]\mu_1[k]+c_1[k]\Big)=1-\mu_1[k+1]\\
                        &\sigma_0^2[k+1]=m^2[k]\sigma_0^2[k] = m^2[k]\sigma_1^2[k]=\sigma_1^2[k+1]\\
                    \end{aligned}
                \right.
            \end{equation}
        
            From the above, it is obvious that the symmetry rule is conserved for the variance, i.e. $\sigma_0^2[k+1] = \sigma_1^2[k+1]$.
        
            For the mean, we have:
            \begin{equation}
                \begin{aligned}
                    &m[k]\mu_0[k]+c_0[k] = 1 - m[k](1-\mu_0)[k]-c_1[k]\\
                    &\Longleftrightarrow\quad c_0[k]+c_1[k]=1-m[k]\\
                    &\Longleftrightarrow\quad \alpha p_e\Bigg[A_0^{N-1}[k]-B_0^{N-1}[k]\Bigg]+\alpha p_e\Bigg[A_0^{N-1}[k]+B_0^{N-1}[k]\Bigg]=2\alpha p_eA_0^{N-1}[k]\\
                    &\Longleftrightarrow\quad 2\alpha p_eA_0^{N-1}[k]=2\alpha p_eA_0^{N-1}[k]\\
                \end{aligned}
            \end{equation}

            Hence, the symmetry rule is also conserved for the mean. Therefore, if the two distributions are Gaussian and symmetric to each other with respect to $1/2$ in $k$, they remain symmetric Gaussians in $k+1$.
        \end{proof}

        \begin{remark}[Practical Implications on Intialization and System Evolution]\label{remark:gaussian_init}
            If at $k=0$, the initial weight distributions $D_0[0]$ and $D_1[0]$ are Gaussian and symmetric to each other with respect to $1/2$, then under the Gaussian conservation property from Proposition~\ref{prop:gaussian_conservation} and the symmetry property from Proposition~\ref{prop:symmetry_conservation}, they remain symmetric Gaussians throughout training. The gradient expression remains, at any step $k$, equal to:
            \begin{equation}\label{eq:symmetric_gradient}
                \begin{aligned}
                    \mathbb{E}_{\mathrm{z} \sim Z, \mathrm{w}_j \sim D}\Bigg\{\frac{\partial \hat{\ell}_{\oplus}(\mathbf{w})[k]}{\partial w_i[k]}\Bigg\} =  p_e \Bigg[&(2w_i[k]-1)\times A_0^{N-1}[k]-(2w_i^\mathrm{true}-1)\times B_0^{N-1}[k]\Bigg]\\
                \end{aligned}
            \end{equation}     
    
            where $A_0[k]=4p_e(\mu_0^2[k]+\sigma_0^2[k])-4p_e\mu_0[k]+1$ and $B_0[k]=1-2p_e\mu_0[k]$ depend on the distribution's mean and variance at step $k$. 

            Notably, this expression no longer depends on the proportion of oracle weights equal to one, $p_w$, implying that the convergence properties of this system become invariant to $p_w$.

            Although we cannot explicitly partition weights according to their oracle targets (which are presumed unknown to the model) and assign distinct distributions $D_0[0]$ and $D_1[0]$, we can achieve the same theoretical framework by initializing all weights from a single Gaussian distribution $D$ that satisfies the symmetry condition with respect to itself, namely a Gaussian distribution, centered in $1/2$. This creates an implicit partition where $D_0[0] = D_1[0] = D$, preserving both Gaussianity and symmetry properties. While the two distributions start identical, they do not need to remain equal during training as they evolve according to their respective oracle targets, maintaining only their symmetric relationship and Gaussian structure.
        \end{remark}

        \begin{remark}\label{remark:optimal_sparsity_gaussian_init} 
            Combining the results from Remark~\ref{remark:gaussian_init} and Proposition~\ref{prop:optimal_error_probability}, using sparsity $p_e = 1/N$ appears particularly favorable when initializing weights from a symmetric Gaussian distribution centered at $1/2$.
    
            Under such initialization, the expected gradient magnitude is maximized, providing the strongest learning signal. However, since Gaussian distributions have unbounded support, some weights may lie outside the interval $(0,1)$, where the directional argument of Proposition~\ref{prop:optimal_error_probability} does not strictly apply. Nevertheless, for sufficiently tight distributions (i.e., small $\sigma^2$), the vast majority of weights are typically sampled within $(0,1)$, ensuring that the favorable directional gradient properties hold for most of the weights\footnote{Throughout this paper, we use $\sigma^2[0] = 1/4$ (see Assumption~\ref{assumption:system_assumption}). For this initialization, the interval $[0,1]$ corresponds to the $1\sigma$ range around the mean, containing approximately 68\% of weights. This ensures that the directional gradient properties hold for more than two-thirds of the parameters, which will empirically prove sufficient for establishing convergence to the oracle weights.}
    
            This theoretical insight, that sparse, one-hot-like inputs provide optimal gradient signal when combined with symmetric Gaussian initialization, motivates our choice to study the overall convergence behavior of the system under sparsity $p_e = 1/N$ throughout this paper. This setting represents the theoretically optimal regime where gradient magnitude is maximized at initialization, while preserving the correct directional updates for the majority of weights.
        \end{remark}

        \begin{remark}[Behavior Close to Convergence]\label{remark:close_to_convergence_behavior}
            As training progresses and most parameters approach their true values, we have $\mu_0[k] = 1-\mu_1[k] \approx 0$ and $\sigma_0^2[k] = \sigma_1^2[k] \approx 0$. Under these conditions, the gradient for any remaining parameter $w_i$ simplifies to:

            \begin{equation}
            \mathbb{E}_{\mathrm{z} \sim Z, \mathrm{w}_j \sim D}\left\{\frac{\partial \hat{\ell}_{\oplus}(\mathbf{w})[k]}{\partial w_i[k]}\right\} \approx p_e \Big[(2w_i[k]-1) - (2w_i^{\mathrm{true}}-1)\Big]
            \end{equation}
            
            This gives us the linearized gradient:
            \begin{equation}
            \mathbb{E}_{\mathrm{z} \sim Z, \mathrm{w} \sim D}\left\{\nabla_{\mathbf{w}}\hat{\ell}_{\oplus}(\mathbf{w}) \big| \mathbf{w}^{\mathrm{true}}\right\} = 2p_e(\mathbfit{w} - \mathbfit{w}^{\mathrm{true}})
            \end{equation}
            
            The gradient magnitude is maximized when $p_e = 1$, suggesting that higher sparsity becomes beneficial as training progresses.
            
            This analysis provides an \textbf{intuitive explanation} rather than a rigorous proof for why it may be advantageous to increase sparsity $p_e$ during training:
            \begin{itemize}
                \item \textbf{At initialization}: $p_e = 1/N$ is optimal (Proposition~\ref{prop:optimal_error_probability})
                \item \textbf{Near convergence}: $p_e = 1$ maximizes expected gradient magnitude
                \item \textbf{Empirically}: best constant $p_e$ appears to be around $2/N$, as shown in our experimental results in Section~ \ref{subsubsection:impact_of_data_sparsity}.
            \end{itemize}
            
            However, choosing $p_e$ too high initially can prevent convergence entirely, suggesting the existence of an upper bound $p_e^{\lim}$ as observed in our experimental results (Section~ \ref{subsubsection:impact_of_data_sparsity}).
        \end{remark}


    \subsection{Convergence Analysis}\label{app:convergence_analysis}

        We have established that parameters $w_i$ follow two symmetric Gaussian distributions $D_0$ and $D_1$ that preserve their Gaussian nature (Property~\ref{prop:gaussian_conservation}) and symmetric behavior (Proposition~\ref{prop:symmetry_conservation}). To ensure these propositions remain valid throughout training, we initialize with $\mathcal{N}(1/2,\sigma^2[0])$ as specified in Remark~\ref{remark:gaussian_init}. Since distribution parameters evolve with iteration step $k$, we denote them as $D_0[k]$ and $D_1[k]$. In this section we would like to demonstrates that $\lim_{k\to\infty} D_0[k]$ converges to a degenerate distribution concentrated at zero:
        \begin{equation}
            \left\{
                \begin{aligned}
                    \lim_{k\to\infty} \mu_0[k] = 0\\
                    \lim_{k\to\infty} \sigma_0^2[k] = 0\\
                \end{aligned}
            \right.
        \end{equation}
        
        By symmetry, $D_1[k]$ converges to a degenerate distribution at one:
        \begin{equation}
            \left\{
                \begin{aligned}
                    \lim_{k\to\infty} \mu_1[k] = 1\\
                    \lim_{k\to\infty} \sigma_1^2[k] = 0\\
                \end{aligned}
            \right.
        \end{equation}
        
        From Proposition~\ref{prop:gaussian_conservation}, the system dynamics reduce to affine transformations with state-dependent coefficients $m[k]$ and $c[k]$, yielding the equations $\mu[k+1]=m[k]\mu[k]+c[k]$ and $\sigma^2[k+1]=m^2[k]\sigma^2[k]$, where:
        
        \begin{equation}
            \left\{
            \begin{aligned}
                &m[k]=1-2\alpha p_e\left[(4p_e(\mu^2[k]+\sigma^2[k]-\mu[k])+1)^{N-1}\right]\\
                &c[k]=\alpha p_e\left[(4p_e(\mu^2[k]+\sigma^2[k]-\mu[k])+1)^{N-1}-(1-2p_e\mu[k])^{N-1}\right]\\
            \end{aligned}
            \right.
        \end{equation}

        To ensure tractable analysis, our proof strategy relies on logarithmic series expansions of terms $(1+x)^{N-1}$, decomposing the dynamics into leading-order terms and providing rigorous upper bounds on truncation errors (Appendix~\ref{app:general_approx_and_bounds}, Proposition~\ref{prop:exp_approx_when_pe_1_N}). Considering sparsity $p_e=1/N$ (as per Remark~\ref{remark:optimal_sparsity_gaussian_init}), we bound our system's dynamics under the domain of assumptions $\mu_{\text{min}} \leq 0 \leq \mu[k] \leq 1/2$ and $0 < \sigma^2[k] \leq \sigma^2[0]$.  By fixing reasonable values for $\mu_{\text{min}}$ and $\sigma^2[0]$, we demonstrate in the next steps that the assumed domains are valid. Our convergence proof requires verifying:
        
        \textbf{Variance convergence:} follows directly from $|m[k]|<1$ for all $k$ (ensuring $0 < \sigma^2[k] \leq \sigma^2[0]$ (Proposition~\ref{prop:sigma_convergence}).
        
        \textbf{Mean convergence:} requires establishing four key properties:
        
        \begin{itemize}
            \item \textbf{Property 1 - Monotonic Convergence:} We must establish that the evolution $\mu[k+1]=m[k]\mu[k]+c[k]$ exhibits monotonic behavior toward instantaneous fixed points. This requires verifying the condition (Proposition~\ref{prop:monotonic_constraint}):
            
            $$0 < m[k] < 1$$
            
            Under this condition, $\mu[k]$ will approach its instantaneous fixed point $\overline{\mu}[k]$ at each step without oscillation (as per Theorem~\ref{thm:complete_variable_dynamics}).
        
            \item \textbf{Property 2 - Domain Invariance:} We need to prove that all instantaneous fixed points remain within our assumption domain (Proposition~\ref{prop:envelope_invariance}):
            
            $$\overline{\mu}[k] \subset \left[\mu_{\text{min}},1/2\right] \quad \forall k$$

            Or, under bounded update steps, that this conditions holds on boundary regions of our domain (Proposition~\ref{prop:relaxed_stability}).
            
            These properties ensures forward invariance of our domain assumptions throughout the evolution.
        
            \item \textbf{Property 3 - Bounded Update Steps:} We must establish uniform bounds on gradient descent updates to bound the dynamics of our system. We define the distance $\delta[k] = \mu[k+1] - \mu[k]$, and propose an upper bound to the absolute value of this distance, noted $\delta_{\text{max}}$ (Proposition~\ref{prop:bounded_update_steps}).
        
            \item \textbf{Property 4 - Bounded Convergence Interval:} We need to identify intersection point(s) where $\overline{\mu}[k](\mu[k])$ meets the identity line, satisfying an equilibrium point $\overline{\mu}[k]=\mu[k]=\mu[k+1]$. We bound this intersection point and prove that $\mu[k]$ converges to this interval that becomes arbitrarily tight as $N \to \infty$ under bounded update steps(Proposition~\ref{prop:bounded_convergence_interval}).
        \end{itemize}

        \begin{assumption}[System Assumptions]\label{assumption:system_assumption}
            We assume that $\mu[k] \in [-1/4, 1/2]$, $\sigma^2[k] \in (0, 1/4]$ for all $k \geq 0$, with initial conditions $\mu[0]=1/2$ and $\sigma^2[0]=1/4$.
        \end{assumption}

        These constraints serve as working hypotheses for our analysis, but we will subsequently prove that they remain valid throughout the system's evolution via forward invariance properties (see Propositions~\ref{prop:envelope_invariance} and~\ref{prop:relaxed_stability})

    \subsubsection{Series Expansion and Error Bounds}\label{app:approx_and_error_bounds}

        This section establishes usefull formulations to the complex $(N-1)$ power terms that arise in the gradient descent update equations. Under the sparsity regime $p_e = 1/N$, the update coefficients $m[k]$ and $c[k]$ involve expressions of the form $(1 + x/N)^{N-1}$ where $x/N \to 0$ as $N$ grows large.  We leverage this asymptotic structure by employing series expansion to decompose these terms into a dominant exponential component and bounded error terms, expressed as $e^{x} \cdot e^{\eta^{(x)}}$, see section \ref{app:general_approx_and_bounds}. Here, $e^{x}$ represents the principal term that captures the leading-order behavior, while $e^{\eta^{(x)}}$ denotes the truncation error term, which is rigorously bounded and vanishes as $N \to \infty$ (Proposition~\ref{prop:exp_approx_when_pe_1_N}). This approach enables us to rewrite the update equations in exponential form (Remark~\ref{remark:exp_approx_update_rules}), making the system dynamics analytically tractable while maintaining rigorous error control. We also establish improved bounds under the zero-variance limit (Proposition~\ref{prop:exp_approx_when_pe_1_N_zero_variance}), which becomes relevant as the system approaches convergence and variance naturally decreases.

        \begin{proposition}[Exponential Approximation when $p_e=1/N$]\label{prop:exp_approx_when_pe_1_N}
            Under Assumptions~\ref{assumption:system_assumption}, consider the update equations with $p_e = 1/N$. For all $N > 2$, the exponential terms can be written as:
            
            \begin{equation}
                \left\{
                \begin{aligned}
                    &\Big(4p_e(\mu^2[k]+\sigma^2[k])-4p_e\mu[k]+1\Big)^{N-1} = e^{\xi[k]} \cdot e^{\eta^{(\xi)}[k]} \\
                    &\Big(1-2p_e\mu[k]\Big)^{N-1} = e^{\zeta[k]} \cdot e^{\eta^{(\zeta)}[k]}
                \end{aligned}
                \right.
            \end{equation}
            where $\xi[k] = 4(\mu^2[k]+\sigma^2[k]-\mu[k])$, $\zeta[k] = -2\mu[k]$, and the truncation error terms are bounded by:
            \begin{equation}
                \left\{
                \begin{aligned}
                        &|\eta^{(\xi)}[k]| < \frac{153}{32N-72} :=\eta^{(\xi)}_{\max}(N)\\
                    &|\eta^{(\zeta)}[k]| \leq \frac{3}{2N} := \eta^{(\zeta)}_{\max}(N)
                \end{aligned}
                \right.\quad\forall N>2
            \end{equation}
            
        \end{proposition}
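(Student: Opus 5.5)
The plan is to treat both identities with one template. For $p_e=1/N$, each base has the form $1+u$ with $u=x/N$: for the first term $x=\xi[k]=4(\mu^2[k]+\sigma^2[k]-\mu[k])$, and for the second $x=\zeta[k]=-2\mu[k]$. Once we know $|u|<1$, the base is positive and we may write $(1+u)^{N-1}=\exp\big((N-1)\ln(1+u)\big)$. We then define the truncation error by
\begin{equation*}
\eta := (N-1)\ln(1+u) - x = -u + (N-1)\big(\ln(1+u)-u\big),
\end{equation*}
using $x=Nu$. This makes the factorization $e^{x}\cdot e^{\eta}$ an identity, so the whole content of the statement is the bound on $|\eta|$.

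The bound on $|\eta|$ comes from the elementary logarithmic remainder estimate $|\ln(1+u)-u|\le \frac{u^2}{2(1-|u|)}$ for $|u|<1$. This follows from the alternating series $\ln(1+u)-u=\sum_{n\ge2}(-1)^{n+1}u^n/n$, bounding each term by $u^2|u|^{n-2}/2$ and summing the geometric series. The triangle inequality then gives
\begin{equation*}
|\eta|\le |u| + (N-1)\frac{u^2}{2(1-|u|)}.
\end{equation*}
Since the right-hand side is increasing in $|u|$, it suffices to bound $|u|$ uniformly over the domain of Assumption~\ref{assumption:system_assumption}.

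The main step is locating the ranges of $\xi$ and $\zeta$ on $\mu\in[-1/4,1/2]$ and $\sigma^2\in(0,1/4]$.
\begin{itemize}
\item For $\xi$: $\mu^2-\mu$ ranges over $[-1/4,5/16]$, with the minimum at $\mu=1/2$ and the maximum at $\mu=-1/4$. Hence $\xi\in(-1,9/4]$ and $|u|\le 9/(4N)$, which is $<1$ for $N\ge3$.
\item Substituting into the bound gives $|\eta^{(\xi)}|\le \frac{9}{4N}+\frac{(N-1)\,81/(16N^2)}{2(1-9/(4N))}$. Using $N-1<N$, the second term is at most $\frac{81}{32N-72}$.
\item The first term satisfies $\frac{9}{4N}=\frac{72}{32N}\le\frac{72}{32N-72}$.
\item Summing yields the claimed $\frac{153}{32N-72}$. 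The inequality is strict because of the strict step $N-1<N$ (and $\sigma^2>0$).
\item For $\zeta$: $-2\mu\in[-1,1/2]$, so $|u|\le 1/N$. The bound becomes $|\eta^{(\zeta)}|\le \frac1N+\frac{(N-1)/N^2}{2(N-1)/N}=\frac1N+\frac1{2N}=\frac{3}{2N}$.
\end{itemize}

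The only delicate point I anticipate is the bookkeeping in the $\xi$ case: the denominator $1-|u|$ produces the $32N-72$ form, and the linear $-u$ term has to be absorbed into the same denominator. The condition $N>2$ is exactly what guarantees $9/(4N)<1$, so that both the logarithm and the remainder estimate are valid. Everything else is routine algebra.
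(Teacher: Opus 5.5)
Your proposal is correct and follows the paper's proof essentially step for step. The paper isolates your logarithmic remainder estimate as Proposition~\ref{prop:exp_approx}, with the same error term $\eta=(N-1)\big(\ln(1+u)-u\big)-u$ and the same bound $|\eta|\le \frac{(N-1)u^2}{2(1-|u|)}+|u|$. It then evaluates this bound at the same worst-case points, $|u|=9/(4N)$ for $\xi$ and $|u|=1/N$ for $\zeta$. Your bookkeeping (replacing $N-1$ by $N$, and $\frac{9}{4N}$ by $\frac{72}{32N-72}$, to get $81+72=153$) is the paper's chain $\frac{324}{32(4N-9)}+\frac{288}{32(4N-9)}=\frac{153}{32N-72}$ with a common factor of $4$ removed.
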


        \begin{proof}\label{proof:exp_approx_when_pe_1_N}

            In Proposition~\ref{prop:exp_approx} we show that for any integer $N \geq 2$ and $|x| < 1$, the following exact series expansion holds: 
            \begin{equation}
                (1+x)^{N-1} = e^{Nx} \cdot e^{\eta(x)}
            \end{equation}
            where $e^{Nx}$ is the leading order term, and $e^{\eta(x)}$ is the corresponding truncation error term, bounded as \begin{equation}
                |\eta(x)| \leq \frac{(N-1)x^2}{2(1-|x|)}+|x|
            \end{equation}

            Applying Proposition~\ref{prop:exp_approx} with $x = \frac{\xi[k]}{N}$ and $x = \frac{\zeta[k]}{N}$, we need to verify the conditions ensuring $|x|<1$ across our assumption domain \ref{assumption:system_assumption}. From the definition domains, we have:
            \begin{equation}
                \left\{
                \begin{aligned}
                    &\frac{\xi[k]}{N} = \frac{4}{N}(\mu^2[k]+\sigma^2[k]-\mu[k]) \in \left[-\frac{1}{N},\frac{9}{4N}\right] \subset (-1,1) \\
                    &\frac{\zeta[k]}{N} = -\frac{2}{N}\mu[k] \in \left[-\frac{1}{N},\frac{1}{2N}\right] \subset (-1,1)
                \end{aligned}
                \right.\quad\forall N>2
            \end{equation}
            
            Hence, having $N>2$ ensures the validity of the expansion from Proposition~\ref{prop:exp_approx}, thus allowing us to use it in our upcoming derivations.

            \textbf{Worst-case analysis:} With $x = \frac{\xi[k]}{N}$ and $x = \frac{\zeta[k]}{N}$, we bound the corresponding truncation error terms $\eta^{(\xi)}[k]$ and $\eta^{(\zeta)}[k]$ using the fact that $|\eta(x)| \leq \frac{(N-1)x^2}{2(1-|x|)} + |x|$ (Proposition~\ref{prop:exp_approx}). Since the error bound is monotonically increasing with $|x|$, the worst case occurs at the maximum values of $\left|\frac{\xi[k]}{N}\right|$ and $\left|\frac{\zeta[k]}{N}\right|$ within our assumption domain.
        
            We now use these extrema over our assumption domain to quantify the truncation error bounds:
            
            \textbf{First error term ($\eta^{(\xi)}[k]$):} With $\mu[k] = -1/4, \sigma^2[k] = 1/4$:
            \begin{equation}
                \begin{aligned}
                    |\eta^{(\xi)}[k]| &\leq \frac{(N-1) \times (\frac{9}{4N})^2}{2(1-\frac{9}{4N})} + \frac{9}{4N} \\
                    &= \frac{(N-1) \times 81}{32N^2(1-\frac{9}{4N})} + \frac{9}{4N} \\
                    &= \frac{81(N-1) \cdot 4N}{32N^2(4N-9)} + \frac{9}{4N} \\
                    &= \frac{324(N-1)}{32N(4N-9)} + \frac{9}{4N} \\
                    &< \frac{324N}{32N(4N-9)} + \frac{9}{4N-9} \\
                    &= \frac{612}{32(4N-9)}\\
                    &= \frac{153}{(32N-72)}\\
                    &:= \eta^{(\xi)}_{\max}(N)\\
                \end{aligned}
            \end{equation}
 
            \textbf{Second error term ($\eta^{(\zeta)}[k]$):} With $\mu[k] = 1/2$:
            \begin{equation}
                \begin{aligned}
                    |\eta^{(\zeta)}[k]| &\leq \frac{(N-1) \times (-\frac{1}{N})^2}{2(1-|-\frac{1}{N}|)} + |-\frac{1}{N}| \\
                    &= \frac{N(N-1)}{2N^2(N-1)} + \frac{1}{N}\\
                    &= \frac{1}{2N} + \frac{1}{N}\\
                    &= \frac{3}{2N}\\
                    &:= \eta^{(\zeta)}_{\max}(N)\\
                \end{aligned}
            \end{equation}
        \end{proof}
 
        \begin{remark}[Exponential Approximation of the Update Rules when $p_e=1/N$]\label{remark:exp_approx_update_rules}
            As per Proposition~\ref{prop:exp_approx_when_pe_1_N}, the update equations can be written as:
            \begin{equation}
                \left\{
                \begin{aligned}
                    &m[k] = 1-\frac{2\alpha}{N}e^{\xi[k]} \cdot e^{\eta^{(\xi)}[k]} \\
                    &c[k] = \frac{\alpha}{N}\left[e^{\xi[k]} \cdot e^{\eta^{(\xi)}[k]} - e^{\zeta[k]} \cdot e^{\eta^{(\zeta)}[k]}\right]
                \end{aligned}
                \right.
            \end{equation}
            where $\xi[k] = 4(\mu^2[k]+\sigma^2[k]-\mu[k])$, $\zeta[k] = -2\mu[k]$ and $\eta^{(\xi)}[k],\eta^{(\zeta)}[k]$ are the corresponding truncation error terms.
    
            The gradient expression becomes:
            \begin{equation}
                \mathbb{E}_{\mathrm{z} \sim Z, \mathrm{w}_j \sim D}\Bigg\{\frac{\partial \hat{\ell}_{\oplus}(\mathbf{w})[k]}{\partial w_i[k]}\Bigg\} = \frac{1}{N}\left[2e^{\xi[k]}e^{\eta^{(\xi)}[k]} \cdot w[k] - e^{\xi[k]}e^{\eta^{(\xi)}[k]} + e^{\zeta[k]}e^{\eta^{(\zeta)}[k]}\right] \quad \text{($\forall i$ such that $w_i^\mathrm{true}=0$)}
            \end{equation}
            
            Note that both error terms vanish as $N \to \infty$.
        \end{remark}
        
        
        \begin{proposition}[Zero Variance Limit]\label{prop:exp_approx_when_pe_1_N_zero_variance}
            When $\sigma^2[k] \to 0$, the first truncation error term bound improves to:
            $$\underset{\sigma\rightarrow0^+}{\eta^{(\xi)}_{\max}(N)} = \frac{65}{32N-40}$$
            By continuity, this bound approaches the zero-variance value as $\sigma^2[k] \to 0^+$.
        \end{proposition}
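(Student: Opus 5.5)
The plan is to redo the worst-case analysis from the proof of Proposition~\ref{prop:exp_approx_when_pe_1_N}, restricted to the first error term. We set $\sigma^2[k]=0$ in $\xi[k]$ and keep the same exact expansion from Proposition~\ref{prop:exp_approx}:
\[
(1+x)^{N-1}=e^{Nx}e^{\eta(x)},\qquad |\eta(x)|\le \frac{(N-1)x^2}{2(1-|x|)}+|x| .
\]
This bound is increasing in $|x|$ on $[0,1)$, so it suffices to find the largest value of $|x|=|\xi[k]/N|$ on the domain $\mu[k]\in[-1/4,1/2]$ of Assumption~\ref{assumption:system_assumption}, now with zero variance.

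\textbf{Step 1: the range of $x$.} With $\sigma^2=0$ we have $\xi/N=\tfrac{4}{N}(\mu^2-\mu)$. The parabola $\mu^2-\mu$ decreases on $[-1/4,1/2]$, so it takes the value $5/16$ at $\mu=-1/4$ and $-1/4$ at $\mu=1/2$. Hence
\[
\xi/N\in\left[-\tfrac{1}{N},\tfrac{5}{4N}\right].
\]
The worst case is therefore $|x|=\tfrac{5}{4N}$, attained at $\mu=-1/4$, rather than the value $\tfrac{9}{4N}$ that the variance term $\sigma^2=1/4$ produced before. We also have $\tfrac{5}{4N}<1$ for $N>2$, so the expansion remains valid.

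\textbf{Step 2: evaluate and simplify the bound.} Substituting $|x|=\tfrac{5}{4N}$ gives
\[
\frac{(N-1)\cdot 25/(16N^2)}{2(1-5/(4N))}+\frac{5}{4N}=\frac{100(N-1)}{32N(4N-5)}+\frac{5}{4N}.
\]
We then loosen the two terms exactly as in the earlier proof. In the first term we replace $N-1$ by $N$. In the second we use $\tfrac{5}{4N}<\tfrac{5}{4N-5}$. This yields
\[
\frac{100}{32(4N-5)}+\frac{160}{32(4N-5)}=\frac{260}{128N-160}=\frac{65}{32N-40}.
\]

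\textbf{Step 3: the limit.} The final sentence of the statement follows by continuity. The quantity $\max_\mu|\xi/N|$ depends continuously on $\sigma^2$ through $4(\mu^2+\sigma^2-\mu)/N$, and the error bound is continuous in $|x|$. So the worst-case bound at variance $\sigma^2$ tends to the zero-variance value as $\sigma^2\to0^+$.

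The only real obstacle is Step 1. We must confirm that the extremum of $|\xi/N|$ switches to the left endpoint $\mu=-1/4$ and is not attained at an interior critical point. This holds because the vertex of the parabola, at $\mu=1/2$, lies on the boundary of the domain. The rest is routine algebra.
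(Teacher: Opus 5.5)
Your proposal is correct and follows the paper's proof essentially line for line. You set $\sigma^2=0$ in $\xi[k]$, take the worst case $|x|=\tfrac{5}{4N}$ at $\mu[k]=-1/4$, and loosen the bound of Proposition~\ref{prop:exp_approx} with the same two relaxations to reach $\tfrac{65}{32N-40}$; your check that the negative end $-\tfrac{1}{N}$ is dominated and your brief justification of the continuity clause are small additions the paper leaves implicit.
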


        \begin{proof}\label{proof:exp_approx_when_pe_1_N_zero_variance}
            When $\sigma^2[k] \to 0$, we have $\xi[k] \to 4(\mu^2[k] - \mu[k])$. The largest value is obtained when $\mu[k] = -1/4$, giving $\xi[k] = 4 \times \frac{5}{16} = \frac{5}{4}$. Thus:
            \begin{equation}
                \begin{aligned}
                    |\eta^{(\xi)}[k]| &\leq \frac{(N-1) \times (\frac{5}{4N})^2}{2(1-\frac{5}{4N})} + \frac{5}{4N} \\
                    &= \frac{25(N-1)}{32N^2(1-\frac{5}{4N})} + \frac{5}{4N} \\
                    &= \frac{100N(N-1)}{32N^2(4N-5)} + \frac{5}{4N} \\
                    &= \frac{100(N-1)}{32N(4N-5)} + \frac{5}{4N} \\
                    &< \frac{100N}{32N(4N-5)} + \frac{5}{4N-5} \\
                    &=\frac{100}{32(4N-5)} + \frac{160}{32(4N-5)} \\
                    &=\frac{260}{32(4N-5)}\\
                    &=\frac{65}{32N-40}\\
                    &:=\underset{\sigma\rightarrow0^+}{\eta^{(\xi)}_{\max}(N)}\\
                \end{aligned}
            \end{equation}
        \end{proof}

        \subsubsection{Convergence Analysis - Variance}

            In this section, we establish sufficient conditions on the learning rate $\alpha$ to ensure variance convergence to zero. We derive explicit bounds by analyzing the multiplicative factor $m[k]$ in the variance update rule $\sigma^2[k+1] = m^2[k]\sigma^2[k]$, requiring $|m[k]| < 1$ for convergence (Proposition~\ref{prop:sigma_convergence}). Additionally, we provide a more restrictive condition that guarantees $0 < m[k] < 1$ which will prove useful in demonstrating the convergence of the mean (Proposition~\ref{prop:monotonic_constraint}, linked to the application of theorem \ref{thm:complete_variable_dynamics}). In Remarks~\ref{remark:constraint_relaxation} and \ref{remark:maintained_sigma_convergence_under_monotonic_constraint} we show how these conditions can be relaxed under limiting regimes.
            
            \begin{proposition}[Variance Convergence]\label{prop:sigma_convergence}
                Under Assumptions~\ref{assumption:system_assumption}, there exist $\alpha_0 > 0$ such that for all $0<\alpha < \alpha_0$, and $p_e = 1/N$, the variance $\sigma^2[k]$ converges to 0 with:
                $$\alpha_0=Ne^{-\frac{72N-9}{32N-72}}$$
            \end{proposition}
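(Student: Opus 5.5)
The variance obeys $\sigma^2[k+1]=m^2[k]\sigma^2[k]$, so the plan is to show that $|m[k]|\le \rho<1$ for a constant $\rho$ independent of $k$. Geometric decay $\sigma^2[k]\le\rho^{2k}\sigma^2[0]\to 0$ then follows. By Remark~\ref{remark:exp_approx_update_rules}, with $p_e=1/N$,
\[
m[k]=1-\frac{2\alpha}{N}\,e^{\xi[k]}e^{\eta^{(\xi)}[k]}.
\]
Since $\alpha>0$ and the exponential factor is positive, $m[k]<1$ automatically. The whole task is therefore to enforce $m[k]>-1$, that is
\[
\frac{2\alpha}{N}\,e^{\xi[k]+\eta^{(\xi)}[k]}<2
\quad\Longleftrightarrow\quad
\alpha< N e^{-(\xi[k]+\eta^{(\xi)}[k])}.
\]

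The key step is to bound $\xi[k]+\eta^{(\xi)}[k]$ uniformly over the domain of Assumption~\ref{assumption:system_assumption}. On that domain, $\xi[k]=4(\mu^2+\sigma^2-\mu)$ is largest at $\mu=-1/4$, $\sigma^2=1/4$, which gives $\xi_{\max}=9/4$. Proposition~\ref{prop:exp_approx_when_pe_1_N} gives $\eta^{(\xi)}<153/(32N-72)$. Adding the two,
\[
\frac94+\frac{153}{32N-72}=\frac{9(8N-18)+153}{32N-72}=\frac{72N-9}{32N-72}.
\]
This reproduces exactly the exponent in $\alpha_0$. So for all $\alpha<\alpha_0=Ne^{-\frac{72N-9}{32N-72}}$ we get $m[k]>-1$ at every $k$ for which the state lies in the domain.

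To make the contraction uniform, I would set
\[
\rho=\max\Bigl(1-\tfrac{2\alpha}{N}e^{\xi_{\min}-\eta^{(\xi)}_{\max}},\;\; \tfrac{2\alpha}{N}e^{\xi_{\max}+\eta^{(\xi)}_{\max}}-1\Bigr).
\]
Here $\xi_{\min}=-1$ is attained at $\mu=1/2$ with $\sigma^2\to0$, and both entries are strictly less than $1$. Iterating gives $\sigma^2[k]\le\rho^{2k}/4$, which tends to $0$; this also keeps $\sigma^2[k]\in(0,1/4]$, consistent with the assumption.

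The main obstacle is that this argument is conditional. It presumes that $\mu[k]$ remains in $[-1/4,1/2]$ for all $k$, which is deferred to the forward-invariance results (Propositions~\ref{prop:envelope_invariance} and \ref{prop:relaxed_stability}). Within this proposition I would state this explicitly as the standing hypothesis of Assumption~\ref{assumption:system_assumption}, and carry out only the uniform worst-case bound on $m[k]$ described above.
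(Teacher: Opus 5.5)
Your proposal is correct and follows the paper's proof. Both arguments require $|m[k]|<1$ and bound $\xi[k]+\eta^{(\xi)}[k]$ by its worst case $9/4+153/(32N-72)=\frac{72N-9}{32N-72}$ on the assumption domain, which yields $\alpha_0$.

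Your uniform contraction factor $\rho$, built from $\xi_{\min}=-1$, is a worthwhile addition. The paper only asserts $|m[k]|<1$ at each step and calls the side $m[k]<1$ automatic. That alone would not force $\sigma^2[k]\to 0$, whereas your uniform bound does.
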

            
            \begin{proof}\label{proof:sigma_convergence}
                Since $\sigma^2[k+1] = m^2[k]\sigma^2[k]$, convergence to zero requires $|m[k]| < 1$ for all $k$.
                
                From Remark~\ref{remark:exp_approx_update_rules}, we have:
                $$m[k] = 1 - \frac{2\alpha}{N}e^{\xi[k]} \cdot e^{\eta^{(\xi)}[k]}$$
                
                For convergence, we need $|m[k]| < 1$, which is equivalent to:
                $$0 < \frac{2\alpha}{N}e^{\xi[k]} \cdot e^{\eta^{(\xi)}[k]} < 2$$
                
                The lower bound $\frac{2\alpha}{N}e^{\xi[k]}\cdot e^{\eta^{(\xi)}[k]}>0$ is automatically satisfied since $\alpha, N > 0$ and the exponential terms are positive. For the upper bound, we need to verify:
                \begin{equation}
                    \begin{aligned}
                        &\frac{2\alpha}{N}e^{\xi[k]}\cdot e^{\eta^{(\xi)}[k]}<2\\
                        &\Leftrightarrow \alpha < Ne^{-\xi[k]} \cdot e^{-\eta^{(\xi)}[k]}
                    \end{aligned}
                \end{equation}

                \textbf{Worst-case analysis:} 
                Since the right-hand side $Ne^{-\xi[k]} \cdot e^{-\eta^{(\xi)}[k]}$ is decreasing in both $\xi[k]$ and $|\eta^{(\xi)}[k]|$, the most restrictive condition (smallest upper bound on $\alpha$) occurs when these terms are maximized. The maximum value of $\xi[k]$ within Assumptions~\ref{assumption:system_assumption} occurs at $\mu[k] = -1/4, \sigma^2[k] = 1/4$ such that:
                \begin{equation}
                    \max_{\mu[k] \in [-1/4,1/2], \sigma^2[k]\in(0;1/4]}\xi[k] = \frac{9}{4}
                \end{equation}
                
                Since the approximation error term satisfies $|\eta^{(\xi)}[k]| < \frac{153}{32N-72}$ (as per Proposition~\ref{prop:exp_approx_when_pe_1_N}), a sufficient condition for convergence is:
                \begin{equation}
                    \alpha < Ne^{-\frac{9}{4}} \cdot e^{-\frac{153}{32N-72}} = Ne^{-\frac{72N-9}{32N-72}}
                \end{equation}
    
            \end{proof}

            \begin{remark}[Constraint Relaxation]\label{remark:constraint_relaxation}
                This sufficient condition can be relaxed under limiting regimes:

                \begin{itemize}
                    \item \textbf{Large $N$ regime:} As $N \to \infty$, the approximation error vanishes and the condition approaches $\alpha < Ne^{-9/4}$.
                    \item \textbf{Near convergence:} As $\sigma^2[k] \to 0$ and $\mu[k] \to 0$, we have $\xi[k] \to 0$, so the constraint becomes $\alpha < Ne^{-\frac{153}{32N-72}}$, which approaches $\alpha < N$ as $N \to \infty$.
                \end{itemize}
            
                This shows that the learning rate constraint for the variance to converge becomes increasingly permissive as either the system size grows or the global system approaches convergence.
                
            \end{remark}

            \begin{proposition}[Monotonic Convergence Constraint]\label{prop:monotonic_constraint}
                Under Assumptions~\ref{assumption:system_assumption}, there exists $\alpha_1 > 0$ such that for all $0 < \alpha < \alpha_1$ and $p_e = 1/N$, we have $0 < m[k] < 1$ for all $k$, where:
                $$\alpha_1 = \frac{N}{2}e^{-\frac{72N-9}{32N-72}}=\frac{\alpha_0}{2}<\alpha_0$$
            \end{proposition}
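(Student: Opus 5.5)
The approach mirrors the proof of Proposition~\ref{prop:sigma_convergence}. We start from the exponential form of the multiplicative coefficient given in Remark~\ref{remark:exp_approx_update_rules}:
$$m[k] = 1-\frac{2\alpha}{N}e^{\xi[k]}\cdot e^{\eta^{(\xi)}[k]}.$$
The two inequalities $0<m[k]$ and $m[k]<1$ are then handled separately.

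\textbf{Upper inequality.} The bound $m[k]<1$ is immediate. Indeed $\alpha>0$ and $N>0$, and both exponential factors are strictly positive, so the subtracted term is strictly positive for every $k$.

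\textbf{Lower inequality.} The bound $m[k]>0$ is equivalent to
$$\frac{2\alpha}{N}e^{\xi[k]}e^{\eta^{(\xi)}[k]}<1, \quad\text{i.e.}\quad \alpha<\frac{N}{2}e^{-\xi[k]}e^{-\eta^{(\xi)}[k]}.$$
The right-hand side is decreasing in $\xi[k]$ and in $\eta^{(\xi)}[k]$. A sufficient condition, uniform in $k$, is therefore obtained by a worst-case analysis over the domain of Assumption~\ref{assumption:system_assumption}.

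\textbf{Worst case.} We maximize $\xi[k]=4(\mu^2[k]+\sigma^2[k]-\mu[k])$ over $\mu\in[-1/4,1/2]$ and $\sigma^2\in(0,1/4]$. The function $\mu^2-\mu$ is decreasing on this interval, so its maximum is at $\mu=-1/4$, where it equals $5/16$. The maximum of $\sigma^2$ is $1/4$. Hence $\xi_{\max}=9/4$, exactly as in Proposition~\ref{prop:sigma_convergence}. For the error term we use the bound $\eta^{(\xi)}[k]<\frac{153}{32N-72}$ from Proposition~\ref{prop:exp_approx_when_pe_1_N}. Combining the two yields the sufficient condition
$$\alpha<\frac{N}{2}e^{-\frac94-\frac{153}{32N-72}}.$$

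\textbf{Simplifying the exponent.} We check that
$$\frac94+\frac{153}{32N-72}=\frac{9(8N-18)+153}{32N-72}=\frac{72N-9}{32N-72}.$$
This gives $\alpha_1=\frac{N}{2}e^{-\frac{72N-9}{32N-72}}=\alpha_0/2$, and $\alpha_1<\alpha_0$ is trivial.

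\textbf{Main obstacle.} There is little genuine difficulty; the argument is a reuse of the variance computation with the threshold $2$ replaced by $1$. The only care needed is to use a one-sided bound on $\eta^{(\xi)}$, since only positive values of $\eta^{(\xi)}$ hurt the inequality. We must also note that Proposition~\ref{prop:exp_approx_when_pe_1_N} requires $N>2$, so that $32N-72>0$ and the expansion is valid.
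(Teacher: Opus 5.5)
Your proposal is correct and follows essentially the same argument as the paper: starting from the exponential form of $m[k]$ in Remark~\ref{remark:exp_approx_update_rules}, you note that $m[k]<1$ is automatic, and you enforce $\alpha<\frac{N}{2}e^{-\xi[k]}e^{-\eta^{(\xi)}[k]}$ via the same worst case $\xi_{\max}=9/4$ and $\eta^{(\xi)}_{\max}(N)=\frac{153}{32N-72}$ used for Proposition~\ref{prop:sigma_convergence}. Your explicit check that $\frac{9}{4}+\frac{153}{32N-72}=\frac{72N-9}{32N-72}$ fills in an arithmetic step the paper leaves implicit.
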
 
            
            \begin{proof} \label{proof:monotonic_constraint}
                From Remark~\ref{remark:exp_approx_update_rules}, we have:
                $$m[k] = 1 - \frac{2\alpha}{N}e^{\xi[k]} \cdot e^{\eta^{(\xi)}[k]}$$
                
                For $0 < m[k] < 1$, we need:
                $$0 < \frac{2\alpha}{N}e^{\xi[k]} \cdot e^{\eta^{(\xi)}[k]} < 1$$
                
                The lower bound is automatically satisfied since all terms are positive. For the upper bound:
                $$\frac{2\alpha}{N}e^{\xi[k]} \cdot e^{\eta^{(\xi)}[k]} < 1 \Leftrightarrow \alpha < \frac{N}{2}e^{-\xi[k]} \cdot e^{-\eta^{(\xi)}[k]}$$
                
                Using the same worst-case analysis as in Proposition~\ref{prop:sigma_convergence}, the sufficient condition is:
                $$\alpha < \frac{N}{2}e^{-\frac{72N-9}{32N-72}} = \frac{\alpha_0}{2}$$
    
            \end{proof}

            \begin{remark}\label{remark:maintained_sigma_convergence_under_monotonic_constraint}
                Since $0 < m[k] < 1$ implies $|m[k]| < 1$, the variance convergence from Proposition~\ref{prop:sigma_convergence} still holds under this more restrictive constraint. 
                
                The constraint relaxation properties from Remark~\ref{remark:constraint_relaxation} remain valid with factor two division $\alpha_1 = \frac{\alpha_0}{2}$.
            \end{remark}
                
    \subsubsection{Convergence Analysis - Fixed Point Envelope}

        In this section, we introduce the notion of instantaneous fixed point function $\overline{\mu}[k]$, which represents the value toward which the mean is evolving at step $k$ under the affine transform $\mu[k+1]=m[k]\mu[k]+c[k]$ if all parameters remained unchanged. Proposition~\ref{prop:envelope_characterization} establishes envelope bounds for $\overline{\mu}[k]$ using our exponential framework and the corresponding error bounds from Proposition~\ref{prop:exp_approx_when_pe_1_N}. To ensure monotonic evolution of the mean without overshoot, it requires $0 < m[k] < 1$, guaranteed by $\alpha<\alpha_1$ in Proposition~\ref{prop:monotonic_constraint} (which also ensures variance convergence in Proposition~\ref{prop:sigma_convergence}). With this assumption,  Proposition~\ref{prop:envelope_invariance} proves that these fixed point bounds are forward invariant and contained within the assumed parameter domain $[-1/4, 1/2]$ for $N \geq 43$, thereby ensuring forward invariance of both mean and variance within the complete assumption domain from Assumption~\ref{assumption:system_assumption}. This assumption conservation guarantees the persistent validity of all our results (assumptions, bounds, convergence properties), as formalized in Corollary~\ref{cor:persistent_validity}.
        
        \begin{proposition}[Fixed Point Envelope Characterization]\label{prop:envelope_characterization}
            Under Assumptions~\ref{assumption:system_assumption} and $p_e = 1/N$, the instantaneous fixed point function (see Definition~\ref{def:one_step_affine_recurrence}) is:
            
            \begin{equation}
                \overline{\mu}[k] = \frac{1}{2}\left[1 - e^{\zeta[k]-\xi[k]} \cdot e^{\eta^{(\zeta)}[k]-\eta^{(\xi)}[k]}\right]            
            \end{equation}
            where $\zeta[k] - \xi[k] = -4\mu^2[k] - 4\sigma^2[k] + 2\mu[k]$, admits the envelope bounds:
            \begin{equation}
                \overline{\mu}_{\min}[k] < \overline{\mu}[k] < \overline{\mu}_{\max}[k]                
            \end{equation}
            where:
            \begin{equation}
            \left\{
            \begin{aligned}
            \overline{\mu}_{\min}[k] &= \frac{1}{2}\left[1 - e^{\zeta[k]-\xi[k]} \cdot e^{+\varepsilon(N)}\right]\\
            \overline{\mu}_{\max}[k] &= \frac{1}{2}\left[1 - e^{\zeta[k]-\xi[k]} \cdot e^{-\varepsilon(N)}\right]
            \end{aligned}
            \right.
            \end{equation}
            with $\varepsilon(N) = \frac{402N-216}{2N(32N-72)}$.  
            
            Note that this error term vanishes as $N\to0$.
            
            The global envelope extrema are achieved at:
            \begin{equation}
                \left\{
                \begin{aligned}
                    \overline{\mu}_{\max} &= \overline{\mu}_{\max}[k]\Big|_{\substack{\mu[k] = -1/4 \\ \sigma^2[k] = 1/4}} \\
                    \overline{\mu}_{\min} &= \overline{\mu}_{\min}[k]\Big|_{\substack{\mu[k] = +1/4 \\ \sigma^2[k] \to 0}}
                \end{aligned}
                \right.
            \end{equation}
        \end{proposition}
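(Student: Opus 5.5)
The plan is to compute the instantaneous fixed point directly from the exponential form of the update coefficients, and then bound the two truncation errors using Proposition~\ref{prop:exp_approx_when_pe_1_N}.

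\textbf{Closed form.} By Definition~\ref{def:one_step_affine_recurrence}, the fixed point of $\mu\mapsto m[k]\mu+c[k]$ is $\overline{\mu}[k]=c[k]/(1-m[k])$. This is well defined because Remark~\ref{remark:exp_approx_update_rules} gives
\[
1-m[k]=\tfrac{2\alpha}{N}e^{\xi[k]}e^{\eta^{(\xi)}[k]}>0 .
\]
Substituting $c[k]=\frac{\alpha}{N}\big[e^{\xi[k]}e^{\eta^{(\xi)}[k]}-e^{\zeta[k]}e^{\eta^{(\zeta)}[k]}\big]$, the factor $\alpha/N$ cancels. Dividing numerator and denominator by $e^{\xi[k]}e^{\eta^{(\xi)}[k]}$ gives exactly
\[
\overline{\mu}[k]=\tfrac12\big[1-e^{\zeta[k]-\xi[k]}e^{\eta^{(\zeta)}[k]-\eta^{(\xi)}[k]}\big].
\]
Expanding $\zeta[k]=-2\mu[k]$ and $\xi[k]=4(\mu^2[k]+\sigma^2[k]-\mu[k])$ yields $\zeta[k]-\xi[k]=-4\mu^2[k]-4\sigma^2[k]+2\mu[k]$. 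This step is pure algebra and is valid on the whole of Assumption~\ref{assumption:system_assumption}, since $N>2$ there.

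\textbf{Envelope.} For fixed $\zeta[k]-\xi[k]$, the map $t\mapsto\frac12[1-e^{\zeta[k]-\xi[k]}e^{t}]$ is strictly decreasing. So it suffices to bound $|\eta^{(\zeta)}[k]-\eta^{(\xi)}[k]|$. By the triangle inequality and Proposition~\ref{prop:exp_approx_when_pe_1_N},
\[
|\eta^{(\zeta)}[k]-\eta^{(\xi)}[k]|<\tfrac{3}{2N}+\tfrac{153}{32N-72}.
\]
The inequality is strict because the $\eta^{(\xi)}$ bound is strict. Putting the right-hand side over the common denominator $2N(32N-72)$ gives numerator $3(32N-72)+306N=402N-216$, which is exactly $\varepsilon(N)$. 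Hence $e^{-\varepsilon(N)}<e^{\eta^{(\zeta)}[k]-\eta^{(\xi)}[k]}<e^{\varepsilon(N)}$, and monotonicity yields $\overline{\mu}_{\min}[k]<\overline{\mu}[k]<\overline{\mu}_{\max}[k]$ with the stated expressions. Since $\varepsilon(N)=\mathcal{O}(1/N)$, the error term vanishes as $N\to\infty$; the statement's ``$N\to0$'' is evidently a typo for this.

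\textbf{Global extrema.} Both envelope functions are decreasing in the exponent $g(\mu,\sigma^2):=-4\mu^2-4\sigma^2+2\mu$.
\begin{itemize}
  \item $\overline{\mu}_{\max}$ is maximised by minimising $g$ over $\mu\in[-1/4,1/2]$, $\sigma^2\in(0,1/4]$. The function $g$ is decreasing in $\sigma^2$, so take $\sigma^2=1/4$. It is a concave parabola in $\mu$ with vertex at $\mu=1/4$, so its minimum over the interval sits at the endpoint farther from the vertex, namely $\mu=-1/4$, giving $g=-7/4$. (The other endpoint, $\mu=1/2$, gives the larger value $-1$.)
  \item $\overline{\mu}_{\min}$ is minimised by maximising $g$. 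This occurs at the vertex $\mu=1/4$ with $\sigma^2\to0^+$, giving $g\to1/4$.
\end{itemize}
Because $\sigma^2>0$ is required, this second extremum is an infimum approached as $\sigma^2\to0^+$ rather than an attained value. That is consistent with the limit notation in the statement.

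The only step needing care is this extremum bookkeeping: the concave-parabola endpoint comparison and the open variance boundary. The rest is direct substitution.
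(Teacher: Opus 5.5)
Your proposal is correct and follows the paper's proof essentially step for step. Like the paper, you form $c[k]/(1-m[k])$ from the exponential update rules, bound the combined truncation error by $\varepsilon(N)=\eta^{(\zeta)}_{\max}(N)+\eta^{(\xi)}_{\max}(N)$, and locate the global extrema by extremizing the exponent $-4\mu^2-4\sigma^2+2\mu$. Your explicit endpoint comparison ($\mu=-1/4$ versus $\mu=1/2$) and your remark that the $\sigma^2\to0^+$ extremum is an infimum rather than an attained value are slightly more careful than the paper's own bookkeeping.
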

        
        \begin{proof}\label{proof:envelope_characterization}
            Since both $m[k]$ and $c[k]$ are functions of the current state $(\mu[k], \sigma^2[k])$, the instantaneous fixed point $\overline{\mu}[k] = \frac{c[k]}{1-m[k]}$ varies across the parameter domain. Based on the definitions from Theorem~\ref{thm:complete_variable_dynamics}, we characterize the \textbf{global fixed point envelope extrema} $[\overline{\mu}_{\min}, \overline{\mu}_{\max}]$ that bounds all possible instantaneous fixed point values across the assumption domain. Using the expression of $m[k]$ and $c[k]$ from Remark~\ref{remark:exp_approx_update_rules}, the instantaneous fixed point is defined as:
            \begin{equation}
                \overline{\mu}[k] = \frac{c[k]}{1-m[k]} = \frac{e^{\xi[k]} \cdot e^{\eta^{(\xi)}[k]} - e^{\zeta[k]} \cdot e^{\eta^{(\zeta)}[k]}}{2e^{\xi[k]} \cdot e^{\eta^{(\xi)}[k]}}
            \end{equation}
            
            which simplifies to:
            \begin{equation}
                \overline{\mu}[k] = \frac{1}{2}\left[1 - e^{\zeta[k]-\xi[k]} \cdot e^{\eta^{(\zeta)}[k]-\eta^{(\xi)}[k]}\right]
            \end{equation}
            
            Since the truncation error term from the fixed point function has the form $(\eta^{(\zeta)}[k]-\eta^{(\xi)}[k])$, we define the envelope functions using the worst case scenario based on our absolute error bounds from Proposition~\ref{prop:exp_approx_when_pe_1_N}:
            \begin{equation}
                \left\{
                \begin{aligned}
                    &|\eta^{(\xi)}[k]| < \frac{153}{32N-72} :=\eta^{(\xi)}_{\max}(N)\\
                    &|\eta^{(\zeta)}[k]| \leq \frac{3}{2N} := \eta^{(\zeta)}_{\max}(N)
                \end{aligned}
                \right.\quad\forall N>2
            \end{equation}
            
            Hence, an upper bound of the largest possible combined error is:
            \begin{equation}
                \eta^{(\zeta)}_{\max}(N)+\eta^{(\xi)}_{\max}(N):=\varepsilon(N)
            \end{equation} 
            while the smallest one is:
            \begin{equation}
                -\eta^{(\zeta)}_{\max}(N)-\eta^{(\xi)}_{\max}(N)=-\varepsilon(N)
            \end{equation}
            Developping the combined bound leads to:
            \begin{equation}
                \varepsilon(N) = \eta^{(\zeta)}_{\max}(N) + \eta^{(\xi)}_{\max}(N) = \frac{3}{2N} + \frac{153}{32N-72} = \frac{402N-216}{2N(32N-72)}
            \end{equation}
            
            Hence, for any valid state satisfying Assumptions~\ref{assumption:system_assumption}, the fixed point satisfies:
            \begin{equation}
                \overline{\mu}_{\min}[k] < \overline{\mu}[k] < \overline{\mu}_{\max}[k]
            \end{equation}
            
            where:
            \begin{equation}
                \left\{
                \begin{aligned}
                    \overline{\mu}_{\min}[k] &= \frac{1}{2}\left[1 - e^{\zeta[k]-\xi[k]} \cdot e^{+\varepsilon(N)}\right]\\
                    \overline{\mu}_{\max}[k] &= \frac{1}{2}\left[1 - e^{\zeta[k]-\xi[k]} \cdot e^{-\varepsilon(N)}\right]
                \end{aligned}
                \right.
            \end{equation}
            where $\zeta[k] - \xi[k] = -2\mu[k] - 4(\mu^2[k] + \sigma^2[k] - \mu[k]) = -4\mu^2[k] - 4\sigma^2[k] + 2\mu[k]$         
            
            These bounds establish the envelope of possible fixed points across the parameter space from Assumption~\ref{assumption:system_assumption} accounting for the truncated series expansion error. It is important to distinguish between:
            \begin{itemize}
                \item $\overline{\mu}_{\max}[k]$: upper envelope function that bounds $\overline{\mu}[k]$ from above (function of $\mu[k], \sigma^2[k], N$)
                \item $\overline{\mu}_{\max}$: global maximum of $\overline{\mu}_{\max}[k]$ over the parameter domain
                \item $\overline{\mu}_{\min}[k]$: lower envelope function that bounds $\overline{\mu}[k]$ from below (function of $\mu[k], \sigma^2[k], N$)
                \item $\overline{\mu}_{\min}$: global minimum of $\overline{\mu}_{\min}[k]$ over the parameter domain
            \end{itemize}

            \textbf{Upper bound:} Finding  $\overline{\mu}_{\max}$ requires to maximize $\overline{\mu}_{\max}[k] = \frac{1}{2}\left[1 - e^{\zeta[k]-\xi[k]} \cdot e^{-\varepsilon(N)}\right]$ over the domain of interest, which is equivalent to minimize the always positive subtracted term:
            \begin{itemize}
                \item $\sigma^2[k] = 1/4$ (minimize $e^{-4\sigma^2[k]}$)
                \item $\mu[k] = -1/4$ (minimize $e^{-4\mu^2[k] + 2\mu[k]}$)
                \item Small $N$ (minimize $e^{-\varepsilon(N)}$)
            \end{itemize}
            
            Therefore: $\overline{\mu}_{\max} = \overline{\mu}_{\max}[k]_{\Bigg|\substack{\mu[k] = -1/4 \\ \sigma^2[k] = 1/4 \\ N \text{ small}}}$
            
            \textbf{Lower bound:} Finding  $\overline{\mu}_{\min}$ requires to minimize $\overline{\mu}_{\min}[k] = \frac{1}{2}\left[1 - e^{\zeta[k]-\xi[k]} \cdot e^{+\varepsilon(N)}\right]$ over the domain of interest, which is equivalent to maximize the always positive subtracted term:
            \begin{itemize}
                \item $\sigma^2[k] \to 0$ (maximize $e^{-4\sigma^2[k]} \to 1$)
                \item $\mu[k] = +1/4$ (maximize $e^{-4\mu^2[k] + 2\mu[k]}$)
                \item Small $N$ (maximize $e^{+\varepsilon(N)}$)
            \end{itemize}
            
            Therefore: $\overline{\mu}_{\min} = \overline{\mu}_{\min}[k]_{\Bigg|\substack{\mu[k] = +1/4 \\ \sigma^2[k] \to 0 \\ N \text{ small}}}$

        \end{proof}

        \begin{proposition}[Envelope Invariance and Parameter Domain Validity]\label{prop:envelope_invariance}
            For $N \geq 43$ and $\alpha < \frac{N}{2}e^{-\frac{72N-9}{32N-72}}$, the envelope $[\overline{\mu}_{\min}, \overline{\mu}_{\max}]$ satisfies:
            $$[\overline{\mu}_{\min}, \overline{\mu}_{\max}] \subset \left[-\frac{1}{4}, \frac{1}{2}\right]$$
            and is forward invariant. Consequently, any trajectory starting with $\mu[0] \in [-1/4, 1/2]$ and $\sigma^2[0] \in (0, 1/4]$ remains within the valid parameter domain for all iterations $k \geq 0$.
        \end{proposition}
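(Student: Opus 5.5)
We argue in two parts: first a numerical containment of the global envelope extrema, then an induction on $k$ showing the iterates never leave the domain of Assumption~\ref{assumption:system_assumption}.

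\textbf{Containment.} By Proposition~\ref{prop:envelope_characterization}, the global extrema are attained at explicit corners. For the upper extremum we take $\mu=-1/4$ and $\sigma^2=1/4$, which give $\zeta-\xi=-4/16-1-1/2=-7/4$. Hence
\[
\overline{\mu}_{\max}=\tfrac12\bigl[1-e^{-7/4}e^{-\varepsilon(N)}\bigr]<\tfrac12,
\]
and this holds for every $N$ because the subtracted term is positive. For the lower extremum we take $\mu=1/4$ and $\sigma^2\to0$, which give $\zeta-\xi=-1/4+1/2=1/4$. Hence
\[
\overline{\mu}_{\min}=\tfrac12\bigl[1-e^{1/4}e^{\varepsilon(N)}\bigr].
\]
The condition $\overline{\mu}_{\min}\ge-1/4$ is equivalent to $e^{1/4+\varepsilon(N)}\le 3/2$, that is, $\varepsilon(N)\le\ln(3/2)-1/4\approx0.1555$. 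Since $\varepsilon(N)=\frac{402N-216}{2N(32N-72)}$ is decreasing in $N$ for $N>2$, it suffices to find the smallest integer satisfying this inequality. Evaluating gives $\varepsilon(43)\approx0.1541$ and $\varepsilon(42)\approx0.158$, so $N\ge43$ is sufficient. Both inequalities are then strict after accounting for the strict inequalities in Proposition~\ref{prop:envelope_characterization}. One subtlety should be stated explicitly: for every admissible state, $\overline{\mu}[k]$ lies in $[\overline{\mu}_{\min},\overline{\mu}_{\max}]$. This follows because $\overline{\mu}_{\min}[k]\ge\overline{\mu}_{\min}$ and $\overline{\mu}_{\max}[k]\le\overline{\mu}_{\max}$ by the extremization argument of that proposition.

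\textbf{Forward invariance.} We proceed by induction on $k$. Suppose $(\mu[k],\sigma^2[k])$ satisfies Assumption~\ref{assumption:system_assumption}. Then Propositions~\ref{prop:exp_approx_when_pe_1_N} and~\ref{prop:envelope_characterization} apply at step $k$. The stated bound on $\alpha$ is exactly $\alpha<\alpha_1$, so Proposition~\ref{prop:monotonic_constraint} gives $0<m[k]<1$. Rewriting the affine update as
\[
\mu[k+1]=m[k]\mu[k]+(1-m[k])\overline{\mu}[k]
\]
shows that $\mu[k+1]$ is a convex combination of two points of $[-1/4,1/2]$, namely $\mu[k]$ by the induction hypothesis and $\overline{\mu}[k]$ by the containment step. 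Hence $\mu[k+1]\in[-1/4,1/2]$. For the variance, $\sigma^2[k+1]=m^2[k]\sigma^2[k]$ with $0<m^2[k]<1$, so $\sigma^2[k+1]\in(0,\sigma^2[k])\subset(0,1/4]$. The base case holds by the initial conditions $\mu[0]=1/2$ and $\sigma^2[0]=1/4$. Thus the envelope is forward invariant and trajectories stay in the valid domain for all $k\ge0$.

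\textbf{Main obstacle.} The delicate step is the lower bound $\overline{\mu}_{\min}\ge-1/4$, which is the sole source of the threshold $N\ge43$. It requires careful numerical evaluation of $\varepsilon(N)$ near the threshold, together with a proof that $\varepsilon$ is monotone decreasing. For the latter we would differentiate the rational function, or more simply note that both summands $\frac{3}{2N}$ and $\frac{153}{32N-72}$ decrease in $N$. A further point to check is that the state which minimizes the envelope, $\mu=1/4$ with $\sigma^2\to0$, lies in the interior of the domain (the closure in $\sigma^2$). This ensures the bound remains valid uniformly, even though $\sigma^2=0$ itself is excluded from the domain.
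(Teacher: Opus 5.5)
Your proof is correct and follows the paper's route. You use the same corner evaluations, which give $\overline{\mu}_{\max}<1/2$ for every $N$ and $\overline{\mu}_{\min}\ge -1/4 \Leftrightarrow \varepsilon(N)\le \ln(3/2)-1/4$. Forward invariance then comes from the convex-combination form $\mu[k+1]=m[k]\mu[k]+(1-m[k])\overline{\mu}[k]$ with $0<m[k]<1$. The only difference is that you run this as an explicit induction on $(\mu[k],\sigma^2[k])$, where the paper cites the envelope-contraction property of Theorem~\ref{thm:complete_variable_dynamics}. Your version has the small merit of invoking the error bounds only at states already known to be admissible.

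One numerical slip: in fact $\varepsilon(43)\approx 0.1522$ and $\varepsilon(42)\approx 0.1560$, not $0.1541$ and $0.158$. This does not change the conclusion, since the comparison with $\ln(3/2)-1/4\approx 0.1555$ and the threshold $N\ge 43$ still hold.
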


        \begin{proof}\label{proof:envelope_invariance}

            For the proof, we need to apply the Theorem~\ref{thm:complete_variable_dynamics}, dependent on the   Proposition~\ref{prop:monotonic_constraint}: when $\alpha < \frac{N}{2}e^{-\frac{72N-9}{32N-72}}$, we have $0 < m[k] < 1$ for all $k$, ensuring the monotonic contraction condition required by Theorem~\ref{thm:complete_variable_dynamics}.
                        
            From Theorem~\ref{thm:complete_variable_dynamics}, the envelope $[\overline{\mu}_{\min}, \overline{\mu}_{\max}]$ is forward invariant: if $\mu[k]$ lies within the envelope, then $\mu[k+1]$ will also lie within it.
            
            Since the range of instantaneous fixed points across the parameter domain from Assumptions~\ref{assumption:system_assumption} satisfies:
            \begin{equation}
                \overline{\mu}_{\min} \leq \overline{\mu}_{\min}[k] < \overline{\mu}[k] < \overline{\mu}_{\max}[k] \leq \overline{\mu}_{\max},
            \end{equation}
            we require extrema of this envelope to be contained within the valid parameter domain:
            \begin{equation}
            [\overline{\mu}_{\min}, \overline{\mu}_{\max}] \subseteq \left[-\frac{1}{4}, \frac{1}{2}\right]
            \end{equation}
            
            This is equivalent to the following conditions:
            \begin{align}
            \text{Condition (A):} \quad &\overline{\mu}_{\min} \geq -\frac{1}{4} \\
            \text{Condition (B):} \quad &\overline{\mu}_{\max} \leq \frac{1}{2}
            \end{align}
            
            Using the envelope characterization from Proposition~\ref{prop:envelope_characterization}, we seek threshold values $N_A$ and $N_B$ such that the envelope remains within the valid parameter domain. Since the truncation error terms decrease as $N$ increases, any $N$ larger than these thresholds will also satisfy the conditions:
            \begin{equation}
                \left\{
                \begin{aligned}
                    &N \geq N_A \quad \Rightarrow \quad \overline{\mu}_{\min} = \overline{\mu}_{\min}[k]\Big|_{\substack{\mu[k] = +1/4 \\ \sigma^2[k] \to 0}} \geq -\frac{1}{4} \\
                    &N \geq N_B \quad \Rightarrow \quad \overline{\mu}_{\max} = \overline{\mu}_{\max}[k]\Big|_{\substack{\mu[k] = -1/4 \\ \sigma^2[k] = 1/4}} \leq \frac{1}{2}
                \end{aligned}
                \right.
            \end{equation}
            
            Setting $N_{\max} = \max(N_A, N_B)$ ensures both conditions are satisfied $\forall N \geq N_{\max}$.
            
            \textbf{Condition (B):} It is easy to verify condition (B) ($\overline{\mu}_{\max} \leq 1/2$), since $\overline{\mu}_{\max}[k]$ has the form:
    
            \begin{equation}
                \overline{\mu}_{\max}[k] = \frac{1}{2}\left[1-e^{-4\mu^2[k]+2\mu[k]} \times e^{-4\sigma^2[k]} \times e^{-\varepsilon(N)}\right]
            \end{equation}
            
            and all exponential terms are positive by definition, we have $\overline{\mu}_{\max}[k] < \frac{1}{2}$ for all $(\mu[k], \sigma^2[k], N)$. Therefore, $\overline{\mu}_{\max} = \sup \overline{\mu}_{\max}[k] < \frac{1}{2}$.
    
            \textbf{Condition (A):} To verify condition (A), we must find the condition on $N$ such that $\overline{\mu}_{\min} \geq -1/4$. From our extrema analysis, $\overline{\mu}_{\min}$ is achieved when $\overline{\mu}_{\min}[k]$ is minimized, which occurs at $\mu[k] = 1/4, \sigma^2[k] \to 0$:
            
            \begin{equation}
                \overline{\mu}_{\min} = \frac{1}{2}\left[1-e^{-4\mu^2[k]+2\mu[k]} \times e^{-4\sigma^2[k]} \times e^{+\varepsilon(N)}\right]\Bigg|_{\substack{\mu[k] = 1/4 \\ \sigma^2[k] \to 0}}
            \end{equation}
            
            The condition becomes:
            \begin{equation}
                \frac{1}{2}\left[1-e^{-4 \cdot (1/4)^2 + 2 \cdot (1/4)} \times e^{0} \times e^{+\varepsilon(N)}\right] \geq -\frac{1}{4}
            \end{equation}
            
            Simplifying $-4 \cdot (1/4)^2 + 2 \cdot (1/4) = -1/4 + 1/2 = 1/4$:
            \begin{equation}
                \frac{1}{2}\left[1-e^{1/4} \times e^{+\varepsilon(N)}\right] \geq -\frac{1}{4}
            \end{equation}

            \begin{equation}
                \begin{aligned}
                    &\Leftrightarrow 1-e^{1/4} \times e^{+\varepsilon(N)} \geq -\frac{1}{2} \\
                    &\Leftrightarrow e^{1/4} \times e^{+\varepsilon(N)} \leq \frac{3}{2} \\
                    &\Leftrightarrow e^{+\varepsilon(N)} \leq \frac{3}{2}e^{-1/4} \\
                    &\Leftrightarrow \varepsilon(N) \leq \ln\left(\frac{3}{2}\right) - \frac{1}{4} \\
                    &\Leftrightarrow \frac{402N-216}{2N(32N-72)} \leq C
                \end{aligned}
            \end{equation}
            
            where $C = \ln(3/2) - 1/4$.
            
            This is equivalent to studying the second-order polynomial:
            \begin{equation}
                -64C \times N^2 + (402+144C) \times N - 216 \leq 0
            \end{equation}
            
            Which admits a positive discriminant:
            \begin{equation}
                \Delta = (402+144C)^2 - 4 \times 64 \times 216C \approx 171508 > 0
            \end{equation}
            
            Hence, two real roots:
            \begin{equation}
                \left\{
                \begin{aligned}
                    &N_0 = \frac{(402+144C) + \sqrt{\Delta}}{128C} \approx 42.14\\
                    &N_1 = \frac{(402+144C) - \sqrt{\Delta}}{128C} \approx 0.5152\\
                \end{aligned}
                \right.
            \end{equation}
    
            Since the coefficient of $N^2$ is negative, the parabola opens downward. Therefore:
            \begin{itemize}
                \item $P(N) > 0$ for $N \in (N_1, N_0)$
                \item $P(N) < 0$ for $N < N_1$ or $N > N_0$
            \end{itemize}
            
            For the inequality $P(N) \leq 0$ to hold, we need (integer) $N \geq N_0$. Therefore, $N \geq 43$ ensures $\overline{\mu}_{\min} > -1/4$.
            
            Since, Condition (B) is valid for any $N$, we have:
            \begin{equation}
                -1/4 < \overline{\mu}_{\min} \leq \overline{\mu}[k] \leq \overline{\mu}_{\max} <1/2 \quad \forall N \geq 43, \quad \mu[k] \in[-1/4,1/2], \sigma^2[k] \in (0,1/4]
            \end{equation}

            In conclusion, since Theorem~\ref{thm:complete_variable_dynamics} guarantees that the envelope $[\overline{\mu}_{\min}, \overline{\mu}_{\max}]$ is forward invariant when $0 < m[k] < 1$, any trajectory starting within the valid parameter domain $\mu[0] \in [-1/4, 1/2]$ remains within it throughout the algorithm's evolution when $N \geq 43$. 

            Furthermore, the necessary monotonic contraction condition $0 < m[k] < 1$ from Proposition~\ref{prop:monotonic_constraint}, which holds when $\alpha < \frac{N}{2}e^{-\frac{72N-9}{32N-72}}$, simultaneously ensures convergence of the variance $\sigma^2[k]$. This dual guarantee, forward invariance of the mean and convergence of the variance, ensures that both $\mu[k]$ and $\sigma^2[k]$ remain within the assumption domain from Assumption~\ref{assumption:system_assumption} throughout the algorithm's evolution. 

        \end{proof}
       
        \begin{corollary}[Persistent Validity of Analysis]\label{cor:persistent_validity}
            Under the conditions of Proposition~\ref{prop:envelope_invariance}, all error bounds, convergence guarantees, and system properties established under Assumptions~\ref{assumption:system_assumption} remain valid throughout the algorithm's evolution.
        \end{corollary}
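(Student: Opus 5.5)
The plan is an induction on the step index $k$ showing that the state $(\mu[k],\sigma^2[k])$ never leaves the domain of Assumption~\ref{assumption:system_assumption}. Every earlier result (Propositions~\ref{prop:exp_approx_when_pe_1_N}, \ref{prop:sigma_convergence}, \ref{prop:monotonic_constraint}, \ref{prop:envelope_characterization}) was stated conditionally on that domain, so domain invariance is the only thing to verify. The base case is immediate from the chosen initialization $\mu[0]=1/2$, $\sigma^2[0]=1/4$, which lies in the domain by construction.

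For the inductive step, suppose $(\mu[j],\sigma^2[j])$ lies in the domain for all $j\le k$. Then the truncation bounds of Proposition~\ref{prop:exp_approx_when_pe_1_N} hold at step $k$. Since $\alpha<\alpha_1$, Proposition~\ref{prop:monotonic_constraint} gives $0<m[k]<1$. This yields $\sigma^2[k+1]=m^2[k]\sigma^2[k]\in(0,\sigma^2[k])\subset(0,1/4]$, settling the variance half.

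For the mean, write $\mu[k+1]=m[k]\mu[k]+(1-m[k])\overline{\mu}[k]$, a convex combination of $\mu[k]$ and the instantaneous fixed point. By Proposition~\ref{prop:envelope_characterization} and the computation in Proposition~\ref{prop:envelope_invariance}, $\overline{\mu}[k]\in(-1/4,1/2)$ whenever $N\ge 43$. A convex combination of two points of the interval $[-1/4,1/2]$ stays in it, so $\mu[k+1]\in[-1/4,1/2]$. This closes the induction, and the domain holds for all $k$.

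With invariance established, every bound that was proved pointwise on the domain applies at every iteration. This covers the truncation errors, $|m[k]|<1$, the fixed-point envelope, and the step bounds used later. That is exactly the content of the corollary.

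The main obstacle is avoiding circularity. Proposition~\ref{prop:envelope_invariance} invokes Theorem~\ref{thm:complete_variable_dynamics} and the envelope bounds, both of which presuppose the assumptions at the current step. The induction has to use these only at step $k$, where the hypothesis already holds, and never at $k+1$. The convex-combination identity is what lets me avoid relying on the envelope-invariance wording itself. I would also check that the bounds are uniform constants, depending only on $N$ and not on $k$, so that no quantity degrades along the iteration.
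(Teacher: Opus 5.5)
Your proposal is correct and follows essentially the paper's route: the corollary is an immediate consequence of Proposition~\ref{prop:envelope_invariance}, whose proof combines $0<m[k]<1$ (from $\alpha<\alpha_1$), the envelope lying inside $(-1/4,1/2)$ for $N\ge 43$, and the convex-interpolation property of Theorem~\ref{thm:complete_variable_dynamics}, i.e.\ exactly your identity $\mu[k+1]=m[k]\mu[k]+(1-m[k])\overline{\mu}[k]$. Making the induction on $k$ explicit is a slightly cleaner presentation: it needs the coefficient bounds only at states already known to lie in the domain, rather than the theorem's formal hypothesis $0<a(x)<1$ on all of $\mathbb{R}$, and it handles the initial phase, where $\mu[0]=1/2$ lies above the envelope, with the same one-line argument.
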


    \subsubsection{Convergence Analysis - Mean}

        Having established convergence of the variance and forward invariance of the parameter domain in previous sections, we now address the remaining component: the convergence behavior of the mean parameter $\mu[k]$. First, Proposition~\ref{prop:bounded_update_steps} establishes that our gradient descent updates are uniformly bounded across our assumption domain. Proposition~\ref{prop:relaxed_stability} leverages this property to significantly relax the forward invariance condition from $N \geq 43$ (as required in Proposition~\ref{prop:envelope_invariance}) to $N \geq 8$ by analyzing envelope invariance within traversable boundary buffers rather than across the entire parameter domain.  
        
        To demonstrate convergence of the mean, Proposition~\ref{prop:bounded_convergence_interval} exploits the key insight that convergence occurs at an equilibrium point where the instantaneous fixed point equals the current mean: $\overline{\mu}[k] = \mu[k]$. Since finding this intersection point is analytically intractable, we develop a bounding approach using the envelope functions established earlier in Proposition~\ref{prop:envelope_characterization}. This proposition characterizes the convergence region by bounding the intersection points of the envelope functions with the identity line, ensuring that trajectories converge to a bounded interval around the true equilibrium. Finally, Proposition~\ref{prop:complete_convergence_closure} provides the complete convergence guarantee, closing our theoretical analysis by showing that both mean and variance converge, with the mean's convergence interval size scaling as $\mathcal{O}(1/N) + \mathcal{O}(\alpha/N)$. Figure~\ref{fig:convergence_dynamics} provides a visual guide to our proof strategy.

        \begin{proposition}[Bounded Update Steps]\label{prop:bounded_update_steps}
            Under the conditions of Proposition~\ref{prop:monotonic_constraint} ensuring $0 < m[k] < 1$, for all $\mu[k],\sigma^2[k]$ satisfying Assumptions~\ref{assumption:system_assumption}, and $N \geq 2$, the update step $\delta[k] = \mu[k+1] - \mu[k]$ satisfies:
            $$|\delta[k]| < \frac{\alpha}{N} \times \left[\frac{3}{2} e^{9/4} e^{\eta^{(\xi)}_{\max}(N)} - e^{-1} e^{-\eta^{(\zeta)}_{\max}(N)}\right]$$
            where $\eta^{(\xi)}_{\max}(N) = \frac{153}{32N-72}$ and $\eta^{(\zeta)}_{\max}(N) = \frac{3}{2N}$ are the error bounds as established in Proposition~\ref{prop:exp_approx_when_pe_1_N}.
        \end{proposition}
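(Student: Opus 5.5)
We write the one-step increment explicitly from the affine update $\mu[k+1]=m[k]\mu[k]+c[k]$ together with the exponential forms of Remark~\ref{remark:exp_approx_update_rules}:
\begin{equation}
\delta[k] = (m[k]-1)\mu[k]+c[k] = \frac{\alpha}{N}\left[e^{\xi[k]}e^{\eta^{(\xi)}[k]}\bigl(1-2\mu[k]\bigr) - e^{\zeta[k]}e^{\eta^{(\zeta)}[k]}\right].
\end{equation}
This is just $-\alpha$ times the symmetric gradient of Eq.~\eqref{eq:symmetric_gradient} evaluated at the mean. The proof is then a worst-case bound of this bracket over the domain of Assumption~\ref{assumption:system_assumption}, using the truncation bounds of Proposition~\ref{prop:exp_approx_when_pe_1_N}. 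The hypothesis $0<m[k]<1$ is needed only to keep us in the regime where those assumptions and bounds are valid.

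\textbf{Step 1: bound the first term.} Since $1-2\mu[k]\in[0,3/2]$ on $\mu[k]\in[-1/4,1/2]$, this factor is nonnegative and at most $3/2$. Proposition~\ref{prop:sigma_convergence} gives $\xi[k]\le 9/4$, and Proposition~\ref{prop:exp_approx_when_pe_1_N} gives $e^{\eta^{(\xi)}[k]}<e^{\eta^{(\xi)}_{\max}(N)}$. Hence the first term lies in $\bigl[0,\tfrac32 e^{9/4}e^{\eta^{(\xi)}_{\max}(N)}\bigr)$.

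\textbf{Step 2: bound the second term.} We have $\zeta[k]=-2\mu[k]\in[-1,1/2]$, so $e^{\zeta[k]}e^{\eta^{(\zeta)}[k]}\ge e^{-1}e^{-\eta^{(\zeta)}_{\max}(N)}>0$. Its upper bound is $e^{1/2}e^{\eta^{(\zeta)}_{\max}(N)}$.

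\textbf{Step 3: combine into a bound on $|\delta[k]|$.} For the positive side, $\delta[k]\le\frac{\alpha}{N}\bigl[\tfrac32 e^{9/4}e^{\eta^{(\xi)}_{\max}}-e^{-1}e^{-\eta^{(\zeta)}_{\max}}\bigr]$, which is exactly the claimed bound. For the negative side, $-\delta[k]\le \frac{\alpha}{N}\,e^{\zeta[k]}e^{\eta^{(\zeta)}[k]}\le\frac{\alpha}{N}e^{1/2}e^{\eta^{(\zeta)}_{\max}}$. I then need to show this is below the stated quantity, i.e.
\begin{equation}
\tfrac32 e^{9/4}e^{\eta^{(\xi)}_{\max}}-e^{-1}e^{-\eta^{(\zeta)}_{\max}} > e^{1/2}e^{\eta^{(\zeta)}_{\max}}.
\end{equation}
For $N$ in the range where the error terms are moderate, this holds easily: the left side is roughly $14.2-0.37$, against roughly $1.65$ on the right.

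\textbf{Main obstacle.} The hard part is making this comparison uniform for small $N$, since $\eta^{(\zeta)}_{\max}=3/(2N)$ grows as $N$ shrinks and the claim is stated for $N\ge 2$. Note that $32N-72$ is negative for $N=2$, so the bounds of Proposition~\ref{prop:exp_approx_when_pe_1_N} are really only usable for $N\ge3$. My plan is to handle $N\ge3$ by monotonicity in $N$ plus a direct check at $N=3$. At $N=3$ the $\eta^{(\xi)}$ term makes the left side even larger, so the inequality only gets easier there. I would also note explicitly that the extremes of the two terms need not be attained at the same $\mu[k]$, so the bound is conservative but valid.
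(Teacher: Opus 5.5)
Your proposal is correct and follows the paper's proof essentially line by line. It uses the same decomposition $\delta[k]=\frac{\alpha}{N}\left[f[k]-g[k]\right]$ with $f[k]=e^{\xi[k]}e^{\eta^{(\xi)}[k]}(1-2\mu[k])$ and $g[k]=e^{\zeta[k]}e^{\eta^{(\zeta)}[k]}$, the same extremal bounds on each term, and the same final check that the positive-side bound dominates $\frac{\alpha}{N}e^{1/2}e^{\eta^{(\zeta)}_{\max}(N)}$.

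Your restriction to $N\geq 3$ is necessary, not a weakness. At $N=2$ one has $\eta^{(\xi)}_{\max}(2)=-153/8$, which makes the stated right-hand side negative, so the claim is false there. The paper's proof slips exactly at this point: it asserts $\eta^{(\xi)}_{\max}(N)>0$ and takes $N=2$ as the worst case.

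For $N\geq 3$, bound the two sides separately:
\begin{itemize}
\item the left side is greater than $\tfrac32 e^{9/4}-e^{-1}\approx 13.86$, using $\eta^{(\xi)}_{\max}(N)>0$ and $\eta^{(\zeta)}_{\max}(N)>0$;
\item the right side is at most $e^{1/2}e^{1/2}=e\approx 2.72$.
\end{itemize}
Both sides decrease in $N$, so a check at $N=3$ plus monotonicity alone would not close the argument.
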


        \begin{proof}\label{proof:bounded_update_steps}
            Under Proposition~\ref{prop:monotonic_constraint} (ensuring $0 < m[k] < 1$), we can define an update step in our system as:
            
            \begin{equation}
            \begin{aligned}
                \delta[k] &= \mu[k+1] - \mu[k] = (m[k] - 1)\mu[k] + c[k] \\
                &= -\frac{2\alpha}{N}e^{\xi[k]} e^{\eta^{(\xi)}[k]} \mu[k] + \frac{\alpha}{N}\left[e^{\xi[k]} e^{\eta^{(\xi)}[k]} - e^{\zeta[k]} e^{\eta^{(\zeta)}[k]}\right] \\
                &= \frac{\alpha}{N} \left[e^{\xi[k]} e^{\eta^{(\xi)}[k]}(1 - 2\mu[k]) - e^{\zeta[k]} e^{\eta^{(\zeta)}[k]}\right]\\
                &= \frac{\alpha}{N}\left[f[k]-g[k]\right]
            \end{aligned}
            \end{equation}
            
            where $\xi[k] = 4(\mu^2[k] + \sigma^2[k] - \mu[k])$ and $\zeta[k] = -2\mu[k]$ as defined in Remark~\ref{remark:exp_approx_update_rules}.
            
            This update step is naturally bounded on our interval of interest $\mu[k] \in [-1/4, +1/2]$, $\sigma^2[k] \in (0, 1/4]$ and can be made arbitrarily small by choosing $\alpha$ sufficiently small. As a reminder, $\alpha$ is also subject to the constraint $\alpha < \alpha_1$ from Proposition~\ref{prop:monotonic_constraint}.
            
            We now establish conservative bounds for the update step, as a function of $\alpha$ and $N$ only, which will prove useful in the rest of our analysis of the convergence dynamics of the system.
            
            We analyze the extrema of each component:      
            
            \textbf{For the first term} $f[k]=e^{\xi[k]} e^{\eta^{(\xi)}[k]}(1 - 2\mu[k])$: Since $\xi[k] = 4(\mu^2[k] + \sigma^2[k] - \mu[k])$ is decreasing in $\mu[k]$ on $(-\infty,1/2)$, $e^{\xi[k]}>0$ is also decreasing on $(-\infty,1/2)$. Since $(1-2\mu[k])$ is decreasing on $(-\infty,+\infty)$ with a zero at $\mu[k] = 1/2$, we have for the product of the two terms:
            \begin{itemize}
                \item $\min f[k]=0$ (achieved at $\mu[k] = 1/2$)
                \item $\max f[k] < e^{9/4} e^{+\eta^{(\xi)}_{\max}(N)} \cdot \frac{3}{2}$ (achieved at $\mu[k] = -1/4$, $\sigma^2[k] = 1/4$) 
            \end{itemize}
            
            \textbf{For the second term} $g[k]=e^{\zeta[k]} e^{\eta^{(\zeta)}[k]}$ where $\zeta[k] = -2\mu[k]$:
            \begin{itemize}
                \item $\min g[k]=e^{-1} e^{-\eta^{(\zeta)}_{\max}(N)}$ (achieved at $\mu[k] = 1/2$)
                \item $\max g[k]=e^{1/2} e^{+\eta^{(\zeta)}_{\max}(N)}$ (achieved at $\mu[k] = -1/4$)
            \end{itemize}
            where $\eta^{(\xi)}_{\max}(N) = \frac{153}{(32N-72)}$ and $\eta^{(\zeta)}_{\max}(N) = \frac{3}{2N}$ as per Proposition~\ref{prop:exp_approx_when_pe_1_N} .
            
            Using conservative bounding by taking the most pessimistic combination of extrema, even though they can occur at different steps, we obtain:
            \begin{equation}
            \begin{aligned}
            \frac{\alpha}{N}\left[\min f[k] - \max g[k]\right]&\leq \delta[k] \leq \frac{\alpha}{N}\left[\max f[k] - \min g[k]\right]\\
                -\frac{\alpha}{N} \times e^{1/2} e^{+\eta^{(\zeta)}_{\max}(N)} &\leq \delta[k] < \frac{\alpha}{N} \times \left[\frac{3}{2} e^{9/4} e^{\eta^{(\xi)}_{\max}(N)} - e^{-1} e^{-\eta^{(\zeta)}_{\max}(N)}\right]\\
            \end{aligned}
            \end{equation}
            
            Both bounds decrease in absolute value as $N$ increases, ensuring that update steps naturally become smaller for larger $N$. 
    
            Since we are interested in the absolute value of $\delta$, we have to study the bounds absolute values. To show that the upper bound is always greater than the lower bound in absolute value, we need to prove:
            \begin{equation}
            \frac{\alpha}{N} \times \left[\frac{3}{2} e^{9/4} e^{\eta^{(\xi)}_{\max}(N)} - e^{-1} e^{-\eta^{(\zeta)}_{\max}(N)}\right] > \frac{\alpha}{N} \times e^{1/2} e^{+\eta^{(\zeta)}_{\max}(N)}
            \end{equation}
            
            Canceling $\frac{\alpha}{N}$ (which is positive), we need:
            \begin{equation}
                \frac{3}{2} e^{9/4} e^{\eta^{(\xi)}_{\max}(N)} - e^{-1} e^{-\eta^{(\zeta)}_{\max}(N)} > e^{1/2} e^{+\eta^{(\zeta)}_{\max}(N)}
            \end{equation}
            
            Since $\eta^{(\xi)}_{\max}(N) > 0$ and $\eta^{(\zeta)}_{\max}(N) > 0$ we can bound the min and max as:
            \begin{align}
                e^{9/4} e^{\eta^{(\xi)}_{\max}(N)} &> e^{9/4} \\
                e^{-1} e^{-\eta^{(\zeta)}_{\max}(N)} &< e^{-1}
            \end{align}
            
            Using these bounds, we obtain the bounded left-hand side:
            \begin{equation}\label{eq:update_step_upper_bounding}
                \frac{3}{2} e^{9/4} e^{\eta^{(\xi)}_{\max}(N)} - e^{-1} e^{-\eta^{(\zeta)}_{\max}(N)} > \frac{3}{2}e^{9/4} - e^{-1}
            \end{equation}
            
            Similarly for the right-hand side, since $\eta^{(\zeta)}_{\max}(N) = \frac{3}{2N}$ decreases with $N$, the worst case occurs at $N = 2$, leading to the bounded expression:
            \begin{equation}
                e^{1/2} e^{+\eta^{(\zeta)}_{\max}(N)} \leq e^{1/2} e^{+\frac{3}{4}} = e^{5/4}
            \end{equation}

            Hence we need to verify:
            \begin{equation}
                \frac{3}{2} e^{9/4} e^{\eta^{(\xi)}_{\max}(N)} - e^{-1} e^{-\eta^{(\zeta)}_{\max}(N)} > \frac{3}{2}e^{9/4} - e^{-1} > e^{5/4} \geq  e^{1/2} e^{+\eta^{(\zeta)}_{\max}(N)}
            \end{equation}
            
            Where it sufficient to verify:
            \begin{equation}
                \frac{3}{2}e^{9/4} - e^{-1} > e^{5/4}
            \end{equation}
            Numerically: $14.23 - 0.3679 = 13.860 > 3.490$ 
            
            Therefore, the upper bound dominates for all $N \geq 2$ such that 
            \begin{equation}
                |\delta[k]| < \frac{\alpha}{N} \times \left[\frac{3}{2} e^{9/4} e^{\eta^{(\xi)}_{\max}(N)} - e^{-1} e^{-\eta^{(\zeta)}_{\max}(N)}\right]  := \delta_{\max}(\alpha, N)
            \end{equation}
    
            The exact constraint can be computed for any specific $N$ of interest. The key observation is that this bound exhibits the asymptotic behavior:
            \begin{equation}
                \left\{
                \begin{aligned}
                    &\lim_{N\to +\infty}\delta_{\max}(\alpha, N) = 0\\
                    &\lim_{\alpha\to 0^+}\delta_{\max}(\alpha, N) = 0\\
                \end{aligned}
                \right.
            \end{equation}
            The bound scales as $\mathcal{O}(\alpha/N)$ and can be made arbitrarily small by choosing $\alpha$ sufficiently small.

        \end{proof}

        In Proposition~\ref{prop:bounded_update_steps} we show that gradient descent update steps are uniformly bounded by $\delta_{\max}(\alpha,N)$ across our assumption domain. Hence, to violate the invariance constraint $\mu[k] \geq -1/4$, the algorithm would need to pass through the buffer zone $[-1/4, -1/4+\delta_{\max}(\alpha,N)]$ since it cannot "jump over" this region due to the bounded update constraint. Therefore instead of ensuring that $\overline{\mu}_{\min}[k] \geq -1/4$ across the entire interval of possible $\mu[k] \in [-1/4, +1/2]$ (as in Proposition~\ref{prop:envelope_invariance}) it suffices to verify that $\overline{\mu}_{\min}[k] \geq -1/4$ only with $\mu[k] \in [-1/4, -1/4+\delta_{\max}(\alpha,N)]$. This creates a protective barrier: if no fixed point below $-1/4$ exists in the buffer zone, then values below $-1/4$ become unreachable from any starting point in the admissible domain. The critical evaluation point becomes the worst case within $[-1/4, -1/4+\delta_{\max}(\alpha,N)]$, which occurs at $\mu[k] = -1/4+\delta_{\max}(\alpha,N)$ since the $\overline{\mu}_{\min}[k]$ function is decreasing on $[-1/4, +1/4]$. In the limiting case where $\delta_{\max}(\alpha,N) \to 0$, this reduces to verifying the condition at $\mu[k] = -1/4$, yielding the significantly relaxed condition $N \geq 8$ instead of $N \geq 43$ as shown in Proposition~\ref{prop:relaxed_stability}.

        \begin{proposition}[Relaxed Stability Condition via Buffer Zone Analysis]\label{prop:relaxed_stability}
            There exists $\alpha'>0$ and $N' = 8$ such that for $N \geq N'$ and $\alpha<\alpha'$, the forward invariance of the assumption domain is satisfied with a significantly relaxed bound compared to the conservative analysis of Proposition~\ref{prop:envelope_invariance} (which required $N\geq43$).
        \end{proposition}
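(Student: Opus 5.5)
The plan is a one-step induction on $k$. I will show that if $(\mu[k],\sigma^2[k])$ satisfies Assumption~\ref{assumption:system_assumption}, then so does $(\mu[k+1],\sigma^2[k+1])$. I take $\alpha' \le \alpha_1$, so Proposition~\ref{prop:monotonic_constraint} gives $0<m[k]<1$. The variance part is then immediate: $\sigma^2[k+1]=m^2[k]\sigma^2[k]\in(0,\sigma^2[k]]\subset(0,1/4]$.

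For the mean, I rewrite the affine update as a convex combination
$$\mu[k+1]=m[k]\mu[k]+(1-m[k])\,\overline{\mu}[k].$$
Hence $\mu[k+1]$ lies between $\mu[k]$ and the instantaneous fixed point $\overline{\mu}[k]$.

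\textbf{Upper side.} Condition (B) in the proof of Proposition~\ref{prop:envelope_invariance} holds for every $N$, because $\overline{\mu}[k]<\overline{\mu}_{\max}[k]<1/2$. Together with $\mu[k]\le 1/2$ this gives $\mu[k+1]\le 1/2$ at once.

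\textbf{Lower side.} I split into two cases using the bound $|\delta[k]|<\delta_{\max}(\alpha,N)$ of Proposition~\ref{prop:bounded_update_steps}.
\begin{itemize}
\item If $\mu[k]\ge -1/4+\delta_{\max}$, then $\mu[k+1]\ge\mu[k]-\delta_{\max}\ge -1/4$, and nothing more is needed.
\item If $\mu[k]\in[-1/4,-1/4+\delta_{\max})$, the convex-combination form shows it suffices to have $\overline{\mu}[k]\ge -1/4$. By Proposition~\ref{prop:envelope_characterization} it is enough that $\overline{\mu}_{\min}[k]\ge -1/4$ on this buffer zone.
\end{itemize}

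Now $\overline{\mu}_{\min}[k]=\tfrac12\bigl[1-e^{-4\mu^2+2\mu}e^{-4\sigma^2}e^{\varepsilon(N)}\bigr]$. It is smallest as $\sigma^2\to0$, and $-4\mu^2+2\mu$ is increasing on $[-1/4,1/4]$. So the worst point in the buffer is $\mu=-1/4+\delta_{\max}$, $\sigma^2\to0^+$. At $\delta_{\max}=0$ the exponent equals $-1/4-1/2=-3/4$, and the requirement becomes
$$\tfrac12\bigl[1-e^{-3/4}e^{\varepsilon(N)}\bigr]\ge -\tfrac14 \iff \varepsilon(N)\le \ln(3/2)+\tfrac34\approx1.155.$$

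With $\varepsilon(N)=\frac{402N-216}{2N(32N-72)}$ I check integer thresholds:
\begin{itemize}
\item $N=7$ gives $\varepsilon\approx 2598/2128\approx1.22$, which fails.
\item $N=8$ gives $\varepsilon\approx3000/2944\approx1.019$, which holds.
\end{itemize}
I would also verify that $\varepsilon(N)$ is decreasing for $N\ge 8$, either by a sign check on its derivative or by comparing numerators and denominators. This makes the strict inequality hold uniformly for all $N\ge8$ with the fixed margin $\ln(3/2)+3/4-\varepsilon(8)>0$. This gives $N'=8$.

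The last step, and the main obstacle, is passing from the limit $\delta_{\max}\to0$ to a genuine positive $\alpha'$. I plan to use continuity of $\mu\mapsto -4\mu^2+2\mu$ together with the strict margin above. This yields some $\delta^\ast>0$, independent of $N\ge 8$ because $\varepsilon$ is decreasing, such that $\overline{\mu}_{\min}[k]\ge-1/4$ for all $\mu\in[-1/4,-1/4+\delta^\ast]$ and all $\sigma^2$. Since $\delta_{\max}(\alpha,N)=\frac{\alpha}{N}K(N)$ with $K$ bounded for $N\ge 8$, I then take
$$\alpha'=\min\Bigl(\alpha_1,\ \frac{N\delta^\ast}{K(N)}\Bigr).$$
This makes $\delta_{\max}<\delta^\ast$, and the buffer argument applies.

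A subtlety I will state explicitly is that the bounds of Propositions~\ref{prop:exp_approx_when_pe_1_N} and \ref{prop:bounded_update_steps} are only valid inside the domain. The induction is therefore applied to states already in the domain, which is exactly what the one-step claim requires. Induction from $\mu[0]=1/2$, $\sigma^2[0]=1/4$ then gives forward invariance for all $k$.
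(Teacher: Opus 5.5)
Your proposal is correct and follows essentially the same argument as the paper. Both use a buffer zone near $-1/4$ wider than $\delta_{\max}$; the worst case $\mu[k]=-1/4$, $\sigma^2[k]\to0^+$ reduces the requirement to $\varepsilon(N)<\ln(3/2)+3/4$, giving $N'=8$; $\alpha$ is then taken small enough that $\delta_{\max}$ fits inside the buffer; and the upper boundary and the variance are handled through $\overline{\mu}[k]<\overline{\mu}_{\max}[k]<1/2$ and $0<m[k]<1$. Your explicit one-step convex-combination induction, with the $N$-uniform choice of $\delta^\ast$, is a more careful version of the paper's ``local application'' of Theorem~\ref{thm:complete_variable_dynamics} to the buffer zone, and it avoids that theorem's requirement that $0<a(x)<1$ hold on all of $\mathbb{R}$.
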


        \begin{proof}\label{proof:relaxed_stability}
            Let $N'$ be the smallest integer such that $\overline{\mu}_{\min}(\mu[k]=-1/4, \sigma^2[k]=0, N=N') > -1/4$. Since $\overline{\mu}_{\min}[k]$ is increasing in $N$ (larger $N$ leads to tighter error margins), this implies $\overline{\mu}_{\min}(\mu[k]=-1/4, \sigma^2[k]=0, N) > -1/4$ for all $N \geq N'$.

            Since $\overline{\mu}_{\min}[k]$ is decreasing for $\mu[k]\in(-\infty,+1/4)$ and $\overline{\mu}_{\min}(\mu[k]=-1/4, \sigma^2[k]=0, N) > -1/4$ (strict inequality) by our previous definition, then any crossing (if any) where $\overline{\mu}_{\min}[k] = -1/4$ must occur at some positive distance $\lambda$ from $\mu[k] = -1/4$. This creates a buffer zone $\mathcal{B} = [-1/4, -1/4 + \lambda]$ where $\overline{\mu}_{\min}[k] \geq -1/4$ for all $\mu[k]\in \mathcal{B}$ ($\overline{\mu}_{\min}[k] = -1/4$ on the $-1/4 + \lambda$ limit of the interval).

            From Proposition~\ref{prop:bounded_update_steps}, a single update step satisfy $|\delta[k]| \leq \delta_{\max}(\alpha, N)$ where $\delta_{\max}$ is proportional to $\alpha$. Hence, we can choose $\alpha<\alpha'$ sufficiently small such that $|\delta[k]| < \lambda$. Due to bounded updates, the algorithm cannot bypass the buffer zone and must traverse it when attempting to exit the valid domain.             

            Considering $\alpha < \frac{N}{2}e^{-\frac{72N-9}{32N-72}}$, we have $0 < m[k] < 1$ for all $k$, ensuring the monotonic contraction condition required by Theorem~\ref{thm:complete_variable_dynamics}, that ensures envelope invariance.
            Since $\overline{\mu}_{\min}[k] \geq -1/4$ throughout the buffer zone by construction, any trajectory within the buffer experiences dynamics that prevent it from moving further (below $-1/4$) by local application of Theorem~\ref{thm:complete_variable_dynamics} to the buffer zone $\mathcal{B}$,  thereby ensuring containment within the valid domain $[-1/4, +1/2]$. 
            
            To find the value $N'$ guaranteeing the existence of $\mathcal{B}$, if it exists, we solve $\overline{\mu}_{\min}(\mu[k]=-1/4, \sigma^2[k]=0, N) > -1/4$.

            Substituting into the formula gives:
            \begin{equation}
            \frac{1}{2}\left[1-e^{-3/4} e^{\varepsilon(N)}\right] > -\frac{1}{4}
            \end{equation}
            
            This simplifies to $\varepsilon(N) < \ln(3/2) + 3/4 := C'$, which is equivalent to:
            \begin{equation}
            P(N) = -64C' N^2 + (402+144C') N - 216 < 0
            \end{equation}
            
            With $C' \approx 1.156$, the roots are approximately $N \approx 0.4009$ and $N \approx 7.285$. Since the parabola opens downward, we need $N > 7.28$, hence $N' = 8$.

            Forward invariance of the assumption domain follows from three key properties:
            \begin{enumerate}
                \item \textbf{Upper bound protection:} For any $\mu[k] \in [-1/4, +1/2]$, we have $\overline{\mu}[k] < \overline{\mu}_{\max}[k] < 1/2$ (Proposition~\ref{prop:envelope_invariance}), preventing trajectories from exceeding the upper boundary.
                \item \textbf{Lower bound protection (buffer zone):} For $N \geq 8$, there exists $\alpha' > 0$ such that for all $\alpha < \alpha'$ and $\mu[k] \in \mathcal{B}$, we have $\overline{\mu}[k] > \overline{\mu}_{\min}[k] \geq -1/4$. Since bounded updates force trajectories to traverse $\mathcal{B}$ before exiting the domain, and dynamics within $\mathcal{B}$ prevent further descent below $-1/4$, the lower boundary is protected.
                \item \textbf{Variance convergence:} The monotonic contraction condition $\alpha < \alpha_1$ (Proposition~\ref{prop:monotonic_constraint}) ensures applicability of Theorem~\ref{thm:complete_variable_dynamics} and guarantees variance convergence (Proposition~\ref{prop:sigma_convergence}).
            \end{enumerate}
            
            Therefore, $N \geq 8$ and $\alpha < \min(\alpha', \alpha_1)$ guarantee forward invariance of the domain $[-1/4, +1/2]$ and convergence of the variance.
        \end{proof}

        After validating forward invariance of the domain with relaxed conditions, we now study convergence of the mean in Proposition~\ref{prop:bounded_convergence_interval}. We encourage the reader to follow the visual sketch of the proof in Figure~\ref{fig:convergence_dynamics} for facilitated understanding.
        
        \begin{figure}
    \centering
    \begin{tikzpicture}[scale=1]
        \begin{axis}[
            width=15cm,
            height=12cm,
            grid=none,
            xlabel={$\mu[k]$},
            ylabel={$\mu[k+1]$},
            axis lines=middle,
            xmin=-0.3, xmax=0.55, ymin=-0.4, ymax=0.6,
            xtick={-0.25, 0.0, 0.25, 0.5},
            ytick={-0.25, 0.0, 0.25, 0.5},
            xticklabels={$-\frac{1}{4}$, $0$, $\frac{1}{4}$ , $\frac{1}{2}$},
            yticklabels={$-\frac{1}{4}$, $0$, $\frac{1}{4}$ , $\frac{1}{2}$},
            legend pos=north east,
            legend style={font=\small},
        ]

            \addplot[thick, black, dashed, domain=-0.3:0.55] {x};

            \draw[red, dashed] (axis cs:-0.3,-0.25) -- (axis cs:+0.55,-0.25);
            \draw[red, dashed] (axis cs:-0.3,+0.5) -- (axis cs:+0.55,+0.5);
            \draw[red, dashed] (axis cs:-0.25,-0.3) -- (axis cs:-0.25,+0.55);
            \draw[red, dashed] (axis cs:+0.5,-0.3) -- (axis cs:+0.5,+0.55);

            \pgfmathsetmacro{\sigma}{0.0}
            \pgfmathsetmacro{\epsilonExp}{0.15} 
            \addplot[thick, blue, solid, domain=-0.3:+0.55, samples=1000] {
                0.5*(1 - exp(-4*x^2 - 4*(\sigma)^2 + 2*x + \epsilonExp))
            };
            \addplot[thick, orange, solid, domain=-0.3:+0.55, samples=100] {
                0.5*(1 - exp(-4*x^2 - 4*(\sigma)^2 + 2*x - \epsilonExp))
            };
            \addplot[thick, black, solid, domain=-0.3:+0.505, samples=100] {
                0.5*(1 - exp(-4*x^2 - 4*(\sigma)^2 + 2*x - 0.04))
            };
            
            \addplot[black, only marks, mark=*, mark size=1.5pt] coordinates {(0.01, 0.01)};
            \node[black, above, rotate=0,font=\footnotesize] at (axis cs:0.01,0.01) {C};

            \addplot[orange, only marks, mark=*, mark size=1.5pt] coordinates {(0.038, 0.038)};
            \node[orange, above, rotate=0,font=\footnotesize] at (axis cs:0.038, 0.038) {D};
            \draw[orange, dotted, thick] (axis cs:0.038,0.55) -- (axis cs:0.038,-0.3);
            \node[orange, above left, rotate=90, font=\footnotesize] at (axis cs:0.067,+0.5) {$\phi_\mathrm{max}$};

            \addplot[blue, only marks, mark=*, mark size=1.5pt] coordinates {(-0.037, -0.037)};
            \node[blue, above, rotate=0,font=\footnotesize] at (axis cs:-0.037, -0.037) {E};
            \draw[blue, dotted, thick] (axis cs:-0.037,0.55) -- (axis cs:-0.037,-0.3);
            \node[blue, below left, rotate=90, font=\footnotesize] at (axis cs:-0.04,+0.5) {$\phi_\mathrm{min}$};

            \addplot[orange, only marks, mark=*, mark size=1.5pt] coordinates {(0.07, 0.07)};
            \node[orange, above, rotate=0,font=\footnotesize] at (axis cs:0.07, 0.07) {D'};
            \draw[orange, dashed] (axis cs:0.0,0.07) -- (axis cs:0.07,0.07);
            \draw[orange, dashed] (axis cs:0.07,0.55) -- (axis cs:0.07,-0.3);
            \node[orange, above left, rotate=90, font=\footnotesize] at (axis cs:0.10,+0.5) {$\phi_\mathrm{max}'$};

            \addplot[blue, only marks, mark=*, mark size=1.5pt] coordinates {(-0.08, -0.08)};
            \node[blue, above, rotate=0,font=\footnotesize] at (axis cs:-0.08, -0.08) {E'};
            \draw[blue, dashed] (axis cs:0.0,-0.08) -- (axis cs:-0.08,-0.08);
            \draw[blue, dashed] (axis cs:-0.08,0.55) -- (axis cs:-0.08,-0.3);
            \node[blue, above left, rotate=90, font=\footnotesize] at (axis cs:-0.08,+0.5) {$\phi_\mathrm{min}'$};

            \draw[orange, solid] (axis cs:+0.13,0.55) -- (axis cs:+0.13,-0.3);
            \node[orange, above left, rotate=90,font=\footnotesize] at (axis cs:+0.16,+0.5) {$\phi_\mathrm{max}'+\delta_\mathrm{max}$};
            \addplot [ draw=none, postaction={pattern = dots, pattern color=orange, opacity=0.4}] coordinates {
                (0.07, 0.55)
                (0.07, -0.3)
                (0.13, -0.3)
                (0.13, 0.55)
            };
            \draw[latex-latex, orange, thin] (axis cs:0.07,0.3) -- (axis cs:0.13,0.3) 
                node[pos=0.5, above, sloped, fill=white, font=\footnotesize, yshift=2pt] {$\delta_\mathrm{max}$};

            \draw[blue, solid] (axis cs:-0.14,0.55) -- (axis cs:-0.14,-0.3);
            \node[blue, above left, rotate=90,font=\footnotesize] at (axis cs:-0.14,+0.5) {$\phi_\mathrm{min}'-\delta_\mathrm{max}$};
            \addplot [ draw=none, postaction={pattern = dots, pattern color=blue, opacity=0.4}] coordinates {
                (-0.14, 0.55)
                (-0.14, -0.3)
                (-0.08, -0.3)
                (-0.08, 0.55)
            };

            \draw[latex-latex, blue, thin] (axis cs:-0.08,0.3) -- (axis cs:-0.14,0.3) 
                node[pos=0.5, above, sloped, fill=white, font=\footnotesize, yshift=2pt] {$\delta_\mathrm{max}$};
                
            \draw[gray, dashed] (axis cs:0.25,0.0) -- (axis cs:0.25,-0.3);
            \draw[-latex, gray, thick] (axis cs:-0.25,-0.3) -- (axis cs:+0.25,-0.3) 
                node[pos=0.75, below, sloped, font=\footnotesize] {Decreasing slope};
            \draw[-latex, gray, thick] (axis cs:+0.25,-0.3) -- (axis cs:+0.5,-0.3) 
                node[midway, below, sloped, font=\footnotesize] {Increasing slope};
            \addplot[blue, only marks, mark=*, mark size=1.5pt] coordinates {(0.25, -0.25)};
            \node[blue, above right, rotate=0,,font=\footnotesize] at (axis cs:0.25, -0.25) {A};

            \addplot [draw=none, postaction={pattern = north east lines, pattern color=blue}] coordinates {
                (-0.25, 0.55)
                (-0.25, -0.3)
                (-0.19, -0.3)
                (-0.19, 0.55)
            };
            \draw[blue, solid] (axis cs:-0.19,0.55) -- (axis cs:-0.19,-0.3);
            \draw[latex-latex, blue, thin] (axis cs:-0.25,0.3) -- (axis cs:-0.19,0.3) 
                node[pos=0.5, above, sloped, font=\footnotesize, fill=white, yshift=2pt] {$\delta_\mathrm{max}$};
            \node[blue, above left, rotate=90,font=\footnotesize] at (axis cs:-0.16,+0.5) {$-1/4+\delta_\mathrm{max}$};
            \node[blue, above left, rotate=90,fill=white,font=\footnotesize] at (axis cs:-0.20,+0.15) {BUFFER ZONE};
            \addplot[blue, only marks, mark=*, mark size=1.5pt] coordinates {(-0.19, +0.155)};
            \node[blue, right, rotate=0,,font=\footnotesize] at (axis cs:-0.19, +0.155) {B};

            \draw[latex-latex, black, very thick] (axis cs:-0.14,-0.35) -- (axis cs:+0.13,-0.35) 
                node[pos=0.5, below, sloped, fill=white, font=\footnotesize, yshift=-2pt] {BOUNDED CONVERGENCE INTERVAL};
            
            \addlegendentry{$\mu[k+1]=\mu[k]$}
            \addlegendentry{$\overline{\mu}_{\textrm{min}}[k]$}
            \addlegendentry{$\overline{\mu}_{\textrm{max}}[k]$}
            \addlegendentry{$\overline{\mu}[k]$}

        \end{axis}
    \end{tikzpicture}    
    \caption{Visual sketch of the proof. The convergence analysis seeks to find the equilibrium point where the instantaneous fixed point equals the current mean: $\overline{\mu}[k]=\mu[k]$ (point C, assumed unique for the Figure). The dynamics drive trajectories with $\mu[k]$ below this equilibrium upward, and those above downward, based on the established behavior of the instantaneous fixed point function relative to the identity function. Since the analytic solution of $\overline{\mu}[k]=\mu[k]$ is non-trivial, we bound the true function with $\overline{\mu}_{\min}[k] < \overline{\mu}[k] < \overline{\mu}_{\max}[k]$. We then establish forward invariance of the domain $\mu[k] \in [-1/4, +1/2]$ through two approaches: (Proposition~\ref{prop:envelope_invariance}) ensuring the global extrema over the domain (point A) satisfy the invariance condition, which requires $N \geq 43$; or (Proposition~\ref{prop:relaxed_stability}) using a less restrictive condition which ensures that the minimum (point B) of the traversable buffer zone in a single update step (bounded by $\delta_{\max}$ from Proposition~\ref{prop:bounded_update_steps}) remains within $[-1/4, +1/2]$. The upper bound $+1/2$ is simpler to maintain since $\overline{\mu}_{\max}[k] < 1/2$ can be verified for all $(\mu[k],\sigma^2[k],N)$. Note that this figure represents the case where $\sigma^2[k]\to0$ which minimize the value of $\overline{\mu}_{\min}[k]$, thus making point (A) or (B) worst case scenario across assumption domain. Rather than finding the exact intersection point C, we bound it using the intersections of the envelope functions $\overline{\mu}_{\min}[k]$ and $\overline{\mu}_{\max}[k]$ with the identity line, occurring at $\mu[k] = \phi_{\min}$ and $\mu[k] = \phi_{\max}$ respectively (points E and D). Since these intersection points are also analytically challenging, we exploit the strictly decreasing property of both functions to derive conservative bounds. We approximate the intersections as if the functions were flat (the slowest decreasing case), corresponding to projecting their values at the origin onto the identity line, yielding the more tractable bounds $\phi_{\min}'$ and $\phi_{\max}'$ (points E' and D'). These bounds guarantee that for any $\mu[k] < \phi_{\min}'$, we have $\overline{\mu}[k] > \mu[k]$ (driving the trajectory upward), and for any $\mu[k] > \phi_{\max}'$, we have $\overline{\mu}[k] < \mu[k]$ (driving the trajectory downward). Combined with the bounded gradient descent updates (limited by $\delta_{\max}$ from Proposition~\ref{prop:bounded_update_steps}), this ensures that $\mu[k]$ necessarily converges to the padded interval $[\phi_{\min}' - \delta_{\max}, \phi_{\max}' + \delta_{\max}]\subset[-1/4,+1/4]\quad\forall N\geq 18$ (Proposition~\ref{prop:bounded_convergence_interval}).}
    \label{fig:convergence_dynamics}

\end{figure}

        \begin{proposition}[Bounded Convergence Interval]\label{prop:bounded_convergence_interval}               
            Under the conditions of Proposition~\ref{prop:monotonic_constraint} ensuring $0 < m[k] < 1$, and assuming bounded updates $|\delta[k]| \leq \delta_{\max}$ from Proposition~\ref{prop:bounded_update_steps}, there exists $\alpha_2 > 0$ and $N\geq18$, such that for all $\alpha < \alpha_2$ and $N\geq 18$, the system converges to a bounded interval $[\phi_{\min}' - \delta_{\max}, \phi_{\max}' + \delta_{\max}]$ where:
            $$\alpha_2 = N \times \frac{3 - 2e^{+\varepsilon(N)}}{4\times \left[\frac{3}{2} e^{9/4} e^{\eta^{(\xi)}_{\max}(N)} - e^{-1} e^{-\eta^{(\zeta)}_{\max}(N)}\right]}<\alpha_1$$
            and $\phi_{\min}', \phi_{\max}'$ are bounds on the intersection point of the instantaneous fixed point function $\bar{\mu[k]}$ with the invariance identity $\mu[k+1]=\mu[k]$.
            
            Specifically, if there exist values $\phi_{\min} < 0 < \phi_{\max}$ such that:
            \begin{enumerate}
                \item $\bar{\mu}[k]>\overline{\mu}_{\min}[k] > \mu[k]$ for all $\mu[k] \in (-1/4, \phi_{\min}')$
                \item $\bar{\mu}[k]<\overline{\mu}_{\max}[k] < \mu[k]$ for all $\mu[k] \in (\phi_{\max}',+1/2)$
                \item $\phi_{\min}', \phi_{\max}' \in (-1/4, +1/4)$ (within the valid domain)
            \end{enumerate}
            then any trajectory starting in $[-1/4, 1/2]$ converges to $[\phi_{\min}' - \delta_{\max}, \phi_{\max}' + \delta_{\max}]$. 
        \end{proposition}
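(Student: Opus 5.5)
The plan is to define $\phi_{\min}'$ and $\phi_{\max}'$ explicitly from the envelope functions of Proposition~\ref{prop:envelope_characterization}, check the three listed conditions, and then run a one-dimensional trapping argument on the recursion $\mu[k+1]=m[k]\mu[k]+c[k]=\mu[k]+(1-m[k])(\overline{\mu}[k]-\mu[k])$.

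\textbf{Step 1: construct the bounds.} Following the sketch of Figure~\ref{fig:convergence_dynamics}, I take the values of the envelopes at the origin with the variance contribution chosen worst-case and project them onto the identity line:
\begin{equation*}
\phi_{\min}'=\tfrac12\bigl[1-e^{\varepsilon(N)}\bigr],\qquad \phi_{\max}'=\tfrac12\bigl[1-e^{-1}e^{-\varepsilon(N)}\bigr]\ \text{or a tighter analogue}.
\end{equation*}
The first uses $\sigma^2\to0$ and $\mu=0$. For the second, I would rather use the slope information to get something of order $1/N$, as claimed. Conditions 1 and 2 then reduce to monotonicity. On $(-1/4,\phi_{\min}')\subset(-1/4,0)$, the map $\mu\mapsto \overline{\mu}_{\min}[k]$ is decreasing, since the exponent $-4\mu^2+2\mu$ is increasing there. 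Hence $\overline{\mu}_{\min}[k]\ge\overline{\mu}_{\min}|_{\mu=0}\ge\phi_{\min}'>\mu$. The symmetric argument on $(\phi_{\max}',1/2)$ uses the decrease of $\overline{\mu}_{\max}$ up to $\mu=1/4$. Beyond $1/4$, I would use $\overline{\mu}_{\max}[k]<\tfrac12\bigl(1-e^{-4\sigma^2}e^{-\varepsilon}\bigr)$ together with $\mu>\phi_{\max}'$ directly. Condition 3 becomes $\tfrac12\bigl(1-e^{\varepsilon(N)}\bigr)>-1/4$, i.e.\ $\varepsilon(N)<\ln(3/2)$. This gives a quadratic inequality in $N$, exactly as in Propositions~\ref{prop:envelope_invariance} and~\ref{prop:relaxed_stability}. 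I expect it to produce the threshold $N\ge18$, with the padded interval still inside $(-1/4,1/4)$.

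\textbf{Step 2: choose $\alpha_2$.} The padding must keep the interval inside the domain: $\phi_{\min}'-\delta_{\max}>-1/4$. Using $\delta_{\max}=\frac{\alpha}{N}\bigl[\tfrac32e^{9/4}e^{\eta^{(\xi)}_{\max}}-e^{-1}e^{-\eta^{(\zeta)}_{\max}}\bigr]$ from Proposition~\ref{prop:bounded_update_steps}, this inequality is
\begin{equation*}
\delta_{\max}<\tfrac14\bigl(3-2e^{\varepsilon(N)}\bigr),
\end{equation*}
which rearranges exactly to $\alpha<\alpha_2$. Separately, I would check $\alpha_2<\alpha_1$ by comparing constants (roughly $N/(4\cdot 14)$ versus $\tfrac N2 e^{-9/4}$), so that $0<m[k]<1$ holds and Theorem~\ref{thm:complete_variable_dynamics} applies.

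\textbf{Step 3: trapping argument.} Because $0<1-m[k]<1$, each step moves $\mu$ toward $\overline{\mu}[k]$ without overshooting it.
\begin{itemize}
\item If $\mu[k]<\phi_{\min}'$, condition 1 gives $\mu[k+1]>\mu[k]$.
\item If $\mu[k]>\phi_{\max}'$, condition 2 gives $\mu[k+1]<\mu[k]$.
\item From inside $[\phi_{\min}',\phi_{\max}']$, a single step can leave by at most $\delta_{\max}$. From the padding zone, the next step is directed back inward. Hence $I=[\phi_{\min}'-\delta_{\max},\phi_{\max}'+\delta_{\max}]$ is forward invariant.
\end{itemize}
To show a trajectory starting outside $I$ actually enters it, I need a uniform lower bound on the inward step. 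On the compact sets $[-1/4,\phi_{\min}'-\delta_{\max}]$ and $[\phi_{\max}'+\delta_{\max},1/2]$, the gap $|\overline{\mu}-\mu|$ and the factor $1-m[k]\propto \frac{\alpha}{N}e^{\xi}$ are bounded below. So the monotone sequence reaches $I$ in finitely many steps; otherwise it would converge to a limit outside $I$, contradicting the strict inequalities.

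The main obstacle is Step 1 for $\phi_{\max}'$. A naive evaluation at the origin only gives an $O(1)$ bound, because of the $e^{-4\sigma^2}$ term and the nonmonotonicity past $\mu=1/4$. My first attempt would be to exploit variance convergence (Proposition~\ref{prop:sigma_convergence}): after finitely many steps $\sigma^2$ is arbitrarily small, so I can use the zero-variance envelope, with the improved error bound of Proposition~\ref{prop:exp_approx_when_pe_1_N_zero_variance}, to obtain the advertised $O(1/N)$ width asymptotically.
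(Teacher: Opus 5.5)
Your proposal follows the paper's skeleton: envelope values at $\mu[k]=0$ projected onto the identity via Corollary~\ref{cor:conservative_bounds}, $\varepsilon(N)<\ln(3/2)$ giving $N\geq 18$, $\alpha_2$ read off from $\phi_{\min}'-\delta_{\max}>-1/4$, the check $\alpha_2<\alpha_1$, then the sign-plus-padding argument. It differs, usefully, at the upper endpoint.

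The paper writes $\phi_{\max}'=\overline{\mu}_{\max}[k](\mu[k]=0)=\frac{1}{2}[1-e^{-\varepsilon(N)}]$, which silently drops the factor $e^{-4\sigma^2[k]}$. That is harmless for $\phi_{\min}'$, where $\sigma^2\to 0$ is the worst case. For $\phi_{\max}'$, however, $\sigma^2\to 0$ is the best case, so the paper's claim $\overline{\mu}_{\max}[k]<\mu[k]$ for all $\mu[k]>\phi_{\max}'$ is false while the variance is large. At $\sigma^2=1/4$ the upper envelope at the origin is $\frac{1}{2}(1-e^{-1-\varepsilon(N)})\geq\frac{1}{2}(1-e^{-1})\approx 0.316>1/4$. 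This is also why your worst-case choice of $\phi_{\max}'$ cannot meet condition 3.

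Your repair via variance decay is the right one; state its two ingredients explicitly:
\begin{itemize}
\item $\sigma^2[k]$ is nonincreasing, since $0<m[k]<1$.
\item $\overline{\mu}_{\max}[k]$ is increasing in $\sigma^2$, and on $[0,1/2]$ it is at most its value at $\mu=0$, because $-4\mu^2+2\mu\geq 0$ there. This is a cleaner replacement for the paper's symmetry about $1/4$.
\end{itemize}
Hence from any step $K$ on, the envelope frozen at $\sigma^2[K]$ dominates. The price is that the upper half of the conclusion becomes eventual, $\limsup_k\mu[k]\leq\phi_{\max}'+\delta_{\max}$. You can get finite-time entry into the stated interval instead by using the sharper zero-variance constant of Proposition~\ref{prop:exp_approx_when_pe_1_N_zero_variance}, which gives a strict margin below the paper's $\phi_{\max}'$.

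Your uniform inward-step bound also supplies the entry and forward-invariance step that the paper only asserts. Outside the padded set (on the upper side, after step $K$) you have $|\overline{\mu}[k]-\mu[k]|\geq\delta_{\max}$. You also have $1-m[k]\geq\frac{2\alpha}{N}e^{-1-\eta^{(\xi)}_{\max}(N)}$, since $\xi[k]\geq -1$.

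Drop the ``slope information'' idea. The obstruction is the vertical shift $e^{-4\sigma^2}$, which no refinement of the intersection geometry at fixed $\sigma^2$ can remove.
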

        
        \begin{proof}\label{proof:bounded_convergence_interval}
            The proof relies on the directional convergence properties combined with bounded update constraints. We refer the reader to Figure~\ref{fig:convergence_dynamics} for a visual guide throughout the proof.
            
            \textbf{Monotonicity and Symmetry Properties}
            
            From the expression (Proposition~ \ref{prop:envelope_characterization}):
            \begin{equation}
                \overline{\mu}[k] = \frac{1}{2}[1 - e^{\zeta[k]-\xi[k]} \cdot e^{\eta^{(\zeta)}[k]-\eta^{(\xi)}[k]}]
            \end{equation} 
            where:
            \begin{equation}
                \zeta[k] - \xi[k] = -4\mu^2[k] - 4\sigma^2[k] + 2\mu[k]
            \end{equation}

            And the definitions of $\overline{\mu}_{\min}[k]$ and $\overline{\mu}_{\max}[k]$ (Proposition~ \ref{prop:envelope_characterization}): 
            \begin{equation}
                \overline{\mu}_{\min}[k] < \overline{\mu}[k] < \overline{\mu}_{\max}[k]                
            \end{equation}
            where:
            \begin{equation}
                \left\{
                \begin{aligned}
                    \overline{\mu}_{\min}[k] &= \frac{1}{2}\left[1 - e^{\zeta[k]-\xi[k]} \cdot e^{+\varepsilon(N)}\right]\\
                    \overline{\mu}_{\max}[k] &= \frac{1}{2}\left[1 - e^{\zeta[k]-\xi[k]} \cdot e^{-\varepsilon(N)}\right]
                \end{aligned}
                \right.
            \end{equation}
            with $\varepsilon(N) = \frac{402N-216}{2N(32N-72)}$.  
            
            We can verify that both $\overline{\mu}_{\min}[k]$ and $\overline{\mu}_{\max}[k]$ are:
            \begin{itemize}
                \item Decreasing function of $\mu[k]$ on $(-\infty, 1/4)$ and increasing on $(1/4, +\infty)$
                \item Symmetric around $\mu[k] = 1/4$: $f(1/4 + x) = f(1/4 - x)$
            \end{itemize}
            
            To show symmetry around $\mu[k] = 1/4$, we verify that $\zeta[k] - \xi[k]$ has identical values at $\mu[k] = 1/4 \pm x$:
            
            \begin{equation}
                \begin{aligned}
                    \zeta[k] - \xi[k] &= -4\mu^2[k] - 4\sigma^2[k] + 2\mu[k] \\
                    &= -4(1/4 \pm x)^2 - 4\sigma^2[k] + 2(1/4 \pm x) \\
                    &= -4(1/16 \pm x/2 + x^2) - 4\sigma^2[k] + 1/2 \pm 2x \\
                    &= -1/4 \mp 2x - 4x^2 - 4\sigma^2[k] + 1/2 \pm 2x \\
                    &= 1/4 - 4x^2 - 4\sigma^2[k]
                \end{aligned}
            \end{equation}
            
            Since the $\pm 2x$ terms cancel, both $\mu[k] = 1/4 + x$ and $\mu[k] = 1/4 - x$ yield the same expression. Therefore, both $\overline{\mu}_{\min}[k]$ and $\overline{\mu}_{\max}[k]$ are symmetric around $\mu[k] = 1/4$.
            
            \textbf{Intersection Analysis and Conservative Bounds}
    
            Since $\overline{\mu}_{\min}[k]$ is monotonously decreasing on $(-\infty, 1/4)$, \textbf{if} $-1/4<\overline{\mu}_{\min}[k](\mu[k]=0) < 0$, then by Corollary~\ref{cor:existence_conditions}, there exists a unique $\phi_{\min} \in (-1/4, 0)$ such that $\overline{\mu}_{\min}[k](\mu[k]=\phi_{\min}) = \phi_{\min}$.
            
            Similarly, since $\overline{\mu}_{\max}[k]$ is monotonously decreasing on $(-\infty, 1/4)$, \textbf{if} $0<\overline{\mu}_{\max}[k](\mu[k]=0)<+1/4$, there exists a unique $\phi_{\max} \in (0, 1/4)$ such that $\overline{\mu}_{\max}[k](\mu[k]=\phi_{\max}) = \phi_{\max}$. Furthermore, since  $\overline{\mu}_{\max}[k]$ is symmetric w.r.t. $\mu[k]=1/4$, if the $\phi_{\max}$ intersection exists in $(0, 1/4)$ it is necessarily unique on the extended interval $(0, 1/2]$ (and in fact in $(0, +\infty)$ since we have demonstrated that $\overline{\mu}_{\max}[k]<1/2$ anywhere). 
    
            Applying Corollary~\ref{cor:conservative_bounds} to our decreasing functions on the relevant intervals, we can bound the two intersections:

            \begin{equation}
                \left\{
                \begin{aligned}
                &\phi_{\min}' \leq \phi_{\min} \leq 0, \quad \text{where } \phi_{\min}' := \overline{\mu}_{\min}[k](\mu[k]=0)\\
                &0 \leq \phi_{\max} \leq \phi_{\max}', \quad \text{where } \phi_{\max}' := \overline{\mu}_{\max}[k](\mu[k]=0)\\
                \end{aligned}
                \right.
            \end{equation}

            Therefore, the intersection points are conservatively bounded by:
            \begin{equation}
                \left\{
                \begin{aligned}
                    &\phi_{\min}' = \frac{1}{2}[1 - e^{+\varepsilon(N)}]\\
                    &\phi_{\max}' = \frac{1}{2}[1 - e^{-\varepsilon(N)}]\\
                \end{aligned}
                \right.
            \end{equation}
            
            where 
            $$\varepsilon(N) = \eta^{(\zeta)}_{\max}(N) + \eta^{(\xi)}_{\max}(N) = \frac{402N-216}{2N(32N-72)}$$
    
            \textbf{Finding Minimum N for Valid Intersection Bounds}
    
            The previous intersection bounds analysis requires that both bounds remain within the valid domain $[-1/4, +1/4]$, where both function are strictly decreasing. Hence, we need to verify:
            \begin{equation}
                \left\{
                \begin{aligned}
                    &\frac{1}{2}[1 - e^{+\varepsilon(N)}] > -\frac{1}{4}\\
                    &\frac{1}{2}[1 - e^{-\varepsilon(N)}] < +\frac{1}{4}
                \end{aligned}
                \right.
            \end{equation}
            
            where $\varepsilon(N) = \frac{402N-216}{2N(32N-72)}$.
            
            The first condition simplifies to:
            \begin{equation}
            \frac{1}{2}[1 - e^{+\varepsilon(N)}] > -\frac{1}{4} \Leftrightarrow \varepsilon(N) < \ln\left(\frac{3}{2}\right)
            \end{equation}
            
            The second condition simplifies to:
            \begin{equation}
            \frac{1}{2}[1 - e^{-\varepsilon(N)}] < +\frac{1}{4} \Leftrightarrow \varepsilon(N) < \ln(2)
            \end{equation}
            
            Since $\ln(2) > \ln(3/2)$, the binding constraint is $\varepsilon(N) < \ln(3/2) :=C''$.
            
            Substituting the expression for $\varepsilon(N)$:
            \begin{equation}
            \frac{402N-216}{2N(32N-72)} < C''
            \end{equation}
            
            Rearranging to standard form:
            \begin{equation}
            P(N) = -64C''N^2 + (402 + 144C'')N - 216 < 0
            \end{equation}
            
            The discriminant is:
            \begin{equation}
            \Delta = (402 + 144C'')^2 - 4 \cdot 64C'' \cdot 216 \approx 189536 > 0
            \end{equation}
            
            The two real roots are:
            \begin{equation}
            \left\{
            \begin{aligned}
            N_1 &= \frac{402 + 144C'' - \sqrt{\Delta}}{128C''} \approx 0.4823\\
            N_2 &= \frac{402 + 144C'' + \sqrt{\Delta}}{128C''} \approx 17.26
            \end{aligned}
            \right.
            \end{equation}
            
             Therefore $P(N) < 0$ for $N > N_2$. Hence, we need $N\geq N'', N''=18$ to satisfy both conditions with positive margins\footnote{Note that verifying $\bar{\mu}_{\min}[k](\mu[k]=0)>-1/4$ enforce directly the condition from Proposition~\ref{prop:relaxed_stability}, by creating a large buffer zone $\mathcal{B}=[-1/4,0]$.}.
    
            \textbf{Directional Convergence Analysis}
            
            To ensure contracting behavior of the mean toward its fixed point (Theorem~\ref{thm:complete_variable_dynamics}) we need to satisfy the constraint $\alpha<\alpha_1$ ensuring that $0<m[k]<1$ (Proposition~\ref{prop:monotonic_constraint}).
    
            Since the true function $\overline{\mu}[k]$ satisfies $\overline{\mu}_{\min}[k] < \overline{\mu}[k] < \overline{\mu}_{\max}[k]$ by definition, and applying our previously defined intersection bounds $\phi_{\min}', \phi_{\max}'$, we can establish directional convergence properties outside the interval $[\phi_{\min}', \phi_{\max}']$:
            
            \begin{itemize}
                \item For $\mu[k] < \phi_{\min}'$: Since $\overline{\mu}_{\min}[k] > \mu[k]$ in this region, by definition of the intersection, we have $\overline{\mu}[k] > \overline{\mu}_{\min}[k] > \mu[k]$, implying $\mu[k+1] > \mu[k]$ (trajectory increases).
                \item For $\mu[k] > \phi_{\max}'$: Since $\overline{\mu}_{\max}[k] < \mu[k]$ in this region, by definition of the intersection, we have $\overline{\mu}[k] < \overline{\mu}_{\max}[k] < \mu[k]$, implying $\mu[k+1] < \mu[k]$ (trajectory decreases).
                \item For $\mu[k] \in [\phi_{\min}', \phi_{\max}']$: The trajectory direction is indeterminate, but from Proposition~\ref{prop:bounded_update_steps}, updates remain bounded by $|\delta[k]| \leq \delta_{\max}(\alpha,N)$.
            \end{itemize}
            
            \textbf{Convergence Region and Stability Conditions}
            
            The directional analysis ensures that trajectories are attracted toward the interval $[\phi_{\min}', \phi_{\max}']$. Combined with bounded updates, the system converges to the padded interval $[\phi_{\min}' - \delta_{\max}, \phi_{\max}' + \delta_{\max}]$, which accounts for the possibility that the trajectories within $[\phi_{\min}', \phi_{\max}']$ may exit the interval, but can deviate by at most $\delta_{\max}$ in a single step.
            
            For system stability, this convergence region must remain within the valid domain $(-1/4, 1/2)$:
            $$\phi_{\min}' - \delta_{\max} > -\frac{1}{4} \quad \text{and} \quad \phi_{\max}' + \delta_{\max} < \frac{1}{2}$$
            
            From our previous analysis:
            \begin{itemize}
                \item Any $N \geq 18$ ensures $[\phi_{\min}', \phi_{\max}'] \subset (-1/4, +1/4)$ with positive margins.
                \item From Proposition~\ref{prop:bounded_update_steps}, $\delta_{\max}(\alpha,N)$ can be made arbitrarily small by choosing sufficiently small $\alpha$.
            \end{itemize}
            
            Therefore, there exists $\alpha_2 > 0$ such that for all $0 < \alpha < \alpha_2$:
            $$[\phi_{\min}' - \delta_{\max}, \phi_{\max}' + \delta_{\max}] \subset (-1/4, 1/4)$$

            We can derive the value of $\alpha_2$ based on the update step bound (Proposition~\ref{prop:bounded_update_steps}):
            \begin{equation}
                \delta_{\max}(\alpha, N) = \frac{\alpha}{N} \times \left[\frac{3}{2} e^{9/4} e^{\eta^{(\xi)}_{\max}(N)} - e^{-1} e^{-\eta^{(\zeta)}_{\max}(N)}\right]  
            \end{equation}

            Since
            \begin{equation}
                \phi_{\min}' = \frac{1}{2}[1 - e^{+\varepsilon(N)}]
            \end{equation}

            We have:
            \begin{equation}
                \phi_{\min}' - \delta_{\max}(\alpha, N)>-\frac{1}{4} \leftrightarrow \alpha  < N \times \frac{3 - 2e^{+\varepsilon(N)}}{4\times \left[\frac{3}{2} e^{9/4} e^{\eta^{(\xi)}_{\max}(N)} - e^{-1} e^{-\eta^{(\zeta)}_{\max}(N)}\right]}:=\alpha_2 
            \end{equation}

            \textbf{Learning Rate Constraint Hierarchy}
            
            We impose two constraints on the learning rate: 
            $\alpha<\alpha_1$ ensures contracting behavior of the mean toward its fixed point (Theorem~\ref{thm:complete_variable_dynamics}) by guaranteeing $0<m[k]<1$ (Proposition~\ref{prop:monotonic_constraint}), and 
            $\alpha<\alpha_2$ ensures that the previously defined convergence interval lies within $(-1/4,+1/4)$. 
            Our objective is to establish a hierarchy between these constraints, if one exists.
            
            We recall:
            \begin{equation}
                \alpha_2=N \times \frac{3 - 2e^{+\varepsilon(N)}}{4\times \left[\frac{3}{2} e^{9/4} e^{\eta^{(\xi)}_{\max}(N)} - e^{-1} e^{-\eta^{(\zeta)}_{\max}(N)}\right]}
            \end{equation}
            
            where $\varepsilon(N)= \frac{402N-216}{2N(32N-72)}$.
            
            From Proposition~\ref{prop:bounded_update_steps}, Eq.~\eqref{eq:update_step_upper_bounding} we know the following holds true:
            \begin{equation}
                \frac{3}{2} e^{9/4} e^{\eta^{(\xi)}_{\max}(N)} - e^{-1} e^{-\eta^{(\zeta)}_{\max}(N)} > \frac{3}{2}e^{9/4} - e^{-1}:=L 
            \end{equation}
            
            which yields the bound:
            \begin{equation}
                \alpha_2 < \frac{N}{4L} \times \left(3 - 2e^{+\varepsilon(N)}\right)
            \end{equation}
            
            Since $\varepsilon(N)$ decreases to $0$ as $N$ increases, $e^{+\varepsilon(N)}$ decreases to $1$ as $N$ grows. Therefore, the maximum value of the upper bound for $\alpha_2$ is achieved by minimizing the subtractive term (which occurs when $N$ is large):
            \begin{equation}
                \alpha_2 < \frac{N}{4L} \times \left(3 - 2e^{+\varepsilon(N)}\right) \leq \frac{N}{4L} \times \left(3 - 2e^{0}\right) = \frac{N}{4L}:=\alpha_2^\mathrm{max}
            \end{equation}
            
            We recall the expression for $\alpha_1$ from Proposition~\ref{prop:monotonic_constraint}:
            \begin{equation}
                \alpha_1 := \frac{N}{2}e^{-\frac{72N-9}{32N-72}}
            \end{equation}
            
            To establish the constraint hierarchy, we determine the value of $N$ for which:
            \begin{equation}
                \alpha_1>\alpha_2^\mathrm{max}>\alpha_2
            \end{equation}
            
            This requirement leads to:
            \begin{equation}
                \begin{aligned}
                    \alpha_1&>\alpha_2^\mathrm{max}\\
                    \frac{N}{2}e^{-\frac{72N-9}{32N-72}}&>\frac{N}{4L}\\
                    e^{-\frac{72N-9}{32N-72}}&>\frac{1}{2L}\\
                    -\frac{72N-9}{32N-72}&>\ln{(1)}-\ln{(2L)}\\
                    -\frac{72N-9}{32N-72}&>-\ln{(2L)}\\
                    -(72N-9)&>-32N\times\ln{(2L)}+72\times\ln{(2L)}\\
                    N(32\times\ln{(2L)}-72)&>72\times\ln{(2L)}-9\\
                    N&>\frac{72\times\ln{(2L)}-9}{32\times\ln{(2L)}-72}\\
                \end{aligned}
            \end{equation}
            
            Numerical evaluation yields $N>6.71$. Restricting to integer values of $N$ ensures the condition holds for any $N\geq 7$.
            
            Consequently, we demonstrate that $\alpha_2<\alpha_2^\mathrm{max}<\alpha_1$ for $N\geq7$, such that any learning rate satisfying $\alpha<\alpha_2$ automatically satisfies both constraints. Note that the condition $N\geq 7$ is necessarily satisfied when either the domain invariance constraint $N\geq8$ from Proposition~\ref{prop:relaxed_stability} or the bounded convergence interval constraint $N\geq18$ is enforced.

            \textbf{Conclusion}
            
            For $N\geq18$, there exists $\alpha_2$ (satisfying both monotonic convergence constraint $0<m[k]<1$ and convergence interval containment), such that for $\alpha<\alpha_2$, any trajectory starting in the valid domain $[-1/4, 1/2]$ converges to the bounded interval $[\phi_{\min}' - \delta_{\max}, \phi_{\max}' + \delta_{\max}]\subset(-1/4,+1/4)$, where:
            \begin{equation}
                \left\{
                \begin{aligned}
                    \phi_{\min}' &= \frac{1}{2}[1 - e^{+\varepsilon(N)}] \\
                    \phi_{\max}' &= \frac{1}{2}[1 - e^{-\varepsilon(N)}]
                \end{aligned}
                \right.
            \end{equation}
            
            This convergence region narrows as $N \to \infty$ (since $\varepsilon(N) \to 0$) and as $\alpha \to 0$ (since $\delta_{\max} \to 0$), approaching the ideal fixed point at $\mu = 0$.

            It is worth noting that while our conservative bounds are tight enough to guarantee explicit containment only for $N \geq 18$, the actual convergence interval remains within the forward-invariant domain for all $N \geq 8$ supposing sufficiently small $\alpha$, under Proposition~\ref{prop:relaxed_stability}.

        \end{proof}

        \begin{remark}[Convergence Interval Size]\label{remark:convergence_interval_size}
            The size of the convergence interval is:
            $$|\phi_{\max}' - \phi_{\min}'| + 2\delta_{\max} = \frac{1}{2}[e^{+\varepsilon(N)} - e^{-\varepsilon(N)}] + 2\delta_{\max}$$
            
            Using first-order approximation, this approximates to $\varepsilon(N) + 2\delta_{\max} = \mathcal{O}(1/N) + \mathcal{O}(\alpha/N)$ for small $\varepsilon(N)$, confirming that the convergence region narrows with both increasing $N$ and decreasing $\alpha$.
        \end{remark}

        \begin{proposition}[Closure - Parity Learning Convergence]
            \label{prop:complete_convergence_closure}Consider gradient descent on the XOR architecture under unit stochastic sparsity ($p_e = 1/N$) with symmetric initialization $\mu[0] = 1/2$, $\sigma^2[0] = 1/4$. For $N \geq 18$, there exist learning rates satisfying $\alpha < \alpha_2$ such that the algorithm achieves convergence with:
                \begin{equation}
                \left\{
                \begin{aligned}
                    &\lim_{k\to\infty} \sigma^2[k] = 0 \quad \text{(exact geometric convergence)}\\
                    &\lim_{k\to\infty} \mu[k] \in [\phi_{\min}' - \delta_{\max}, \phi_{\max}' + \delta_{\max}] \quad \text{(bounded convergence)}
                \end{aligned}
                \right.
                \end{equation}
                where the convergence interval size satisfies $|\phi_{\max}' - \phi_{\min}'| + 2\delta_{\max} = \mathcal{O}(1/N) + \mathcal{O}(\alpha/N)$.
            \end{proposition}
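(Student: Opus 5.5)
The proof assembles the results already established; no new estimate is needed. We first check that the initialization is admissible. The choice $\mu[0]=1/2$, $\sigma^2[0]=1/4$ lies in the domain of Assumption~\ref{assumption:system_assumption}. It is symmetric and Gaussian (Remark~\ref{remark:gaussian_init}), so Propositions~\ref{prop:gaussian_conservation} and~\ref{prop:symmetry_conservation} hold for every $k$. Hence it suffices to track $(\mu[k],\sigma^2[k])$ through the affine recursion $\mu[k+1]=m[k]\mu[k]+c[k]$, $\sigma^2[k+1]=m^2[k]\sigma^2[k]$.

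Next we fix $N\ge 18$ and $\alpha<\alpha_2$. From the constraint hierarchy in the proof of Proposition~\ref{prop:bounded_convergence_interval}, $\alpha_2<\alpha_1<\alpha_0$ whenever $N\ge 7$. So Proposition~\ref{prop:monotonic_constraint} gives $0<m[k]<1$ at every state in the domain.

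Domain invariance must come first, because every bound used later is valid only inside the domain. For $N\ge 18$ we have $\phi'_{\min}=\overline{\mu}_{\min}[k]|_{\mu[k]=0}>-1/4$, which provides the buffer zone $\mathcal{B}=[-1/4,0]$ noted in the footnote. Proposition~\ref{prop:relaxed_stability} then applies, provided $\delta_{\max}$ is smaller than the buffer width. This holds because $\alpha<\alpha_2$ forces $\phi'_{\min}-\delta_{\max}>-1/4$. On the upper side, $\overline{\mu}_{\max}[k]<1/2$ keeps $\mu[k]\le 1/2$, since $0<m[k]<1$ makes each step a convex combination of $\mu[k]$ and $\overline{\mu}[k]$. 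Variance stays in $(0,1/4]$ because it decreases monotonically. Corollary~\ref{cor:persistent_validity} then lets us use every bound for all $k$ by induction on $k$.

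\textbf{Variance.} We have $\sigma^2[k]=\prod_{j<k}m^2[j]\,\sigma^2[0]$. To get geometric (and not merely monotone) decay, we need a uniform bound $m[k]\le \bar m<1$. This follows from $m[k]=1-\frac{2\alpha}{N}e^{\xi[k]+\eta^{(\xi)}[k]}$ together with a lower bound on $\xi[k]$ over the domain. On $\mu\in[-1/4,1/2]$ the minimum of $\mu^2-\mu$ is $-1/4$, so $\xi\ge -1$. Combined with $\eta^{(\xi)}>-\eta^{(\xi)}_{\max}(N)$, this gives $\bar m=1-\frac{2\alpha}{N}e^{-1-\eta^{(\xi)}_{\max}(N)}$. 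Therefore $\sigma^2[k]\le \bar m^{2k}/4\to 0$.

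\textbf{Mean.} We invoke Proposition~\ref{prop:bounded_convergence_interval}, whose hypotheses are now met: $0<m[k]<1$, bounded steps from Proposition~\ref{prop:bounded_update_steps}, and $N\ge 18$. Its directional argument gives $\mu[k+1]>\mu[k]$ below $\phi'_{\min}$ and $\mu[k+1]<\mu[k]$ above $\phi'_{\max}$.

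The main obstacle is turning this into a genuine limit statement. Monotonicity outside the interval does not by itself rule out $\mu[k]$ converging to a point strictly outside $[\phi'_{\min},\phi'_{\max}]$. To exclude this, we show that the step size is bounded away from zero there. On $[\phi'_{\max}+\eta,1/2]$ the gap $\mu[k]-\overline{\mu}_{\max}[k]$ is bounded below by a positive constant, by continuity and compactness (using $\sigma^2\in[0,1/4]$). Since $\delta[k]=(1-m[k])(\overline{\mu}[k]-\mu[k])$ and $1-m[k]\ge 1-\bar m>0$, each step there decreases $\mu$ by a uniform positive amount. So the interval is reached in finitely many steps, and the symmetric argument works below $\phi'_{\min}$.

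Once inside $[\phi'_{\min},\phi'_{\max}]$, a single step can exit by at most $\delta_{\max}$. Any exit is then followed by motion back toward the interval, so the trajectory remains in the padded interval forever. The statement "$\lim\mu[k]\in$" should therefore be read as $\limsup$ and $\liminf$ both lying in the padded interval, and we will state it that way.

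Finally, Remark~\ref{remark:convergence_interval_size} gives the size $\tfrac12(e^{\varepsilon}-e^{-\varepsilon})+2\delta_{\max}=\mathcal{O}(1/N)+\mathcal{O}(\alpha/N)$. This uses $\varepsilon(N)=\mathcal{O}(1/N)$ and $\delta_{\max}=\mathcal{O}(\alpha/N)$, with the bracket in $\delta_{\max}$ bounded uniformly in $N\ge 18$.
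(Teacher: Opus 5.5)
Your skeleton is the paper's synthesis, and two of your additions tighten it. First, the uniform factor $\bar m=1-\frac{2\alpha}{N}e^{-1-\eta^{(\xi)}_{\max}(N)}$, obtained from $\xi\ge -1$, is what actually gives geometric decay of $\sigma^2[k]$; the pointwise condition $|m[k]|<1$ of Proposition~\ref{prop:sigma_convergence} does not deliver this on its own. Second, you are right that directional monotonicity does not by itself force entry into $[\phi'_{\min},\phi'_{\max}]$, so a step-size argument and a $\liminf/\limsup$ reading are needed.

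However, the step you use for the upper side fails as stated. The gap $\mu-\overline{\mu}_{\max}[k]$ is \emph{not} bounded below by a positive constant on $[\phi'_{\max}+\eta,1/2]$ uniformly over $\sigma^2\in[0,1/4]$. The envelope $\overline{\mu}_{\max}=\frac12[1-e^{-4\mu^2+2\mu-4\sigma^2}e^{-\varepsilon(N)}]$ is increasing in $\sigma^2$, whereas $\phi'_{\max}=\frac12[1-e^{-\varepsilon(N)}]$ is its value at $\mu=0$ \emph{and} $\sigma^2=0$. At $\mu=0.2$, $\sigma^2=1/4$ you get $\overline{\mu}_{\max}>\frac12(1-e^{-0.76})\approx 0.266>0.2$, while $\phi'_{\max}<0.17$ for every $N\ge 18$. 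The leading-order fixed point $\frac12(1-e^{\zeta-\xi})$ is also $\approx 0.266$ there, so for large $N$ the mean genuinely moves \emph{up} at such states. The claim ``$\mu[k+1]<\mu[k]$ above $\phi'_{\max}$'' that you import from Proposition~\ref{prop:bounded_convergence_interval} carries the same hidden restriction to small variance. The lower side is unaffected: $\overline{\mu}_{\min}$ also increases with $\sigma^2$, so $\sigma^2=0$ is its worst case.

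The repair fits your ordering, since you prove $\sigma^2[k]\le\bar m^{2k}/4$ first.
At $\sigma^2=0$, the function $g(\mu)=\mu-\overline{\mu}_{\max}(\mu,0)$ is continuous and strictly positive on the whole compact interval $[\phi'_{\max},1/2]$, endpoint included. This holds because $\overline{\mu}_{\max}(\mu,0)<\phi'_{\max}$ on $(0,1/2)$ and $\overline{\mu}_{\max}(1/2,0)=\phi'_{\max}<1/2$. Let $g_0>0$ be its minimum.
By uniform continuity on $[\phi'_{\max},1/2]\times[0,1/4]$, there is $\bar\sigma^2>0$ such that $\mu-\overline{\mu}_{\max}(\mu,\sigma^2)\ge g_0/2$ there whenever $\sigma^2\le\bar\sigma^2$. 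Geometric decay then supplies a $K$ with $\sigma^2[k]\le\bar\sigma^2$ for all $k\ge K$.
For $k<K$ you only need domain invariance, which you did establish uniformly in $\sigma^2$. For $k\ge K$ you have $\delta[k]\le-(1-\bar m)g_0/2$ whenever $\mu[k]\ge\phi'_{\max}$. Your finite-time entry argument and the forward invariance of the padded interval then go through as written.
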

            
            \begin{proof}
                The proof synthesizes results from key propositions establishing the necessary convergence components.
            
                \textbf{Distributional Framework Foundation:}
                Our analysis operates within the Gaussian distributional framework where weight distributions $D_0[k] \sim \mathcal{N}(\mu_0[k], \sigma_0^2[k])$ and $D_1[k] \sim \mathcal{N}(\mu_1[k], \sigma_1^2[k])$ evolve under gradient descent.
            
                \begin{itemize}
                    \item By Proposition~\ref{prop:gaussian_conservation}, gradient updates constitute affine transformations, ensuring Gaussian distributions remain Gaussian throughout training, allowing consistent system analysis throughout evolution.
                    \item By Proposition~\ref{prop:symmetry_conservation}, under symmetric initialization $\mu_0[0] = \mu_1[0] = 1/2$ and $\sigma_0^2[0] = \sigma_1^2[0] = 1/4$, the symmetry relations $\mu_0[k] = 1 - \mu_1[k]$ and $\sigma_0^2[k] = \sigma_1^2[k]$ are preserved for all $k$, allowing analysis of a single distribution $(\mu[k], \sigma^2[k]) := (\mu_0[k], \sigma_0^2[k])$ without loss of generality.
                \end{itemize}
            
                Under system Assumptions~\ref{assumption:system_assumption}, we establish: 
                
                \textbf{Variance Convergence:}
                By Proposition~\ref{prop:sigma_convergence}, under constraint $\alpha < \alpha_0$, the variance component exhibits geometric decay ensuring $\lim_{k\to\infty} \sigma^2[k] = 0$.
                                
                \textbf{Mean Convergence Analysis:} Under the more restrictive constraint $\alpha < \alpha_1 < \alpha_0$ (Proposition~\ref{prop:monotonic_constraint}), the affine transform driving the evolution of the mean term exhibit a monotonic behavior ($0<m[k]<1)$, reducing distance to the instantaneous fixed point $\overline{\mu}[k]$ without sign reversal.
                
                Under the even more restrictive constraint $\alpha < \alpha_2 < \alpha_1 < \alpha_0$ with $N \geq 18$ (Proposition~\ref{prop:bounded_convergence_interval}), the mean parameter $\mu[k]$ converges to the interval: 
                \begin{equation}
                    [\phi_{\min}' - \delta_{\max}(\alpha,N), \phi_{\max}' + \delta_{\max}(\alpha,N)]
                \end{equation} 
                where $\phi_{\min}', \phi_{\max}'$ bound the intersection points of the instantaneous fixed point $\overline{\mu}[k]$ with the identity line $\mu[k+1]=\mu[k]$ (representing equilibrium), and $\delta_{\max}(\alpha,N) = \mathcal{O}(\alpha/N)$ represents the uniform update step bound across our assumption domain. 
                
                \textbf{Domain Invariance:}
                For sufficiently small $\alpha$, the constraint  $N\geq 8$, ensures forward invariance of the assumption domain (Assumption~\ref{assumption:system_assumption} ). This property is also enforced in a stricter manner by Proposition~\ref{prop:envelope_invariance} demonstrating system invariance  for $N \geq 43$. This ensures continued validity of all properties throughout evolution. While our conservative bounds are tight enough to guarantee explicit definition of the convergence interval only for $N \geq 18$, the actual convergence interval remains within the forward-invariant domain for all $N \geq 8$ supposing sufficiently small $\alpha$, under Proposition~\ref{prop:relaxed_stability}
                
                \textbf{Global Constraint Synthesis:}
                The cascading constraint structure ensures that for any $\alpha < \alpha_2$ and $N\geq 18$, both variance and mean converge as stated above.
                
                The convergence interval size scales as $|\phi_{\max}' - \phi_{\min}'| + 2\delta_{\max} = \mathcal{O}(1/N) + \mathcal{O}(\alpha/N)$. As $N \to \infty$, both terms vanish, providing arbitrarily tight convergence.
            
                \textbf{Extension to Complete Distributional System:}
                Under Proposition~\ref{prop:symmetry_conservation}, the convergence analysis for $(\mu[k], \sigma^2[k]) := (\mu_0[k], \sigma_0^2[k])$ extends naturaly to the complete system via preserved symmetry relations, establishing convergence for both weight distributions $D_0$ and $D_1$.
                
                The synthesis of variance convergence and bounded mean convergence establishes convergent learning of XOR node under the specified conditions, completing the theoretical guarantee.
            \end{proof}

    \begin{remark}
        The analysis above does not quantify the convergence rate of $\mu[k]$ toward the convergence interval. Empirically, however, all our experiments indicate convergence in a practically reasonable number of iterations, far from any asymptotic regime (see experimental results in Section~\ref{section:experimental_validation}).
        
        Two theoretical elements nevertheless provide intuition about the effective convergence speed:
        \begin{itemize}
            \item Proposition~\ref{prop:bounded_convergence_interval} shows that $\overline{\mu}_{\max}[k] < \phi'_{\max} < 1/4$ on the interval $[0,1/2]$. Hence, for $\mu[k]$ outside the convergence interval, the instantaneous fixed point $\overline{\mu}[k]$ cannot lie arbitrarily close to the identity line. In other words, the target toward which $\mu[k]$ is pulled is not arbitrarily close to its current value, which prevents arbitrarily slow drift.
            \item Theorem~\ref{thm:complete_variable_dynamics} establishes an exponential contraction of the distance to the fixed–point envelope. Hence $\mu[k]$ approach the convergence interval with exponential rate.
        \end{itemize}
        These two facts together suggest that, under our assumptions, the mean dynamics should exhibit an effective exponential convergence toward the limiting interval.
    \end{remark}

    \section{Supplementary Experimental Results}\label{app:exp_results}    

        \subsection{Oracle Weight Proportion Independence Validation}\label{appendix:pw_independence}
        
            Figure~\ref{fig:impact_pw} demonstrates convergence behavior across different oracle weight proportions. All curves perfectly overlap, confirming identical convergence dynamics regardless of $p_w$. This validates our theoretical claim that the symmetric initialization and update rules render the learning process invariant to the underlying parity function structure as derived in Proposition~\ref{prop:symmetry_conservation}.
            
            \begin{figure}[htbp]
                \centerline{\includegraphics[width=250pt]{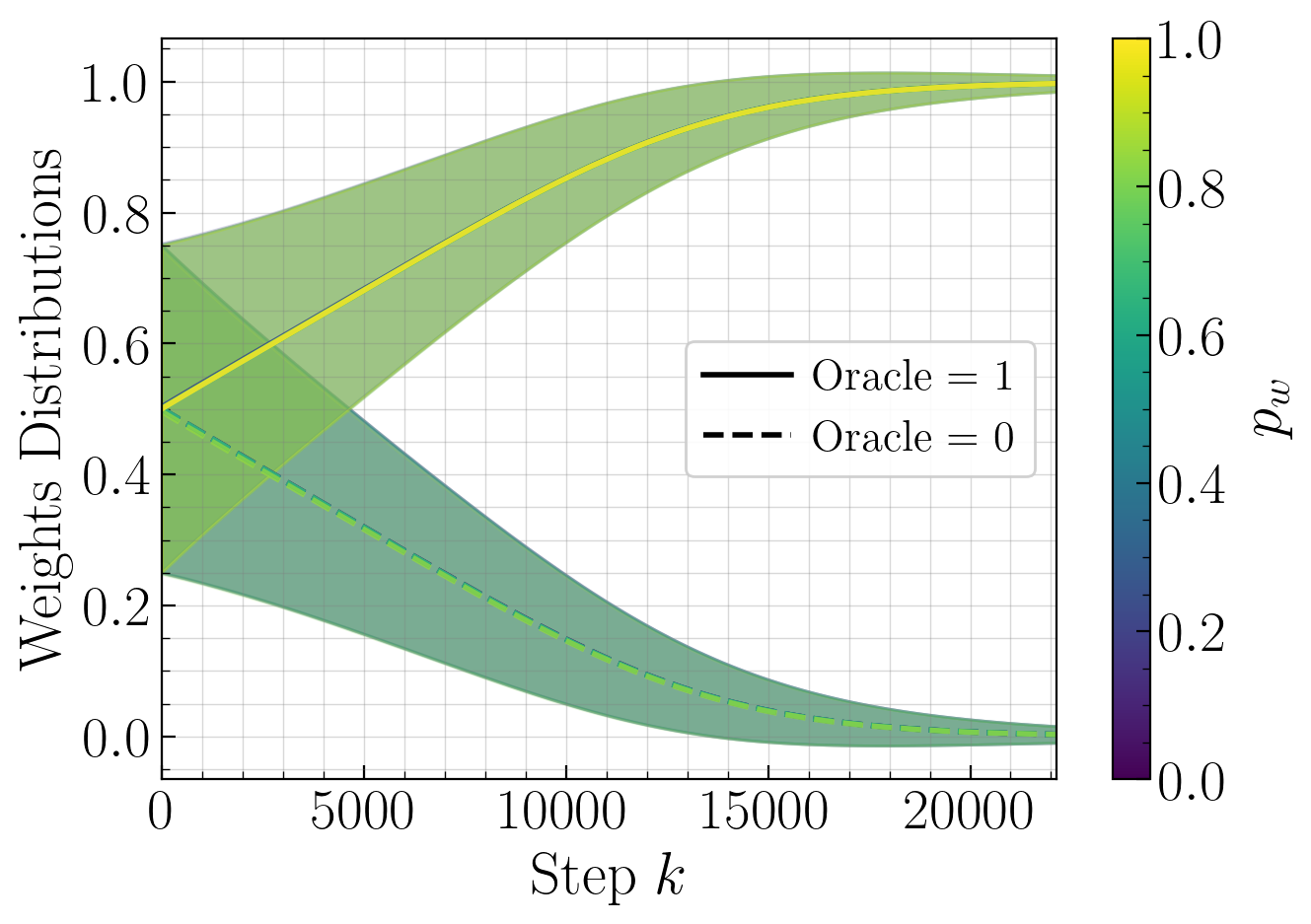}}
                \caption{Oracle weight proportion independence validation. Mean and variance evolution for different $p_w$ values (purple: $p_w=0$ to yellow: $p_w=1$). Perfect curve overlap confirms convergence behavior independence from oracle weight proportion. \textbf{Dashed}: target-0 weights; \textbf{Solid}: target-1 weights. \textit{Conditions}: $N=100{,}000$, $M=1{,}000$, $\alpha=10$, $p_e=1/N$, $P=1$, $S=1{,}000{,}000$.}
                \label{fig:impact_pw}
            \end{figure}

        \subsection{Detailed Distributional Analysis Across Oracle Weight Proportions}\label{appendix:pw_distributions}
            
            This section provides comprehensive distributional analysis validating the independence of convergence behavior from oracle weight proportion $p_w$. Figures~\ref{fig:pw_0}--\ref{fig:pw_1} show weight distributions and Q-Q plots for various $p_w$ values, mirroring the analysis of Figure~\ref{fig:gaussian_conservation} in the main text.

            These results validate Proposition~\ref{prop:symmetry_conservation} and \ref{prop:gaussian_conservation}, demonstrating that the learning dynamics remain invariant across all possible oracle weight proportion configurations, from completely sparse ($p_w=0$) to completely dense ($p_w=1$) target functions.

            \begin{figure*}[htbp]
                \centerline{\includegraphics[width=460pt]{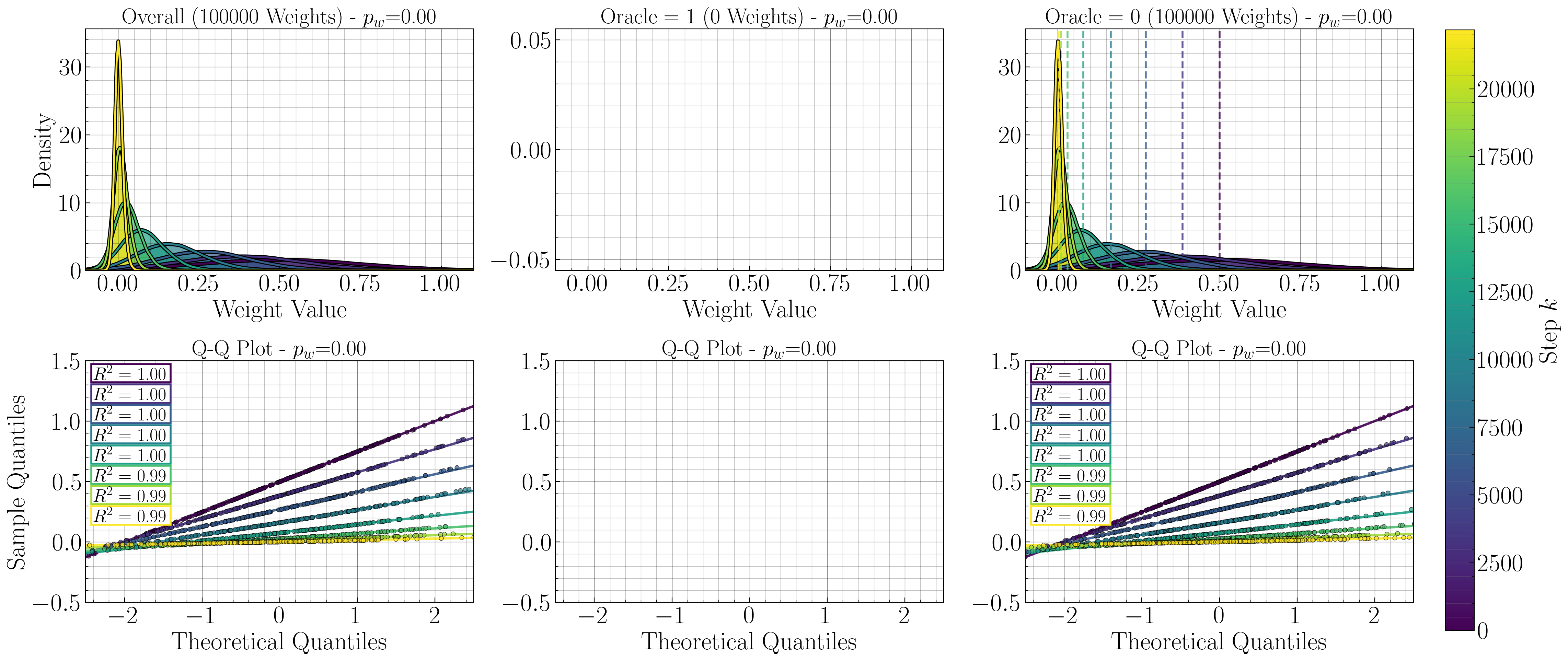}}
                \caption{Distributional analysis for oracle weight proportions $p_w = 0$. \textbf{Top row}: Weight distribution snapshots (purple: early, yellow: late) for all weights, target-0 family, and target-1 family. \textbf{Bottom row}: Q-Q plots confirming Gaussian conservation in separated families. \textit{Conditions}: $N=100{,}000$, $M=1{,}000$, $\alpha=10$, $p_e=1/N$, $S=1{,}000{,}000$.}
                \label{fig:pw_0}
            \end{figure*}

            
            \begin{figure*}[htbp]
                \centerline{\includegraphics[width=460pt]{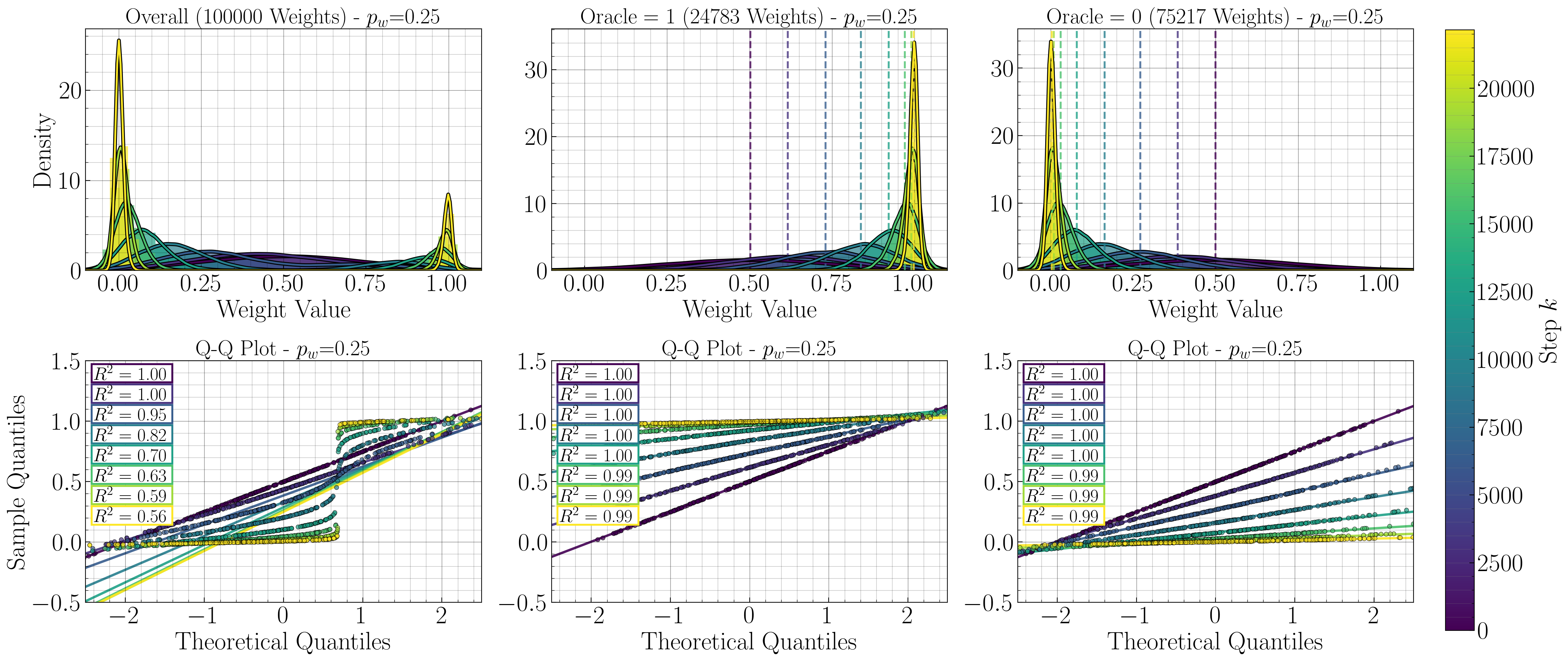}}
                \caption{Distributional analysis for oracle weight proportions $p_w = 0.25$. Layout and interpretation identical to Figure~\ref{fig:pw_0}. }
                \label{fig:pw_0_25}
            \end{figure*}
            
            \begin{figure*}[htbp]
                \centerline{\includegraphics[width=460pt]{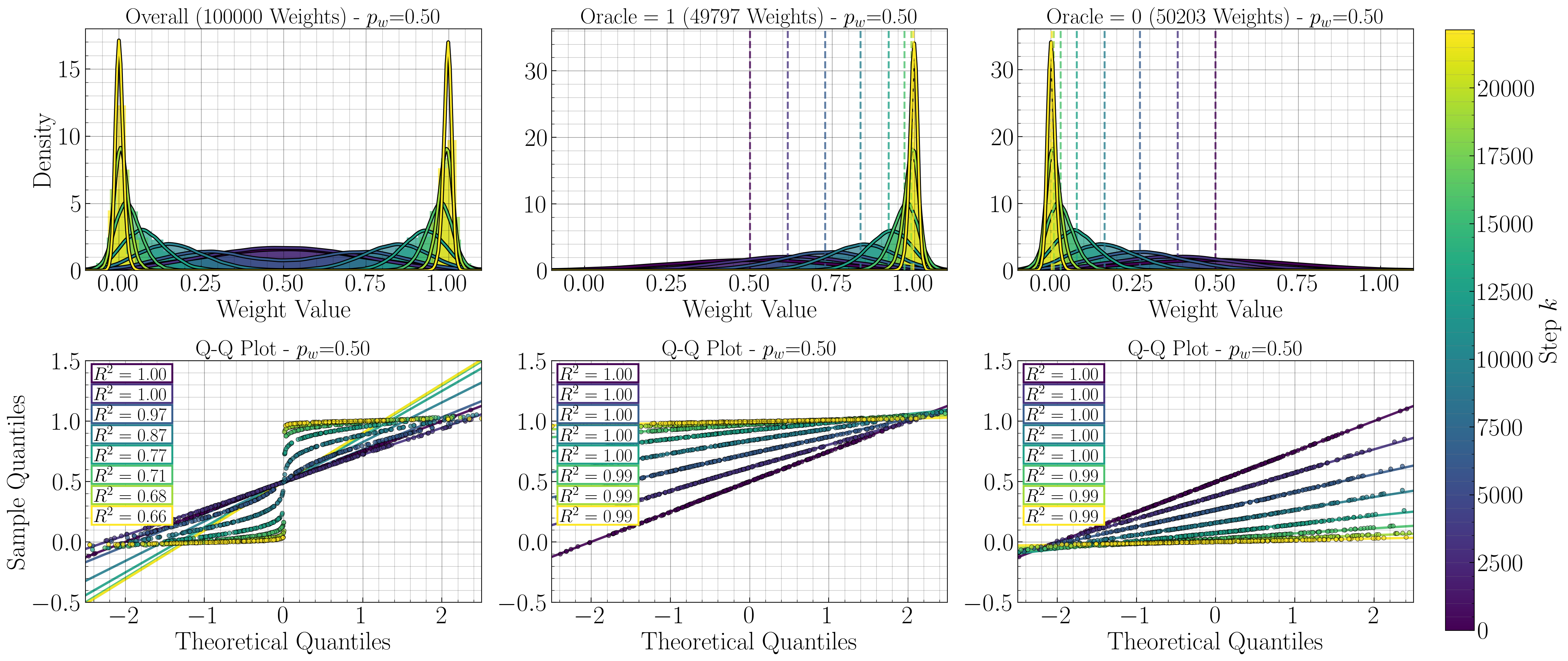}}
                \caption{Distributional analysis for oracle weight proportions $p_w = 0.5$. Layout and interpretation identical to Figure~\ref{fig:pw_0}. }
                \label{fig:pw_0_5}
            \end{figure*}

            \begin{figure*}[htbp]
                \centerline{\includegraphics[width=460pt]{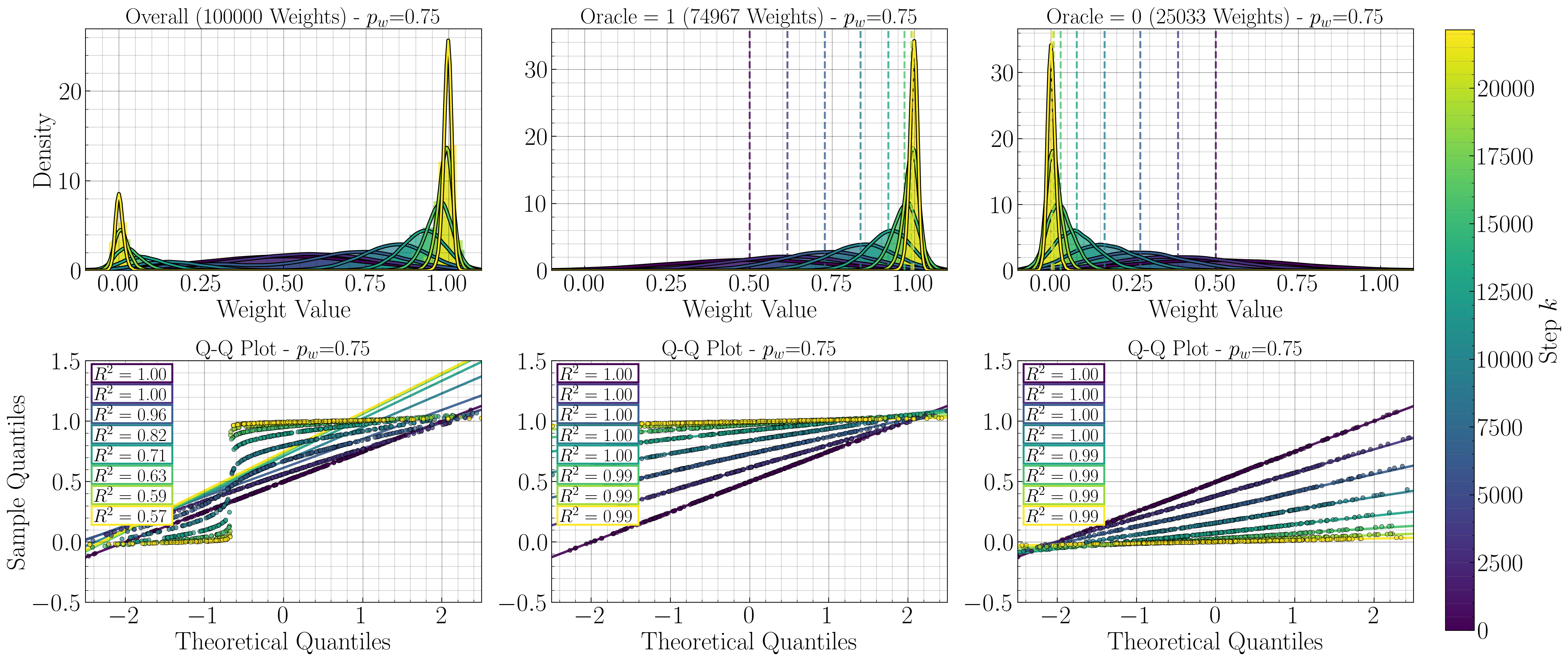}}
                \caption{Distributional analysis for oracle weight proportions $p_w = 0.75$. Layout and interpretation identical to Figure~\ref{fig:pw_0}. }
                \label{fig:pw_0_75}
            \end{figure*}

            
            \begin{figure*}[htbp]
                \centerline{\includegraphics[width=460pt]{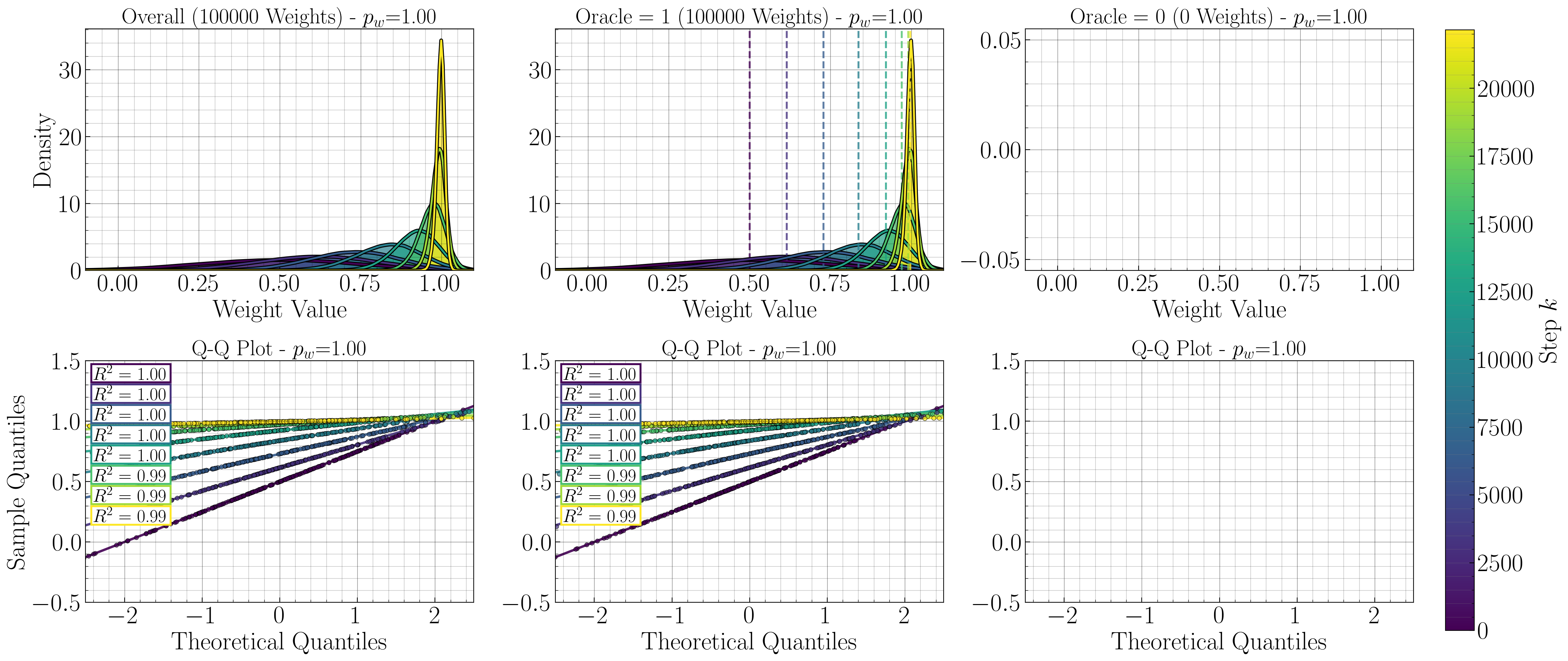}}
                \caption{Distributional analysis for oracle weight proportions $p_w = 1.0$. Layout and interpretation identical to Figure~\ref{fig:pw_0}.}
                \label{fig:pw_1}
            \end{figure*}

        \clearpage  
        \subsection{Additional Sparsity Analysis}\label{appendix:sparsity_analysis}

            Figure~\ref{fig:convergence_as_a_function_of_p_e_num_steps} provides an alternative visualization of the sparsity impact analysis, with $p_e$ on the x-axis and convergence steps on the y-axis. Iso-distance curves represent constant weight distances, confirming the same three-regime structure and optimal sparsity relationships observed in the main analysis and outlined in Proposition~\ref{prop:optimal_error_probability}. 

            \begin{figure}[htbp]
                \centerline{\includegraphics[width=230pt]{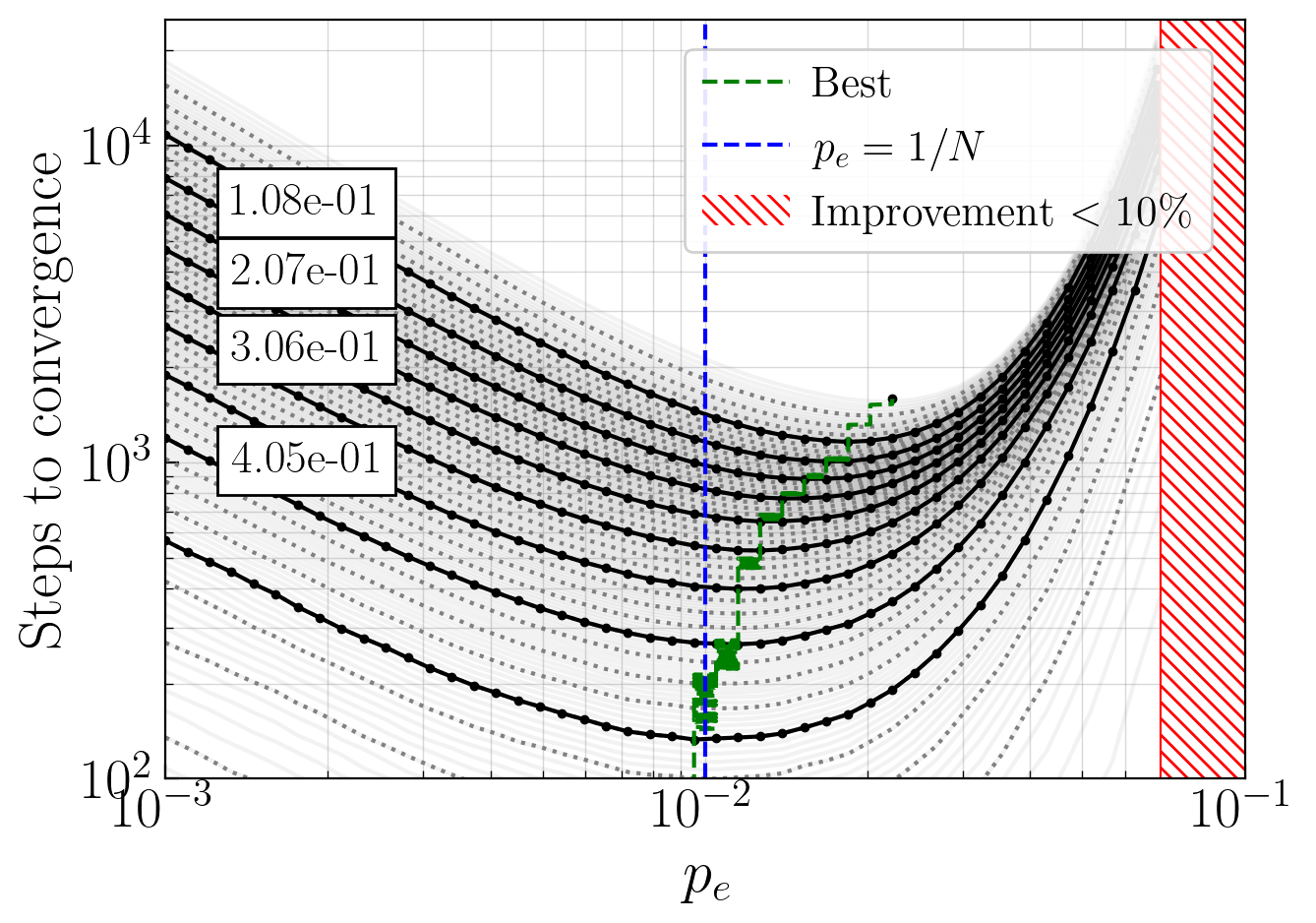}}
                \caption{Steps to convergence versus $p_e$ with iso-distance curves (constant weight distance levels) in log-log scale. Confirms three-regime structure: ultra-sparse (increasing steps with linear slope), optimal zone around $p_e^* \approx 2/N$, and dense regime with convergence failure beyond $p_e^\mathrm{lim} \approx 7/N$. The log-log scale clearly reveals that the ultra-sparse regime exhibits gradual convergence slowdown (linearly increasing slope as $p_e$ decrease from $1/N$) without complete failure, while the dense regime shows abrupt convergence breakdown beyond the critical limit. \textit{Conditions}: $N=100$, $P=1000$, $\alpha=0.1$, $M=100$, $p_w=0.5$, $S=25{,}000$.}
                \label{fig:convergence_as_a_function_of_p_e_num_steps}
            \end{figure}

        
        \subsection{Additional Learning Rate Analysis}\label{appendix:learning_rate_analysis}

            This section provides complementary analysis to Section~\ref{subsubsection:impact_of_learning_rate}. Figure~\ref{fig:convergence_as_a_function_of_alpha} presents iso-step curves showing the relationship between learning rate $\alpha$ and convergence performance, analogous to the sparsity analysis in Figure~\ref{fig:convergence_as_a_function_of_p_e}.
            
            \begin{figure}[htbp]
                \centerline{\includegraphics[width=250pt]{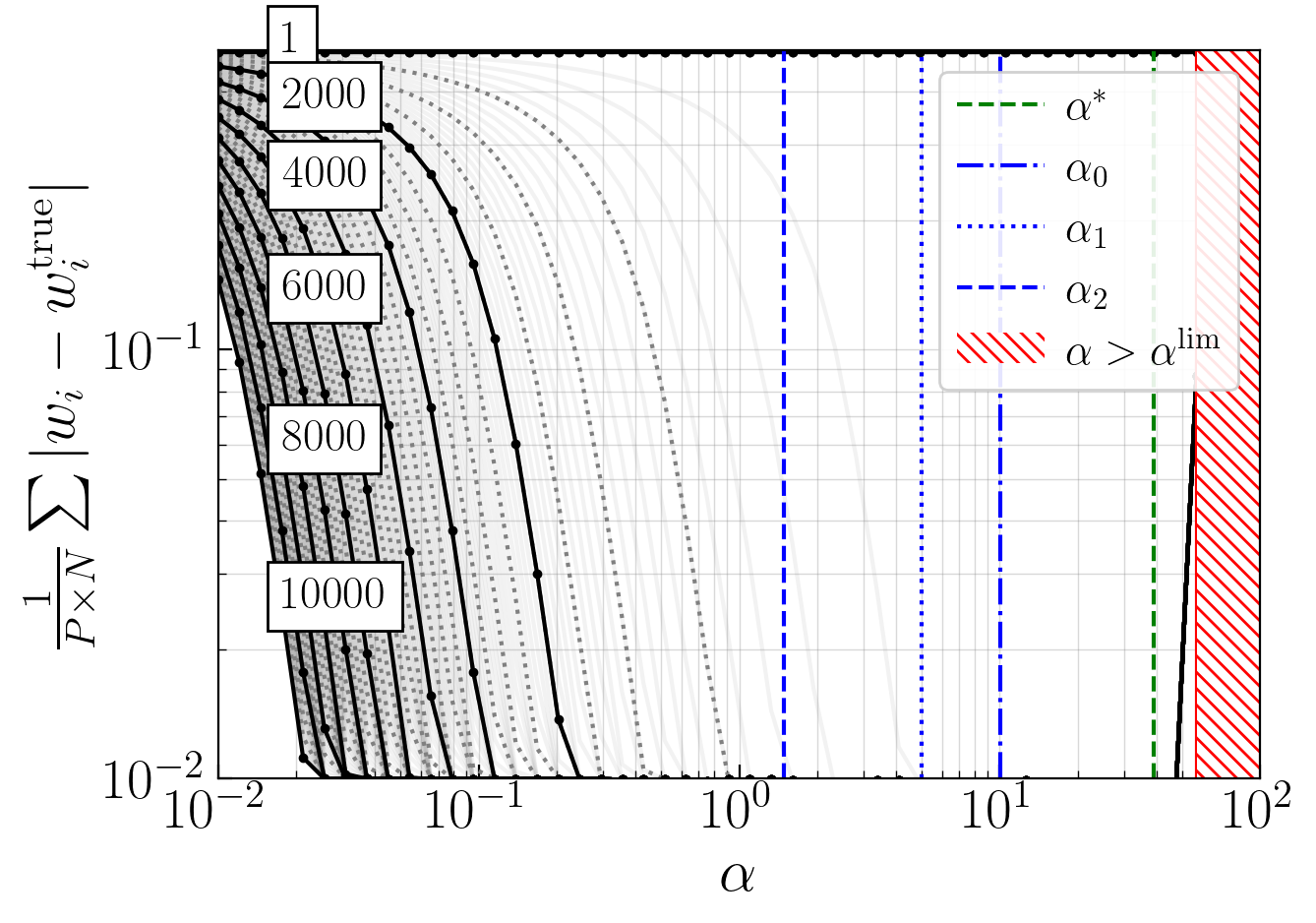}}
                \caption{Convergence analysis across learning rate regimes. Iso-step curves show training steps required to reach different convergence levels (distance $d$) for various learning rates $\alpha$. The visualization confirms that convergence speed increases monotonically with learning rate until reaching a critical breakdown threshold $\alpha^{\text{lim}}$ beyond which convergence fails entirely. Theoretical bounds $\alpha_0$, $\alpha_1$, and $\alpha_2$ are overlaid for comparison with empirical behavior. \textit{Conditions}: $N=100$, $P=1{,}000$, $p_e=1/N$, $M=50{,}000$, $p_w=0.5$, $S=10{,}000$.}
                \label{fig:convergence_as_a_function_of_alpha}
            \end{figure}
            
            The analysis reveals a simple two-regime behavior:
            
            \begin{itemize}
                \item \textbf{Convergent regime} ($\alpha < \alpha^{\text{lim}}$): Convergence speed increases with learning rate. Higher learning rates consistently lead to faster convergence without stability issues.
                
                \item \textbf{Divergent regime} ($\alpha \geq \alpha^{\text{lim}}$): Complete convergence failure occurs beyond the critical threshold, in line with the existence of finite thresholds demonstrated in Propositions~\ref{prop:sigma_convergence}, \ref{prop:monotonic_constraint} and \ref{prop:bounded_convergence_interval}.
            \end{itemize}

            \begin{figure}[htbp]
                \centerline{\includegraphics[width=235pt]{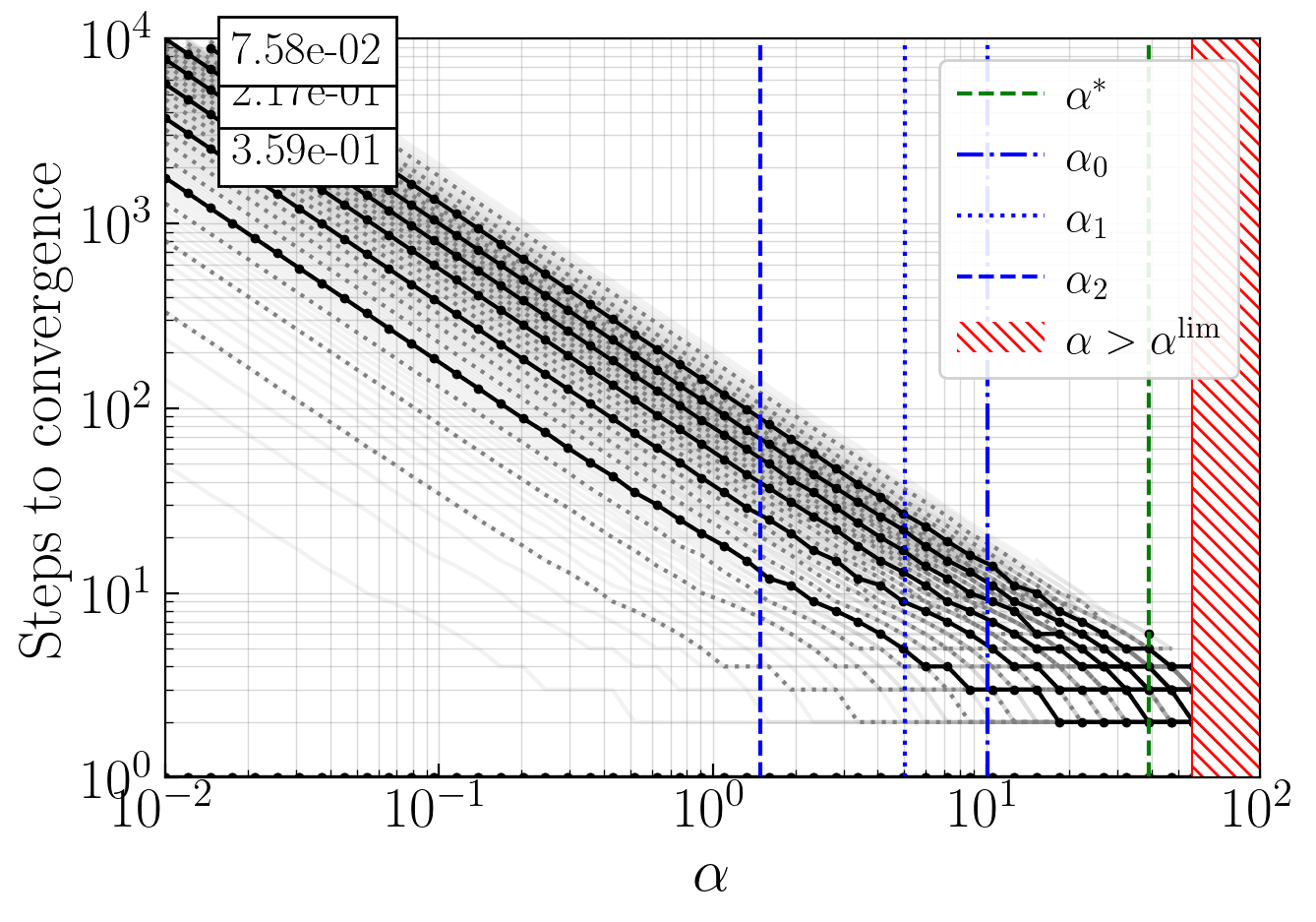}}
                \caption{Learning rate impact on convergence speed. Iso-distance curves (constant weight distance levels) show linear convergence speed until breakdown at $\alpha^{\text{lim}}$. Theoretical bounds $\alpha_0$ (Prop.~\ref{prop:sigma_convergence}), $\alpha_1$ (Prop.~\ref{prop:monotonic_constraint}), $\alpha_2$ (Prop.~\ref{prop:bounded_convergence_interval}) are overlaid, with the most strict, $\alpha_2<a^\mathrm{lim}$ providing valid convergence guarantee. \textit{Conditions}: $N=100$, $P=10$, $p_e=1/N$, $M=50,00$, $p_w=0.5$, $S=10{,}000$.}
                \label{fig:convergence_as_a_function_of_alpha_num_steps}
            \end{figure}
            Figure~\ref{fig:convergence_as_a_function_of_alpha_num_steps} shows a clear linear relationship between the number of step to convergence and the learning rate until critical threshold $\alpha^{\text{lim}}$ where training fails. The optimal rate \emph{w.r.t} convergence speed $\alpha^*$ occurs immediately below this threshold, indicating maximum stable learning rate provides optimal performance. 

        \subsubsection{Sample Complexity \& Convergence Speed}\label{app:sample_complexity_convergence_speed}

            We investigate convergence speed across all combinations of learning rate $\alpha \in \{1, N/10\}$,  sparsity $p_e \in \{0.001, 1/N\}$, and batch size $M\in \{10, 100, 1000, 10000\}$ for problem sizes $N \in [10, 1000]$.
            
            \begin{figure}[htbp]
                \centerline{\includegraphics[width=235pt]{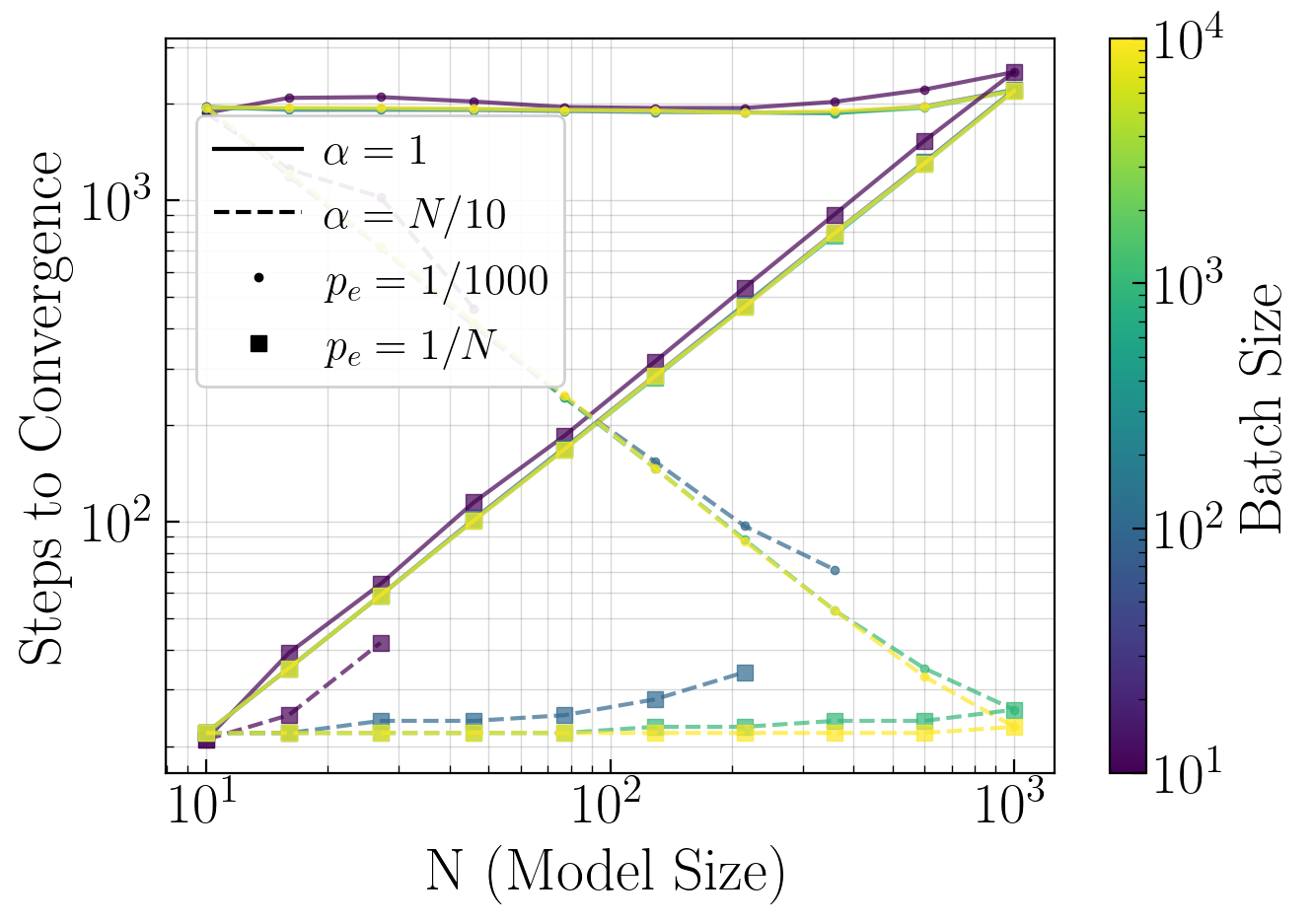}}
                \caption{Convergence analysis across hyperparameter combinations. \textbf{Color:} batch size, \textbf{lines: } learning rates, \textbf{markers:} sparsity levels. Constant effective learning rate $\alpha p_e$ yields scale-invariant convergence times. \textit{Conditions}: $P=\lfloor1000/N\rfloor$, $p_w=0.5$, $S=25{,}000$.}
                \label{fig:convergence_study}
            \end{figure}

            
            Figure~\ref{fig:convergence_study} corroborates that convergence speed is determined by the product $\alpha p_e$, consistent with theoretical results (Gradient Eq.~\eqref{eq:symmetric_gradient}). The scaling relationships $\alpha \propto N$ and $p_e \propto 1/N$ maintain constant effective learning rate $\alpha p_e$, yielding consistent convergence speed regardless of problem size. This supports that larger problems require proportionally larger learning rates to compensate for sparser inputs. 

            Batch size is not shown to affect convergence speed, except when small batches ($M \ll N$) combine with large learning rates. Consistent with our findings above, the large $\alpha \propto N$ values required to compensate for $p_e \propto 1/N$ scaling might increase sensitivity to destructive batch outliers\footnote{When $p_e = 1/N$, input vectors follow a binomial distribution for the number of active bits, which converges to a Poisson distribution for large $N$. In expectation, most inputs are one-hot vectors ($n_e = 1$), which provide informative gradients, while inputs with $n_e > 1$ can be destructive to learning dynamics. For small batch sizes, random sampling can produce batches where destructive samples ($n_e > 1$) outnumber informative samples ($n_e = 1$). We hypothesize that while such "unfavorable", outliers, batches are manageable under moderate learning rates, they become problematic near $\alpha^{\text{lim}}$ (especially for the large $\alpha^{\text{lim}}$ values permitted by larger $N$) where few destructive batches can push the model outside its stability basin, causing permanent convergence failure.}, which are more frequent with small $M$. These results demonstrate scale-invariant training duration when hyperparameters are properly scaled, suggesting polynomial sample complexity. However, this requires sufficient batch sizes to ensure statistical stability. 

    \section{Supporting Mathematical Results}\label{app:supporting_results}
        \subsection{General Approximation and Bounds}\label{app:general_approx_and_bounds}
            \begin{proposition}[Exponential Approximation with Error Bound]\label{prop:exp_approx}
                For any integer $N \geq 2$ and $|x| < 1$:
                $$(1+x)^{N-1} = e^{Nx} \cdot e^{\eta(x)}$$
                where $|\eta(x)| \leq \frac{(N-1)x^2}{2(1-|x|)}+|x|$.
            \end{proposition}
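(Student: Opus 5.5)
The plan is to take logarithms. Since $|x|<1$, we have $1+x>0$, so $(1+x)^{N-1}=\exp\big((N-1)\ln(1+x)\big)$. This suggests \emph{defining}
\begin{equation*}
\eta(x) := (N-1)\ln(1+x) - Nx ,
\end{equation*}
so that the identity $(1+x)^{N-1}=e^{Nx}e^{\eta(x)}$ holds exactly by construction. The whole content of the proposition then reduces to bounding $|\eta(x)|$.

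Next, I would rewrite $\eta(x) = (N-1)\big(\ln(1+x)-x\big) - x$. This splits $\eta$ into a second-order remainder scaled by $N-1$, plus the linear correction $-x$ coming from the mismatch between $N$ and $N-1$. By the triangle inequality,
\begin{equation*}
|\eta(x)| \le (N-1)\,|\ln(1+x)-x| + |x| .
\end{equation*}
So it suffices to show that $|\ln(1+x)-x|\le \frac{x^2}{2(1-|x|)}$ for $|x|<1$.

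For this remainder bound I would use the Mercator series, $\ln(1+x)-x=\sum_{n\ge2}(-1)^{n+1}x^n/n$, which converges absolutely for $|x|<1$. Then
\begin{equation*}
|\ln(1+x)-x|\le \sum_{n\ge 2}\frac{|x|^n}{n}\le \frac{1}{2}\sum_{n\ge2}|x|^n=\frac{x^2}{2(1-|x|)},
\end{equation*}
using $1/n\le 1/2$ for $n\ge2$ and summing the geometric tail. An integral form of the remainder would also work, but the series argument is the cleanest.

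No step here is a real obstacle. The only point needing care is the case $x<0$, where the series does not alternate favourably. The crude absolute-value bound handles both signs uniformly, at the cost of being non-sharp. Combining the two displays gives the stated bound $|\eta(x)|\le \frac{(N-1)x^2}{2(1-|x|)}+|x|$ for every integer $N\ge2$.
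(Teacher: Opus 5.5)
Your proposal is correct and follows the paper's proof essentially step for step. The paper likewise writes $\ln(1+x)=x+\varepsilon_1(x)$, bounds $|\varepsilon_1(x)|\le \frac{x^2}{2(1-|x|)}$ via the Mercator series with $1/k\le 1/2$ and the geometric tail, then sets $\eta(x)=(N-1)\varepsilon_1(x)-x$ and applies the triangle inequality; your explicit definition $\eta(x):=(N-1)\ln(1+x)-Nx$ simply makes the exactness of the identity more transparent.
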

            
            \begin{proof}\label{proof:exp_approx}
                We use the series expansion of the natural logarithm:
                \begin{equation}\ln(1+x) = \sum_{k=1}^{\infty} \frac{(-1)^{k+1}x^k}{k} = x - \frac{x^2}{2} + \frac{x^3}{3} - \frac{x^4}{4} + \cdots \quad \forall |x| < 1
                \end{equation}
                
                For the first-order approximation $\ln(1+x) = x + \varepsilon_1(x)$, the truncation error is:
                $$\varepsilon_1(x) = \sum_{k=2}^{\infty} \frac{(-1)^{k+1}x^k}{k}$$
                
                We bound this error as follows:
                \begin{equation}
                |\varepsilon_1(x)| = \left|\sum_{k=2}^{\infty} \frac{(-1)^{k+1}x^k}{k}\right| \leq \sum_{k=2}^{\infty} \frac{|x|^k}{k} \leq \frac{1}{2}\sum_{k=2}^{\infty} |x|^k = \frac{1}{2} \cdot \frac{|x|^2}{1-|x|} = \frac{x^2}{2(1-|x|)}
                \end{equation}
                
                Therefore:
                \begin{equation}
                (1+x)^{N-1} = e^{(N-1)\ln(1+x)} = e^{(N-1)(x + \varepsilon_1(x))} = e^{Nx} \cdot e^{(N-1)\varepsilon_1(x)-x}=e^{Nx} \cdot e^{\eta(x)}
                \end{equation}
                
                Setting $\eta(x) = (N-1)\varepsilon_1(x)-x$, the error bound follows from the triangle inequality:
                \begin{equation}
                |\eta(x)| \leq (N-1)|\varepsilon_1(x)|+|x| \leq \frac{(N-1)x^2}{2(1-|x|)}+|x|
                \end{equation}
    
            \end{proof}

        \subsection{Properties of Variable Coefficient Dynamics}\label{app:theory_variable_coef_dynamics}

            \begin{definition}[One-Step Affine Recurrence Relations]\label{def:one_step_affine_recurrence}
                Consider the recurrence relation with \textbf{one-step affine transformations}:
                \begin{equation}
                    x[k+1] = a(x[k]) \cdot x[k] + b(x[k])
                \end{equation}
                where $a: \mathbb{R} \to \mathbb{R}$ and $b: \mathbb{R} \to \mathbb{R}$ are given functions.
                
                At each iteration, the map $x \mapsto a(x) \cdot x + b(x)$ applies an affine transformation with state-dependent coefficients.
                
                The \textbf{instantaneous fixed point} at state $x$ is:
                \begin{equation}
                    \bar{x}(x) := \frac{b(x)}{1-a(x)} \quad \text{(when } a(x) \neq 1 \text{)}
                \end{equation}

                Similarly, since $x[k]$ is a variable whose value is dependant on the step $k$, we can also note the instantaneous fixed point at step $k$ using the following shorthand notation:
                \begin{equation}
                    \bar{x}[k] := \frac{b[k]}{1-a[k]}  \quad \text{(when } a[k] \neq 1 \text{)}
                \end{equation}
                
                \textbf{Special case}: When $a(x) = a$ and $b(x) = b$ are constants, this reduces to the classical affine recurrence relation.
            \end{definition}

            \begin{figure}[htbp]
                \centering
                \begin{minipage}{0.48\textwidth}
                    \centering
                    \begin{tikzpicture}
                        \begin{axis}[
                            axis lines=middle,
                            xlabel={$x$},
                            ylabel={$y$},
                            xmin=-0.45, xmax=0.45, ymin=-0.45, ymax=0.45,
                            legend pos=north east,
                            legend style={font=\small},
                            width=\textwidth,
                            height=\textwidth,
                        ]
            
                            \addplot[thick, blue]{0.4*x-0.1};
                            \addplot[thick, black, dashed]{x};
            
                            \draw[blue, thin] (axis cs:0.4,0.4) -- (axis cs:+0.4,0.06);
                            \draw[blue, thin] (axis cs:0.4,0.06) -- (axis cs:+0.06,0.06);
                            \draw[blue, thin] (axis cs:0.06,0.06) -- (axis cs:+0.06,-0.076);
                            \draw[blue, thin] (axis cs:0.06,-0.076) -- (axis cs:-0.076,-0.076);
                            \draw[blue, thin] (axis cs:-0.076,-0.076) -- (axis cs:-0.076,-0.1304);
                            \draw[blue, thin] (axis cs:-0.076,-0.1304) -- (axis cs:-0.1304,-0.1304);
                            \draw[blue, thin] (axis cs:-0.1304,-0.1304) -- (axis cs:-0.1304,-0.15216);
            
                            \draw[red, dashed] (axis cs:0,-0.1666666666) -- (axis cs:-0.1666666666,-0.1666666666);
                            \draw[red, dashed] (axis cs:-0.1666666666,0) -- (axis cs:-0.1666666666,-0.1666666666);
                            
                            \addplot[orange, only marks, mark=*, mark size=1.5pt] coordinates {(-0.1666666666, -0.1666666666)};
                            \node[orange, below, font=\footnotesize] at (axis cs:-0.1666666666,-0.1666666666) {$\frac{b}{1-a}$};
            
                            \addlegendentry{$y=0.4x-0.1$}
                            \addlegendentry{$y=x$}
                        \end{axis}
                    \end{tikzpicture}
                    
                    \textbf{(a)}Demonstrates the monotonic convergence of Proposition~\ref{prop:constant_monotonic} with constant affine transformation coefficient when $a = 0.4 > 0$
                \end{minipage}
                \hfill
                \begin{minipage}{0.48\textwidth}
                    \centering
                    \begin{tikzpicture}
                        \begin{axis}[
                            axis lines=middle,
                            xlabel={$x$},
                            ylabel={$y$},
                            xmin=-0.45, xmax=0.45, ymin=-0.45, ymax=0.45,
                            legend pos=north east,
                            legend style={font=\small},
                            width=\textwidth,
                            height=\textwidth,
                        ]
            
                            \addplot[thick, blue]{-0.4*x-0.1};
                            \addplot[thick, black, dashed]{x};
            
                            \draw[blue, thin] (axis cs:0.3,0.3) -- (axis cs:+0.3,-0.22);
                            \draw[blue, thin] (axis cs:0.3,-0.22) -- (axis cs:-0.22,-0.22);
                            \draw[blue, thin] (axis cs:-0.22,-0.22) -- (axis cs:-0.22,-0.012);
                            \draw[blue, thin] (axis cs:-0.22,-0.012) -- (axis cs:-0.012,-0.012);
                            \draw[blue, thin] (axis cs:-0.012,-0.012) -- (axis cs:-0.012,-0.0952);
                            \draw[blue, thin] (axis cs:-0.012,-0.0952) -- (axis cs:-0.0952,-0.0952);
                            \draw[blue, thin] (axis cs:-0.0952,-0.0952) -- (axis cs:-0.0952,-0.06192);
            
                            \draw[red, dashed] (axis cs:0,-0.0714) -- (axis cs:-0.0714,-0.0714);
                            \draw[red, dashed] (axis cs:-0.0714,0) -- (axis cs:-0.0714,-0.0714);
                            
                            \addplot[orange, only marks, mark=*, mark size=1.5pt] coordinates {(-0.0714, -0.0714)};
                            \node[orange, below, font=\footnotesize] at (axis cs:-0.0714,-0.0714) {$\frac{b}{1-a}$};
            
                            \addlegendentry{$y=-0.4x-0.1$}
                            \addlegendentry{$y=x$}
                        \end{axis}
                    \end{tikzpicture}
                    
                    \textbf{(b)} Shows the oscillatory but convergent behavior of Proposition~\ref{prop:constant_oscillatory} with constant affine transformation coefficient when  $a = -0.4 < 0$.
                \end{minipage}
                \caption{Graphical interpretation of the convergence behavior. The blue line represents the update function $y = ax + b$, while the black dashed line $y = x$ helps visualize the fixed point at their intersection. The cobweb plots show the iterative convergence process}
                \label{fig:convergence_comparison}
            \end{figure}

            \begin{proposition}[Constant Coefficient Case - Monotonic Convergence]\label{prop:constant_monotonic}
                For constant coefficients $a, b \in \mathbb{R}$ with $0 < a < 1$, the recurrence $x[k+1] = ax[k] + b$ has unique fixed point $\bar{x} = \frac{b}{1-a}$ and:
                \begin{enumerate}
                    \item \textbf{Exponential convergence}: $|x[k] - \bar{x}| = a^k |x[0] - \bar{x}|$
                    \item \textbf{Monotonic convergence}: 
                        \begin{itemize}
                            \item If $x[0] < \bar{x}$: $x[0] < x[1] < x[2] < \cdots < x[k] < \bar{x}$ for all $k$, and $\lim_{k\to+\infty}x[k]=\bar{x}$
                            \item If $x[0] > \bar{x}$: $x[0] > x[1] > x[2] > \cdots > x[k] > \bar{x}$ for all $k$, and $\lim_{k\to+\infty}x[k]=\bar{x}$
                            \item If $x[0] = \bar{x}$: $x[k] = \bar{x}$ for all $k \geq 0$
                        \end{itemize}
                \end{enumerate}
            \end{proposition}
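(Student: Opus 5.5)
The natural route is to work with the deviation from the fixed point, $d[k] := x[k] - \bar{x}$, and reduce the affine recurrence to a pure geometric one. First, uniqueness: a fixed point must satisfy $x = ax + b$, i.e. $(1-a)x = b$. Since $0<a<1$ gives $1-a\neq 0$, the only solution is $\bar{x} = \frac{b}{1-a}$, matching Definition~\ref{def:one_step_affine_recurrence} in the constant case.

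Next, I will use $b = (1-a)\bar{x}$ to write
\begin{equation}
x[k+1]-\bar{x} = a x[k] + (1-a)\bar{x} - \bar{x} = a\,(x[k]-\bar{x}),
\end{equation}
so $d[k+1] = a\,d[k]$. Induction on $k$ then gives $d[k] = a^k d[0]$, and taking absolute values with $a>0$ yields exactly $|x[k]-\bar{x}| = a^k|x[0]-\bar{x}|$. Because $0<a<1$, $a^k\to 0$, so $x[k]\to\bar{x}$ whatever the starting point. This settles the exponential convergence claim.

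For monotonicity, $a^k>0$ means the sign of $d[k]$ equals the sign of $d[0]$ for every $k$. If $x[0]<\bar{x}$, then $d[k]<0$ for all $k$, so $x[k]<\bar{x}$. The increment is
\begin{equation}
x[k+1]-x[k] = d[k+1]-d[k] = (a-1)\,d[k],
\end{equation}
which is a product of two negative numbers and hence strictly positive. The sequence is therefore strictly increasing and bounded above by $\bar{x}$, with limit $\bar{x}$ from the first part. The case $x[0]>\bar{x}$ is symmetric: both signs flip, so the sequence decreases strictly toward $\bar{x}$. If $x[0]=\bar{x}$, then $d[0]=0$ forces $d[k]=0$ for all $k$, so the sequence is constant.

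No step is a real obstacle; everything is an induction on an explicit closed form. The one point needing care is the use of $a>0$, which is exactly what preserves the sign of $d[k]$. With $-1<a<0$ the sign alternates, and the behaviour becomes the oscillatory case of Proposition~\ref{prop:constant_oscillatory}.
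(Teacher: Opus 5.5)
Your proof is correct and follows essentially the same route as the paper. Both arguments pass to the deviation $x[k]-\bar{x}$, derive $x[k]-\bar{x}=a^k(x[0]-\bar{x})$, use $a^k>0$ to keep the sign fixed, and write the increment as $(1-a)(\bar{x}-x[k])$; you additionally spell out why the fixed point is unique, which the paper leaves implicit.
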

    
            \begin{proof}\label{proof:constant_monotonic}
                The fixed point satisfies $\bar{x} = a\bar{x} + b$, giving $\bar{x} = \frac{b}{1-a}$ (well-defined since $a \neq 1$).
                
                Defining the deviation $e[k] = x[k] - \bar{x}$:
                \begin{align}
                    e[k+1] &= x[k+1] - \bar{x} = ax[k] + b - \bar{x} \\
                    &= a(x[k] - \bar{x}) = ae[k]
                \end{align}
                
                Therefore $e[k] = a^k e[0]$, proving part the exponential convergence property.
                
                Since $0 < a < 1$, we have $a^k > 0$ for all $k$, so $\text{sign}(e[k]) = \text{sign}(e[0])$. Also, we have $x[k+1] - x[k] = (a-1)x[k] + b = (1-a)(\bar{x} - x[k])$.
                \begin{itemize}
                    \item If $x[0] < \bar{x}$: then $x[k] < \bar{x}$ for all $k$, and $x[k+1] - x[k] = (1-a)(\bar{x} - x[k]) > 0$
                    \item If $x[0] > \bar{x}$: then $x[k] > \bar{x}$ for all $k$, and $x[k+1] - x[k] = (1-a)(\bar{x} - x[k]) < 0$
                \end{itemize}
                Thus the sequence is monotonic and bounded, staying in the corridor between $x[0]$ and $\bar{x}$, thus proving monotonic convergence (i.e. without overshoot) property. See Figure~\ref{fig:convergence_comparison}.a.
    
            \end{proof}
            
            \begin{proposition}[Constant Coefficient Case - Oscillatory Convergence]\label{prop:constant_oscillatory}
                For constant coefficients $a, b \in \mathbb{R}$ with $-1 < a < 0$, the recurrence $x[k+1] = ax[k] + b$ has unique fixed point $\bar{x} = \frac{b}{1-a}$ and:
                \begin{enumerate}
                    \item \textbf{Exponential convergence}: $|x[k] - \bar{x}| = |a|^k |x[0] - \bar{x}|$
                    \item \textbf{Oscillatory convergence}: 
                        \begin{itemize}
                            \item If $x[0] \neq \bar{x}$: $\text{sign}(x[k] - \bar{x}) = (-1)^k \text{sign}(x[0] - \bar{x})$ for all $k$, and $\lim_{k\to+\infty}x[k]=\bar{x}$
                            \item If $x[0] = \bar{x}$: $x[k] = \bar{x}$ for all $k \geq 0$
                        \end{itemize}
                \end{enumerate}
            \end{proposition}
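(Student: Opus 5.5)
The plan mirrors the proof of Proposition~\ref{prop:constant_monotonic}: work with the deviation from the fixed point. The fixed point exists and is unique because $a\neq 1$ (indeed $1-a>1$), so $\bar{x}=a\bar{x}+b$ has the single solution $\bar{x}=\frac{b}{1-a}$. Set $e[k]=x[k]-\bar{x}$. Subtracting the fixed-point identity from the recurrence gives
\begin{equation}
e[k+1]=x[k+1]-\bar{x}=a\,x[k]+b-(a\bar{x}+b)=a\,e[k],
\end{equation}
and induction yields $e[k]=a^k e[0]$.

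\textbf{Exponential convergence.} Taking absolute values gives $|x[k]-\bar{x}|=|a|^k|x[0]-\bar{x}|$. Since $|a|<1$, we have $|a|^k\to 0$, hence $\lim_{k\to+\infty}x[k]=\bar{x}$.

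\textbf{Oscillatory convergence.} Because $-1<a<0$, we can write $a=-|a|$, so $a^k=(-1)^k|a|^k$ with $|a|^k>0$. Then
\begin{equation}
\operatorname{sign}(e[k])=(-1)^k\operatorname{sign}(e[0]) \quad \text{whenever } e[0]\neq 0,
\end{equation}
which says the iterates alternate sides of $\bar{x}$ at every step, as illustrated in Figure~\ref{fig:convergence_comparison}.b. If instead $x[0]=\bar{x}$, then $e[0]=0$, so $e[k]=0$ and $x[k]=\bar{x}$ for all $k$.

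The argument is routine. The only point needing care is the sign bookkeeping: we must ensure $|a|^k\neq 0$ so the strict alternation of signs holds, and here that follows from $a\neq 0$.
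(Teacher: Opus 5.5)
Your proposal is correct and follows essentially the same route as the paper. Both introduce the deviation $e[k]=x[k]-\bar{x}$, derive $e[k+1]=a\,e[k]$ so that $e[k]=a^k e[0]$, and read off the geometric decay and the sign alternation from $a^k=(-1)^k|a|^k$; you are slightly more explicit than the paper in deducing $\lim_{k\to+\infty}x[k]=\bar{x}$ from $|a|<1$ and in treating the case $x[0]=\bar{x}$.
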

            
            \begin{proof}\label{proof:constant_oscillatory}
                The fixed point satisfies $\bar{x} = a\bar{x} + b$, giving $\bar{x} = \frac{b}{1-a}$ (well-defined since $a \neq 1$).
                
                Defining the deviation $e[k] = x[k] - \bar{x}$:
                \begin{align}
                    e[k+1] &= x[k+1] - \bar{x} = ax[k] + b - \bar{x} \\
                    &= a(x[k] - \bar{x}) = ae[k]
                \end{align}
                
                Therefore $e[k] = a^k e[0]$, proving exponential convergence since $|e[k]| = |a|^k |e[0]|$.
                
                Since $-1 < a < 0$, we have $a^k = |a|^k \cdot (-1)^k$, so:
                $$\text{sign}(e[k]) = \text{sign}(a^k e[0]) = (-1)^k \text{sign}(e[0])$$
                
                This proves the oscillatory convergence property. See Figure~\ref{fig:convergence_comparison}.b.
    
            \end{proof}

            \begin{theorem}[Complete Variable Coefficient Dynamics]\label{thm:complete_variable_dynamics}
                Consider the recurrence $x[k+1] = a(x[k]) \cdot x[k] + b(x[k])$ where $a, b: \mathbb{R} \to \mathbb{R}$ satisfy $0 < a(x) < 1$ for all $x \in \mathbb{R}$.
                
                Define the \textbf{instantaneous fixed point function} $\bar{x}(x) := \frac{b(x)}{1-a(x)}$ and the \textbf{fixed point envelope}:
                \begin{align}
                    \bar{x}_{\min} &:= \inf_{x \in \mathbb{R}} \bar{x}(x) = \inf_{x \in \mathbb{R}} \frac{b(x)}{1-a(x)} \\
                    \bar{x}_{\max} &:= \sup_{x \in \mathbb{R}} \bar{x}(x) = \sup_{x \in \mathbb{R}} \frac{b(x)}{1-a(x)}
                \end{align}
                
                For any $x \in \mathbb{R}$, define the distance to the envelope:
                $$d(x) := \text{dist}(x, [\bar{x}_{\min}, \bar{x}_{\max}]) = \begin{cases}
                x - \bar{x}_{\max} & \text{if } x > \bar{x}_{\max} \\
                \bar{x}_{\min} - x & \text{if } x < \bar{x}_{\min} \\
                0 & \text{if } x \in [\bar{x}_{\min}, \bar{x}_{\max}]
                \end{cases}$$
                
                Then the following fundamental properties hold:
        
                \begin{enumerate}
                    \item \textbf{Convex Interpolation}: Each step interpolates between current state and instantaneous fixed point:
                    $$x[k+1] = a(x[k]) \cdot x[k] + (1-a(x[k])) \cdot \bar{x}(x[k])$$
                    This ensures $x[k+1] \in [\min\{x[k], \bar{x}(x[k])\}, \max\{x[k], \bar{x}(x[k])\}]$, preventing overshooting beyond the instantaneous target.
                    
                    \item \textbf{Envelope Contraction}: Distance to the envelope contracts at each step:
                    $$d(x[k+1]) \leq a(x[k]) \cdot d(x[k])$$
                    where $d(x) = \text{dist}(x, [\bar{x}_{\min}, \bar{x}_{\max}])$. This implies forward invariance of the envelope:
                    $$x[k] \in [\bar{x}_{\min}, \bar{x}_{\max}] \Rightarrow x[k+1] \in [\bar{x}_{\min}, \bar{x}_{\max}]$$
                    and progressive attraction for trajectories starting outside.
        
                    \item \textbf{Exponential Convergence Rate}: If additionally $\alpha := \sup_{x \in \mathbb{R}} a(x) < 1$, then the distance to the envelope diminishes exponentially: 
                        $$d(x[k]) \leq \alpha^k d(x[0])$$
        
                \end{enumerate}
            \end{theorem}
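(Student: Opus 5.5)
The proof rests on the convex-interpolation identity, proved first, from which the other two claims follow. Since $0<a(x)<1$, we have $1-a(x)>0$, so $\bar{x}(x)=b(x)/(1-a(x))$ is well defined and $b(x)=(1-a(x))\bar{x}(x)$. Substituting into the recurrence gives
\begin{equation}
x[k+1]=a(x[k])\,x[k]+(1-a(x[k]))\,\bar{x}(x[k]).
\end{equation}
With weights $a(x[k])$ and $1-a(x[k])$ in $(0,1)$ summing to one, $x[k+1]$ is a convex combination of $x[k]$ and $\bar{x}(x[k])$. It therefore lies between them, which is claim 1. The step is the same one used in Proposition~\ref{prop:constant_monotonic}, now applied pointwise with the coefficients frozen at the current state.

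For envelope contraction, split into three cases by the position of $x[k]$ relative to $[\bar{x}_{\min},\bar{x}_{\max}]$. Write $a_k:=a(x[k])$ and $\bar{x}_k:=\bar{x}(x[k])$.

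\emph{Case $x[k]>\bar{x}_{\max}$.} Since $\bar{x}_k\le\bar{x}_{\max}$,
\begin{equation}
x[k+1]-\bar{x}_{\max}=a_k(x[k]-\bar{x}_{\max})+(1-a_k)(\bar{x}_k-\bar{x}_{\max})\le a_k\,d(x[k]).
\end{equation}
The convex combination also lies between two points that are both $\ge\bar{x}_{\min}$, so $x[k+1]\ge\bar{x}_{\min}$. If $x[k+1]\le\bar{x}_{\max}$, then $d(x[k+1])=0$; otherwise $d(x[k+1])=x[k+1]-\bar{x}_{\max}\le a_k d(x[k])$.

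\emph{Case $x[k]<\bar{x}_{\min}$.} This is symmetric.

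\emph{Case $x[k]$ in the envelope.} Both $x[k]$ and $\bar{x}_k$ lie in the interval, which is convex, so $x[k+1]$ does too and $d(x[k+1])=0$. This gives forward invariance.

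Progressive attraction follows because $a_k<1$. Claim 3 then follows by iterating $d(x[k+1])\le a_k d(x[k])\le\alpha\,d(x[k])$ and inducting on $k$.

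I expect the only real subtlety to be the envelope being unbounded or degenerate. If $\bar{x}_{\max}=+\infty$ or $\bar{x}_{\min}=-\infty$, $d$ must be read with the convention that the distance to a half-line is zero on that side; the case analysis still goes through, because each case uses only $\bar{x}_k\le\bar{x}_{\max}$ or $\bar{x}_k\ge\bar{x}_{\min}$. A second point to state explicitly is that $d$ is the distance to a closed convex set, so the convexity used in the invariance case is immediate. Neither point needs machinery beyond careful case bookkeeping.
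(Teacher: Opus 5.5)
Your proposal is correct and follows the paper's proof essentially step for step. It derives the convex-interpolation identity first, then uses the same three-case analysis (above, below, inside $[\bar{x}_{\min},\bar{x}_{\max}]$) with the same subcase split on whether $x[k+1]$ lands in the envelope, and finishes the exponential rate by induction. Your remark about unbounded envelopes is a harmless refinement that the paper leaves implicit.
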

    
            \begin{proof}\label{proof:complete_variable_dynamics}
                \textbf{Property 1 - Convex Interpolation}: Direct algebraic manipulation:
                \begin{align}
                    x[k+1] &= a(x[k]) \cdot x[k] + b(x[k]) \\
                    &= a(x[k]) \cdot x[k] + (1-a(x[k])) \cdot \frac{b(x[k])}{1-a(x[k])} \\
                    &= a(x[k]) \cdot x[k] + (1-a(x[k])) \cdot \bar{x}(x[k])
                \end{align}
                
                Since $0 < a(x[k]) < 1$, this is a convex combination of $x[k]$ and $\bar{x}(x[k])$, hence:
                $$x[k+1] \in [\min\{x[k], \bar{x}(x[k])\}, \max\{x[k], \bar{x}(x[k])\}]$$
                
                \textbf{Property 2 - Envelope Contraction}: We prove $d(x[k+1]) \leq a(x[k]) \cdot d(x[k])$ by cases.
                
                \textbf{Case 1}: $x[k] > \bar{x}_{\max}$ (outside envelope, above)
                
                Here $d(x[k]) = x[k] - \bar{x}_{\max}$ and $\bar{x}(x[k]) \leq \bar{x}_{\max}$ by definition.
                
                Since $x[k] \geq \bar{x}_{\min}$ and $\bar{x}(x[k]) \geq \bar{x}_{\min}$, we have $x[k+1] \geq \bar{x}_{\min}$ by Property 1 (no overshoot).
                
                \textit{Subcase 1a}: If $x[k+1] > \bar{x}_{\max}$:
                \begin{align}
                    d(x[k+1]) = x[k+1] - \bar{x}_{\max} &= a(x[k]) \cdot x[k] + (1-a(x[k])) \cdot \bar{x}(x[k]) - \bar{x}_{\max} \\
                    &= a(x[k])(x[k] - \bar{x}_{\max}) + (1-a(x[k]))(\bar{x}(x[k]) - \bar{x}_{\max}) \\
                    &\leq a(x[k])(x[k] - \bar{x}_{\max}) = a(x[k]) \cdot d(x[k])
                \end{align}
                where the inequality uses $\bar{x}(x[k]) - \bar{x}_{\max} \leq 0$ and $(1-a(x[k])) > 0$.

                \textit{Subcase 1b}: If $x[k+1] \in [\bar{x}_{\min}, \bar{x}_{\max}]$: Then $d(x[k+1]) = 0 < a(x[k]) \cdot d(x[k])$.
                
                \textbf{Case 2}: $x[k] < \bar{x}_{\min}$ (symmetric argument gives same result)
                
                \textbf{Case 3}: $x[k] \in [\bar{x}_{\min}, \bar{x}_{\max}]$
                
                Here $d(x[k]) = 0$ and $\bar{x}(x[k]) \in [\bar{x}_{\min}, \bar{x}_{\max}]$ by definition. Since $x[k+1]$ is a convex combination of two points in the envelope, $x[k+1] \in [\bar{x}_{\min}, \bar{x}_{\max}]$ (see Property 1), so $d(x[k+1]) = 0 = a(x[k]) \cdot d(x[k])$.
                
                Therefore $d(x[k+1]) \leq a(x[k]) \cdot d(x[k])$ in all cases, which immediately implies forward invariance of the envelope and progressive attraction for trajectories starting outside. 
                
                \textbf{Property 3 - Exponential Convergence Rate}: 
                From Property 2, we have $d(x[k+1]) \leq a(x[k]) \cdot d(x[k])$ for all $k$. 
                
                If $\alpha := \sup_{x \in \mathbb{R}} a(x) < 1$, then $a(x[k]) \leq \alpha$ for all $k$, so:
                \begin{align}
                d(x[1]) &\leq a(x[0]) \cdot d(x[0]) \leq \alpha \cdot d(x[0]) \\
                d(x[2]) &\leq a(x[1]) \cdot d(x[1]) \leq \alpha \cdot d(x[1]) \leq \alpha^2 \cdot d(x[0]) \\
                &\vdots \\
                d(x[k]) &\leq \alpha^k \cdot d(x[0])
                \end{align}
                
                By induction: if $d(x[j]) \leq \alpha^j d(x[0])$ for some $j$, then:
                $$d(x[j+1]) \leq a(x[j]) \cdot d(x[j]) \leq \alpha \cdot \alpha^j d(x[0]) = \alpha^{j+1} d(x[0])$$
                
                Since $0 < \alpha < 1$, we have $\alpha^k \to 0$ as $k \to \infty$, ensuring exponential convergence to the envelope.
    
            \end{proof}

            \begin{proposition}[Intersection Bounds for Monotonic Functions]\label{prop:intersection_bounds}
                Let $f: [a,b] \to \mathbb{R}$ be a decreasing function and $g(x) = x$ be the identity function. If $f$ intersects $g$ at some point $\phi \in [a,b]$ (i.e., $f(\phi) = \phi$), then:
                $$f(b) \leq \phi \leq f(a)$$
                with equality conditions:
                \begin{itemize}
                    \item $\phi = f(a)$ if and only if $f$ is constant on $[a,\phi]$
                    \item $\phi = f(b)$ if and only if $f$ is constant on $[\phi,b]$
                \end{itemize}
            \end{proposition}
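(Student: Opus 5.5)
The plan is a direct argument from monotonicity alone; no earlier result of the paper is needed. Throughout, I read ``decreasing'' as non-increasing, which is the reading under which the stated equality conditions make sense. I also use the hypothesis $a \leq \phi \leq b$, so that $f$ can be compared at $a$, $\phi$ and $b$.

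First I prove the two inequalities. Since $a \leq \phi$ and $f$ is non-increasing, $f(a) \geq f(\phi)$. Since $\phi \leq b$, likewise $f(\phi) \geq f(b)$. Substituting the fixed-point relation $f(\phi) = \phi$ into both gives $f(b) \leq \phi \leq f(a)$. This is the bound invoked, through Corollary~\ref{cor:conservative_bounds}, to set $\phi_{\min}' = \overline{\mu}_{\min}[k](\mu[k]=0)$ and $\phi_{\max}' = \overline{\mu}_{\max}[k](\mu[k]=0)$.

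Next I treat the equality conditions, starting with $\phi = f(a)$. For the ``if'' direction: if $f$ is constant on $[a,\phi]$, then $f(a) = f(\phi) = \phi$. For the ``only if'' direction: if $\phi = f(a)$, then $f(a) = f(\phi)$. For any $x \in [a,\phi]$, monotonicity gives $f(a) \geq f(x) \geq f(\phi) = f(a)$, so $f(x) = f(a)$ and $f$ is constant on $[a,\phi]$. The case $\phi = f(b)$ is the mirror image on $[\phi,b]$: for $x$ in that interval, $f(\phi) \geq f(x) \geq f(b) = f(\phi)$.

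There is no substantive obstacle here. The only point needing care is the meaning of ``decreasing''. Under strict monotonicity, constancy is possible only on a degenerate interval, so the equality cases collapse to $\phi = a$ or $\phi = b$; I will state this explicitly in a remark. I will also note the consequence used in Proposition~\ref{prop:bounded_convergence_interval}. There the envelope functions are strictly decreasing on $(-\infty,1/4)$ and the intersection is interior, so both inequalities are strict. This is exactly what is needed for the conservative interval $[\phi_{\min}', \phi_{\max}']$.
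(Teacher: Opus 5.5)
Your proof is correct and follows the paper's argument essentially verbatim: both inequalities come from comparing $f$ at $a$, $\phi$, $b$ and substituting $f(\phi)=\phi$, and the equality cases come from the sandwich $f(a)\ge f(x)\ge f(\phi)=f(a)$ on $[a,\phi]$ (resp.\ its mirror on $[\phi,b]$). Reading ``decreasing'' as non-increasing, and noting that under strict monotonicity the equality cases reduce to $\phi=a$ or $\phi=b$, are clarifications the paper leaves implicit; they do not change the argument.
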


            \begin{proof}\label{proof:intersection_bounds}   
                \textbf{Upper bound:} Since $\phi \in [a,b]$, we have $\phi \geq a$. Since $f$ is decreasing:
                $$f(\phi) \leq f(a)$$
                Since $f(\phi) = \phi$, this gives us $\phi \leq f(a)$.
                
                \textbf{Lower bound:} Since $\phi \in [a,b]$, we have $\phi \leq b$. Since $f$ is decreasing:
                $$f(\phi) \geq f(b)$$
                Since $f(\phi) = \phi$, this gives us $\phi \geq f(b)$.
                
                \textbf{Equality conditions:}
                \begin{itemize}
                    \item For $\phi = f(a)$: If $f$ is constant on $[a,\phi]$, then $f(\phi) = f(a) = \phi$. Conversely, if $\phi = f(a)$, then for any $x \in [a,\phi]$, the decreasing property gives $f(x) \geq f(\phi) = f(a)$, but since $f(a)$ is the maximum, we have $f(x) = f(a)$.
                    
                    \item For $\phi = f(b)$: If $f$ is constant on $[\phi,b]$, then $f(\phi) = f(b) = \phi$. Conversely, if $\phi = f(b)$, then for any $x \in [\phi,b]$, the decreasing property gives $f(x) \leq f(\phi) = f(b)$, but since $f(b)$ is the minimum, we have $f(x) = f(b)$.
                \end{itemize}
    
                \begin{figure}[htbp]
                    \centering
                    \begin{tikzpicture}
                        \begin{axis}[
                            axis lines=middle,
                            xlabel={$x$},
                            ylabel={$y$},
                            xmin=-0.15, xmax=1.3, ymin=-0.15, ymax=1.3,
                            legend pos=north east,
                            legend style={font=\small},
                            width=10cm,
                            height=7cm,
                            grid=major,
                            grid style={gray!30},
                        ]
                
                            \addplot[thick, black, dashed, domain=0:1] {x};
                            
                            \addplot[thick, blue, domain=0:1, samples=100] {0.2+0.6*(1-x)^(1/3)};   
        
                            \addplot[thick, blue, domain=0:1, samples=100] {0.2+0.6*(1-x)^(1)};     
        
                            \addplot[thick, blue, domain=0:1, samples=100] {0.2+0.6*(1-x)^(3)};     
                            q
                            \addplot[thick, red, domain=0:1] {0.8};
                            \addplot[thick, red, domain=0:1] {0.2};
                            
                            \addplot[only marks, mark=*, mark size=2pt, red] coordinates {(0.2, 0.2)};
                            \addplot[only marks, mark=*, mark size=2pt, blue] coordinates {(0.36, 0.36)};
                            \addplot[only marks, mark=*, mark size=2pt, blue] coordinates {(0.50, 0.50)};
                            \addplot[only marks, mark=*, mark size=2pt, blue] coordinates {(0.63, 0.63)};
                            \addplot[only marks, mark=*, mark size=2pt, red] coordinates {(0.8, 0.8)};
                            
                            \draw[red, dashed] (axis cs:0.2,0.8) -- (axis cs:0.2,0.0);
                            \draw[red, dashed] (axis cs:0.8,0.8) -- (axis cs:0.8,0.0);

                            \node[blue, below] at (axis cs:0.36,0.36) {$\phi_0$};
                            \node[blue, below] at (axis cs:0.5,0.5) {$\phi_1$};
                            \node[blue, below] at (axis cs:0.63,0.63) {$\phi_2$};
        
                            \node[red, below right] at (axis cs:0.2,0.2) {$\phi_{\min}$};
                            \node[red, above left] at (axis cs:0.8,0.8) {$\phi_{\max}$};
                            
                            \node[blue, left] at (axis cs:0,0.8) {$f(a)$};
                            \node[blue, left] at (axis cs:0,0.2) {$f(b)$};
                            \addplot[only marks, mark=*, mark size=2pt, blue] coordinates {(1.0, 0.2)};
                            \addplot[only marks, mark=*, mark size=2pt, blue] coordinates {(0.0, 0.8)};
                
                            \addlegendentry{$y = x$}
                            \addlegendentry{$f_0(x)$}
                            \addlegendentry{$f_1(x)$}
                            \addlegendentry{$f_2(x)$}
                            \addlegendentry{$g_{\max}(x)$}
                            \addlegendentry{$g_{\min}(x)$}
                        \end{axis}
                    \end{tikzpicture}
                    \caption{Intersection bounds for decreasing functions on $[a,b] = [0,1]$. Three decreasing functions $f_0(x)$, $f_1(x)$, and $f_2(x)$ (blue curves) with different curvatures all satisfy $f(0) = 0.8$ and $f(1) = 0.2$, intersecting the identity line $y = x$ at points $\phi_0 \approx 0.36$, $\phi_1 = 0.5$, and $\phi_2 \approx 0.63$ respectively. The constant functions $g_{\max}(x) = f(0) = 0.8$ and $g_{\min}(x) = f(1) = 0.2$ (red lines) provide conservative bounds, intersecting $y = x$ at $\phi_{\max} = 0.8$ and $\phi_{\min} = 0.2$. This illustrates that for any decreasing function $f$ on $[a,b]$, the intersection point $\phi$ (if any) satisfies $f(b) \leq \phi \leq f(a)$, i.e., $0.2 \leq \phi \leq 0.8$, with the bounds achieved by constant functions representing the extreme cases of the decreasing property.}
        
                    \label{fig:intersection_bounds}
                \end{figure}
    
            \end{proof}

            \begin{remark}[Domain-Constrained Bounds]\label{rem:domain_bounds}
                The intersection bounds are further constrained by the domain $[a,b]$:
                $$\max(f(b), a) \leq \phi \leq \min(f(a), b)$$
                since $\phi \in [a,b]$ by definition. This refinement is particularly important when $f(b) < a$ or $f(a) > b$.
            \end{remark}
            
            \begin{corollary}[Intersection Existence Conditions]\label{cor:existence_conditions}
                An intersection $\phi \in [a,b]$ is guaranteed to exists when :
                $$ a \leq f(a) \leq b \quad \textrm{or} \quad a \leq f(b) \leq b$$
                
                On the contrary, no intersection can exist on $[a,b]$ when:
                $$f(a) < a \quad \textrm{or} \quad f(b) > b$$
            \end{corollary}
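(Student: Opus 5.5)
The plan is to reduce everything to the auxiliary function $h(x) := f(x) - x$ on $[a,b]$, whose zeros are exactly the intersections of $f$ with the identity line. Since $f$ is decreasing and $x \mapsto -x$ is strictly decreasing, $h$ is strictly decreasing on $[a,b]$. The existence part will then follow from a sign change of $h$ at the endpoints, and the non-existence part from Proposition~\ref{prop:intersection_bounds} (equivalently Remark~\ref{rem:domain_bounds}).

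\textbf{Existence.} First consider the case $a \leq f(a) \leq b$.
\begin{itemize}
\item At the left endpoint, $h(a) = f(a) - a \geq 0$.
\item At the right endpoint, monotonicity gives $f(b) \leq f(a) \leq b$, hence $h(b) \leq 0$.
\end{itemize}
Now take the case $a \leq f(b) \leq b$.
\begin{itemize}
\item At the right endpoint, $h(b) = f(b) - b \leq 0$.
\item At the left endpoint, $f(a) \geq f(b) \geq a$, hence $h(a) \geq 0$.
\end{itemize}
In both cases $h(a) \geq 0 \geq h(b)$. The intermediate value theorem then yields $\phi \in [a,b]$ with $h(\phi) = 0$, i.e.\ $f(\phi) = \phi$. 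Strict monotonicity of $h$ further gives that this $\phi$ is unique, which is the uniqueness the paper invokes in Proposition~\ref{prop:bounded_convergence_interval}.

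\textbf{Non-existence.} Suppose some intersection $\phi \in [a,b]$ existed. By Proposition~\ref{prop:intersection_bounds} it satisfies $f(b) \leq \phi \leq f(a)$.
\begin{itemize}
\item If $f(a) < a$, then $\phi \leq f(a) < a$, contradicting $\phi \geq a$.
\item If $f(b) > b$, then $\phi \geq f(b) > b$, contradicting $\phi \leq b$.
\end{itemize}
Hence no intersection exists in either case.

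\textbf{Main obstacle.} The existence half genuinely needs continuity of $f$. A merely decreasing function can jump over the diagonal: for example $f = 3/4$ on $[0,1/2]$ and $f = 1/4$ on $(1/2,1]$ has no fixed point, yet $a \leq f(a) \leq b$ holds. I would therefore state explicitly that $f$ is taken continuous on $[a,b]$. This is harmless for the paper's applications, since $\overline{\mu}_{\min}[k]$ and $\overline{\mu}_{\max}[k]$ are compositions of exponentials and polynomials in $\mu[k]$ and are therefore continuous. With that assumption in place, the remaining steps are routine.
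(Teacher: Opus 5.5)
Your proof is correct. On the non-existence half it is exactly what the paper intends: the corollary is stated without proof right after Proposition~\ref{prop:intersection_bounds} and Remark~\ref{rem:domain_bounds}, and the only derivation those support is your contrapositive ($\phi \le f(a) < a$, or $\phi \ge f(b) > b$).

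The existence half cannot be obtained from Proposition~\ref{prop:intersection_bounds}, because that result presupposes an intersection, and the paper gives no argument for it. You are right that the claim is false for a merely monotone $f$. Your step-function example works, and tilting each piece slightly, e.g.\ $3/4 - x/100$ on $[0,1/2]$ and $1/4 - x/100$ on $(1/2,1]$, shows that strict decrease does not rescue it either.

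Adding continuity and applying the intermediate value theorem to $h(x)=f(x)-x$ is the right repair. Your endpoint checks are correct; in fact only $f(a)\ge a$ and $f(b)\le b$ are needed, and each of the two stated conditions implies both.

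Your route also gives two things the paper relies on but never justifies. First, the strict decrease of $h$ yields the uniqueness of $\phi_{\min}$ and $\phi_{\max}$, which Proposition~\ref{prop:bounded_convergence_interval} asserts but the corollary does not state. Second, the continuity hypothesis you add is satisfied by $\overline{\mu}_{\min}[k]$ and $\overline{\mu}_{\max}[k]$ as functions of $\mu[k]$ at fixed $\sigma^2[k]$, so the paper's applications survive.
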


            \begin{corollary}[Conservative Intersection Bounds]\label{cor:conservative_bounds}
                For a decreasing function $f$ on $[a,b]$ intersecting $y = x$ at point $\phi$:
                \begin{itemize}
                    \item The constant function $g_{\text{max}}(x) = f(a)$ provides the most conservative upper bound, intersecting $y = x$ at $x = f(a)$
                    \item The constant function $g_{\text{min}}(x) = f(b)$ provides the most conservative lower bound, intersecting $y = x$ at $x = f(b)$
                    \item Any decreasing function's intersection lies in the interval $[\phi_{\min}, \phi_{\max}] = [f(b), f(a)]$
                \end{itemize}
            \end{corollary}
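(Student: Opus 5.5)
The statement to prove is Corollary~\ref{cor:conservative_bounds}. I will derive it directly from Proposition~\ref{prop:intersection_bounds}. That proposition already gives $f(b)\le\phi\le f(a)$ for any intersection $\phi\in[a,b]$ of a decreasing $f$ with the identity. This is exactly the third item, so the work is to interpret the two constant functions as the extreme cases.

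First I will treat the constant functions $g_{\max}\equiv f(a)$ and $g_{\min}\equiv f(b)$ on their own. A constant $c$ meets $y=x$ exactly at $x=c$. So $g_{\max}$ meets the identity at $f(a)$ and $g_{\min}$ meets it at $f(b)$, which settles the location claims in the first two items.

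Next I will justify calling these bounds the "most conservative". Every decreasing $f$ with prescribed endpoint values $f(a),f(b)$ satisfies $g_{\min}\le f\le g_{\max}$ pointwise on $[a,b]$, since monotonicity gives $f(b)\le f(x)\le f(a)$. Combined with Proposition~\ref{prop:intersection_bounds}, this places the intersection $\phi$ of $f$ between the intersections of $g_{\min}$ and $g_{\max}$.

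Finally I will show that these bounds cannot be improved using only the endpoint data. Each constant function is itself (weakly) decreasing. By the equality conditions of Proposition~\ref{prop:intersection_bounds}, a function equal to $f(a)$ on $[a,\phi]$ attains $\phi=f(a)$, and one equal to $f(b)$ on $[\phi,b]$ attains $\phi=f(b)$. So both endpoints of $[f(b),f(a)]$ are realized, and no tighter interval is valid for the whole class.

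The main obstacle is only a domain subtlety: the constant intersection points $f(a)$ and $f(b)$ must themselves lie in $[a,b]$ for the argument to make sense. I will handle this by assuming an intersection exists in $[a,b]$, as the statement does. That assumption forces $a\le\phi\le f(a)$ and $f(b)\le\phi\le b$, and if needed I will defer to the clipping in Remark~\ref{rem:domain_bounds}.
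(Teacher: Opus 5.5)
Your proposal is correct and follows the paper's route. The paper gives no separate argument for this corollary; it reads it off Proposition~\ref{prop:intersection_bounds} (the bound $f(b)\le\phi\le f(a)$, with equality realized by functions that are constant on the relevant piece), which is exactly what you do. The one point needing care is the one you already flag: existence of $\phi$ does not force $f(a)\le b$ or $f(b)\ge a$, so your sharpness claim that both endpoints are realized as intersections inside $[a,b]$ should be stated with the clipping of Remark~\ref{rem:domain_bounds}.
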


    \label{app:end}


\end{document}